%% file: example_paper.tex
\documentclass{article}

\usepackage[utf8]{inputenc} 
\usepackage[T1]{fontenc}    
\usepackage{hyperref}       
\usepackage{url}            
\usepackage{booktabs}       
\usepackage{amsfonts}       
\usepackage{nicefrac}       
\usepackage{microtype}      
\usepackage{xcolor}         
\usepackage{graphicx}
\usepackage{subcaption}
\usepackage{mathtools}
\usepackage{amsfonts,amsmath,amssymb,amsthm}
\usepackage{natbib}
\usepackage{hyperref}
\usepackage[capitalize]{cleveref}
\usepackage{nicefrac}       
\usepackage[dvipsnames,svgnames]{xcolor}
\usepackage[most]{tcolorbox}
\usepackage[strict]{changepage}
\usepackage{wrapfig}
\usepackage{framed}

\usepackage{algorithm}
\usepackage{algorithmicx}
\usepackage{algpseudocode}
\usepackage{subcaption, comment}
\usepackage[textsize=tiny]{todonotes}

\usepackage{multirow}   
\expandafter\def\csname ver@algorithmic.sty\endcsname{}
\makeatletter
\let\saved@addcontentsline\addcontentsline
\makeatother
\usepackage{etoc}

\input{preamble}

\usepackage[accepted]{icml2026}

\icmltitlerunning{In-Context Pure Exploration in  Continuous Decision Spaces}

\begin{document}

\twocolumn[
  \icmltitle{In-Context Pure Exploration in  Continuous Decision Spaces}



  \icmlsetsymbol{equal}{*}

  \begin{icmlauthorlist}
    \icmlauthor{Alessio Russo}{equal,yyy}
    \icmlauthor{Yin-Ching Lee}{equal,yyy}
    \icmlauthor{Ryan Welch}{comp}
    \icmlauthor{Aldo Pacchiano}{yyy}
  \end{icmlauthorlist}
  \icmlaffiliation{yyy}{Faculty of Computing and Data Sciences, Boston University, Boston, MA, USA.}
\icmlaffiliation{comp}{Department of Computer Science, Stanford University, Stanford, CA, USA.}

  \icmlcorrespondingauthor{Alessio Russo}{arusso2@bu.edu}
  \icmlcorrespondingauthor{Yin-Ching Lee}{leewill@bu.edu}

  \icmlkeywords{pure exploration, active sequential hypothesis testing, best arm identification, experimental design, reinforcement learning}

  \vskip 0.3in
]



\printAffiliationsAndNotice{}  

\input{sections/abstract}
\input{sections/introduction}
\input{sections/problem_setting}
\input{sections/method}

\input{sections/empirical_evaluation}
\input{sections/conclusions}
\bibliographystyle{abbrvnat}
\bibliography{ref}

\appendix

\input{sections/appendix/appendix}
\end{document}

%% file: preamble.tex
\theoremstyle{plain}
\newtheorem{theorem}{Theorem}[section]
\newtheorem{proposition}{Proposition}
\newtheorem{lemma}{Lemma}

\newtheorem{assumption}{Assumption}
\newtheorem{remark}{Remark}

\theoremstyle{definition}
\newtheorem{definition}{Definition}

\DeclareMathOperator*{\argmax}{arg\,max}
\DeclareMathOperator*{\argmin}{arg\,min}

\newcommand{\A}{\mathcal{A}}
\newcommand{\Y}{\mathcal{Y}}
\newcommand{\X}{\mathcal{X}}

\hypersetup{
    colorlinks = true,
    linkcolor = RedViolet,
    citecolor=NavyBlue,
    }

 \newcommand{\refalgbyname}[2]{{\texttt{\textbf{#2}}}}
\newcommand{\cicpe}{\refalgbyname{algo:icpe_fixed_confidence}{C-ICPE}}
\newcommand{\cicpets}{\refalgbyname{algo:icpe_fixed_confidence}{C-ICPE-TS}}
\newcommand{\cicpettps}{\refalgbyname{algo:icpe_fixed_confidence}{C-ICPE-TTPS}}
\newcommand{\cicpetdthree}{\refalgbyname{algo:icpe_fixed_confidence}{C-ICPE-TD3}}
\newcommand{\cicpeuniform}{\refalgbyname{algo:icpe_fixed_confidence}{C-ICPE-Uniform}}

\definecolor{formalshade}{rgb}{0.95,0.95,1}

%% file: sections/abstract.tex
\begin{abstract}
In active sequential testing, also termed \emph{pure exploration}, a learner is tasked with the goal to adaptively acquire information so as to identify an unknown ground-truth hypothesis with as few queries as possible. This problem has several motivating applications, including Best-Arm Identification (BAI) in bandits, where actions index hypotheses, and generalized search problems, where strategically chosen queries reveal partial information about a hidden label. 
In many modern settings, however, the hypothesis, or recommendation space, is \emph{continuous}: for example, identifying a near optimal action in a continuous-armed bandit, localizing an $\epsilon$-ball contained in a target region, or estimating the minimizer of a function from noisy observations. Existing methods are predominantly frequentist and model-specific, while learned approaches have been limited to finite recommendation spaces.
We introduce \cicpe{}, a theory-guided learned model for Bayesian fixed-confidence pure exploration with continuous recommendations. \cicpe{}  meta-trains sequential architectures over a task prior to jointly learn exploration, stopping and recommendations strategies. At inference time, it actively gathers evidence on tasks and identifies an $\epsilon$-optimal recommendation \emph{without} parameter updates. 
\end{abstract}


%% file: sections/introduction.tex
\section{Introduction}\label{sec:introduction}

Several learning problems are inherently interactive: the learner sequentially performs interventions or stages queries, observes noisy evidence whose distribution depends on that intervention, and stops once the accumulated evidence supports a reliable conclusion. This type of interactive sequential decision-making problem goes back to \citet{chernoff_sequential_1959} and has been formalized through active sequential  hypothesis testing \citep{naghshvar2013Active}  and pure exploration with fixed confidence \citep{audibert2010best,degenne_non-asymptotic_2019}, where the learner minimizes the number of queries subject to returning an $\epsilon$-accurate recommendation with probability at least $1-\delta$.

This regime is well understood in canonical settings with finite   decision spaces, including best-arm identification in stochastic multi-armed bandit models \citep{garivier2016optimal} and best-policy identification in Markov Decision Processes (MDPs) \citep{puterman2014markov}. In these settings the  learner chooses queries (e.g., arms) and outputs an object of interest, often a best action/policy, and
the theoretical guarantees have been studied in a range of settings \citep{degenne_non-asymptotic_2019,poiani_best-arm_2025,al2021navigating,pmlr-v258-russo25a}.

Despite this progress, practical methods for fixed-confidence pure 
exploration remain limited when the learner must return a recommendation 
in a \emph{continuous} space. Existing continuous methods are either 
frequentist and model-specific \citep{garivier2021nonasymptotic,takemori2025instance,poiani2025Pure,russo_adaptive_2025}
or Bayesian but restricted to Gaussian processes and not optimizing 
sample complexity 
\citep{wilson2024stopping}. Even in finite arms, the theory of Bayesian fixed-confidence pure exploration is limited, and results are known only in the finite setting with Gaussian likelihoods/priors \citep{jang2024Fixed}. No analogous 
Bayesian theory, nor practical methods, are known for continuous recommendation spaces under 
general priors. 

Recently, \citep{russo_learning_2025} proposed In-Context Pure Exploration (ICPE), which meta-trains sequential neural policies for finite active-testing problems. However, ICPE is restricted to finite hypothesis and action sets, and does not address the continuous recommendation setting. Hence, it is currently missing a broadly reusable learned method for Bayesian fixed-confidence pure exploration when the recommendation itself is continuous and performance is optimized over a task prior.
We introduce \cicpe{}, a theory-guided method that learns to collect data, stop, and recommend directly from trajectories in continuous recommendation spaces.
This type of $(\epsilon,\delta)$-PAC exploration directly addresses problems in experimental sciences such as materials discovery \citep{liu2017materials} and dose-finding \citep{oquigley1990Continual}, where each trial is costly, observations are noisy, and practitioners need not just a good answer but a guarantee that the answer is $\epsilon$-correct with a given confidence.
\begin{figure}[t]
\centering
\includegraphics[width=0.7\linewidth]{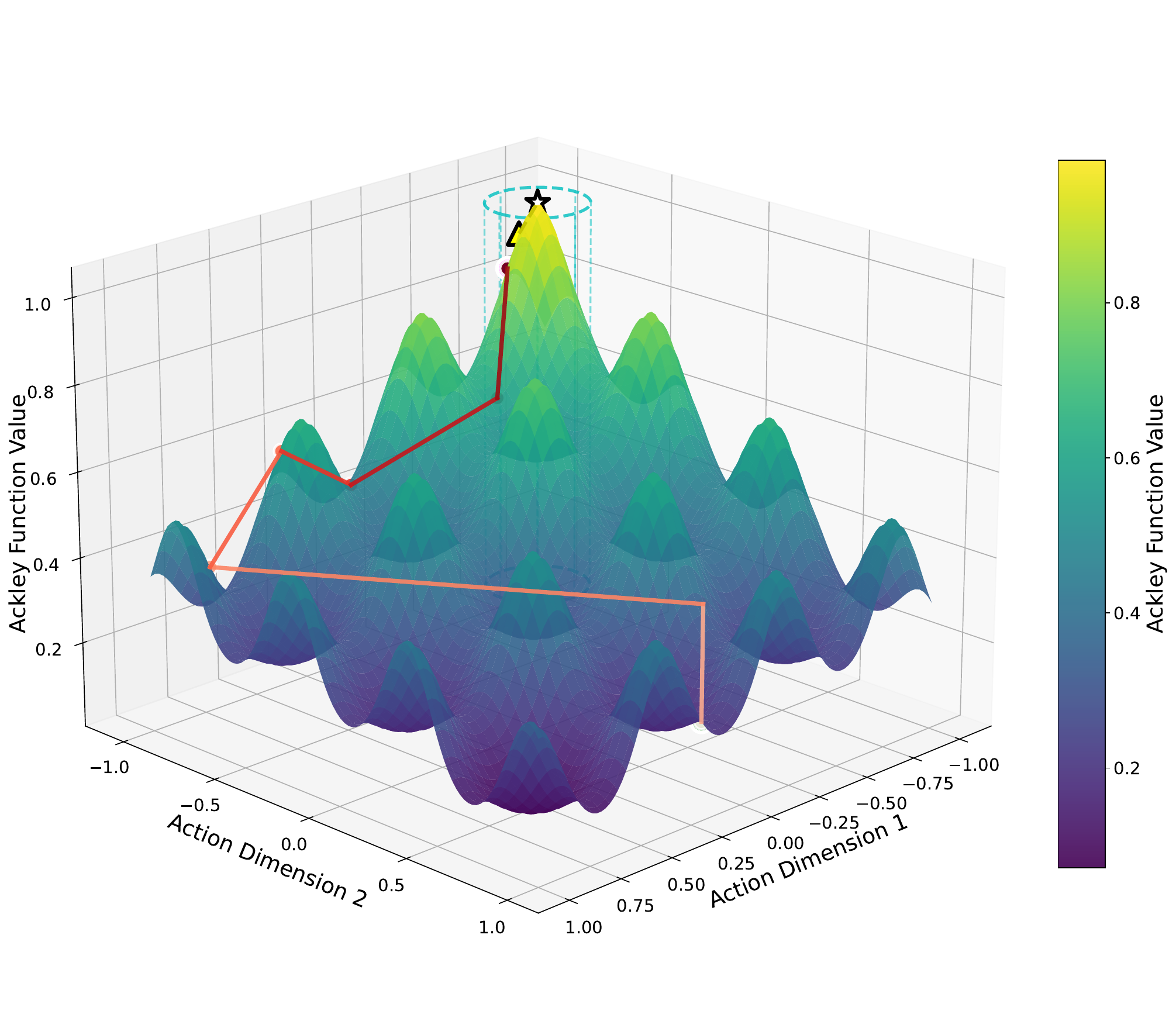}
\caption{ \cicpe{} is able to identify the global maxima (with $\epsilon$-accuracy and $1-\delta$ confidence) of the inverted  Ackley function (with random parameters and observation noise) without gradients while trying to use the least number of data-points.}\label{fig:ackley_example}
\end{figure}
\noindent{\bf Contributions.} First, we formulate Bayesian fixed-confidence pure exploration with continuous recommendations, and establish the corresponding Bellman optimality structure. Second, we prove Bayesian $(\epsilon,\delta)$-correctness under a local closedness condition that is  weaker than the uniqueness assumption in \citep{russo_learning_2025}, using a novel subdifferential argument.
Third, we instantiate this framework into a practical method, \cicpe{},  to train exploration policies that deploy model-free, and evaluate it on noisy binary  search, $\epsilon$-best arm identification on the unit sphere, noisy Ackley minimization, 
value estimation  in Gaussian Processes and a real-world geochemical task where the  goal is to locate peak copper concentration from sparse soil 
measurements \citep{usgs_geochem}.
To our knowledge, this is the first practical learned framework combining continuous recommendations and fixed-confidence stopping.

%% file: sections/problem_setting.tex
\vspace{-5pt}
\section{Problem Setting}\label{sec:problem_setting}
\vspace{-5pt}
We consider a Bayesian family of active sequential decision problems indexed by a latent parameter $\theta\in\Theta$, where $\Theta\subset\mathbb R^d$ is compact and $\theta\sim\nu$. Each environment $M_\theta$ specifies an initial observation law $\rho_\theta\in\Delta(\Y)$ and observation kernels $P_{\theta,t}(\cdot|h_t,a_t)$ over a compact observation space $\Y\subset\mathbb R^n$.

In  sequential testing, the learner interacts with $M_\theta$ over time: in round $t$ it observes the history
\[
H_t\coloneqq(Y_1,A_1,Y_2,\dots,A_{t-1},Y_t),
\]
chooses a query $A_t\in\A\subset\mathbb R^m$ (compact), and observes $Y_{t+1}\sim P_{\theta,t}(\cdot|H_t,A_t)$. The goal is to collect a history that is sufficiently informative to output a high-quality recommendation $\hat x\in\X$, where $\X$ is compact. We refer to $\X$ as the hypothesis or recommendation space. We distinguish $\A$ from $\X$: $\A$ is the query space used to collect information, while $\X$ is the space of objects the learner may return. In many tasks $\X=\A$, but in value-identification tasks $\X$ may instead be a set of possible function values.

\noindent{\bf Risk function.}
Recommendation quality is measured by a task-dependent loss, or risk, function $L_\theta:\X\to[0,\infty)$, satisfying $\inf_{x\in\X}L_\theta(x)=0$. We say that $x$ is $\epsilon$-optimal for task $\theta$ if $L_\theta(x)\leq\epsilon$, and define
\[
\X_\epsilon(\theta)\coloneqq\{x\in\X:L_\theta(x)\leq\epsilon\}.
\]
In the following, we assume that $(\theta,x)\mapsto L_\theta(x)$  is jointly lower semicontinuous (in \cref{sec:appendix:theoretical_results} we state the regularity assumptions used in the theoretical results).
Throughout the paper, $x_\theta^\star$ denotes a selected zero-loss target in the recommendation space $\X$, i.e., $L_\theta(x_\theta^\star)=0$. Depending on
the problem, this object may be an optimizer, a best arm, a threshold,
or an optimal value. When the zero-loss set is not a singleton, we assume a fixed continuous selector $\theta\mapsto x_\theta^\star$ to ensure regularity.

In many examples, the loss is defined through a function $f_\theta$ parametrized by $\theta$. In function optimization problems, we set $\X=\A$, and the goal is to find a point $\hat x\in\X$ that optimizes the function. In this case, one can take the risk loss $L_\theta$ to be value-gap loss  $L_\theta(x)\coloneqq f_\theta(x)-f_\theta(x_\theta^\star)$, with $x_\theta^\star \in F^\star(\theta)$, where $F^\star(\theta)\coloneqq\argmin_{x\in \X}f_\theta(x)$, or a distance loss  in the query space $L_\theta(x)\coloneqq \|x_\theta^\star-x\|$ if the goal is to find $x$ close to a selected optimal point $x_\theta^\star\in F^\star(\theta)$. Other problems include the $\epsilon$-best arm identification problem in multi-armed bandits where $f_\theta(x)$ is the mean reward of arm $x$, or noisy binary search in a continuum, where the agent observes noisy observations of ${\rm sign}(x-x_\theta^\star)$ and the loss is defined in the query space.  Problems where the recommendation space $\X$ is not identical to the query space $\A$ include optimal-value identification, where the learner returns a scalar estimate of the optimal value: we take $\X\subset \mathbb{R}$, and set $L_\theta(x)\coloneqq |x-x_\theta^\star|$ where $x_\theta^\star \coloneqq\max_{a\in\A}f_\theta(a)$.

\noindent{\bf Optimization objective.} We work in the fixed-confidence ($(\epsilon,\delta)$-PAC) regime. A learner is defined by the triplet $(\pi,I,\tau)$:  a sampling policy $\pi=(\pi_t)_{t\ge1}$ such that $A_t=\pi_t(H_t)$; a stopping time $\tau$ with respect to $\mathcal F_t=\sigma(H_t)$, defining when to stop the data acquisition process; an inference rule  $I=(I_t)_{t\geq 1}$ such that $\hat x_\tau = I_\tau(H_\tau)$. The goal of the learner is to adaptively choose queries $A_1,A_2,\dots$ and a stopping time $\tau$, 
 so that the returned $\hat x_\tau$  is $\epsilon$-optimal, i.e. $\hat x_\tau\in \X_\epsilon(\theta)$, with high probability.
Hence, for a given  pair $\epsilon>0,\delta\in(0,1/2)$, we seek to minimize the (expected) number of queries while ensuring $\delta$-correctness: formally, we solve
\begin{equation}\label{eq:fixed_confidence_problem}
\inf_{\tau,\pi,I}\ \mathbb{E}_{\theta\sim \nu}^\pi\left[\tau\right]
\quad\text{s.t.}\quad
\mathbb{P}_{\theta\sim\nu}^\pi\left( \hat x_\tau\in \X_\epsilon(\theta)\right) \geq\ 1-\delta.
\end{equation}
Our formulation is Bayesian: $\nu$ is both a prior over environments and the task distribution used for training and evaluation. This enables amortized learning across tasks: the models are trained on episodes drawn from $\nu$ and transfer to new tasks from the same family without parameter updates. Second, it defines the posterior success probability $q_t(h,x)$ that drives our theory and algorithms. Third, the average-case guarantee under $\nu$ is the natural objective for  applications where the practitioner has domain knowledge about plausible task distributions. For this setup,  we are not aware of analogous Bayes-optimal characterizations for continuous recommendation spaces under general priors, as results are limited to finite settings with Gaussian structure \citep{jang2024Fixed}.

%% file: sections/method.tex
\vspace{-5pt}
\section{Theoretical Background}\label{sec:theoretical_results}
\vspace{-5pt}
This section provides the theoretical foundation for \cicpe{}, where we  characterize an optimal learner. Relative to the finite ICPE analysis of \citet{russo_learning_2025}, the continuous setting introduces two technical issues absent in finite spaces: (i) attainment of the Bellman equation over continuous actions requires establishing a semicontinuity chain from the observation model through the posterior predictive to the $Q$-function, and (ii) the $(\epsilon,\delta)$-correctness proof must handle non-singleton dual optima, which we address via a weaker local closedness condition and a subdifferential argument that replaces the uniqueness and monotonicity assumptions of the finite case. Together, these results provide the first theoretical infrastructure for Bayesian  pure exploration with continuous recommendations.

\noindent{\bf Posterior success and optimal inference.}
In the fixed-confidence setting with continuous $\X$, the relevant posterior object is the posterior probability that $x$ is $\epsilon$-optimal:
\[
q_t(h,x)\coloneqq\mathbb P\left(L_\theta(x)\leq \epsilon\mid H_t=h\right),\; r_t(h)\coloneqq\max_{x\in\X}q_t(h,x).
\]
Here $q_t(h,x)$ is the posterior success probability of recommending $x$, and $r_t(h)$ is the best posterior success probability achievable from history $h$. One can show that the maximum is attained and an optimal inference rule is any  selector (see \cref{prop:optimal_inference_rule} for  a proof)
\[
I_t^\star(h)\in\argmax_{x\in\X}q_t(h,x).
\]

\noindent{\bf Dual formulation and Bellman optimality.}
We study the fixed-confidence problem in \cref{eq:fixed_confidence_problem} through its Lagrangian dual, following the ASHT literature \citep{naghshvar2013Active,russo_learning_2025}. Introducing a multiplier $\lambda\geq 0$ for the correctness constraint gives the dual
\begin{equation}\label{eq:dual_problem}
\inf_{\lambda\geq 0}\sup_{\pi,I,\tau}V_\lambda(\pi,I,\tau),
\end{equation}
where
\[
\resizebox{\hsize}{!}{%
$V_\lambda(\pi,I,\tau)\coloneqq-\mathbb E_{\theta\sim\nu}^\pi[\tau]+\lambda\left(\mathbb P_{\theta\sim\nu}^\pi\left(I_\tau(H_\tau)\in\X_\epsilon(\theta)\right)-1+\delta\right).$%
}
\]
For fixed $\pi$ and $\tau$, optimizing over $I$ replaces the terminal success probability by $\mathbb E^\pi[r_\tau(H_\tau)]$. Hence, for fixed $\lambda$, the inner problem is equivalent up to a constant   to maximizing $\mathbb E^\pi[\lambda r_\tau(H_\tau)-\tau]$.
Stopping can also be embedded as an absorbing action $a_{\rm stop}$: the learner continues with actions in $\A$ until it selects $a_{\rm stop}$, at which point it stops and outputs $I_t^\star(H_t)$; see \cref{lem:stop_as_action}. Thus the fixed-$\lambda$ problem is an optimal-stopping control problem on $\bar\A=\A\cup\{a_{\rm stop}\}$.

For $t\geq 1$ and $h\in{\cal H}_t$, define the optimal reward-to-collect value
\begin{equation}\label{eq:cost_to_go_def}
V_t^\star(h;\lambda)\coloneqq\sup_{\bar\pi}\mathbb E_{\theta\sim\nu}^{\bar\pi}\left[\lambda r_{\bar\tau}(H_{\bar\tau})-(\bar\tau-t)\mid H_t=h\right].
\end{equation}
Define
\begin{align*}
&Q_{t,\rm stop}^\star(h;\lambda)\coloneqq\lambda r_t(h),\\ 
&Q_{t,\rm cont}^\star(h,a;\lambda)\coloneqq -1+\mathbb E\left[V_{t+1}^\star(H_{t+1};\lambda)\mid H_t=h,A_t=a\right],
\end{align*}
where the expectation is under the posterior predictive kernel $\bar P_t(\cdot\mid h,a)$.

\begin{theorem}[Bellman equation and greedy deterministic optimality]\label{thm:main_bellman_greedy}
Assume the regularity conditions of \cref{app:theoretical_results:icpe:fixed_confidence:dual} (\cref{assumption:L_continuity_x_measurability_xtheta} and \cref{assumption:likelihood_continuity}), then
\begin{equation}\label{eq:main_bellman_stop_continue}
V_t^\star(h;\lambda)=\max\left\{Q_{t,\rm stop}^\star(h;\lambda),\sup_{a\in\A}Q_{t,\rm cont}^\star(h,a;\lambda)\right\}.
\end{equation}
Moreover, let $a_t^\star(h)\in\argmax_{a\in\A}Q_{t,\rm cont}^\star(h,a;\lambda)$. The deterministic policy $\bar\pi^\star$ defined by
\[
\bar\pi^\star(h)=a_{\rm stop}\quad\text{if }Q_{t,\rm stop}^\star(h;\lambda)\geq Q_{t,\rm cont}^\star(h,a_t^\star(h);\lambda),
\]
and $\bar\pi^\star(h)=a_t^\star(h)$ otherwise, is optimal for the fixed-$\lambda$ inner problem.
\end{theorem}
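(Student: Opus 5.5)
The plan is to treat the fixed-$\lambda$ inner problem as an optimal stopping and control problem on histories, with bounded terminal reward $\lambda r_t\in[0,\lambda]$ and unit running cost. The first observation is that any policy whose expected stopping time exceeds $\lambda$ does worse than stopping immediately, since stopping at once gives value at least $\lambda r_t(h)\ge 0$. This lets us restrict to policies with $\mathbb E[\bar\tau-t\mid H_t=h]\le\lambda$. It follows that $V_t^\star(h;\lambda)\in[\lambda r_t(h),\lambda]$, so $V^\star$ is bounded, and all expectations below are finite.

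\textbf{Step 1 (Bellman equation as an identity of suprema).} I will show \eqref{eq:main_bellman_stop_continue} with $\sup$ in place of $\max$ by the usual two inequalities.
\begin{itemize}
\item The bound ``$\geq$'' is immediate from two choices at time $t$. Stopping at once gives $Q^\star_{t,\rm stop}$. Playing $a$ and then following $\eta$-optimal continuations from every $H_{t+1}$ gives $Q^\star_{t,\rm cont}(h,a;\lambda)-\eta$. To glue these continuations together we need a measurable selection of $\eta$-optimal policies, which follows from standard Bertsekas--Shreve universally measurable selection.
\item The bound ``$\le$'' comes from conditioning any policy on its first decision and applying the tower property under the posterior predictive $\bar P_t$.
\end{itemize}
Measurability of $r_t$ and of $V_t^\star$ comes from \cref{assumption:L_continuity_x_measurability_xtheta} together with \cref{prop:optimal_inference_rule}.

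\textbf{Step 2 (attainment of the supremum over $\A$).} This is the main obstacle. I plan a semicontinuity chain.
\begin{itemize}
\item By \cref{assumption:likelihood_continuity}, the likelihood is continuous in $(h,a)$. Using Bayes' rule and dominated convergence (with $\Theta$ compact), the posterior $\nu(\cdot\mid h)$ is weakly continuous in $h$, and the predictive $\bar P_t(\cdot\mid h,a)$ is weakly (indeed setwise, via densities) continuous in $(h,a)$.
\item Next, $q_t(h,x)$ is the posterior probability of the closed set $\{\theta: L_\theta(x)\le\epsilon\}$; closedness holds because $L$ is jointly lsc. By Portmanteau, $q_t$ is upper semicontinuous. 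Hence $r_t=\max_x q_t$ is usc, since $\X$ is compact.
\item I then show by backward induction on finite horizons $V^{\star,N}_t$ (forced stop at time $N$) that each $V^{\star,N}_t$ is usc in $h$. The induction step uses that $a\mapsto\int V^{\star,N}_{t+1}\,d\bar P_t(\cdot\mid h,a)$ is usc when the integrand is bounded usc and the kernel is weakly continuous. A max of usc functions is usc, and a sup over the compact set $\A$ of a jointly usc function is again usc.
\item Finally, I pass $N\to\infty$. Since $V^{\star,N}_t\uparrow V_t^\star$, I must justify the limit: truncating a policy with $\mathbb E[\bar\tau]\le\lambda$ at time $N$ costs at most $\lambda\,\mathbb P(\bar\tau>N)\to0$, uniformly in $h$ by Markov's inequality on $\bar\tau$. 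Uniform convergence preserves usc.
\end{itemize}
Thus $Q^\star_{t,\rm cont}(h,\cdot;\lambda)$ is usc on compact $\A$, and the sup is attained at some $a_t^\star(h)$. A measurable version exists by a measurable maximum theorem.

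\textbf{Step 3 (optimality of the greedy policy).} Let $\bar\pi^\star$ be the greedy stop/continue rule. By Step 1, the one-step Bellman identity holds with equality along every trajectory of $\bar\pi^\star$. Unrolling it $N$ steps gives
\[
V_t^\star(h)=\mathbb E^{\bar\pi^\star}\bigl[\lambda r_{\bar\tau}\mathbf 1\{\bar\tau\le N\}-(\bar\tau\wedge N-t)+V^\star_{N}(H_N)\mathbf 1\{\bar\tau>N\}\bigr].
\]
The remaining task is to show that the tail term vanishes. Because $V^\star\le\lambda$, the identity forces $\mathbb E[\bar\tau\wedge N-t]\le\lambda$ for every $N$, so $\bar\tau<\infty$ almost surely with $\mathbb E\bar\tau<\infty$. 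The tail term is then bounded by $\lambda\,\mathbb P(\bar\tau>N)\to0$. Letting $N\to\infty$ by monotone and dominated convergence yields that $\bar\pi^\star$ attains $V_t^\star$, which proves the theorem.
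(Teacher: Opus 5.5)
Your proof is correct, and Step 3 is essentially the paper's verification argument. Step 2, however, reaches semicontinuity from the opposite side.

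The paper works with the cost $W=\lambda-V^\star$ (in your sign convention) and runs value iteration from $W^{(0)}\equiv 0$, i.e.\ from the optimistic reward $\lambda$. Its iterates increase and are lower semicontinuous, so their pointwise limit $U$ is lsc with no uniform control needed. The work then goes into two steps. First, a $\sup_n\inf_a=\inf_a\sup_n$ interchange, using compactness of $\mathcal{A}$, shows that $U$ is a Bellman fixed point. Second, a verification step identifies $U$ with the optimal value: $U$ lower-bounds the cost of every policy, and the greedy policy for $U$ attains it. So in the paper the Bellman equation for $V^\star$ is a consequence rather than an input, and no measurable selection of $\eta$-optimal policies is ever needed.

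You instead truncate from the pessimistic side (forced stop at $N$), where an increasing limit of usc functions need not be usc. You compensate with uniform convergence, obtained from $\mathbb{E}[\bar\tau-t\mid H_t=h]\le\lambda+\eta$ and Markov's inequality. This is valid, and it buys two things the paper does not state:
\begin{itemize}
\item an explicit rate, $0\le V_t^\star-V_t^{\star,N}\le\lambda^2/(N-t)$ uniformly in $h$, which is relevant to the finite horizon $t_{\max}$ used in practice;
\item joint upper semicontinuity of $Q^\star_{t,\rm cont}$ in $(h,a)$, from which a Borel argmax selector follows directly by the standard selection theorem for semicontinuous functions on a product with a compact factor.
\end{itemize}
With uniform convergence in hand, you can also get the Bellman equation by letting $N\to\infty$ in the finite-horizon recursion. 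That makes the appeal to universally measurable selection in Step 1 unnecessary.

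One small repair: to conclude that $r_t=\max_x q_t$ is usc, you need $q_t$ jointly usc in $(h,x)$. Apply Portmanteau to $R_t(\cdot\mid h_n)\otimes\delta_{x_n}$ and the jointly closed set $\{(\theta,x):L_\theta(x)\le\epsilon\}$, rather than to the $\theta$-section for a fixed $x$.
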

Unlike the finite case, where Bellman attainment is automatic, the continuous setting requires verifying that the supremum over $a\in\A$ in \cref{eq:main_bellman_stop_continue} is attained. Our proof (\cref{app:optimal_policy}) establishes this through a value-iteration construction that also handles the coupling with the stopping/continue structure, and  propagates lower semicontinuity from the observation model through the posterior predictive kernel to the $Q$-function. This semicontinuity chain is specific to this problem and does not follow from any existing reference by specialization.


\noindent{\bf Zero duality gap and $(\epsilon,\delta)$-correctness.}
The Bellman theorem characterizes the inner problem for a fixed multiplier $\lambda$. We now state when the Lagrangian relaxation is exact. Let $c(\pi)\coloneqq\mathbb E^\pi[\tau_\pi]$, $\rho(\pi)\coloneqq\mathbb E^\pi[r_{\tau_\pi}(H_{\tau_\pi})]$, and ${\cal K}\coloneqq\{(c(\pi),\rho(\pi)):\pi\in{\cal T}\}$, where $\tau_\pi$ is the stopping time of a policy whose action space embeds the stopping decision. The time-sharing assumption, stated formally in \cref{assump:time_sharing}, says that ex-ante randomization between two admissible policies remains admissible and therefore convexifies ${\cal K}$.

\begin{theorem}[Zero duality gap and $(\epsilon,\delta)$-correctness]\label{thm:main_zero_gap_correctness}
Assume time-sharing (\cref{assump:time_sharing}), and assume strict feasibility (\cref{assump:strict_feasibility_perturbation}), i.e.,  there exists a feasible $\pi_{\rm sf}\in{\cal T}$ such that $\rho(\pi_{\rm sf})>1-\delta$. Then, the duality gap is zero.
Furthermore, let $g(\lambda)\coloneqq\inf_{\pi\in{\cal T}}\{c(\pi)+\lambda(1-\delta-\rho(\pi))\}$, $\lambda^\star\in\argmax_{\lambda\geq0}g(\lambda)$, and let ${\cal S}(\lambda^\star)$ be the set of dual minimizers at $\lambda^\star$. If there exists $\epsilon_0>0$ such that
\[
{\cal K}_{\epsilon_0}(\lambda^\star)\coloneqq\{(c,\rho)\in{\cal K}:c+\lambda^\star(1-\delta-\rho)\leq g(\lambda^\star)+\epsilon_0\}
\]
is closed in $\mathbb R^2$, then there exists a dual-optimal policy $\pi^\star\in{\cal S}(\lambda^\star)$ that is primal optimal. Consequently, with the posterior-optimal inference rule $I_t^\star(h)\in\argmax_{x\in\X}q_t(h,x)$,
\[
\mathbb P_{\theta\sim\nu}^{\pi^\star}\left(L_\theta\left(I_{\tau_{\pi^\star}}^\star(H_{\tau_{\pi^\star}})\right)\leq\epsilon\right)\geq 1-\delta.
\]
\end{theorem}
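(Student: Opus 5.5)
We work in the $(c,\rho)$ plane. The primal problem is $\min\{c:(c,\rho)\in{\cal K},\ \rho\ge 1-\delta\}$, and the dual function is $g(\lambda)=\inf_{(c,\rho)\in{\cal K}}\{c+\lambda(1-\delta-\rho)\}$. Here $\rho(\pi)$ equals the success probability under $I^\star$, by \cref{prop:optimal_inference_rule}.

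\textbf{Step 1: convexity of ${\cal K}$.} Mixing two policies ex ante with weights $\alpha,1-\alpha$ gives, by linearity of expectation, the pair $(\alpha c_1+(1-\alpha)c_2,\ \alpha\rho_1+(1-\alpha)\rho_2)$. By \cref{assump:time_sharing} this mixture is admissible, so ${\cal K}$ is convex. The primal is then a convex program with a single affine constraint.

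\textbf{Step 2: zero duality gap.} Strict feasibility (\cref{assump:strict_feasibility_perturbation}) is a Slater condition. I would run the standard separating hyperplane argument. Let $p^\star$ be the primal value, and define
\[
{\cal C}=\{(u,v): \exists (c,\rho)\in{\cal K},\ u\ge c,\ v\ge 1-\delta-\rho\}.
\]
The set ${\cal C}$ is convex and disjoint from $\{(u,v):u<p^\star,v\le 0\}$. Separating the two sets gives a nonnegative normal $(\mu_0,\mu)$. Slater, applied to $\pi_{\rm sf}$, forces $\mu_0>0$. Normalizing $\mu_0=1$ and setting $\lambda^\star=\mu$ yields $g(\lambda^\star)\ge p^\star$. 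Weak duality gives the reverse inequality, so the gap is zero and $\lambda^\star$ attains the dual maximum.

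\textbf{Step 3: primal attainment from closedness.} This is the main obstacle, because ${\cal S}(\lambda^\star)$ may be empty or may not contain a feasible point. Take a primal minimizing sequence $\pi_n$ with $\rho(\pi_n)\ge 1-\delta$ and $c(\pi_n)\to p^\star=g(\lambda^\star)$. Then
\[
c_n+\lambda^\star(1-\delta-\rho_n)\le c_n\to g(\lambda^\star).
\]
Hence for large $n$ the points $(c_n,\rho_n)$ lie in ${\cal K}_{\epsilon_0}(\lambda^\star)$. This set is bounded: $\rho\in[0,1]$, and $c$ is bounded through the inequality once $\lambda^\star<\infty$. By the closedness hypothesis it is compact. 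Extract a convergent subsequence with limit $(\bar c,\bar\rho)\in{\cal K}$, realized by some $\pi^\star$. Passing to the limit gives:
\begin{itemize}
\item $\bar\rho\ge 1-\delta$;
\item $\bar c=p^\star$;
\item $\bar c+\lambda^\star(1-\delta-\bar\rho)\le g(\lambda^\star)$, so equality holds and $\pi^\star\in{\cal S}(\lambda^\star)$.
\end{itemize}
Such a $\pi^\star$ is therefore both dual-minimizing and primal optimal.

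\textbf{Step 4: the case where a merely dual-optimal policy is infeasible.} If the goal is to show that some element of ${\cal S}(\lambda^\star)$ is primal optimal starting from a dual minimizer, I would argue through the subdifferential. By Danskin-type reasoning, $\partial(-g)(\lambda^\star)$ is the closed convex hull of $\{\rho-(1-\delta):(c,\rho)\in{\cal S}\}$, and closedness of ${\cal K}_{\epsilon_0}$ guarantees that this hull is attained by elements of ${\cal S}$. Optimality of $\lambda^\star$ gives $0\in\partial g(\lambda^\star)$, with the usual sign adjustment if $\lambda^\star=0$. So ${\cal S}$ contains points on both sides of $\rho=1-\delta$, or a point on it.

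The face ${\cal S}$ is convex by Step 1. Mixing the two sides then gives a point with $\rho\ge 1-\delta$ and complementary slackness $\lambda^\star(1-\delta-\rho)=0$. Consequently $c=g(\lambda^\star)=p^\star$, so this point is primal optimal.

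\textbf{Conclusion.} Feasibility $\rho(\pi^\star)\ge 1-\delta$, combined with $\rho(\pi^\star)$ being the success probability under $I^\star$, yields the stated guarantee $\mathbb P^{\pi^\star}_{\theta\sim\nu}\bigl(L_\theta(I^\star_{\tau_{\pi^\star}}(H_{\tau_{\pi^\star}}))\le\epsilon\bigr)\ge 1-\delta$.
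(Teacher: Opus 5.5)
Your proposal is correct and is essentially the paper's proof. You get convexity of ${\cal K}$ from time-sharing and a Slater-type multiplier, which you obtain by separating ${\cal C}$ from $\{u<p^\star,\,v\le 0\}$, whereas the paper takes $\lambda^\star=-s^\star$ for a subgradient $s^\star$ of the convex perturbation value $\varphi(u)=\inf\{c:(c,\rho)\in{\cal K},\,1-\delta-\rho\le u\}$ at $u=0$, which is the same idea in one dimension; you then run exactly the paper's minimizing-sequence argument in the compact slice ${\cal K}_{\epsilon_0}(\lambda^\star)$ (\cref{lem:primal_attainment_closed_slices}) and finish with \cref{lem:stopped_success_decomp}. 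Step 4 is unnecessary once Step 3 is in place; if you keep it, note that Danskin's theorem needs a compact index set, so you must first localize $g$ near $\lambda^\star$ to ${\cal K}_{\epsilon_0}(\lambda^\star)$, or use the Hantoute--L\'opez formula as the paper does in \cref{thm:icpe_correctness}.
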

This theorem improves on the corresponding result in \citep{russo_learning_2025} in two ways. First, we replace the assumption that the dual-optimal policy is unique with a weaker local closedness condition on the near-optimal set ${\cal K}_{\epsilon_0}(\lambda^\star)$: this allows multiple dual-optimal policies, which is  more natural. Second, while \citep{russo_learning_2025} derives correctness via a monotonicity argument on the optimal cost, our proof (\cref{app:theoretical_results:icpe:fixed_confidence:strict_feasibility_and_correctness}) uses a direct subdifferential characterization  to show that if all near-optimal policies have $\rho<1-\delta$, then every subgradient of the dual value is strictly negative, contradicting optimality. Zero duality gap follows from a standard perturbation argument \citep{rockafellar1998variational,borwein2006convex}; see \cref{app:zero_duality_gap_perturbation}.

\vspace{-5pt}
\section{Continuous ICPE: C-ICPE}\label{sec:method}
\vspace{-5pt}

In this section we describe \cicpe{}, a practical method based on the theory of the previous section. \cicpe{} has three components: inference, stopping, and exploration. Each of these are implemented via learned sequential neural architectures, trained end-to-end from interaction data. Once trained, we use \cicpe{} at deployment time (a.k.a. test-time or inference-time) to perform pure exploration. We now describe the models learned by \cicpe{} and the training procedure. More details   can be found in \cref{sec:appendix:algorithms} of the appendix.

\noindent{\bf Training Protocol.} We adopt a similar meta-training protocol as the one used in \citep{russo_learning_2025} to train the models. Briefly,  we assume access to a simulator over $\nu$ from which we can sample trajectories.  We use a meta-training, where we sample tasks $\theta\sim\nu$ and assume access to a zero-loss target $x_\theta^\star\in\X$, collect trajectories using \cicpe, store the data  in  a replay buffer ${\cal B}$, and perform off-policy updates: (1) a likelihood update of the  parameters of the inference model, (2) a DQN-like update of the critic and (3) an update of the actor based on the learned $Q$ function. After training, we freeze all models; deployment requires no access to $x_\theta^\star$ or the prior.  We now discuss the modeling of these components more in detail.

\noindent{\bf Gaussian inference model.}
For a history $h$, the ideal inference rule maximizes the posterior success probability $q_t(h,x)\coloneqq \mathbb P(L_\theta(x)\leq \epsilon|H_t=h)$ over recommendations $x\in \X$. However, computing $q_t$ is not straightforward, as the posterior distribution may have a complex shape. Instead, we train the inference model to learn the posterior law of $x_\theta^\star$ from trajectories and outputs a diagonal Gaussian distribution characterizing the uncertainty around $x_\theta^\star$
\[
 I_\phi(\cdot|h)=\mathcal N\left(\mu_\phi(h),\operatorname{diag}(\sigma^2_\phi(h))\right).
\]
The recommendation at stopping is defined as the mean   $\hat x=\mu_\phi(h)$. The covariance characterizes the uncertainty around this point, and, as shown in \cref{prop:gaussian_nll_moment_projection},  the optimal mean and covariance are the posterior mean and covariance of $x_{\theta}^\star$ given $H_t=h$. Thus the Gaussian is a moment projection of the posterior law of $x_\theta^\star$.  The deployed recommendation $\hat x=\mu_\phi(h)$ should therefore be viewed as a tractable approximation to $\argmax_{x\in\X}q_t(h,x)$, rather than as an exact maximizer of $q_t$. In \cref{app:theoretical_results:gaussian_inference_reward}, we show that $\mu_\phi$ is near-optimal when the posterior uncertainty on $x_\theta^\star$ is small relative to the $\epsilon$-success margin (see \cref{prop:nll_mean_near_optimality}). This justification is most direct for localization losses, where
$\X_\epsilon(\theta)$ is a neighborhood of $x_\theta^\star$.

We train $\phi$ using a log-likelihood loss on a batch of partial trajectories $B=(x_i^\star, H_{t_i})_i$ sampled from the buffer ${\cal B}$, where $x_i^\star$ is the optimal point for trajectory $i$ and $t_i$ is a timestep sampled uniformly at random for that trajectory
\begin{equation}\label{eq:loss_inference}
\mathcal L_{\rm inf}(B;\phi)
=- \sum_{i=1}^{|B|}\log  I_\phi\left(x_{i}^\star|H_{t_i}\right).
\end{equation}
In the following, we denote by $\bar \phi$ the target parameter of the inference model, updated via a Polyak update $\bar \phi \gets (1-\tau_I)\bar\phi +\tau_I \phi$ with $\tau_I\in (0,1)$.

\input{sections/algorithm}
\noindent{\bf Critic and reward definition.} We parametrize the critic by $\psi$, and model it with two heads:  a continuation head $Q_\psi(h,a)$ for $a\in\A$ and a stopping head $Q_\psi(h,a_{\rm stop})$. We also define the value: let $a_{\rm tgt}(h')$ be the continuation target action proposed at the next history by the current policy, then the value is defined as
\[
V_{\psi}(h')\coloneqq
\max\left\{Q_{\psi}(h',a_{\rm stop}),Q_{\psi}(h',a_{\rm tgt}(h'))\right\},
\]
Using the definition of the $Q$ function from \cref{thm:main_bellman_greedy}, we learn the paramter $\psi$ using TD-learning. While  the ideal reward would be $r_t(h)=\max_x q_t(h,x)$, to  better capture the uncertainty around the recommendation $\mu_t(h)$, we use a sampled reward from the target inference model $I_{\bar \phi}$:
\begin{equation}\label{eq:sampled_selector_reward}
\hat r(h,\theta)
\coloneqq
\frac{1}{K}\sum_{k=1}^K
{\bf 1}\left\{L_\theta(X^{(k)})\leq\epsilon\right\},
\qquad
X^{(k)}\sim I_{\bar\phi}(\cdot|h).
\end{equation}
Conditionally on $(h,\theta)$, this is an unbiased estimate of the probability that a sample from the inference distribution lies in $\X_\epsilon(\theta)$. After  averaging over the posterior, this reward is
 $\mathbb E_{X\sim I_{\bar\phi}(\cdot|h)}[q_t(h,X)]\leq r_t(h)$, 
so it is a conservative version of the ideal posterior reward and allows the model to capture the current uncertainty in the recommendation rule (\cref{prop:second_moment_robustness_and_gap} in the appendix makes this precise, as the gap between the practical reward $\widehat r_t(h)$ and the ideal Bellman reward $r_t(h)$ is controlled by the second moment of the inference model).

Then, we sample a batch of partial trajectories $B=(H_{t_i}, A_{t_i}, H_{t_i+1},d_{t_i},x_i^\star)_i$ from the buffer, where $t_i$ is a uniformly sampled timestep for the $i$-th trajectory and $d_{t_i}=1$ when the  maximum horizon is reached. Then, we use targets
\begin{align*}
&y_i^{\rm stop}=\hat r(H_{t_i},\theta),\\
&y_i^{\rm cont}=-c+d_{t_i}\hat r(H_{t_i+1},\theta)+(1-d_{t_i})V_{\bar\psi}(H_{t_i+1}),
\end{align*}
where we reparametrized the Lagrange multiplier as a per-step cost $c=1/\lambda$ and the inner Lagrangian objective remains the same. Then, the  critic loss is
\begin{align}\label{eq:critic_loss_mse}
\mathcal L_{\rm critic}(B;\psi)
&= \frac{1}{2|B|}\sum_{i=1}^{|B|}\Big[
\left(Q_\psi(H_{t_i},A_{t_i})-y_i^{\rm cont}\right)^2\\
\qquad &+
\left(Q_\psi(H_{t_i},a_{\rm stop})-y_i^{\rm stop}\right)^2
\Big].
\end{align}
The critic also decides when to stop: at rollout time the current policy $\pi$ first proposes $A_t$, and the learner stops iff
\[
Q_\psi(H_t,a_{\rm stop})\geq Q_\psi(H_t,A_t).
\]
Hence,  at deployment, no access to $\theta$ is required as the stopping comparison uses only $H_t$.

\noindent{\bf Policy and actors.} The policy, or actor rule, decides what continuation action $a\in \A$ to choose next. Depending on the whether $\X=\A$, we propose three possible actor rules.

\noindent \underline{\it Thompson Sampling (TS) rule}: this rule can be used when the recommendation space and the query space coincide $\X=\A$, and eliminates the need for a separate actor. This rule draws inspiration from classical Thompson Sampling \citep{thompson1933likelihood}: the policy is implicitly represented by the inference model, and the actions are sampled according to $A_t\sim I_\phi(\cdot |H_t)$. This is useful when informative queries are themselves plausible recommendations. Early in an episode the exploration is more spread; later, as the posterior target law contracts, TS concentrates near the current recommendation. As target action for the critic we use the mean value of the inference model $a_{\rm tgt}(h)=\mu_t(h)$.
\vspace{-0.5pt}

\noindent \underline{\it Top Two Posterior Sampling (TTPS) rule}: also this rule can be used when the recommendation space and the query space coincide $\X=\A$. This rule is similar to  classical TTPS \citep{russo_simple_2016}, but we extend it to the continuous case. This rule  draws a sample $A_t\sim I_\phi(\cdot |H_t)$ and, with probability $1/2$, it samples until the new sample  is farther from the mean $\mu_\phi(H_t)$. The logic is that the posterior mean is the current recommendation, while samples farther from it represent plausible alternatives. TTPS therefore spends some probability mass checking alternatives instead of repeatedly querying near the current mean before the stopping critic is confident. As target action for the critic we use the mean value of the inference model $a_{\rm tgt}(h)=\mu_t(h)$.
\vspace{-0.5pt}

\noindent \underline{\it  TD3 rule} \citep{fujimoto2018addressing}: this rule can be used for general recommendation spaces when $\A\neq\X$, and formally tries to solve the Bellman equation in \cref{thm:main_bellman_greedy}. It learns a parametric  actor $\pi_\vartheta(h)\in\A$ from the critic.  The deterministic actor is trained by
\begin{equation}\label{eq:actor_loss_td3}
\mathcal L_{\rm act}(B;\vartheta)=-\frac{1}{|B|}\sum_{i} Q_\psi(H_{t_i},\pi_\vartheta(H_{t_i})).
\end{equation}
The critic target uses the usual TD3 stabilizers: a target actor, target-action smoothing, and twin critics. In case $\X=\A$ we can use a stochastic TD3 variants, where TD3 learns the mean $\bar \mu$ and covariance $\bar \Sigma$ of a Gaussian actor. In this case the  loss is augmented with an imitation learning loss ${\rm KL}(I_\psi(\cdot|h) \| \pi_{\vartheta}(\cdot|h))$ that provides a rich signal:  the actor can increase variance when sampled actions have higher continuation value and shrink it when exploration is no longer useful.


\noindent{\bf Cost update.}  We update the per-step cost $c$ by simply performing a gradient step on the dual variable. We sample a fresh batch of trajectories, and estimate the success rate $\hat p =\frac{1}{|B|}\sum_{i=1}^{|B|}
{\bf 1}\left\{\mu_\phi(H_\tau^{(i)})\in\X_\epsilon(\theta^{(i)})\right\}$, 
and update the cost as follows
\begin{equation}\label{eq:cost_update}
c\gets {\rm Proj}_{[0,1]}\left(c-\eta_c\left((1-\delta)-\hat p\right)\right).
\end{equation}
If empirical success is below $1-\delta$, the cost decreases and trajectories become longer; otherwise, if above the target, the cost increases and stopping becomes more aggressive. 

\noindent{\bf Correctness certification.}
The zero-duality result from the previous section justifies the ideal Lagrangian objective, but a trained model is still approximate. In \cref{app:training_time_certification} we outline how to obtain formal $(\epsilon,\delta)$-guarantees on the trained model (see \cref{prop:checkpointwise_mixture_certification}).

%% file: sections/algorithm.tex
\begin{algorithm}[t!]
 \footnotesize 
   \caption{\cicpe{}  }
   \label{algo:icpe_fixed_confidence}
\begin{algorithmic}[1]
    
   \Statex \texttt{\color{blue}//  Training phase}
     \State Initialize buffer ${\cal B}$, networks $Q_\psi,I_\phi$, actor $\pi$.
   \While{Training is not over}

   \State Sample environment $M_\theta\sim \nu$ and hypothesis $x_\theta^\star$; observe $Y_1\sim\rho$ and  set $t\gets 1$. 
   \Repeat
   \State Execute action $A_t \sim \pi(\cdot\mid H_t)$ according to actor $\pi$ and observe $Y_{t+1}$.
   
   \State Add partial trajectory $(H_t,A_t,Y_{t+1},x_\theta^\star)$ to ${\cal B}$ and set $t\gets t+1$.
   \Until{$Q_{\psi}(H_t,a_{\rm stop}) \geq Q_\psi(H_t,A_t)$.}
   \State In the fixed confidence,   update  $c$ according to \cref{eq:cost_update}.
    \State Sample batch $B\sim {\cal B}$ and update models using ${\cal L}_{\rm inf}(B;\phi)$ (\cref{eq:loss_inference}) and ${\cal L}_{\rm critic}(B;\psi)$ (\cref{eq:critic_loss_mse}); for TD3, train $\pi$ according to ${\cal L}_{\rm act}(B;\psi)$ (\cref{eq:actor_loss_td3}).
   \EndWhile
    \Statex \hrulefill 
   \Statex \texttt{\color{blue}//  Inference/Deployment phase (models are fixed here)}
   \State Sample unknown environment $M\sim \nu$ and collect a trajectory $H_\tau$ using $\pi$ (until  $Q_\psi(H_t,A_{t})\leq Q_\psi(H_t,a_{\rm stop})$).
   \State {\bf Return} $\hat x_\tau = \mu_\phi(H_\tau)$ (recommendation).
\end{algorithmic}
\vspace{-4pt}

\end{algorithm}

%% file: sections/empirical_evaluation.tex
\vspace{-5pt}
\section{Empirical Evaluation}\label{sec:empirical_evaluation}
\vspace{-5pt}

\begin{figure*}
    \centering
    \includegraphics[width=0.8\linewidth]{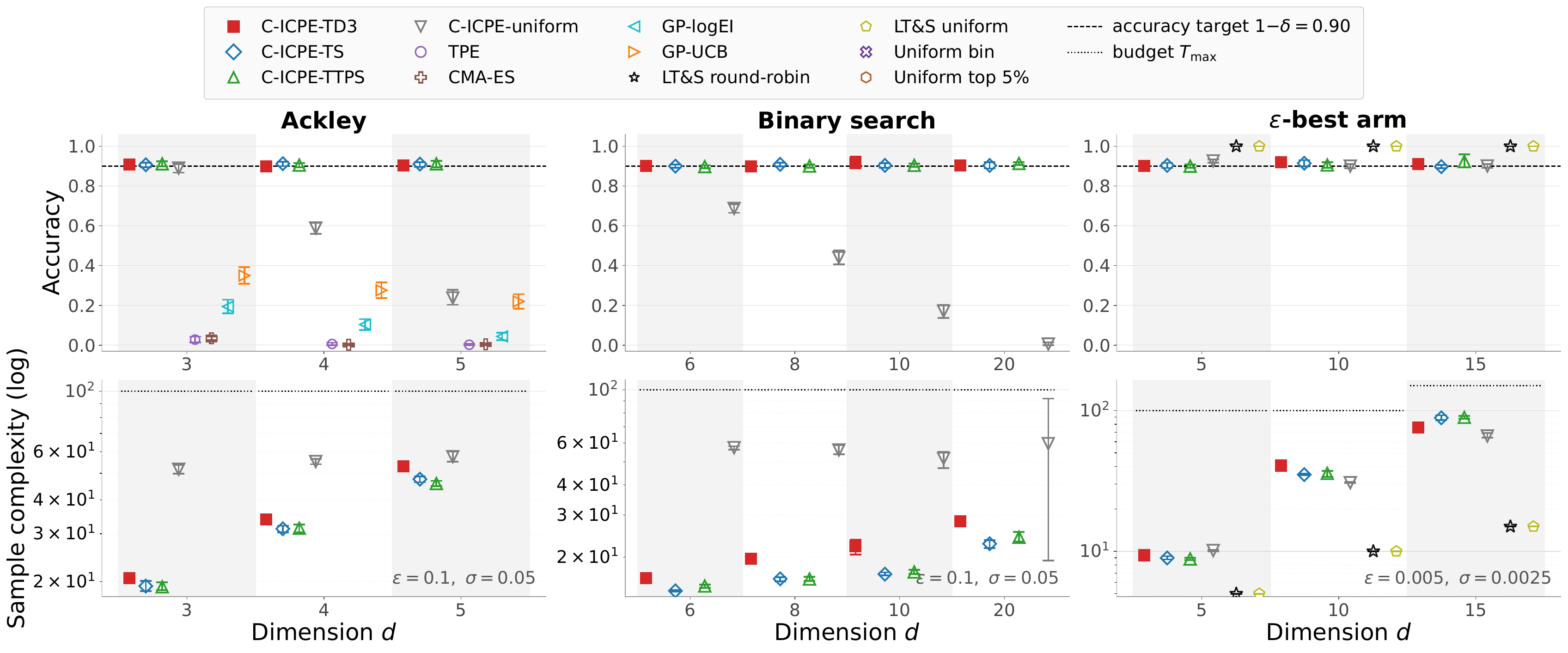}
    \caption{%
  Accuracy (top) and sample complexity (bottom) at the hardest
  $(\varepsilon, \sigma)$ per benchmark.}\label{fig:main_results}
\end{figure*}
We evaluate \cicpe{} on various  benchmarks: noisy binary search, 
$\epsilon$-best arm identification on the unit sphere, Ackley 
minimization, and GP max-value estimation. We also validate on on a real-world 
geochemical exploration task \citep{usgs_geochem}. We compare four exploration rules: TS, 
TTPS, TD3, and uniform sampling. For all experiments we set a maximum 
sample complexity $t_{\max}$ (details in  \cref{sec:appendix:numerical_results}) and report 
$95\%$ confidence intervals using bootstrap.
We also compare against Bayesian optimization baselines whenever possible: 
Tree-structured Parzen Estimator (TPE) \citep{bergstra2011algorithms}, 
Gaussian Process (GP) with UCB or Expected Improvement \citep{srinivas2010gaussian,ament2023unexpected}, and CMA-ES 
\citep{hansen2016cma}. 
Each is given a fixed sample budget larger than the median stopping time 
of \cicpe{}. These methods are not $(\epsilon,\delta)$-correct 
competitors, and optimize a fixed-budget objective. We include them to test whether standard methods
already attains the target correctness at comparable  budgets. 
For $\epsilon$-best arm on the sphere, we additionally compare against 
Lazy Track-and-Stop \citep{jedra2020optimal}, an optimal frequentist 
fixed-confidence baseline.

\vspace{-5pt}
\subsection{Synthetic Benchmarks}
\vspace{-3pt}
We now provide  a brief description of the benchmarks, and then discuss the results.

\noindent {\bf Noisy binary search.}
The environment parameter $\theta$ is drawn uniformly from $[-1,1]^d$, 
with selector $x^\star(\theta) = \theta$. The agent queries $a \in [-1,1]^d$ and observes, per  coordinate, $y_i = \xi_i\cdot\mathrm{sign}(\theta_i - a_i)$, where 
$\xi_i \in \{-1, +1\}$ are i.i.d.\ Rademacher random variables with 
$\mathbb{P}(\xi_i = +1) = 1 - p$. The loss is 
$L_\theta(x) = \|x - \theta\|_2$, so the $\epsilon$-optimal set is 
$\mathcal{X}_\epsilon(\theta) = \{x : \|x - \theta\|_2 \leq 
\epsilon\}$. Here $\mathcal{X} = \mathcal{A} = [-1,1]^d$. The 
difficulty is controlled by the noise rate $p$ and the dimension $d$: 
each coordinate provides one bit of corrupted information per query, and the agent must simultaneously localize all $d$ coordinates.

\noindent {\bf $\epsilon$-best arm on the sphere.}
The environment parameter $\theta$ is drawn uniformly on the unit sphere 
$\mathbb{S}^{d-1}$, with selector $x^\star(\theta) = \theta$. The agent 
queries $a \in [-1,1]^d$ and observes a noisy linear reward 
$y = f_\theta(a) + \xi$, where $f_\theta(a)=\theta^\top a$ and $\xi \sim \mathcal{N}(0, \sigma^2)$. The 
loss is defined via the inner product: 
$L_\theta(x) = 1 - f_\theta(x)$, so the $\epsilon$-optimal set is 
$\mathcal{X}_\epsilon(\theta) = \{x : f_\theta(x) \geq 1 - 
\epsilon\}$. Here $\mathcal{X} = \mathcal{A}$, and the difficulty lies in estimating a direction from scalar projections.

\noindent {\bf Ackley minimization.}
The agent must locate the  global minimizer of a randomly parametrized Ackley 
function \citep{naser2025review}, a standard multimodal 
benchmark for global optimization. The  parameter is 
$\theta = (\alpha, \beta, \gamma, \theta^\star)$, where 
$(\alpha, \beta, \gamma)$ control the function shape and 
$\theta^\star \in [-1,1]^d$ is the global minimizer (selector 
$x^\star(\theta) = \theta^\star$). The agent queries $a \in [-1,1]^d$ 
and observes a normalized function evaluation 
$y = \tilde{f}_{\alpha,\beta,\gamma}(a - \theta^\star) + \xi$, 
$\xi \sim \mathcal{N}(0, \sigma^2)$. The loss is 
$L_\theta(x) = \|x - \theta^\star\|_2$. Here 
$\mathcal{X} = \mathcal{A} = [-1,1]^d$. The difficulty arises from the 
function's many local optima and nearly flat outer region, which can 
trap greedy strategies; the agent must explore globally before 
converging.

\noindent {\bf GP max-value estimation.}
A function $f$ is sampled from a Gaussian process 
$\mathrm{GP}(0, k_{\mathrm{RBF}}(\ell, \sigma_f))$ on $[0,1]^d$, with 
lengthscale $\ell\sim {\rm Unif}[0.05,0.2]$ and output scale $\sigma_f=1$. The target is the scalar maximum value 
$\theta^\star = \max_{x} f(x)$, with selector 
$x^\star(\theta) = \theta^\star \in \mathbb{R}$. The agent queries 
$a \in [0,1]^d$ and observes $y = f(a) + \xi$, 
$\xi \sim \mathcal{N}(0, \sigma^2)$. The loss is 
$L_\theta(x) = |x - \theta^\star|$. This is the 
$\mathcal{X} \neq \mathcal{A}$ setting: the recommendation space 
$\mathcal{X} \subseteq \mathbb{R}$ is scalar while the action space 
$\mathcal{A} = [0,1]^d$ is $d$-dimensional, requiring the TD3 actor to 
learn an exploration policy decoupled from the inference model. The 
difficulty is twofold: the agent must both explore to find the 
region of high function values and estimate the peak value to within 
$\epsilon$, without knowing the function's 
lengthscale in advance. For this problem we compare against two non-parametric baselines: 
(1) uniform sampling over the domain, reporting the trimmed mean of 
the top-$5\%$ observed values; (2) partitioning the domain into 
uniform bins, sampling uniformly within each bin, and reporting the 
highest bin average as the value estimate.

\noindent {\bf Results.} Figs. ~\ref{fig:main_results}-\ref{fig:gp_geochem_result} report accuracy and 
sample complexity  (and their 95\% confidence intervals) at the hardest $(\varepsilon, \sigma)$ per benchmark; 
full sweeps over $(\varepsilon,\sigma,d)$, experimental details, and robustness to prior 
misspecification are in \cref{sec:appendix:numerical_results}.
Across all four tasks, C-ICPE with learned 
exploration (TS, TTPS, or TD3) consistently meets the $1-\delta$ 
accuracy target, while BO baselines and non-parametric estimators fall 
well below,  confirming that fixed-budget optimization does not yield 
$(\varepsilon,\delta)$-correctness. C-ICPE-uniform is competitive at 
low $d$ but degrades as dimension increases on Ackley and binary search, 
where directed exploration matters. On $\varepsilon$-best arm, uniform 
exploration is optimal by rotational symmetry \cite{jedra2020optimal}, and Lazy Track-and-Stop achieves 
optimal sample complexity by exploiting the linear structure; C-ICPE 
matches the accuracy target but uses more samples, reflecting the cost 
of a model-agnostic stopping rule. On GP max-value estimation 
($\mathcal{X}\neq\mathcal{A}$), C-ICPE-TD3 meets the target while 
non-parametric baselines fall short; BO methods are inapplicable here 
as they return locations rather than values. Particularly, in \cref{subsec:sample_complexity_value_vs_argmax} we prove that (\cref{thm:separation}), under an RBF-GP prior 
with interior regularity, max-value estimation is asymptotically 
harder than argmax localization: we establish a sample complexity 
lower bound for value estimation and an upper bound for argmax 
identification via a two-stage algorithm (T-BAL; \cref{alg:argmax}), showing that the 
geometric structure of the function helps localization but not value 
estimation.

\vspace{-8pt}
\subsection{Geochemical Exploration}\label{subsec:geochemical}
\vspace{-3pt}
\begin{figure*}
    \centering
     \begin{minipage}{0.3\linewidth}
         \centering
         \includegraphics[width=\linewidth]{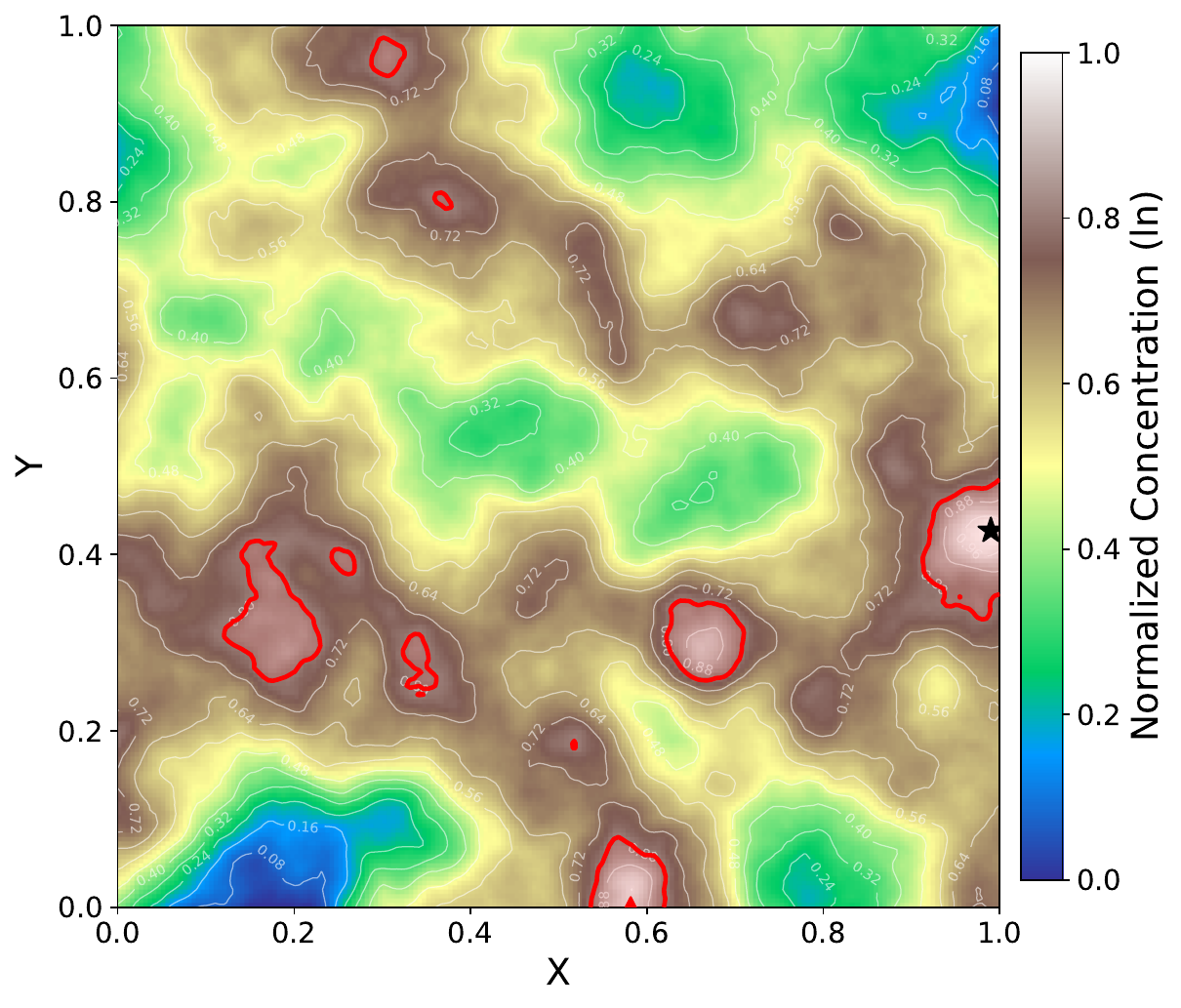}
         \caption{ copper concentration in a 2D region in the geochemical exploration task. Red regions indicate concentration of copper within $\epsilon$ of the maximum value.}
         \label{fig:placeholder}
     \end{minipage}
 \hfill
     \begin{minipage}{0.55\linewidth}
         \centering
        \includegraphics[width=\linewidth]{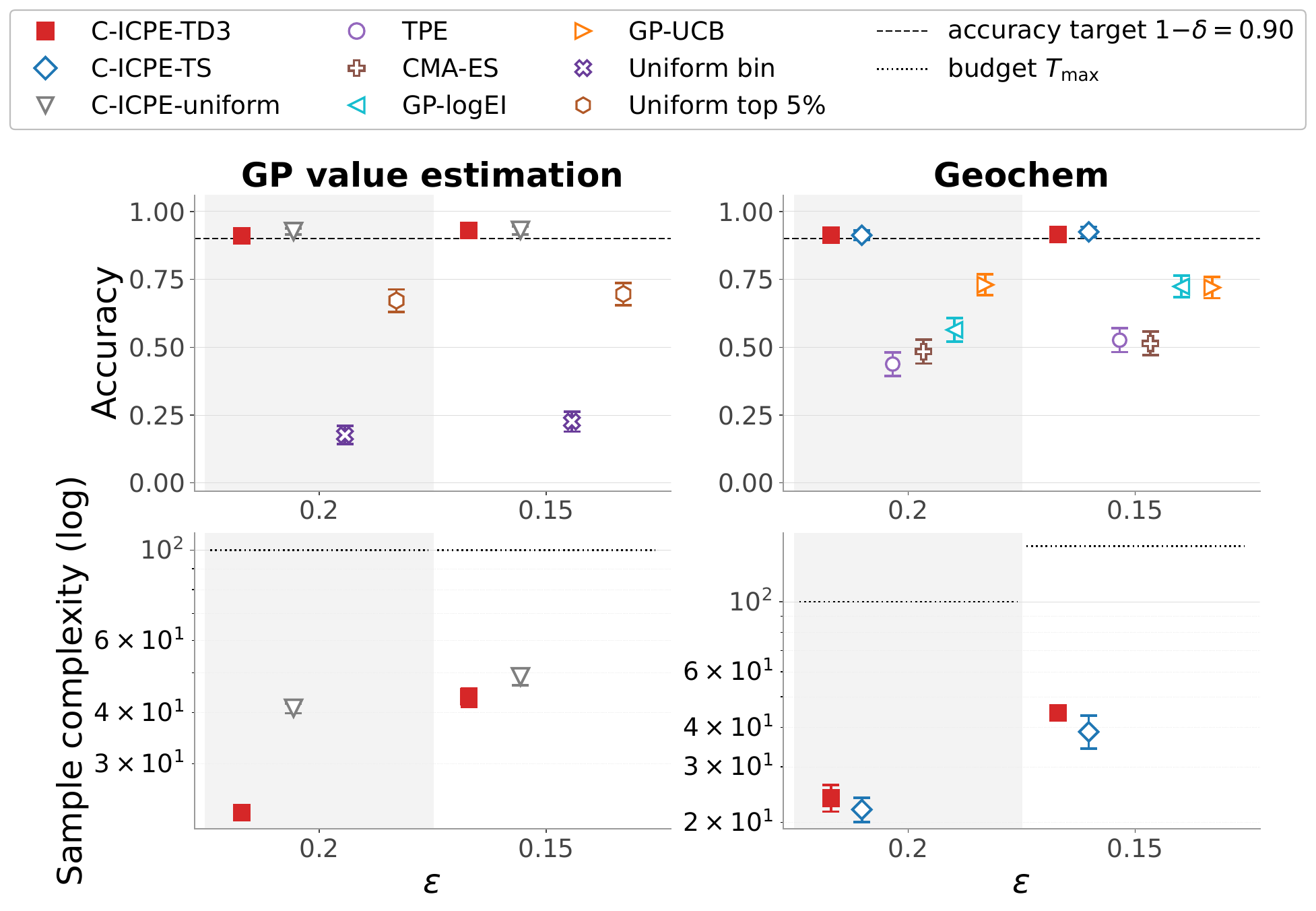}
        \caption{%
  Accuracy (top) and sample complexity (bottom) at the hardest
  $\sigma$ per benchmark.}
        \label{fig:gp_geochem_result}
    \end{minipage}
    \vspace{-7pt}
\end{figure*}


Lastly, we construct a realistic task using data from the 
USGS  Geochemical Survey \citep{usgs_geochem}, which provides  
 measurements of copper concentration across the  
United States. The goal is to identify the location of peak copper 
concentration in an unknown region with $(\epsilon,\delta)$-guarantees. We 
partition the data into geographic regions, fit a sparse variational 
Gaussian process to each, and split regions into training and evaluation. 
Training regions are used to meta-train \cicpe{}; we evaluate on held-out 
regions whose spatial structure was not seen during training. 
See also \cref{app:geochem} for more details.

\noindent {\bf Results.} In \cref{fig:gp_geochem_result} we present the results. In this problem, for sake of simplicity we only test  the TD3 and TS actors for ICPE, and compare with respect to classical Bayesian baselines. The results show accuracy and sample complexity on held-out regions:
C-ICPE-TD3, C-ICPE-TS achieve the $1-\delta$ accuracy target, while
BO baselines  fall below the target while using a larger sample budget, 
demonstrating that both learned exploration and learned stopping 
contribute on this real-data task. This shows evidence that \cicpe{} 
transfers across tasks with genuine distribution shift. In \cref{app:geochem} we report the experimental details, and results for different values of $\epsilon$.

%% file: sections/conclusions.tex
\vspace{-10pt}
\section{Discussion, Related Work and Conclusions}\label{sec:conclusions}
\vspace{-5pt}
 Active sequential hypothesis testing (ASHT) provides the broad conceptual umbrella for this paper. In ASHT, a learner adaptively selects experiments and decides when to stop and declare a hypothesis, with the objective of minimizing expected sample size subject to a correctness constraint \citep{chernoff_sequential_1959,wald_optimum_1948,ghosh1991brief,naghshvar2013Active,naghshvar2012noisy}. A key methodological theme in this literature is that fixed-confidence constraints can be handled via Lagrangian duality. Closely related line of works include Bayesian experimental design and Bayesian active learning, which study adaptive data acquisition when the unknown is drawn from a known prior, typically optimizing expected utility or information gain \citep{lindley1956measure,golovin2011adaptive,rainforth2024modern}, and active learning, which emphasizes selecting informative queries/labels to reduce uncertainty efficiently \citep{cohn1996active}. 
Despite the shared emphasis on adaptive measurement and sequential stopping, most classical ASHT results assume substantial knowledge of the observation model: one typically has access to likelihoods (or at least to a parametric family) for every experiment under every hypothesis, enabling explicit likelihood-ratio statistics and model-based allocation \citep{naghshvar2013Active}. While there are efforts toward relaxing this assumption to partial model knowledge \citep{cecchi2017adaptive}, the need for explicit likelihood structure remains a limiting factor for modern continuous environments with complex, history-dependent feedback. In many practical settings, the learner must instead infer both (i) which latent environment/task it is facing and (ii) which actions are informative, using only interaction data and function approximation. This motivates data-driven approaches that preserve the ASHT objective while reducing dependence on  fully specified likelihoods.

The most developed special case of ASHT is fixed-confidence pure 
exploration in bandits, where the hypothesis is the identity of an 
optimal action. In finite-armed bandits \citep{lattimore2020bandit}, best-arm identification (BAI) at $\epsilon=0$ is characterized by a mature theory: instance-dependent lower bounds quantify the intrinsic complexity of identifying the best arm \citep{garivier2016optimal,degenne_non-asymptotic_2019,wang_optimal_2020,jedra2020optimal,kocak_best_2021,russo2023sample,poiani_best-arm_2025,pmlr-v258-russo25a}, and a family of algorithms achieves near-optimal sample complexity by coupling adaptive allocation with statistically valid stopping rules \citep{audibert2010best,russo_simple_2016,garivier2016optimal,wang2021Fast,jourdan_top_2022}. These results provide both sharp guidance and strong baselines, but they  rely on a finite decision set and model-specific likelihood constructions. Similar themes arise in pure exploration for Markov decision processes, 
where the goal is to identify an optimal policy with probability at 
least $1-\delta$ 
\citep{al2021navigating,taupin2023best,amini_complexity_2023,russo2023model,russomulti}.
This literature yields sharp insights into exploration complexity but 
is likewise developed for finite state-action structure and frequentist 
guarantees.

Even under well-specified models, moving from $\epsilon=0$  to $(\epsilon,\delta)$-PAC identification in continuous decision spaces can be technically demanding.
Several recent works address continuous pure exploration in bandit models
\citep{garivier2021nonasymptotic}. \citet{takemori2025instance} give a 
tractable algorithm for continuous-arm linear bandits and 
\citet{poiani2025Pure} derive lower bounds and a Track-and-Stop 
framework for infinite-answer problems; both are frequentist, 
model-specific, and require explicit likelihood structure. In MDPs, some  work begun to treat $(\epsilon,\delta)$-PAC objectives in finite MDP settings for best policy identification \citep{tirinzoni2022near} and  optimal data-collection for policy evaluation \citep{russo_adaptive_2025}, both in the frequentist setting.
 To our 
knowledge, no general Bayesian theory is known for continuous 
recommendation spaces under general priors; current Bayesian 
fixed-confidence results are limited to finite bandits with Gaussian 
likelihoods and Gaussian priors \citep{jang2024Fixed}.
While Bayesian ideas drive exploration in finite bandit (with frequentist guarantees), e.g. 
posterior sampling and top-two methods 
\citep{russo2014learning,russo_simple_2016,shang2020ttps}, the most developed 
Bayesian framework in continuous spaces is Bayesian optimization (BO) 
\citep{garnett_bayesoptbook_2023}, which maintains a posterior over an 
unknown objective and selects queries via acquisition functions 
\citep{hernandez-lobato_predictive_2014,hennig2012Entropy}. BO aims to identify an optimizer, but is typically 
posed as fixed-budget optimization without a correctness constraint.
\citet{wilson2024stopping} recently introduced a Bayesian 
$(\epsilon,\delta)$-stopping rule for BO, but it is restricted to GP 
surrogates and does not optimize sample complexity. 
Despite this progress, no existing method combines three elements:  fixed-confidence $(\epsilon,\delta)$ stopping, continuous  recommendations, and a learned  exploration and recommender procedure under a Bayesian formulation. \cicpe{} addresses this  gap.

\noindent{\bf Conclusions.}
\cicpe{} is a theory-inspired  method for Bayesian 
fixed-confidence pure exploration with continuous recommendations. On 
the theoretical side, we establish that the Lagrangian duality and 
Bellman optimality structure of finite ASHT carries over to continuous 
spaces under  regularity conditions, and prove 
$(\epsilon,\delta)$-correctness under a local closedness assumption 
that is weaker than the uniqueness condition required in \citep{russo_learning_2025}. On the algorithmic side, we show that \cicpe{} achieves the desired guarantees across different  
tasks, including a real-world geochemical exploration problem,
while using  fewer samples than  standard optimization baselines. To our knowledge, no prior method 
combines continuous recommendations, fixed-confidence stopping  in a single practical framework. 
Limitations and broader impact are discussed in 
\cref{app:limitations}.

%% file: sections/appendix/appendix.tex
\onecolumn
\appendix
\makeatletter
\let\addcontentsline\saved@addcontentsline
\makeatother

\etocsettocstyle{\section*{Table of Contents for the Appendix}}{}
\etocsettocdepth{subsection}
\tableofcontents
\newpage
\input{limitations_broader_impact}
\newpage
\input{sections/appendix/theory}

\newpage
\input{sections/appendix/algorithms}
\newpage
\input{sections/appendix/results}

%% file: limitations_broader_impact.tex
\section{Limitations and Broader Impact}\label{app:limitations}

\paragraph{Limitations.}

\noindent \underline{\it Parametric inference model.}
\cicpe{} models the posterior law of the target $x_\theta^\star$ with a
diagonal Gaussian. In the ideal NLL objective, this corresponds to a
moment projection: the mean matches $\mathbb E[x_\theta^\star|H_t]$ and the
diagonal covariance matches the posterior coordinate variances. This is only a
surrogate for the Bayes $\epsilon$-rule $\argmax_{x\in\X}q_t(H_t,x)$. When the
posterior over $x_\theta^\star$ is multimodal, or poorly
summarized by first and second moments, the mean recommendation may lie between
plausible targets and the covariance may misrepresent uncertainty. This can
affect both exploration and stopping, since the critic evaluates samples from
the same inference distribution. Richer posterior families, such as mixtures or
normalizing flows, or a direct model of $q_t(h,x)$, could reduce this mismatch
at the cost of additional optimization and training complexity.

Note that this limitation is most benign in localization benchmarks, where
$\X_\epsilon(\theta)$ is a ball or interval around $x_\theta^\star$. It is more
pronounced in value-gap tasks such as Ackley or geochemical optimization, where
the $\epsilon$-optimal set can be anisotropic, nonconvex, or multimodal. These
tasks are therefore useful stress tests for the Gaussian inference model.

\noindent \underline{\it  Selector availability.}
Training the inference model requires access to the selector 
$x^\star(\theta)$ for each sampled task $\theta \sim \nu$. In our 
benchmarks this is available in closed form (the shifted minimizer for 
Ackley, the parameter itself for binary search, the GP maximum for 
max-value estimation). In general, computing $x^\star(\theta)$ may 
require numerical optimization, introducing approximation error in the 
NLL targets. If the selector can only be evaluated approximately or 
with noise, the inference model may learn a biased posterior, 
potentially affecting both recommendation quality and stopping 
calibration. Extending \cicpe{} to settings where only noisy or 
approximate selectors are available is an important direction for 
future work.

\noindent \underline{\it Cost  calibration.}
The dual variable $c$, which controls the exploration--stopping 
tradeoff, is updated online during training via primal feedback on the 
empirical success rate. In practice, the convergence of $c$ and the 
sensitivity of stopping behavior to its value require careful tuning of 
learning rates and update schedules. 

\noindent \underline{\it Decoupled action and recommendation spaces.}
When $\mathcal{X} \neq \mathcal{A}$, \cicpe{} requires a separate TD3 
actor to learn the exploration policy. This adds architectural 
complexity and an additional source of approximation error. Our GP 
max-value experiment exercises this setting.

\noindent \underline{\it Bayesian guarantee and prior dependence.}
The $(\epsilon,\delta)$-correctness guarantee is average-case under 
the task prior $\nu$. When the deployment distribution differs 
substantially from $\nu$, correctness may degrade. Our robustness 
experiments and the prior-shift bounds in \cref{sec:prior_misspecification} 
provide some quantitative control, but worst-case guarantees for 
individual task instances are not provided. This limitation is shared 
by all Bayesian methods and is analogous to the dependence of 
frequentist methods on their parametric assumptions.

\noindent \underline{\it Scalability.}
 The method's behavior in 
higher dimensions (larger than $d\geq 50$), where posterior concentration is slower and 
exploration is harder, remains to be investigated. The LSTM (or transformer)
architecture scales with the maximum horizon $t_{\max}$, which may need to 
grow with dimension, increasing both training and inference cost.

\paragraph{Broader impact.}
\cicpe{} is a general-purpose tool for adaptive experimentation with 
correctness guarantees. Potential applications include materials 
discovery, dose-finding in clinical trials, environmental monitoring, 
and any setting where sequential experiments are costly and the 
practitioner requires a principled stopping criterion. In such 
settings, reducing sample complexity directly translates to reduced 
cost, time, and resource consumption.

We do not foresee direct negative societal impacts from the method 
itself. However, as with any system that automates experimental 
decisions, users should be aware that the $(\epsilon,\delta)$ guarantee 
is conditional on the modeling assumptions (the task prior $\nu$ and 
the observation model). Deploying \cicpe{} in safety-critical domains, such as clinical dose-finding, would require careful validation 
of these assumptions and, where appropriate, additional safeguards 
beyond the Bayesian guarantee.

%% file: sections/appendix/theory.tex
\section{Appendix: Theoretical Results}\label{sec:appendix:theoretical_results}
\paragraph{Roadmap and novelty guide.}
The theoretical analysis proceeds in four stages. We summarize what is standard and what is new relative to the finite ICPE framework of \citet{russo_learning_2025}.

\begin{itemize}
\item \textbf{\S\ref{app:theoretical_results:icpe:posterior_distribution_and_inference_rule}: Posterior success probability.} We define $q_t(h,x)=\mathbb{P}(L_\theta(x)\leq\epsilon\mid H_t=h)$ and establish its regularity properties. The proofs use standard tools (Radon--Nikodym, reverse Fatou, Portmanteau); the object $q_t(h,x)$ itself, the natural continuous analogue of posterior mass, has not previously been studied in the pure exploration literature.

\item \textbf{\S\ref{app:theoretical_results:icpe:fixed_confidence:dual}: Bellman optimality.} We prove that the optimal value satisfies a stop/continue Bellman equation and that the supremum over continuation actions is attained by a measurable selector. The proof adapts the value-iteration framework of \citet{bertsekas1996stochastic} to our setting, with the main technical content being a semicontinuity induction that threads likelihood continuity through posterior weak continuity, predictive weak continuity, $Q$-function lower semicontinuity, and a sup--inf interchange. None of these steps appear in \citet{russo_learning_2025}, where attainment is automatic for finite action spaces. The resulting dependency chain and its coupling with the posterior success payoff are specific to this problem.

\item \textbf{\S\ref{app:theoretical_results:icpe:fixed_confidence:strict_feasibility_and_correctness}--\ref{app:zero_duality_gap_perturbation}: Correctness and zero duality gap.} We prove $(\epsilon,\delta)$-correctness under a local closedness condition that is strictly weaker than the uniqueness assumption in \citet{russo_learning_2025}. Our proof uses a subdifferential characterization from \citet{hantoute2008Characterizations} to derive a contradiction without the monotonicity argument of \citet{russo_learning_2025}; this is the strongest theorem-level novelty in the appendix. Zero duality gap follows using a perturbation argument \citep{rockafellar1974conjugate}.

\item \textbf{\S\ref{app:training_time_certification}: Model certification.} We introduce a checkpointwise certification protocol based on mixture supermartingales \citep{kaufmann2021mixture}. Unlike the pooled approach in \citet{russo_learning_2025}, it tests each frozen checkpoint independently and requires no monotonicity assumption on the training trajectory.

\item \textbf{\S\ref{app:theoretical_results:gaussian_inference_reward}: Inference model and sampled reward.}
We relate the implemented Gaussian inference model to the ideal posterior quantities used in the Bellman characterization. We show that the sampled reward is an unbiased estimate of the success probability of the stochastic selector and is conservative relative to the ideal reward $r_t(h)$; the gap is controlled by second moments of the inference distribution (\cref{prop:second_moment_robustness_and_gap}). We also characterize the Gaussian NLL as a moment projection of the posterior law of $x_\theta^\star$ and give conditions under which the NLL mean is near-optimal  (\cref{prop:gaussian_nll_moment_projection,prop:nll_mean_near_optimality}).
\item \textbf{\S\ref{sec:prior_misspecification}: Robustness to prior misspecification.} We  analyse the robustness to prior misspecification, and what is the predicted impact on sample complexity and accuracy.

\item \textbf{\S\ref{subsec:sample_complexity_value_vs_argmax}: Sample Complexity of Value Estimation vs Argmax Localization in Gaussian Processes.} In this section we prove that, under a hierarchical RBF-GP prior with a high-probability interior regularity condition, max-value estimation is asymptotically harder than argmax localization. Specifically, under this regularity assumption, we establish that the  value estimation  problem cannot be circumvented by the geometric structure of the function. 
\begin{itemize}
    \item The analysis is based on showing a lower bound on the sample complexity of estimating the max-value, and an upper bound on estimating the argmax.
    \item To this aim, we introduce an algorithm, Two-Stage Bayesian Argmax Localization (T-BAL) \cref{alg:argmax}, for locating the argmax of a Gaussian Process.
    \item T-BAL works in two phases: first, searches the domain for a region where $X^\star$ may be located, and then perform gradient ascent using noisy finite differences to approximate the gradients.
    \item We provide a sample complexity upper bound of T-BAL and provide $(\epsilon,\delta)$-guarantees.
\end{itemize}
\end{itemize}

\subsection{Problem Modeling}
We specialize to the fixed-confidence ($(\epsilon,\delta)$-PAC) setting introduced in Section~\ref{sec:problem_setting}, and provide a self-contained definition of the induced probability measures.

We now provide a formal definition of the underlying probability measures of the problem we consider. To that aim, it is important to formally define what a model $M$ is, as well as the definition of policy $\pi$ and inference rule $I$ (infernece rules are also known as  recommendation rules).

\paragraph{Spaces and histories.}
Let $\Theta\subset\mathbb R^d$ be compact. Let $\A\subset\mathbb R^m$ be a compact action (query) space and $\Y\subset\mathbb R^n$ a compact observation space, each endowed with the Borel $\sigma$-algebra. Let $\X$ be a compact hypothesis/decision space (in our experiments $\X=\A$).
For $t\in\mathbb N$, define the history space
\[
\mathcal H_t \coloneqq (\Y\times\A)^{t-1}\times \Y,
\qquad
h_t=(y_1,a_1,\ldots,a_{t-1},y_t),
\]
with its product Borel $\sigma$-algebra. We also write $\mathcal H_\infty \coloneqq \Y\times(\A\times\Y)^{\mathbb N}$ for infinite histories.
Since $\A,\Y$ are compact metric spaces, $\mathcal H_t$ and $\mathcal H_\infty$ are standard Borel spaces.

\paragraph{Environment (observation model).}
An environment is indexed by $\theta\in\Theta$ and specified by an initial observation law $\rho_\theta\in\Delta(\Y)$ and a sequence of (possibly history-dependent) observation kernels
\[
P_{\theta,t}(\cdot|h_t,a_t)\in\Delta(\Y),\qquad t\ge 1,
\]
such that for every Borel $C\subset\Y$ the map $(h_t,a)\mapsto P_{\theta,t}(C| h_t,a)$ is measurable.
Optionally, one may assume weak continuity in $\theta$. However, we do assume weak continuity in $a$, as this is later used to prove optimality.
\begin{assumption}[Weak continuity of the transition]\label{assump:weak_continuity_transition}
For all $\theta\in \Theta$ we assume $a\mapsto P_{\theta,t}(\cdot|h_t,a)$ to be weakly continuous.\footnote{That is, for all continuous bounded functions $f$ we have that $a\mapsto \int_{{\cal Y}} f(y) P_{\theta,t}({\rm d}y|h_t,a)$ is continuous.}
\end{assumption}

\paragraph{Learner: policy, stopping time, inference rule.}
A (possibly randomized) sampling policy is a sequence of probability kernels
\[
\pi_t(\cdot| h_t)\in\Delta(\A),\qquad t\geq 1,
\]
measurable as maps $\mathcal H_t\to \Delta(\A)$.
Let $H_t=(Y_1,A_1,\ldots,A_{t-1},Y_t)$ be the random history and $\mathcal F_t=\sigma(H_t)$.
A stopping time $\tau$ is defined w.r.t. $(\mathcal F_t)_{t\ge1}$.
An inference rule is a sequence of measurable maps $I_t:\mathcal H_t\to \X$, and the learner outputs
\[
\hat x_\tau \coloneqq I_\tau(H_\tau).
\]

\paragraph{Loss and $\epsilon$-optimal set.}
For each $\theta\in\Theta$, the environment induces a loss function $L_\theta:\X\to[0,\infty)$ with $\inf_{x\in\X}L_\theta(x)=0$.
Define the $\epsilon$-optimal set
\[
\X_\epsilon(\theta)\coloneqq\{x\in\X:\ L_\theta(x)\le\epsilon\}.
\]
In the following we the following conditions on $L_\theta(x)$.
\begin{assumption}\label{assumption:L_continuity_x_measurability_xtheta}
    We assume joint lower-semicontinuity of $(x,\theta)\mapsto L_\theta(x)$   and joint Borel measurability.
\end{assumption}

\begin{remark}[Regularity of selector-based localization losses]
\label{rem:selector_based_loss_regular}
Some of our localization examples define the loss through a selected target
$x_{\rm sel}^\star(\theta)\in\X$, for instance $L_\theta(x)=\|x-x_{\rm sel}^\star(\theta)\|$. In this case, the standing lower-semicontinuity assumption on $L(\theta,x)=L_\theta(x)$ is satisfied whenever the selector $\theta\mapsto x_{\rm sel}^\star(\theta)$ is continuous. There are two standard ways to obtain such a selector. 
\begin{enumerate}
    \item First, if $F(\theta)\coloneqq\argmax_{x\in\X} f_\theta(x) = \{x^\star(\theta)\}$ is singleton for every $\theta$, $f(\theta,x)$ is jointly continuous, and $\X$ is compact, then the maximum theorem implies that the unique optimizer  is continuous, and thus $x_{\rm sel}^\star(\theta)=x^\star(\theta)$ is continuous. This covers the shifted Ackley benchmark when the shift determines a unique optimizer continuously, the linear bandit benchmark on the unit sphere where $x^\star(\theta)=\theta$, and noisy binary search when the target map is continuous. 
    \item Second, if $F(\theta)$ is not singleton but has nonempty compact convex values and is Hausdorff-continuous in $\theta$, then the minimum-norm selector
$
x_{\rm sel}^\star(\theta)\coloneqq\argmin_{x\in F(\theta)}\|x\|^2
$
is well-defined and continuous: closed convex values give uniqueness of the minimum-norm point, while Hausdorff continuity gives stability of this point as $\theta$ varies. 
\end{enumerate}
Thus the selector-based distance loss is jointly continuous in these cases. This selector also gives a canonical target for the Gaussian inference model: the NLL objective learns a moment projection of the posterior law of $x_{\rm sel}^\star(\theta)$, while correctness remains defined through the loss $L_\theta$ and the set $\X_\epsilon(\theta)$.
\end{remark}

\paragraph{Path measures (Ionescu--Tulcea).}
Fix $\theta\in\Theta$ and a policy $\pi$. By the Ionescu--Tulcea theorem, there exists a unique probability measure $\mathbb P_{\theta,t}^\pi$ on $(\mathcal H_t,\mathcal B(\mathcal H_t))$ such that for all cylinder sets
$C=C_1\times B_1\times\cdots\times B_{t-1}\times C_t$
(with $C_i\in\mathcal B(\Y)$ and $B_i\in\mathcal B(\A)$),
\begin{align*}
\mathbb P_{\theta,t}^\pi(C)
&=\int_{C_1}\rho_\theta(dy_1)
\prod_{s=1}^{t-1}\left[
\int_{B_s}\pi_s({\rm d}a_s | h_s)\,
\int_{C_{s+1}}P_{\theta,s}({\rm d}y_{s+1}| h_s,a_s)\right].
\end{align*}
Analogously, one obtains a unique path measure $\mathbb P_{\theta}^\pi$ on $(\mathcal H_\infty,\mathcal B(\mathcal H_\infty))$.

\paragraph{Mixture law over tasks.}
Given a prior $\nu$ on $\Theta$, define the joint law on $\Theta\times\mathcal H_t$ by
\[
\mathbf P_t^\pi({\rm d}\theta,{\rm d}h_t)\coloneqq \nu({\rm d}\theta)\,\mathbb P_{\theta,t}^\pi({\rm d}h_t),
\]
and the trajectory marginal $\mathbb P_t^\pi(\cdot)=\int \mathbb P_{\theta,t}^\pi(\cdot)\,\nu({\rm d}\theta)$.
We use $\mathbb E_{\theta\sim\nu}^\pi[\cdot]$ and $\mathbb P_{\theta\sim\nu}^\pi(\cdot)$ for expectations/probabilities under this mixture.

\paragraph{Fixed-confidence objective.}
The learner is $(\epsilon,\delta)$-correct (under $\nu$) if
\[
\mathbb P_{\theta\sim\nu}^\pi\!\left(\hat x_\tau\in\X_\epsilon(\theta)\right)\ge 1-\delta.
\]
In the fixed-confidence regime, we seek to minimize the expected number of queries subject to $(\epsilon,\delta)$-correctness:
\[
\inf_{\pi,I,\tau}\ \mathbb E_{\theta\sim\nu}^\pi[\tau]
\quad\text{s.t.}\quad
\mathbb P_{\theta\sim\nu}^\pi\!\left(\hat x_\tau\in\X_\epsilon(\theta)\right)\ge 1-\delta.
\]

\subsection{Posterior distribution over the true hypothesis and inference rule optimality}\label{app:theoretical_results:icpe:posterior_distribution_and_inference_rule}
We first record a domination assumption that allows us to express likelihoods w.r.t.\ fixed reference measures.

\begin{assumption}[Domination]\label{assump:domination_param}
There exist probability measures $\lambda_0,\lambda$ on $(\Y,\mathcal B(\Y))$ such that, for all $\theta\in\Theta$, all $t\ge1$, and all $(h_t,a)\in\mathcal H_t\times\A$,
\[
\rho_\theta(\cdot)\ll \lambda_0(\cdot)
\quad\text{and}\quad
P_{\theta,t}(\cdot| h_t,a)\ll \lambda(\cdot).
\]
Let $p_{\theta,0}(y)\coloneqq \frac{{\rm d}\rho_\theta}{{\rm d}\lambda_0}(y)$ and
$p_{\theta,t}(y'| h_t,a)\coloneqq \frac{{\rm d}P_{\theta,t}(\cdot| h_t,a)}{{\rm d}\lambda}(y')$
be versions of the corresponding densities, chosen jointly measurable in their arguments.

\end{assumption}

\noindent\emph{Remark.} The assumption holds, for instance, when all $\rho_\theta$ and $P_{\theta,t}(\cdot| h_t,a)$ admit densities w.r.t.\ a common reference measure (e.g., Lebesgue on $\Y\subset\mathbb R^n$ or counting measure when $\Y$ is finite).

Under \cref{assump:domination_param}, define the (policy-independent) likelihood of a realized history
$h_t=(y_1,a_1,\ldots,a_{t-1},y_t)\in\mathcal H_t$ under parameter $\theta$:
\[
\ell_t(\theta,h_t) \coloneqq p_{\theta,0}(y_1)\prod_{s=1}^{t-1} p_{\theta,s}(y_{s+1}| h_s,a_s),
\qquad h_s=(y_1,a_1,\ldots,a_{s-1},y_s).
\]

We now give a posterior kernel representation that is independent of $\pi$.

\begin{lemma}[Posterior kernel over $\Theta$]\label{lemma:posterior_distribution_param}
Consider \cref{assump:domination_param}. For each $t\in\mathbb N$ there exists a probability kernel $R_t:\mathcal H_t\times\mathcal B(\Theta)\to[0,1]$, independent of $\pi$, such that for every policy $\pi$, all $A\in\mathcal B(\Theta)$ and $Z\in\mathcal B(\mathcal H_t)$,
\[
\mathbf P_t^\pi(\theta\in A, H_t\in Z)
=\int_Z R_t(A| h)\mathbb P_t^\pi({\rm d}h),
\]
where $\mathbf P_t^\pi({\rm d}\theta,{\rm d}h)=\nu({\rm d}\theta)\,\mathbb P_{\theta,t}^\pi({\rm d}h)$ and $\mathbb P_t^\pi$ is its $\mathcal H_t$-marginal.
Moreover, for $\mathbb P_t^\pi$-a.e.\ $h\in\mathcal H_t$,
\[
R_t(A| h)=
\frac{\int_A \ell_t(\theta,h)\nu({\rm d}\theta)}{\int_\Theta \ell_t(\theta,h)\nu({\rm d}\theta)}.
\]
Consequently, for any measurable map $g:\Theta\to\mathcal S$ into a standard Borel space $\mathcal S$ and any $B\in\mathcal B(\mathcal S)$,
\[
\mathbb P(g(\theta)\in B| H_t=h)
=R_t(\{\theta:\ g(\theta)\in B\}| h)
\quad\text{for }\mathbb P_t^\pi\text{-a.e.\ }h.
\]
\end{lemma}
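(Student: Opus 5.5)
The plan is to define $R_t$ explicitly by the Bayes formula and then verify the disintegration identity directly. The key observation is that the path measure factorizes into a $\theta$-dependent likelihood times a $\theta$-independent policy part. By \cref{assump:domination_param} and the Ionescu--Tulcea construction, for every $\theta$ the measure $\mathbb P_{\theta,t}^\pi$ has a density with respect to the $\theta$-free reference measure
\[
\mu_t^\pi(\mathrm dh)\coloneqq\lambda_0(\mathrm dy_1)\prod_{s=1}^{t-1}\pi_s(\mathrm da_s\mid h_s)\,\lambda(\mathrm dy_{s+1}),
\]
and that density is exactly $\ell_t(\theta,h)$. I would establish this identity on cylinder sets by replacing each $\rho_\theta(\mathrm dy_1)$ with $p_{\theta,0}(y_1)\lambda_0(\mathrm dy_1)$ and each $P_{\theta,s}(\mathrm dy_{s+1}\mid h_s,a_s)$ with $p_{\theta,s}\lambda(\mathrm dy_{s+1})$. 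I would then extend it to all of $\mathcal B(\mathcal H_t)$ by the $\pi$–$\lambda$ theorem, or equivalently by induction on $t$ using the uniqueness part of Ionescu--Tulcea. Joint measurability of $(\theta,h)\mapsto\ell_t(\theta,h)$ follows from the chosen jointly measurable versions of the densities.

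Next, set $Z_t(h)\coloneqq\int_\Theta\ell_t(\theta,h)\nu(\mathrm d\theta)$. For $Z_t(h)\in(0,\infty)$, define $R_t(A\mid h)\coloneqq Z_t(h)^{-1}\int_A\ell_t\,\mathrm d\nu$. On the remaining set, define $R_t(\cdot\mid h)\coloneqq\nu$, or any fixed probability measure. This $R_t$ is a probability kernel: measurability in $h$ follows from Tonelli, and countable additivity in $A$ from monotone convergence. It is independent of $\pi$ because neither $\ell_t$ nor $\nu$ involves $\pi$. Applying Tonelli to $\mathbf P_t^\pi(A\times Z)=\int_Z\int_A\ell_t(\theta,h)\nu(\mathrm d\theta)\,\mu_t^\pi(\mathrm dh)$ gives $\mathbb P_t^\pi(\mathrm dh)=Z_t(h)\mu_t^\pi(\mathrm dh)$. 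In particular the exceptional set $\{Z_t=0\}$ is $\mathbb P_t^\pi$-null. The set $\{Z_t=\infty\}$ is also null, since $Z_t$ is a $\mu_t^\pi$-density of a probability measure and is therefore $\mu_t^\pi$-integrable. Hence
\[
\int_Z R_t(A\mid h)\,\mathbb P_t^\pi(\mathrm dh)=\int_Z\frac{\int_A\ell_t\,\mathrm d\nu}{Z_t(h)}\,Z_t(h)\,\mu_t^\pi(\mathrm dh)=\mathbf P_t^\pi(\theta\in A,H_t\in Z),
\]
which is the claimed identity, and the a.e. formula holds by construction. The statement "for $\mathbb P_t^\pi$-a.e. $h$" is also consistent with the essential uniqueness of regular conditional distributions on standard Borel spaces: any other version coincides with this one a.e.

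The final claim about $g$ follows by applying the identity to $A=g^{-1}(B)\in\mathcal B(\Theta)$, which is Borel because $g$ is measurable. This shows that $h\mapsto R_t(g^{-1}(B)\mid h)$ is a version of $\mathbb P(g(\theta)\in B\mid H_t=h)$, and uniqueness of conditional probabilities up to null sets gives the a.e. equality.

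I expect the main obstacle to be the first step: rigorously identifying $\ell_t$ as the Radon--Nikodym derivative of $\mathbb P_{\theta,t}^\pi$ with respect to a $\theta$-free measure when the policy kernels and observation kernels are history-dependent. I would handle it by induction on $t$, disintegrating $\mathcal H_{t+1}=\mathcal H_t\times\A\times\Y$ and using measurability of the kernels together with Fubini at each stage.
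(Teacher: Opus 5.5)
Your proposal is correct and follows essentially the same route as the paper: both introduce the $\theta$-free reference measure $\lambda_0(\mathrm dy_1)\prod_{s}\pi_s(\mathrm da_s\mid h_s)\,\lambda(\mathrm dy_{s+1})$, identify $\ell_t(\theta,\cdot)$ as the density of $\mathbb P_{\theta,t}^\pi$ with respect to it, and obtain the Bayes ratio via Tonelli, with the final claim following from $A=g^{-1}(B)$. Your explicit handling of the exceptional sets $\{Z_t=0\}$ and $\{Z_t=\infty\}$, and of the extension from cylinder sets, fills in details that the paper leaves to ``standard properties of Radon--Nikodym derivatives.''
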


\begin{proof}
Fix $\pi$ and $t$. Define the reference measure on $\mathcal H_t$ (depending on $\pi$)
\[
\nu_t^\pi({\rm d}h_t)\coloneqq \lambda_0({\rm d}y_1)\prod_{s=1}^{t-1}\big[\pi_s({\rm d}a_s| h_s)\,\lambda({\rm d}y_{s+1})\big].
\]
By construction and \cref{assump:domination_param}, $\mathbb P_{\theta,t}^\pi\ll \nu_t^\pi$ for every $\theta$, with Radon--Nikodym density
\[
\frac{{\rm d}\mathbb P_{\theta,t}^\pi}{{\rm d}\nu_t^\pi}(h_t)=\ell_t(\theta,h_t),
\]
which does not depend on $\pi$.
Therefore, for $A\in\mathcal B(\Theta)$ and $Z\in\mathcal B(\mathcal H_t)$,
\[
\mathbf P_t^\pi(\theta\in A, H_t\in Z)
=\int_A\int_Z \ell_t(\theta,h)\,\nu_t^\pi({\rm d}h)\nu({\rm d}\theta),
\]
and
\[
\mathbb P_t^\pi(Z)=\int_Z\int_\Theta \ell_t(\theta,h)\,\nu({\rm d}\theta)\,\nu_t^\pi({\rm d}h).
\]
Hence $\mathbf P_t^\pi(\theta\in A,\cdot)\ll \mathbb P_t^\pi(\cdot)$ and the Radon-Nikodym derivative is the displayed Bayes ratio, which defines the kernel $R_t(A|h)$. Measurability and the fact that $R_t(\cdot| h)$ is a probability measure follow from standard properties of Radon--Nikodym derivatives. Independence of $\pi$ is immediate from the explicit formula.
\end{proof}

In the following we also need to consider in what cases the mapping $h\mapsto R_t(\cdot |h)$ is weakly continuous. To that aim, we require a further assumption.

\begin{assumption}[Likelihood continuity]\label{assumption:likelihood_continuity}
For each state $s<t$, the kernel $p_{\theta,s}(y|h_s,a_s)$ is jointly continuous in $(\theta, y,h_s,a_s)$ with strictly positive density. Hence, $(\theta,h_t)\mapsto \ell(\theta,h_t)$ is jointly continuous.
\end{assumption}

Under this assumption, we have the following.

\begin{lemma}[Weak continuity of the posterior]\label{lemma:weak_continuity_posterior}
Consider \cref{assump:domination_param} and \cref{assumption:likelihood_continuity}. For each $t$ we have that the mapping $h\mapsto  R_t(\cdot |h),\; h\in {\cal H}_t$, is weakly continuous.
\end{lemma}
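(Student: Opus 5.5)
We work directly with the explicit Bayes-ratio representation from \cref{lemma:posterior_distribution_param}, i.e. we take as the version of the posterior kernel
\[
R_t(A\mid h)=\frac{\int_A \ell_t(\theta,h)\,\nu({\rm d}\theta)}{\int_\Theta \ell_t(\theta,h)\,\nu({\rm d}\theta)},
\]
which is defined for every $h\in\mathcal H_t$, not only almost everywhere. Weak continuity of $h\mapsto R_t(\cdot\mid h)$ means the following. For every bounded continuous $f:\Theta\to\mathbb R$ and every sequence $h_k\to h$ in $\mathcal H_t$, we need
\[
\int f\,{\rm d}R_t(\cdot\mid h_k)\to\int f\,{\rm d}R_t(\cdot\mid h).
\]
Writing $N_f(h)\coloneqq\int_\Theta f(\theta)\ell_t(\theta,h)\,\nu({\rm d}\theta)$ and $D(h)\coloneqq N_1(h)$, the plan is to show that both $N_f$ and $D$ are continuous in $h$ and that $D(h)>0$. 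The claim then follows because a ratio of continuous functions with nonvanishing denominator is continuous.

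\textbf{Step 1: uniform continuity on a compact set.} Fix $h_k\to h$. The set $K\coloneqq\{h_k\}_k\cup\{h\}$ is compact in $\mathcal H_t$. By \cref{assumption:likelihood_continuity}, $(\theta,h')\mapsto\ell_t(\theta,h')$ is jointly continuous. Since $\Theta$ is compact, $\ell_t$ is uniformly continuous and bounded on $\Theta\times K$. Hence
\[
\sup_{\theta\in\Theta}|\ell_t(\theta,h_k)-\ell_t(\theta,h)|\to0 .
\]
Multiplying by the bounded $f$ and integrating against the probability measure $\nu$ gives $|N_f(h_k)-N_f(h)|\le\|f\|_\infty\sup_\theta|\ell_t(\theta,h_k)-\ell_t(\theta,h)|\to0$. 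Alternatively, one can use dominated convergence with the constant dominating function $\|f\|_\infty\sup_{\Theta\times K}\ell_t$.

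\textbf{Step 2: positivity of the denominator.} By \cref{assumption:likelihood_continuity} every factor $p_{\theta,0}(y_1)$ and $p_{\theta,s}(y_{s+1}\mid h_s,a_s)$ is strictly positive, so $\ell_t(\theta,h)>0$ for all $\theta$. The function $\theta\mapsto\ell_t(\theta,h)$ is continuous on the compact set $\Theta$, so it attains a strictly positive minimum. Therefore $D(h)\ge\min_\theta\ell_t(\theta,h)>0$. Combining with Step 1, $N_f(h_k)/D(h_k)\to N_f(h)/D(h)$, which is the claim.

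\textbf{Main obstacle.} The only delicate point is the version issue. \cref{lemma:posterior_distribution_param} identifies $R_t$ with the Bayes ratio only $\mathbb P_t^\pi$-a.e., and weak continuity is a pointwise property. We will therefore state explicitly that we fix the Bayes-ratio version everywhere. This is legitimate because positivity makes the formula well defined for every $h$, and the version is independent of $\pi$. The statement, and its later uses in the semicontinuity chain, are understood for this version.

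A minor subtlety is that \cref{assumption:likelihood_continuity} is phrased for the transition densities. We also need continuity and positivity of $p_{\theta,0}$ in $(\theta,y_1)$, which we read as included in the assumption, since it asserts joint continuity of $\ell$. No further tightness argument is needed, because $\Theta$ is compact.
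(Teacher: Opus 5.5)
Your proposal is correct and follows the same route as the paper: you write $\int f\,{\rm d}R_t(\cdot\mid h)$ as the Bayes ratio $N_f(h)/D(h)$, show that numerator and denominator converge along $h_k\to h$ using joint continuity of $\ell_t$ and compactness of $\Theta$, and use strict positivity of the densities to keep $D(h)>0$. Two of your steps make rigorous points the paper's proof passes over. Your use of the compact set $\Theta\times(\{h_k\}_k\cup\{h\})$ gives a bound uniform in $k$, whereas the paper only notes $\sup_\theta|f(\theta)\ell_t(\theta,h_n)|<\infty$ for each $n$; and your explicit choice of the everywhere-defined Bayes-ratio version of $R_t$ justifies what the paper does when it applies the a.e.\ formula at the points $h_n$.
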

\begin{proof}
    Consider any sequence $(h_n)_n \in {\cal H}_t$ such that $h_n\to h, \; h\in {\cal H}_t$. Fix $f\in C_b(\Theta)$ (continuous and bounded). Since for $\mathbb{P}_t^\pi$-a.e. $h'\in {\cal H}_t$ we have $R_t({\rm d}\theta|h')=\frac{\ell_t(\theta,h') \nu({\rm d}\theta)}{\int_\Theta \ell_t(\theta,h')\nu({\rm d}\theta)}$, we have that
    \begin{align*}
    \int_\Theta f(\theta) R_t({\rm d}\theta| h_n) = \frac{\int_\Theta f(\theta) \ell_t(\theta,h_n) \nu({\rm d}\theta) }{\int_\Theta \ell_t(\theta,h_n)\nu({\rm d}\theta) }.
    \end{align*}
    Now, since $\Theta$ is compact and $\theta\mapsto f$ is continuous and $\theta\mapsto\ell_t(\theta,h')$ is continuous for each $h'\in {\cal H}_t$, we have that $\sup_{\theta\in \Theta}|f(\theta) \ell_t(\theta,h_n)|<\infty$, therefore by dominated convergence we have
    \[
    \int_\Theta f(\theta) \ell_t(\theta,h_n) \nu({\rm d}\theta)  \to \int_\Theta f(\theta) \ell_t(\theta,h) \nu({\rm d}\theta)  \quad \hbox{ and }\quad \int_\Theta\ell_t(\theta,h_n) \nu({\rm d}\theta) \to \int_\Theta \ell_t(\theta,h) \nu({\rm d}\theta) .
    \]
    Since $\int_\Theta \ell_t(\theta,h) \nu({\rm d}\theta) >0$ by strict positivity of the density $p_s$, we have that
    \[
    \int_\Theta f(\theta) \ell_t(\theta,h_n) \nu({\rm d}\theta) \to \int_\Theta f(\theta) \ell_t(\theta,h) \nu({\rm d}\theta),
    \]
    so $h\mapsto R_t(\cdot |h)$ is weakly continuous.
 \end{proof}

\paragraph{Optimal inference rule.}
Fix $\epsilon>0$ and $t\in\mathbb N$. Recall that, for each $\theta\in\Theta$, the $\epsilon$-optimal set is
$\X_\epsilon(\theta)=\{x\in\X: L_\theta(x)\leq \epsilon\}$.
Given a realized history $h\in\mathcal H_t$, define the \emph{posterior success probability} of recommending $x\in\X$ as
\begin{equation}\label{eq:posterior_success_prob}
q_t(h,x) \coloneqq \mathbb P_{\theta\sim\nu}^\pi\big(x\in\X_\epsilon(\theta)| H_t=h\big)
= R_t\big(\{\theta\in\Theta: L_\theta(x)\leq \epsilon\}| h\big),
\end{equation}
where $R_t(\cdot| h)$ is the posterior kernel from \cref{lemma:posterior_distribution_param}.
We also define
\begin{equation}\label{eq:rt_def}
r_t(h) \coloneqq \sup_{x\in\X} q_t(h,x).
\end{equation}

\begin{lemma}\label{lemma:uppersemicontinuity_q_reward}
    Under \cref{assumption:L_continuity_x_measurability_xtheta}, we have that $q_t(h,x)$ is jointly Borel measurable in $(h,x)$  and upper semicontinuous in $x$ for each fixed $h$. If in addition to \cref{assumption:L_continuity_x_measurability_xtheta} we also assume \cref{assumption:likelihood_continuity}, then $q_t(h,x)$ is jointly upper semicontinuous.
\end{lemma}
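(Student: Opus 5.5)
The proof rests on one observation. By \cref{assumption:L_continuity_x_measurability_xtheta}, the sublevel set
\[
S\coloneqq\{(\theta,x)\in\Theta\times\X:\ L_\theta(x)\le\epsilon\}
\]
is closed in $\Theta\times\X$, since it is the sublevel set of a jointly lower semicontinuous function. Writing $S_x\coloneqq\{\theta:(\theta,x)\in S\}$, we have $q_t(h,x)=R_t(S_x\mid h)=\int_\Theta \mathbf 1_S(\theta,x)\,R_t({\rm d}\theta\mid h)$. The indicator of a closed set is bounded and upper semicontinuous. Each claim then follows by integrating $\mathbf 1_S$ against a kernel.

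\textbf{Joint measurability.} I will apply the standard fact that if $f(\theta,h,x)$ is jointly Borel and bounded and $R_t$ is a probability kernel from $\mathcal H_t$ to $\Theta$, then $(h,x)\mapsto\int f(\theta,h,x)R_t({\rm d}\theta\mid h)$ is Borel. This is proved by a monotone class argument, starting from rectangles $A\times B\times C$, where the integral equals $R_t(A\mid h)\mathbf 1_B(h)\mathbf 1_C(x)$ and is measurable because $R_t$ is a kernel (\cref{lemma:posterior_distribution_param}). Taking $f=\mathbf 1_S$, which is Borel since $S$ is closed, gives joint measurability of $q_t$.

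\textbf{Upper semicontinuity in $x$ for fixed $h$.} Let $x_n\to x$. I will apply reverse Fatou to $\mathbf 1_S(\theta,x_n)\le 1$, which is legitimate because $R_t(\cdot\mid h)$ is a probability measure. This gives
\[
\limsup_n q_t(h,x_n)\le \int \limsup_n \mathbf 1_S(\theta,x_n)\,R_t({\rm d}\theta\mid h)\le\int\mathbf 1_S(\theta,x)\,R_t({\rm d}\theta\mid h)=q_t(h,x).
\]
The second inequality holds pointwise in $\theta$: if $(\theta,x_n)\in S$ infinitely often, then $(\theta,x)\in S$ because $S$ is closed.

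\textbf{Joint upper semicontinuity.} This is the main step. Let $(h_n,x_n)\to(h,x)$. By \cref{lemma:weak_continuity_posterior}, $\mu_n\coloneqq R_t(\cdot\mid h_n)$ converges weakly to $\mu\coloneqq R_t(\cdot\mid h)$. The integrand $\mathbf 1_S(\cdot,x_n)$ changes with $n$ as well, so the Portmanteau theorem does not apply directly. I will handle this with a uniform enlargement. Fix $\eta>0$ and define
\[
F_\eta\coloneqq\{\theta:\ \exists\, x'\in\X,\ \|x'-x\|\le\eta,\ (\theta,x')\in S\}.
\]
$F_\eta$ is closed: it is the projection onto $\Theta$ of the closed set $S\cap(\Theta\times\bar B(x,\eta))$, which is compact because $\Theta\times\X$ is compact, and the projection of a compact set is compact. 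For $n$ large enough that $\|x_n-x\|\le\eta$, we have $S_{x_n}\subset F_\eta$, so $q_t(h_n,x_n)\le\mu_n(F_\eta)$. The Portmanteau theorem for closed sets then gives $\limsup_n q_t(h_n,x_n)\le\mu(F_\eta)$.

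It remains to send $\eta\downarrow0$. The sets $F_\eta$ decrease, and their intersection is exactly $S_x$: if $\theta\in F_\eta$ for every $\eta$, there are $x'_k\to x$ with $(\theta,x'_k)\in S$, and closedness of $S$ gives $(\theta,x)\in S$. Continuity from above of the finite measure $\mu$ then yields $\mu(F_\eta)\to\mu(S_x)=q_t(h,x)$, which completes the argument.

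One technicality: the Bayes formula for $R_t$ holds only $\mathbb P_t^\pi$-a.e. I will work with the version of $R_t$ defined everywhere by the Bayes ratio. This is the version \cref{lemma:weak_continuity_posterior} implicitly uses, and it is well defined because the likelihood is strictly positive.
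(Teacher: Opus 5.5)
Your proof is correct. For measurability and for upper semicontinuity in $x$ it follows the paper's argument: you integrate the Borel indicator of the closed sublevel set against the kernel, and you use reverse Fatou. You also improve slightly on the paper there, since you use only the lower semicontinuity of $L$, whereas the paper's text invokes continuity.

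For joint upper semicontinuity you take a different route. The paper lifts the problem to $\Theta\times\X$. It sets $\mu_n\coloneqq R_t(\cdot\mid h_n)\otimes\delta_{x_n}$ and $\mu\coloneqq R_t(\cdot\mid h)\otimes\delta_x$, and notes that $\mu_n(S)=q_t(h_n,x_n)$ and $\mu(S)=q_t(h,x)$ for your fixed closed set $S$ (the paper's $C_\epsilon$). It then applies the closed-set Portmanteau inequality once, using $\mu_n\Rightarrow\mu$.

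This absorbs the moving integrand into a moving measure. It is a one-line argument and needs no compactness of $\X$. However, it rests on weak convergence of these product measures, which the paper asserts without comment. That fact does hold: products of weakly convergent sequences on separable metric spaces converge weakly, or one can argue directly from uniform continuity of bounded continuous test functions on the compact $\Theta\times\X$.

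Your enlargement $F_\eta$ instead keeps the measure on $\Theta$. It dominates the moving section $S_{x_n}$ by a fixed closed set and recovers $S_x$ by continuity from above. This is entirely elementary and self-contained. The price is that you need (local) compactness of $\X$ to make the projection closed, plus an extra limit $\eta\downarrow 0$.
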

\begin{proof}
We prove the 3 properties separately.
\begin{itemize}
    \item Regarding measurability,  note that  $(x,\theta)\mapsto {\bf 1}\{L_\theta(x)\leq \epsilon\}$ is Boreal measurable by \cref{assumption:L_continuity_x_measurability_xtheta}.  Using that for every $A\in {\cal B}(\Theta)$ we have that $h\mapsto R_t(A | h)$ is Borel-measurable, then $q_t(h,x)=\int_\Theta {\bf 1}\{L_\theta(x)\leq \epsilon\} R_t({\rm d}\theta |h)$   is jointly measurable since integration preserves measurability.
    \item  Consider now the u.s.c. property of $x\mapsto q_t(h,x)$ for each $h$. If, for every $\theta$, the map $x\mapsto L_\theta(x)$ is continuous on the compact set $\X$, then $\X_\epsilon(\theta)$ is closed and $x\mapsto \mathbf 1\{x\in\X_\epsilon(\theta)\}$ is upper semicontinuous. Consequently, $x\mapsto q_t(h,x)$ is upper semicontinuous  $\mathbb{P}_t^\pi$-a.s.. To see this, let $(x_n)_n$ be a sequence in $\X$ such that $x_n\to x^\star$. Define $y_n = {\mathbf 1}\{x_n \in \X_\epsilon(\theta)\}$. By Fatou's reverse lemma we have
\[
\limsup_{n} \mathbb{E}_t[y_n|H_t=h] \leq \mathbb{E}_t[\limsup_n y_n |H_t=h]\leq{\mathbb P}_t(x\in \X_\epsilon(\theta)|H_t=h).
\]
where the last inequality follows from the fact that $\limsup_{n} y_n\leq {\mathbf 1}\{x \in \X_\epsilon(\theta)\}$ from the upper semicontinuity. Thus the posterior is upper semicontinuous on $\mathbb{P}_t^\pi$-a.s.
\item Define $C_\epsilon=\{(\theta,x)\in \Theta\times \X: L_\theta(x)\leq \epsilon\}$. By assumption on $L$ (joint continuity), we have that $C_\epsilon$ is closed. Consider any sequence $(h_n,x_n)\to (h,x)$ and define the distribution $\mu_n(\cdot)\coloneqq R_t(\cdot|h_n) \otimes \delta_{x_n}$ and $\mu(\cdot)\coloneqq R_t(\cdot| h_n)\otimes \delta_x$. Since $h\mapsto R_t(\cdot|h)$ is weakly continuous (\cref{lemma:weak_continuity_posterior}, follows from \cref{assumption:likelihood_continuity}), by Portmanteau's lemma we have
\[
\limsup_{n\to\infty} \mu_n(C_\epsilon) \leq \mu(C_\epsilon).
\]
But $\mu_n(C_\epsilon)= \int_\Theta {\bf 1}\{L_\theta(x_n)\leq \epsilon\} R_t({\rm d}\theta| h_n)=q_t(h_n,x_n)$ and $\mu(C_\epsilon)=\int_\Theta {\bf 1}\{L_\theta(x)\leq \epsilon\} R_t({\rm d}\theta| h)=q_t(h,x)$, hence $(h,x)\mapsto q_t(h,x)$ is jointly upper semicontinuous.
\end{itemize}

\end{proof}

 Since $\X$ is compact, by the Extreme Value theorem we have that the supremum in \eqref{eq:rt_def} is attained (so one may replace $\sup$ by $\max$).

\begin{proposition}[Optimal inference]\label{prop:optimal_inference_rule}
Consider a fixed policy $\pi$ and a fixed time $t\in\mathbb N$. Assume \cref{assump:domination_param} and \cref{assumption:L_continuity_x_measurability_xtheta}. Among  measurable inference rules $I_t:\mathcal H_t\to\X$, the maximal value of
$\mathbb P_{\theta\sim\nu}^\pi\big(I_t(H_t)\in\X_\epsilon(\theta)\big)$
is achieved by any rule satisfying, for $\mathbb P_t^\pi$-a.e. $h\in\mathcal H_t$,
\[
I_t(h) \in \arg\max_{x\in\X} q_t(h,x).
\]
Furthermore, a measurable $\argmax$ selector exists and $I_t(h)$ is measurable.
\end{proposition}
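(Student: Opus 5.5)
The plan is to reduce the unconditional success probability to an integral of $q_t$ against the history marginal, bound it pointwise, and then show that the pointwise bound is attained by a measurable selector.

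\textbf{Step 1 (disintegration).} Fix a measurable rule $I_t$. Set $g(\theta,h)\coloneqq \mathbf 1\{L_\theta(I_t(h))\le\epsilon\}$. This is jointly measurable, because it is the composition of the Borel map $(\theta,h)\mapsto(I_t(h),\theta)$ with the Borel set $\{(x,\theta):L_\theta(x)\le\epsilon\}$ given by \cref{assumption:L_continuity_x_measurability_xtheta}. The kernel representation of \cref{lemma:posterior_distribution_param}, extended from rectangles to jointly measurable bounded functions by a monotone class argument, gives
\[
\mathbb P_{\theta\sim\nu}^\pi\big(I_t(H_t)\in\X_\epsilon(\theta)\big)=\int_{\mathcal H_t}\int_\Theta g(\theta,h)\,R_t({\rm d}\theta|h)\,\mathbb P_t^\pi({\rm d}h)=\int_{\mathcal H_t} q_t(h,I_t(h))\,\mathbb P_t^\pi({\rm d}h).
\]

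\textbf{Step 2 (pointwise bound).} Since $q_t(h,I_t(h))\le r_t(h)$ for every $h$, every measurable rule has success at most $\int r_t\,{\rm d}\mathbb P_t^\pi$. Equality holds exactly when $q_t(h,I_t(h))=r_t(h)$ for $\mathbb P_t^\pi$-a.e. $h$. This is the stated argmax condition, so any such rule is optimal, provided one exists.

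\textbf{Step 3 (existence of a measurable selector).} This is the main obstacle. For each fixed $h$, $x\mapsto q_t(h,x)$ is upper semicontinuous on the compact set $\X$ (\cref{lemma:uppersemicontinuity_q_reward}), so the argmax set is nonempty and compact. The issue is that \cref{lemma:uppersemicontinuity_q_reward} gives only joint Borel measurability of $q_t$ and u.s.c. in $x$, not joint continuity. I would therefore invoke a measurable maximum theorem for normal integrands. The function $-q_t$ is a Carathéodory-type normal integrand: it is jointly Borel and l.s.c. in $x$ on a compact metric space. By Rockafellar--Wets (Thm.~14.37), together with the fact that Borel normal integrands on standard Borel spaces are normal with respect to the universal completion, the map $r_t(h)=\max_x q_t(h,x)$ is measurable and the argmax multifunction admits a measurable selector.

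If only universal measurability comes out of this, I would modify the selector on a $\mathbb P_t^\pi$-null set to obtain a Borel map; this suffices because the statement only requires the property almost everywhere. Alternatively, when \cref{assumption:likelihood_continuity} holds, $q_t$ is jointly u.s.c., so the argmax correspondence has a closed graph (jointly u.s.c. $q_t$ plus u.s.c. $r_t$ via Berge). The Kuratowski--Ryll-Nardzewski theorem then yields a Borel selector directly. That selector is the measurable $I_t$.
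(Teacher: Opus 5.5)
Your proposal is correct and follows the paper's proof. Both write the success probability as $\int q_t(h,I_t(h))\,\mathbb P_t^\pi({\rm d}h)$ via the posterior kernel, bound the integrand by $r_t(h)$, and then appeal to a measurable-selection theorem. The only difference is the selection theorem: the paper cites \citep[Proposition D.5]{hernandez-lerma1996DiscreteTime}, a Brown--Purves-type result that gives a Borel selector for every $h$ directly, whereas your Rockafellar--Wets route gives a universally measurable selector that you then repair on a $\mathbb P_t^\pi$-null set, which suffices for the a.e.\ statement.

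One correction to your optional alternative: joint u.s.c.\ of $q_t$ plus u.s.c.\ of $r_t$ does not give the argmax correspondence a closed graph. The limit argument needs $r_t$ to be l.s.c., and $r_t$ may jump upward at the limit point. Kuratowski--Ryll-Nardzewski still applies, because for every closed $F$ the set $\{h:\argmax_{x\in\X}q_t(h,x)\cap F\neq\emptyset\}=\{h:\max_{x\in F\cap\X}q_t(h,x)\geq r_t(h)\}$ is Borel, both sides being u.s.c.\ in $h$.
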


\begin{proof}
Fix $\pi$ and $t$, and let $\hat x_t\coloneqq I_t(H_t)$. Using the posterior kernel,
\begin{align*}
\mathbb P_{\theta\sim\nu}^\pi\big(\hat x_t\in\X_\epsilon(\theta)\big)
&=\int \mathbf 1\{\hat x_t\in\X_\epsilon(\theta)\}\,\mathbf P_t^\pi({\rm d}\theta,{\rm d}h)\\
&=\int_{\mathcal H_t}\left[\int_\Theta \mathbf 1\{I_t(h)\in\X_\epsilon(\theta)\}\,R_t({\rm d}\theta| h)\right]\mathbb P_t^\pi({\rm d}h)\\
&=\int_{\mathcal H_t} q_t\big(h, I_t(h)\big)\,\mathbb P_t^\pi({\rm d}h)\\
&\le \int_{\mathcal H_t}\sup_{x\in\X} q_t(h,x)\,\mathbb P_t^\pi({\rm d}h)
=\int_{\mathcal H_t} r_t(h)\,\mathbb P_t^\pi({\rm d}h).
\end{align*}
If $I_t(h)\in\arg\max_{x\in\X} q_t(h,x)$ for $\mathbb P_t^\pi$-a.e.\ $h$, then the inequality holds with equality, yielding the optimal value.

The existence of an $\argmax$ rule, and measurability of $I_t$, follows from an application of 
\citep[Proposition D.5]{hernandez-lerma1996DiscreteTime}.
\end{proof}

We also note the following lower bound on the posterior $q_t$.

\begin{lemma}[Markov lower bound on posterior success]\label{lem:markov_lower_bound}
Fix $\epsilon>0$, $t\in\mathbb N$, and a realized history $h\in\mathcal H_t$. For any decision $x\in\X$, define the posterior success probability
\[
q_t(h,x)\coloneqq \mathbb P_{\theta\sim\nu}^\pi\big(L_\theta(x)\leq \epsilon | H_t=h\big),
\]
and the posterior mean loss
\[
\bar L_t(h,x)\coloneqq \mathbb E_{\theta\sim\nu}^\pi\big[L_\theta(x)| H_t=h\big].
\]
Then
\[
q_t(h,x)\geq 1-\frac{\bar L_t(h,x)}{\epsilon}.
\]
Equivalently, with the (clipped) shaped reward $r_\epsilon(\theta,x)\coloneqq\big[1-L_\theta(x)/\epsilon\big]_+$,
\[
q_t(h,x)\geq \mathbb E_{\theta\sim\nu}^\pi\big[r_\epsilon(\theta,x)| H_t=h\big].
\]
\end{lemma}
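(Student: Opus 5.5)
The plan is to apply the conditional Markov inequality to the nonnegative random variable $L_\theta(x)$ under the posterior kernel $R_t(\cdot\mid h)$ from \cref{lemma:posterior_distribution_param}. Fix $h$ and $x$. By the representation \cref{eq:posterior_success_prob}, $q_t(h,x)=R_t(\{\theta: L_\theta(x)\le\epsilon\}\mid h)$. Likewise, $\bar L_t(h,x)=\int_\Theta L_\theta(x)\,R_t({\rm d}\theta\mid h)$, and this integral is well defined in $[0,\infty]$ because $\theta\mapsto L_\theta(x)$ is Borel measurable (\cref{assumption:L_continuity_x_measurability_xtheta}) and nonnegative. Working directly with the kernel $R_t$ avoids almost-sure qualifiers: both sides are defined pointwise in $h$ through the same kernel. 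If $\bar L_t(h,x)=\infty$, the bound is trivial.

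The first step is to prove the inequality pointwise in $\theta$:
\[
\mathbf 1\{L_\theta(x)\le\epsilon\}\;\ge\;\big[1-L_\theta(x)/\epsilon\big]_+ \;\ge\; 1-L_\theta(x)/\epsilon .
\]
The first inequality holds because when $L_\theta(x)\le\epsilon$ the right side lies in $[0,1]$. When $L_\theta(x)>\epsilon$, the clipped term is $0$. The second inequality is immediate from the definition of the positive part.

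The second step is to integrate against $R_t({\rm d}\theta\mid h)$ and use monotonicity of the integral. This gives
\[
q_t(h,x)\ge \mathbb E[r_\epsilon(\theta,x)\mid H_t=h]\ge 1-\bar L_t(h,x)/\epsilon.
\]
The first of these is the shaped-reward form of the statement, and the second is the Markov form.

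The only mild subtlety is integrability of the linear term when $\bar L_t$ is infinite, which is handled by the trivial case above. The clipped reward is bounded in $[0,1]$ and therefore always integrable. I do not expect any other obstacle.
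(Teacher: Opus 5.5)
Your proof is correct and takes essentially the same approach as the paper. The paper applies Markov's inequality to $L_\theta(x)$ under the posterior for the first bound and uses the pointwise inequality $\mathbf 1\{L_\theta(x)\le\epsilon\}\ge[1-L_\theta(x)/\epsilon]_+$ for the second; your chain $\mathbf 1\{L_\theta(x)\le\epsilon\}\ge[1-L_\theta(x)/\epsilon]_+\ge 1-L_\theta(x)/\epsilon$ simply writes out the proof of Markov's inequality and derives the first bound from the second, which also makes explicit that the shaped-reward bound is the stronger of the two.
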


\begin{proof}
Since $L_\theta(x)\ge 0$, Markov's inequality yields
\[
\mathbb P_{\theta\sim\nu}^\pi\big(L_\theta(x)>\epsilon| H_t=h\big)\leq \frac{\bar L_t(h,x)}{\epsilon}.
\]
Taking complements gives the first claim. For the second, note that
$\mathbf 1\{L_\theta(x)\le\epsilon\}\geq [1-L_\theta(x)/\epsilon]_+$ pointwise, and take conditional expectations.
\end{proof}

\subsection{Fixed-confidence setting:  formulation and optimal rules}\label{app:theoretical_results:icpe:fixed_confidence:dual}

We consider the fixed-confidence problem from Section~\ref{sec:problem_setting} in its Bayesian (task-averaged) form.
A learner is a triplet $(\pi,I,\tau)$ with sampling policy $\pi$, inference rule $I=(I_t)_{t\ge1}$, and stopping time $\tau$.
The objective is
\begin{equation}\label{eq:problem_fixed_confidence_cont}
\inf_{\pi,I,\tau}\ \mathbb E^\pi_{\theta\sim\nu}[\tau]
\qquad\text{s.t.}\qquad
\mathbb P^\pi_{\theta\sim\nu}\!\big(I_\tau(H_\tau)\in\X_\epsilon(\theta)\big)\ \ge\ 1-\delta,
\quad \mathbb E^\pi_{\theta\sim\nu}[\tau]<\infty.
\end{equation}
Throughout this section, $H_t=(Y_1,A_1,\ldots,A_{t-1},Y_t)$ is the history, $\hat x_\tau = I_\tau(H_\tau)$ and $\mathcal F_t=\sigma(H_t)$.

\paragraph{Posterior success.}
For each $t$ and realized history $h\in\mathcal H_t$, define the posterior success probability of recommending $x\in\X$ as
\[
q_t(h,x)\ \coloneqq\ \mathbb P^\pi_{\theta\sim\nu}\big(x\in\X_\epsilon(\theta)| H_t=h\big),
\]
and recall $r_t(h)\coloneqq \sup_{x\in\X} q_t(h,x)$. We have the following lemma that relates the success probability to the expected posterior success.

\begin{lemma}[Stopped success as expected posterior success]\label{lem:stopped_success_decomp}
For any policy $\pi$, stopping time $\tau$, and inference rule $I$,
\[
\mathbb P_{\theta\sim\nu}^\pi\left(\hat x_\tau\in \X_\epsilon(\theta)\right)=
\mathbb E^\pi\left[q_\tau\left(H_\tau,\hat x_\tau\right)\right].
\]
\end{lemma}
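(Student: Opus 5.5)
The plan is to decompose the event $\{\hat x_\tau\in\X_\epsilon(\theta)\}$ over the values of the stopping time and apply the posterior kernel of \cref{lemma:posterior_distribution_param} on each slice. Throughout we work with $\tau<\infty$ almost surely, which is implied by the standing requirement $\mathbb E^\pi[\tau]<\infty$ in \eqref{eq:problem_fixed_confidence_cont}. If this fails, the convention is that the event $\{\tau=\infty\}$ contributes nothing to either side.

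First we write
\[
\mathbb P^\pi_{\theta\sim\nu}(\hat x_\tau\in\X_\epsilon(\theta))=\sum_{t\ge1}\mathbb P^\pi_{\theta\sim\nu}\big(\tau=t,\ I_t(H_t)\in\X_\epsilon(\theta)\big),
\]
which is countable additivity over the disjoint events $\{\tau=t\}$. Since $\tau$ is an $(\mathcal F_t)$-stopping time, we have $\{\tau=t\}=\{H_t\in Z_t\}$ for some Borel set $Z_t\subset\mathcal H_t$. The $t$-th summand is therefore the $\mathbf P_t^\pi$-measure of the set $\{(\theta,h):h\in Z_t,\ L_\theta(I_t(h))\le\epsilon\}$. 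This set is jointly measurable because $(\theta,x)\mapsto L_\theta(x)$ is Borel by \cref{assumption:L_continuity_x_measurability_xtheta} and $I_t$ is measurable.

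Next, we disintegrate $\mathbf P_t^\pi({\rm d}\theta,{\rm d}h)=R_t({\rm d}\theta\mid h)\,\mathbb P_t^\pi({\rm d}h)$ using \cref{lemma:posterior_distribution_param}. The lemma is stated for rectangles $A\times Z$, so we extend it to general jointly measurable sets via a monotone class ($\pi$--$\lambda$) argument. Applied to the slice above, this gives
\[
\int_{Z_t}\int_\Theta \mathbf 1\{L_\theta(I_t(h))\le\epsilon\}\,R_t({\rm d}\theta\mid h)\,\mathbb P_t^\pi({\rm d}h)=\int_{Z_t} q_t(h,I_t(h))\,\mathbb P_t^\pi({\rm d}h).
\]
The inner integral is exactly $q_t(h,x)$ evaluated at $x=I_t(h)$, and substituting the measurable map $I_t(h)$ into the jointly measurable $q_t$ (\cref{lemma:uppersemicontinuity_q_reward}) is legitimate. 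The right-hand side equals $\mathbb E^\pi[\mathbf 1\{\tau=t\}\,q_t(H_t,\hat x_t)]$.

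Finally, summing over $t$ and using monotone convergence (every term is nonnegative) yields $\mathbb E^\pi[q_\tau(H_\tau,\hat x_\tau)]$.

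The main point of care is the disintegration step with a history-dependent argument $x=I_t(h)$: $q_t(h,\cdot)$ is defined pointwise, so one must make sure the substitution does not create a null-set issue. This is resolved by using the jointly measurable version $R_t$, which does not depend on $\pi$, together with the monotone-class extension. The resulting identity then holds regardless of which $\mathbb P_t^\pi$-a.e. version of the posterior is chosen.
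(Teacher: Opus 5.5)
Your proposal is correct and takes essentially the same route as the paper. The paper applies the tower rule conditioning on $H_\tau$ and uses that $\hat x_\tau$ is $\sigma(H_\tau)$-measurable to identify $\mathbb E^\pi[\mathbf 1\{\hat x_\tau\in\X_\epsilon(\theta)\}\mid H_\tau]$ with $q_\tau(H_\tau,\hat x_\tau)$, and your slicing over $\{\tau=t\}$, $\pi$--$\lambda$ extension of the $R_t$-disintegration, and monotone-convergence summation are a careful unpacking of that step that also justifies the substitution $x=I_t(h)$, which the paper leaves implicit.
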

\begin{proof}
By the tower rule and $\hat x_\tau$ being $\sigma(H_\tau)$-measurable,
\[
\mathbb P_{\theta\sim\nu}^\pi(\hat x_\tau\in\X_\epsilon(\theta))
=\mathbb E^\pi\left[\mathbb E^\pi\!\left[\mathbf 1\{\hat x_\tau\in\X_\epsilon(\theta)\}\mid H_\tau\right]\right]
=\mathbb E^\pi\left[q_\tau(H_\tau,\hat x_\tau)\right].
\]
\end{proof}
Therefore, we have that $\mathbb P^\pi_{\theta\sim\nu}\big(I_\tau(H_\tau)\in\X_\epsilon(\theta)\big)
=\mathbb E^\pi\left[q_\tau\big(H_\tau, I_\tau(H_\tau)\big)\right]$.

\paragraph{Lagrangian dual.}
Define, for $\lambda\ge0$, the Lagrangian value
\[
V_\lambda(\pi,I,\tau)\coloneqq\mathbb E^\pi_{\theta\sim\nu}[\tau]\ +\lambda\Big((1-\delta) - \mathbb P^\pi_{\theta\sim\nu}\big(I_\tau(H_\tau)\in\X_\epsilon(\theta)\big)\Big).
\]
Using \cref{lem:stopped_success_decomp}, this can be written as
\begin{equation}\label{eq:lagrangian_value_cont}
V_\lambda(\pi,I,\tau)
=\lambda(1-\delta) + \mathbb E^\pi\left[\tau - \lambda q_\tau\big(H_\tau, I_\tau(H_\tau)\big)\right].
\end{equation}
The Lagrangian dual of \eqref{eq:problem_fixed_confidence_cont} is then
\begin{equation}\label{eq:dual_cont}
\sup_{\lambda\ge0}\ \inf_{\pi,I,\tau}\ V_\lambda(\pi,I,\tau).
\end{equation}

\subsubsection{Optimal Inference Rule}
Fix $\pi,\lambda$ and $t$. Among all measurable inference rules $I_t:\mathcal H_t\to\X$, the maximal probability of $\epsilon$-success at time $t$ is achieved by any
\[
I_t(h)\in \arg\max_{x\in\X} q_t(h,x),
\]
equivalently, $q_t(h,I_t(h))=r_t(h)$ for $\mathbb P_t^\pi$-a.e. $h$.
(see  \cref{prop:optimal_inference_rule}.)

Since $\tau$ is adapted, plugging the optimal inference rule into \cref{eq:lagrangian_value_cont} yields the simplified dual objective
\begin{equation}\label{eq:dual_after_inference}
\sup_{\lambda\ge0} \inf_{\pi,\tau} \lambda(1-\delta) + \mathbb E^\pi\left[\tau - \lambda r_\tau(H_\tau)\right].
\end{equation}

\subsubsection{Stopping as an action (equivalence)} The additional optimization over stopping rules can be avoided by introducing an additional stopping aciton $a_{\rm stop}$.
Introduce an augmented action space $\bar\A\coloneqq \A\cup\{a_{\rm stop}\}$, where choosing $a_{\rm stop}$ terminates interaction (no new observation is collected).
Let $\bar\tau\coloneqq \inf\{t\geq 1: A_t=a_{\rm stop}\}$.

\begin{lemma}[Embedding stopping times as a stop action]\label{lem:stop_as_action}
For every triplet $(\pi,I,\tau)$ with $\tau< \infty$ a.s., there exists a policy $\bar\pi$ on $\bar\A$ such that, under $\bar\pi$,
(i) $\bar\tau=\tau$ a.s., and (ii) the stopped history $H_{\bar\tau}$ has the same distribution as $H_\tau$ under $\pi$.
In particular, for every $\lambda\ge 0$, $V_\lambda(\pi,I,\tau)=V_\lambda(\bar\pi,I,\bar\tau)$.
\end{lemma}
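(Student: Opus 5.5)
The plan is to build $\bar\pi$ explicitly from $(\pi,\tau)$, using the fact that $\tau$ is an $(\mathcal F_t)$-stopping time. On the canonical history space this means $\{\tau=t\}=\{H_t\in Z_t\}$ for some Borel set $Z_t\subset\mathcal H_t$. By Doob's functional representation we can always choose $Z_t$ this way, since $\{\tau=t\}\in\sigma(H_t)$.

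\textbf{Construction.} Define $\bar\pi_t(\cdot\mid h)\coloneqq\delta_{a_{\rm stop}}$ if $h\in Z_t$, and $\bar\pi_t(\cdot\mid h)\coloneqq\pi_t(\cdot\mid h)$ otherwise. The kernel is measurable because $Z_t$ is Borel and $\pi_t$ is a measurable kernel. Since $a_{\rm stop}$ is absorbing, we never need to specify $\bar\pi$ after stopping.

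\textbf{Coupling.} Fix $\theta$. We show by induction on $t$ that the law of $H_t$ restricted to the event $\{\bar\tau\ge t\}$ under $\mathbb P^{\bar\pi}_{\theta}$ coincides with the law of $H_t$ restricted to $\{\tau\ge t\}$ under $\mathbb P^\pi_\theta$.

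The base case $t=1$ holds because both start from $\rho_\theta$ and $\{\tau\ge1\}$ is the whole space. For the inductive step, on $\{\tau\ge t\}\cap\{H_t\notin Z_t\}=\{\tau\ge t+1\}$ both processes draw $A_t\sim\pi_t(\cdot\mid H_t)$ and $Y_{t+1}\sim P_{\theta,t}(\cdot\mid H_t,A_t)$. The cylinder formula from the Ionescu--Tulcea construction therefore gives identical finite-dimensional laws. On $\{H_t\in Z_t\}$, $\bar\pi$ plays $a_{\rm stop}$, so $\bar\tau=t$ exactly when $\tau=t$.

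\textbf{Identification.} It follows that $\mathbb P^{\bar\pi}_\theta(\bar\tau=t,H_t\in C)=\mathbb P^\pi_\theta(\tau=t,H_t\in C)$ for every $t$ and Borel $C$. Summing over $t$ and using $\tau<\infty$ a.s. yields (i) and (ii); item (i) is meant in the sense of equality in law, which is enough because the two processes live on the same canonical space.

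Integrating against $\nu(\mathrm d\theta)$ keeps the joint law of $(\theta,\bar\tau,H_{\bar\tau})$ equal to that of $(\theta,\tau,H_\tau)$. Both $\mathbb E[\tau]$ and $\mathbb P(I_\tau(H_\tau)\in\X_\epsilon(\theta))$ are functionals of this joint law, since $I_t$ is measurable and $\X_\epsilon$ is jointly measurable by \cref{assumption:L_continuity_x_measurability_xtheta}. Hence $V_\lambda(\pi,I,\tau)=V_\lambda(\bar\pi,I,\bar\tau)$.

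\textbf{Main obstacle.} The difficulty is measure-theoretic rather than conceptual. First, if $\pi$ is randomized, $\tau$ may depend on the policy's internal randomization rather than only on $H_t$. Our setup takes $\mathcal F_t=\sigma(H_t)$ with the actions included in the history, so this does not arise; if external randomness were allowed, we would instead let $\bar\pi$ stop with probability $\mathbb P(\tau=t\mid H_t,\tau\ge t)$, using a regular conditional probability. Second, $\mathbb P^{\bar\pi}_\theta$ is a measure on the path space with the absorbing stop action adjoined, so the induction has to be written on $\{\bar\tau\ge t\}$ as above.
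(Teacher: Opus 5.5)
Your proposal is correct and follows essentially the same route as the paper: write $\{\tau=t\}=\{H_t\in S_t\}$ for a Borel $S_t\subset\mathcal H_t$, let $\bar\pi_t$ play $a_{\rm stop}$ exactly on $S_t$ and copy $\pi_t$ otherwise, and conclude that the stopped laws coincide. Your explicit induction on $\{\tau\ge t\}$ and your tracking of the joint law of $(\theta,\tau,H_\tau)$, rather than only the law of $H_\tau$, fill in the step the paper dismisses with ``one can easily show'' for $V_\lambda(\pi,I,\tau)=V_\lambda(\bar\pi,I,\bar\tau)$. Item (i) in fact holds pathwise, not just in law, since on the canonical space $\bar\tau=\inf\{t:H_t\in S_t\}$.
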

\begin{proof}
Since $\tau$ is a stopping time w.r.t.\ $\mathcal F_t=\sigma(H_t)$, the event $\{\tau=t\}$ belongs to $\mathcal F_t$; hence there exists a measurable set $S_t\subset \mathcal H_t$ such that $\{\tau=t\}=\{H_t\in S_t\}$.
Define $\bar\pi$ as follows:
at time $t$, given history $h\in\mathcal H_t$,
\[
\bar\pi_t(a_{\rm stop}| h)=\mathbf 1\{h\in S_t\},\qquad
\bar\pi_t(\cdot| h)=\pi_t(\cdot| h)\ \text{ on }\A\ \text{ when }h\notin S_t.
\]
Then $\{A_t=a_{\rm stop}\}=\{H_t\in S_t\}=\{\tau=t\}$, so $\bar\tau=\tau$ a.s.
Moreover, on the event $\{\tau>t\}$ the  action distribution and observation kernel coincide with those under $\pi$, so the induced law of $(H_t)_{t\le\tau}$ is the same; in particular $H_{\bar\tau}$ under $\bar\pi$ has the same distribution as $H_\tau$ under $\pi$. Hence, one can easily show that the equality $V_\lambda(\pi,I,\tau)=V_\lambda(\bar\pi,I,\bar\tau)$ follows. 
\end{proof}

\subsubsection{Optimal Policy}\label{app:optimal_policy}
\cref{lem:stop_as_action} shows that (for fixed $\lambda$) the inner problem in \cref{eq:dual_after_inference} can be viewed as an optimal-stopping control problem on the augmented action space:
each continuation step incurs unit cost, while stopping at history $h\in\mathcal H_t$ incurs terminal cost $-\lambda\,r_t(h)$.

Define the optimal cost-to-go (for fixed $\lambda$) from a history $h\in\mathcal H_t$ as
\begin{equation}
V_{t}^\star(h;\lambda) \coloneqq \inf_{\bar\pi=(\bar\pi_i)_{i\geq t}}\ \mathbb E_{\theta\sim\nu}^{\bar\pi}\left[\sum_{s=t}^{\bar\tau-1}1- \lambda r_{\bar\tau}(H_{\bar\tau}) \bigg|\ H_t=h\right],
\end{equation}
where the infimum is over policies on $\bar\A$ and $\bar\tau$ is the first time $a_{\rm stop}$ is chosen.
Similarly to \citep{russo_learning_2025}, we can define the following optimal $Q$-functions
\[
Q_{t,\rm stop}^\star(h;\lambda)\coloneqq -\lambda r_t(h),
\qquad
Q_{t,\rm cont}^\star(h,a;\lambda)\coloneqq 1+\mathbb E\left[V_{t+1}^\star(H_{t+1};\lambda)| H_t=h, A_t=a\right].
\]
where the latter expectation is over the posterior mixture, defined as
\[
\bar{P}_t(y'\in Y| H_t=h, A_t=a) = \int P_{\theta,t}(y'\in Y| H_t,A_t=a)R_t({\rm d} \theta| H_t=h),\quad \forall Y\in {\cal B}({\cal Y}).
\]
Furthermore, similarly to \citep{russo_learning_2025}, a standard decomposition yields the Bellman optimality relation
\begin{equation}\label{eq:bellman_stop_continue}
V_{t}^\star(h;\lambda)=\min\left\{
Q_{t,\rm stop}^\star(h;\lambda), \inf_{a\in\A} Q_{t,{\rm cont}}^\star(h,a;\lambda)
\right\}.
\end{equation}
However, in order to guarantee that the infimum  $, \inf_{a\in\A} Q_{t,{\rm cont}}^\star(h,a;\lambda)$ is attained, since ${\cal A}$ is compact, we need to guarantee that the $Q$-value is lower semicontinuous.  We begin by showing that $V_t^\star$ is lower semicontinuous. To that aim, we need some results first. We begin by showing that the mixture posterior is weakly continuous.
\begin{lemma}[Weak continuity of the mixture posterior]\label{lem:mixture_weak_cont}
Fix $t$ and $h\in\mathcal H_t$. Let $R_t(\cdot| h)$ be the posterior on $\Theta$ and define the posterior predictive kernel
\[
\bar P_t(\cdot| h,a)\coloneqq\int_\Theta P_{\theta,t}(\cdot| h,a)\,R_t({\rm d}\theta| h).
\]
Under \cref{assump:weak_continuity_transition}, $a\mapsto \bar P_t(\cdot| h,a)$ is weakly continuous. If in addition we assume \cref{assumption:likelihood_continuity}, then $(h,a)\mapsto \bar P_t(\cdot|h,a)$ is jointly weakly continuous.
\end{lemma}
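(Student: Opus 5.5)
The plan is to test against an arbitrary $f\in C_b(\Y)$ and write
\[
\int_\Y f(y)\,\bar P_t({\rm d}y\mid h,a)=\int_\Theta g_f(\theta,h,a)\,R_t({\rm d}\theta\mid h),\qquad g_f(\theta,h,a)\coloneqq\int_\Y f(y)\,P_{\theta,t}({\rm d}y\mid h,a).
\]
This identity holds by Fubini for kernels, since $\bar P_t$ is a mixture of the kernels $P_{\theta,t}$. Weak continuity of $\bar P_t$ then reduces to continuity of the right-hand side in $a$ (first claim) and in $(h,a)$ (second claim). Throughout, $|g_f|\le\|f\|_\infty$.

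For the first claim, fix $h$ and let $a_n\to a$. By \cref{assump:weak_continuity_transition}, for every $\theta$ we have $g_f(\theta,h,a_n)\to g_f(\theta,h,a)$. The measure $R_t(\cdot\mid h)$ does not depend on $a$, and the integrands are uniformly bounded by $\|f\|_\infty$. Dominated convergence under $R_t(\cdot\mid h)$ therefore gives convergence of the integrals, which is exactly the claimed weak continuity in $a$. Measurability of $\theta\mapsto g_f(\theta,h,a)$ follows from the joint measurability of the densities in \cref{assump:domination_param}.

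For the joint claim, let $(h_n,a_n)\to(h,a)$. I would use the explicit Bayes formula of \cref{lemma:posterior_distribution_param} together with the densities $p_{\theta,t}$ from \cref{assumption:likelihood_continuity}. With $\ell_t$ the history likelihood, this gives
\[
\int f\,{\rm d}\bar P_t(\cdot\mid h_n,a_n)=\frac{\int_\Theta\int_\Y f(y)\,p_{\theta,t}(y\mid h_n,a_n)\,\lambda({\rm d}y)\,\ell_t(\theta,h_n)\,\nu({\rm d}\theta)}{\int_\Theta \ell_t(\theta,h_n)\,\nu({\rm d}\theta)}.
\]
The denominator converges to a strictly positive limit, exactly as in the proof of \cref{lemma:weak_continuity_posterior}. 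For the numerator, the integrand $f(y)\,p_{\theta,t}(y\mid h_n,a_n)\,\ell_t(\theta,h_n)$ converges pointwise in $(\theta,y)$ by joint continuity.

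The main obstacle is finding a dominating function. The plan is to use compactness. Eventually $(h_n,a_n)$ lies in a compact neighbourhood $K$ of $(h,a)$ (all spaces are compact, and histories of fixed length live in a compact product). By joint continuity, $p_{\theta,t}$ and $\ell_t$ are then uniformly bounded on $\Theta\times\Y\times K$. Since $\lambda$ and $\nu$ are probability measures, a constant dominates the integrand, and dominated convergence on $\Theta\times\Y$ yields convergence of the numerator.

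An alternative route, which avoids densities, is to show that $g_f$ is jointly continuous in $(\theta,h,a)$ and then combine this with weak convergence $R_t(\cdot\mid h_n)\Rightarrow R_t(\cdot\mid h)$. The combination would use the standard lemma that $\int g_n\,{\rm d}\mu_n\to\int g\,{\rm d}\mu$ whenever $g_n\to g$ uniformly on compact $\Theta$ and $\mu_n\Rightarrow\mu$. Uniform convergence would follow from joint continuity of $g_f$ on the compact set $\Theta\times K$. Either way, the conclusion is that $(h,a)\mapsto\bar P_t(\cdot\mid h,a)$ is jointly weakly continuous.
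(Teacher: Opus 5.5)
Your proposal is correct and follows the paper's proof. The first claim is the same dominated-convergence argument under the fixed measure $R_t(\cdot\mid h)$. For the joint claim the paper likewise combines the Bayes formula, joint continuity of $\ell_t$ and $p_{\theta,t}$, compactness for a uniform bound, and dominated convergence; it does this in two passes (first over $\Y$ to make $G_f$ continuous in $(h,a)$, then over $\Theta$), and it absorbs the normalizing constant into $\ell_t$. You instead run a single dominated-convergence step on $\Theta\times\Y$ and treat the denominator explicitly. Your alternative route, via $R_t(\cdot\mid h_n)\Rightarrow R_t(\cdot\mid h)$ and uniform convergence of $g_f$ on compact $\Theta$, is also valid but not needed.
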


\begin{proof}
We prove weak continuity in the action first.
Fix $f\in C_b(\mathcal Y)$ (continuous and bounded) and a sequence $(a_n)_n$ such that $a_n\to a$. For each $\theta$, define
\[
g_n(\theta)\coloneqq\int f(y) \,P_{\theta,t}({\rm d}y| h,a_n),\qquad g(\theta)\coloneqq \int f(y) \,P_{\theta,t}({\rm d}y| h,a).
\]
By \cref{assump:weak_continuity_transition},
\[
g_n(\theta)\to g(\theta).
\]
Moreover, $\big|\int f(y)\, P_{\theta,t}({\rm d}y| h,a_n)\big|\leq \|f\|_\infty<\infty$ for all $\theta,n$, and thus is also bounded.
By dominated convergence,
\begin{align*}
\int f(y)\,\bar P_t({\rm d}y| h,a_n)&=\int_\Theta g_n(\theta) \,R_t({\rm d}\theta\mid h),\\
&\to
\int_\Theta \underbrace{\left(\int f(y)\,P_{\theta,t}({\rm d}y| h,a)\right)}_{=g(\theta)}R_t({\rm d}\theta\mid h),\\
&=\int f(y)\bar P_t({\rm d} y| h,a). \tag{Fubini-Tonelli}
\end{align*}

Regarding the second part, it is less straightforward to prove. We assume  \cref{assumption:likelihood_continuity}.
Recall that $R_t({\rm d}\theta\mid h) =  \frac{ \ell_t(\theta,h) \nu({\rm d}\theta)}{\int_\Theta \ell_t(\theta,h) \nu({\rm d}\theta)}$.  We omit the normalization constant for simplicity, and simply include it in $\ell_t$.

Define some sequence  $(h_n,a_n)\to (h,a)$, and consider
\[
\int f(y)\bar P_t({\rm d}y\mid h_n,a_n) = \int_\Theta \underbrace{\int f(y) P_{\theta,t}({\rm d}y\mid h_n,a_n)}_{\eqqcolon G_f(h_n,a_n,\theta)} \ell_t(\theta,h_n) \nu({\rm  d}\theta)=\int_\Theta G_f(h_n,a_n,\theta)\ell_t(\theta,h_n)\nu({\rm d}\theta).
\]
Clearly $G_f$ is bounded and $\ell_t$ is jointly continuous by assumption. Since $\Theta$ is compact, we have that along any convergence sequence $h_n\to h$, the likelihood $\ell_t$ is uniformly bounded. If $G_f$ is jointly continuous in $(h,a)$, then, by dominated convergence, we obtain
\[
\int_\Theta G_f(h_n,a_n,\theta)\ell_t(\theta,h_n)\nu({\rm d}\theta)\to \int_\Theta G_f(h,a,\theta)\ell_t(\theta,h)\nu({\rm d}\theta)=\int f(y)\bar P_t({\rm d}y\mid h,a), 
\]
which shows the claim. Hence, we need to show that $(h,a)\mapsto P_{\theta,t}(\cdot\mid h,a)$ is weakly continuous for each $\theta$.

Using that ${\rm d}P_{\theta,t}(\cdot\mid h,a) = p_{\theta,t}(y\mid h,a) {\rm d}\lambda$, and recalling that by  \cref{assumption:likelihood_continuity} $p_{\theta,t}$ is jointly continuous. We have
\[
\int f(y) P_{\theta,t}({\rm d}y\mid h_n,a_n)=\int f(y)p_{\theta,t}(y\mid h_n,a_n) {\rm d}\lambda
\]
using again compactness (of $\Y$) and continuity of the arguments, we derive weak continuity of $(h,a)\mapsto P_{\theta,t}(\cdot\mid h,a)$, which concludes the proof.

\end{proof}


The main technical challenge in the continuous case is showing that the infimum $\inf_{a\in\A} Q_{t,\rm cont}^\star(h,a)$ in the Bellman equation is attained. In finite $\A$ this is trivial; in compact continuous $\A$ it requires lower semicontinuity of $a\mapsto Q_{t,\rm cont}^\star(h,a)$. We establish this through a value-iteration construction that propagates lower semicontinuity from the observation model through the posterior predictive to the $Q$-function. The proof adapts the general template of negative dynamic programming \citep{bertsekas1996stochastic,hernandez-lerma1996DiscreteTime} to our stop/continue structure, where the stopping payoff involves $r_t(h)=\sup_x q_t(h,x)$ --- itself a supremum over a continuous set whose upper semicontinuity must be established first (\cref{lemma:uppersemicontinuity_q_reward}). We build a sequence of values $W_t^{(n)}(h)$ and show that these are l.s.c. in $h$.

Starting from $V_t^\star$, we construct $W_t^\star$ and build an increasing sequence $W_t^{(n)}$ from below. We show that each $W_t^{(n)}$ is l.s.c. in $h$ and that for each fixed $h$, the map $a\mapsto Q_t^{(n)}(h,a)$ is l.s.c. on $\A$. We then show that $W_t^{(n)}$ approaches $W_t^\star$. We conclude by showing that $W_t^\star=V^\star+\lambda$, proving that $V_t^\star$ is $l.s.c$.

Define $W_t^{(0)}(h)\coloneqq 0$ for all $t,h$. Recursively for $n\geq 0$, define
\begin{align*}
    Q_t^{(n)}(h,a) &\coloneqq 1+ \int_{\Y}W_{t+1}^{(n)}(h,a,y) \bar P_t({\rm d}y\mid h,a),\\
    C_t^{(n)}(h) &\coloneqq  \inf_{a\in \A}Q_t^{(n)}(h,a),\\
    W_t^{(n+1)}(h) &\coloneqq \min\{\lambda -\lambda r_t(h); C_t^{(n)}(h)\}.
\end{align*}
Also define $W_t^\star(h) = V_t^\star(h;\lambda)+\lambda$. We also define the operator ${\cal T}_t$ as follows:
\[
({\cal T}_t u)(h) \coloneq \min\left\{ \lambda(1-r_t(h)),  1+\inf_{a\in \A}\int_{\Y}u(h,a,y) \bar P_t({\rm d}y\mid h,a)\right\},
\]
therefore $W_{t}^{(n+1)}={\cal T}_{t} W_{t+1}^{(n)}$. One can clearly show that the operator is monotone, as for $u(h)\leq v(h)$ we have $({\cal T}_t u)(h)\leq ({\cal T}_tv)(h)$.  Therefore $U_t(h)\coloneqq \sup_{n\geq 0} W_t^{(n)}(h)$ exists pointwise, and satisfies $0\leq U_t(h)\leq \lambda$.

\begin{lemma}\label{lemma:W_is_lsc}
Assume \cref{assumption:L_continuity_x_measurability_xtheta} and \cref{assumption:likelihood_continuity}.
    For every $t \in \mathbb{N}$, the iterates $W_t^{(n)}$ and $C_t^{(n)}$ are lower semicontinuous, and for each fixed $h$, $a\mapsto Q_t^{(n)}(h,a)$ is lower semicontinuous on $\A$ and attains its minimum.
\end{lemma}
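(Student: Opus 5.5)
We argue by induction on $n$, simultaneously for all $t$. The base case is immediate: $W_t^{(0)}\equiv 0$ is continuous. For the inductive step, fix $n$ and assume $W_{t+1}^{(n)}$ is lower semicontinuous on $\mathcal H_{t+1}$ for every $t$. We also keep track of the uniform bounds $0\le W_{t+1}^{(n)}\le \lambda$. Nonnegativity holds because $r_t\le 1$, $Q\ge 1$ and $W^{(0)}=0$. The upper bound $\lambda$ holds because the stop branch is at most $\lambda$.

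\textbf{Step 1: lower semicontinuity of $Q_t^{(n)}$.} We show that $(h,a)\mapsto Q_t^{(n)}(h,a)$ is jointly l.s.c. The integrand $(h,a,y)\mapsto W_{t+1}^{(n)}(h,a,y)$ is l.s.c. on $\mathcal H_t\times\A\times\Y$, since concatenation of histories is continuous. It is also bounded below. By \cref{lem:mixture_weak_cont}, $(h,a)\mapsto\bar P_t(\cdot\mid h,a)$ is jointly weakly continuous. We then use the standard fact (e.g.\ \citep[Proposition E.2]{hernandez-lerma1996DiscreteTime}, or a direct Portmanteau argument) that $(z,\mu)\mapsto\int w(z,y)\mu(\mathrm dy)$ is l.s.c. whenever $w$ is l.s.c. and bounded below and $\mu_n\to\mu$ weakly. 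The direct argument runs as follows. Write $w$ as the increasing limit of bounded continuous (Lipschitz) functions $w_k\uparrow w$ on the compact metric space $\mathcal H_t\times\A\times\Y$. Each $\int w_k(z,y)\bar P_t(\mathrm dy\mid z)$ is continuous in $z$ by joint continuity together with weak convergence; the uniform continuity of $w_k$ on compacta handles the moving integrand. The supremum over $k$ is then l.s.c. by monotone convergence. Consequently $Q_t^{(n)}$ is jointly l.s.c., hence l.s.c. in $a$ for fixed $h$. Since $\A$ is compact, the minimum over $a$ is attained.

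\textbf{Step 2: lower semicontinuity of $C_t^{(n)}$.} We show $C_t^{(n)}(h)=\min_{a\in\A}Q_t^{(n)}(h,a)$ is l.s.c. in $h$. This is the usual "inf over a compact set of a jointly l.s.c. function" argument. Take $h_k\to h$ with $C_t^{(n)}(h_k)\to\liminf_k C_t^{(n)}(h_k)$, and choose minimizers $a_k$. Extract a subsequence $a_k\to\bar a$ by compactness of $\A$. Joint l.s.c. then gives
\[
\liminf_k C_t^{(n)}(h_k)=\liminf_k Q_t^{(n)}(h_k,a_k)\ge Q_t^{(n)}(h,\bar a)\ge C_t^{(n)}(h).
\]

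\textbf{Step 3: lower semicontinuity of $W_t^{(n+1)}$.} By \cref{lemma:uppersemicontinuity_q_reward}, $q_t$ is jointly u.s.c. on $\mathcal H_t\times\X$ under \cref{assumption:L_continuity_x_measurability_xtheta,assumption:likelihood_continuity}. Therefore $r_t(h)=\max_{x\in\X}q_t(h,x)$ is u.s.c. in $h$, by the sup analogue of Step 2 using compactness of $\X$ (upper semicontinuity is preserved under maximization over a compact set). Hence $\lambda(1-r_t)$ is l.s.c. The minimum of two l.s.c. functions is l.s.c., so $W_t^{(n+1)}$ is l.s.c., which closes the induction.

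\textbf{Main obstacle.} The delicate point is Step 1: passing lower semicontinuity through an integral against a kernel whose measure and integrand both move with $(h,a)$. We would handle it via the monotone approximation by Lipschitz functions described above, which uses compactness of $\Y$ and of the history spaces. A secondary subtlety is that the posterior formula in \cref{lemma:posterior_distribution_param} holds only a.e. We fix the version given by the Bayes ratio everywhere; this is legitimate because $\ell_t>0$ by \cref{assumption:likelihood_continuity}. With this version, \cref{lemma:weak_continuity_posterior,lem:mixture_weak_cont} hold pointwise.
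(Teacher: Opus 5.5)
Your proof is correct and follows the paper's argument: induction on $n$; lower semicontinuity of $Q_t^{(n)}$ via the Portmanteau inequality for lower semicontinuous, bounded-below integrands against the jointly weakly continuous predictive kernel of \cref{lem:mixture_weak_cont}; the minimizer-subsequence compactness argument for $C_t^{(n)}$; and the minimum of two l.s.c.\ functions for $W_t^{(n+1)}$. Your departures from the paper are minor refinements: you prove joint l.s.c.\ of $Q_t^{(n)}$ once, via monotone Lipschitz approximation, instead of separately in $a$ and then along subsequences, and you explicitly derive upper semicontinuity of $r_t$ from joint u.s.c.\ of $q_t$ and compactness of $\X$, a step the paper leaves implicit.
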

\begin{figure}[t]
    \centering
    \includegraphics[width=\linewidth]{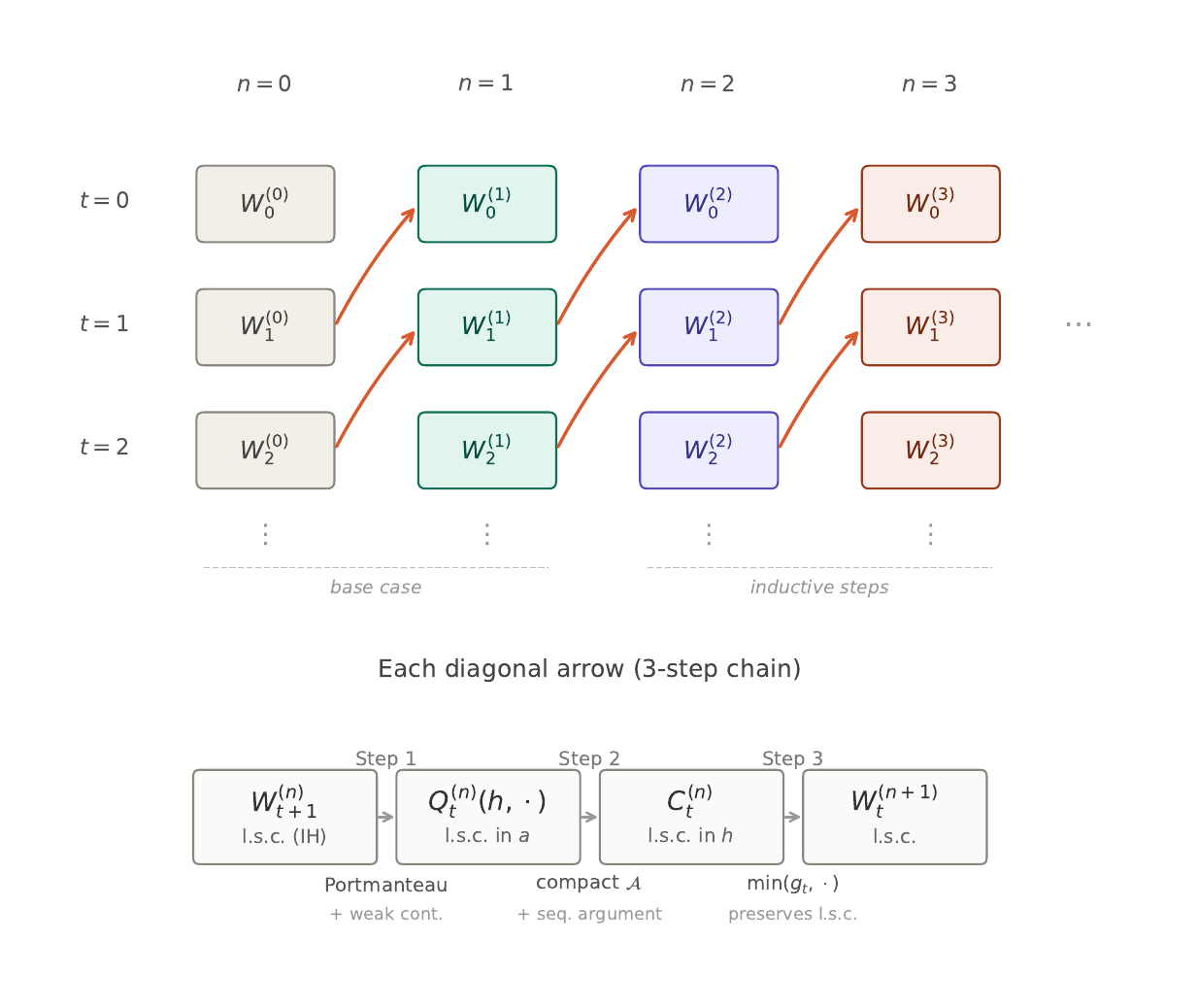}
    \caption{Induction diagram used in the proof of \cref{lemma:W_is_lsc}.}
    \label{fig:induction_diagram}
\end{figure}
\begin{proof}
Let $g_t(h)=\lambda(1-r_t(h))$ and fix $t$.
    We prove the argument by induction on $n$: for a fixed $n$, $W_s^{(n)}$ is l.s.c. for every $s$.
    
    \noindent \underline{\it Base Step.} Clearly, for all $t$, $W_t^{(0)}$ is l.s.c. , and since $Q_t^{(0)}(h,a)=1$ and $C_t^{(0)}(h)=1$, we have $W_t^{(1)}(h)=\min(g_t(h),1)$ which is l.s.c. if $-r_t$ is l.s.c. (which is, by \cref{lemma:uppersemicontinuity_q_reward}).

    Then, assume the induction hypothesis. We use that $W_{t+1}^{(n)}$ is l.s.c. and prove that $W_{t}^{(n+1)}$ is l.s.c. We do so in 3 steps: first we prove that $a\mapsto Q_t^{(n)}(h,a)$ is l.s.c. Then, we prove that $h\mapsto C_t^{(n)}(h)$ is l.s.c. Lastly we show that $W_t^{(n+1)}$ is l.s.c.
    
      \noindent \underline{\it  Step 1.}  Fix $h$ and consider a sequence $a_m\to a$. Define $\mu_m \coloneqq \delta_{a_m}\otimes \bar P_t(\cdot \mid h, a_m)$ and $\mu \coloneqq \delta_a \otimes \bar P_t(\cdot \mid h,a)$.
    By \cref{lem:mixture_weak_cont} we have that $a\to \bar P_t(\cdot\mid h,a)$ is weakly continuous, and thus $\mu_m \Rightarrow \mu$. Since $W_{t+1}^{(n)}$ is l.s.c. and bounded,  the Portmanteau theorem yields
    \[
    \liminf_{m\to \infty} \int_{\Y\times \A} W_{t+1}^{(n)}(h,a',y) {\rm d}\mu_m \geq \int_{\Y\times \A} W_{t+1}^{(n)}(h,a',y){\rm d}\mu.
    \]
    therefore $a\mapsto Q_t^{(n)}(h,a)$ is  l.s.c. on compact $\A$, and the infimum is attained.

      \noindent \underline{\it  Step 2.} We now show that  $h\mapsto C_t^{(n)}(h)$ is l.s.c. Define a sequence $(h_m)_m$ such that $h_m\to h$. Choose a subsequence $(h_{m_k})_{k}$ such that  $C_t^{(n)}(h_{m_k})\to \liminf_{m\to\infty} C_t^{(n)}(h_m)$. For each $k$ choose $a_{m_k}\in \argmin_{a\in \A} Q_t^{(n)}(h_{m_k},a)$. Since $\A$ is compact, by passing to a further subsequence if necessary, we may assume $a_{m_k}\to a$.  Define $\mu_{m_k}\coloneqq \delta_{(h_{m_k},a_{m_k})}\otimes \bar P_t(\cdot\mid h_{m_k},a_{m_k})$ and $\mu\coloneqq \delta_{(h,a)}\otimes \bar P_t(\cdot\mid h,a)$.
      Since by assumption we have that $(h,a)\mapsto \bar P_t(\cdot\mid h,a)$ is jointly weakly continuous, then $\mu_{m_k}\Rightarrow \mu$. Similarly to before, we obtain
      \[
      \liminf_{k\to\infty} Q_t^{(n)}(h_{m_k},a_{m_k}) \geq Q_t^{(n)}(h,a)\geq C_t^{(n)}(h).
      \]
      But  $Q_t^{(n)}(h_{m_k},a_{m_k})=C_t^{(n)}(h_{m_k})$, therefore $\liminf_{m\to\infty} C_t^{(n)}(h_m)=\lim_{k\to\infty} C_t^{(n)}(h_{m_k})\geq C_t^{(n)}(h)$. Therefore $C_t^{(n)}$ is l.s.c. in $h$.

      \noindent \underline{\it  Step 3.} Lastly, consider $W_t^{(n+1)}=\min(g_t(h), C_t^{(n)}(h))$. Since both arguments are l.s.c., then $W_t^{(n+1)}$ is l.s.c. Since $t$ was arbitrary, the statements holds for all $t$.
    
\end{proof}

Then, since each $W_t^{(n)}$ is l.s.c., we get that $U_t(h)$ is l.s.c. (arbitrary suprema of l.s.c. functions are l.s.c.).

\begin{lemma}\label{lemma:QtU_Ut}
   Assume \cref{assumption:L_continuity_x_measurability_xtheta} and \cref{assumption:likelihood_continuity}.
    For every $t \in \mathbb{N}$,  define 
    \[
    Q_t^U(h,a)\coloneqq 1+\int_{\Y} U_{t+1}(h,a,y)\bar P_t({\rm d}y\mid h,a),\qquad C_t^U(h)\coloneqq \inf_{a\in \A} Q_t^U(h,a),
    \]
    then  $U_t(h)=\min\{g_t(h), C_t^U(h)\}$. Furthermore, we have that $a\mapsto Q_t^U(h,\cdot)$ is lower semicontinuous  and $Q_t^U(h,a)$ is jointly Borel measurable.
\end{lemma}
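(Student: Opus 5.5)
We have to show three things about the limit $U_t=\sup_n W_t^{(n)}$: the fixed-point identity $U_t=\min\{g_t,C_t^U\}$, lower semicontinuity of $a\mapsto Q_t^U(h,a)$, and joint Borel measurability of $Q_t^U$.

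\textbf{Monotonicity and monotone convergence.} First we check that the iterates increase in $n$. We have $W_t^{(1)}=\min(g_t,1)\ge 0=W_t^{(0)}$ because $g_t\ge 0$. If $W_{s}^{(n)}\le W_{s}^{(n+1)}$ for all $s$, then monotonicity of $\mathcal T_t$ gives $W_t^{(n+1)}\le W_t^{(n+2)}$. So $W_t^{(n)}\uparrow U_t$ pointwise, with $0\le W_t^{(n)}\le\lambda$.

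Fix $(h,a)$. Monotone convergence under the probability measure $\bar P_t(\cdot\mid h,a)$ gives $Q_t^{(n)}(h,a)\uparrow Q_t^U(h,a)$.

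\textbf{Joint measurability of $Q_t^U$.} Each $W_{t+1}^{(n)}$ is l.s.c. by \cref{lemma:W_is_lsc}, hence Borel, and so $U_{t+1}$ is Borel. Since $(h,a)\mapsto\bar P_t(\cdot\mid h,a)$ is a weakly continuous (thus measurable) kernel by \cref{lem:mixture_weak_cont}, the map $(h,a)\mapsto\int U_{t+1}(h,a,y)\,\bar P_t({\rm d}y\mid h,a)$ is jointly Borel measurable by the standard kernel-integration result.

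\textbf{Lower semicontinuity in $a$.} Each $a\mapsto Q_t^{(n)}(h,a)$ is l.s.c. by \cref{lemma:W_is_lsc}, and $Q_t^U(h,\cdot)=\sup_n Q_t^{(n)}(h,\cdot)$ is a supremum of l.s.c. functions, hence l.s.c. Alternatively, one can repeat Step 1 of \cref{lemma:W_is_lsc} directly with $U_{t+1}$, which is l.s.c. and bounded. In either case the infimum defining $C_t^U(h)$ is attained, because $\A$ is compact.

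\textbf{Fixed-point identity.} Passing to the limit in $W_t^{(n+1)}=\min\{g_t,C_t^{(n)}\}$ gives
\[
U_t=\min\Big\{g_t,\;\lim_{n\to\infty} C_t^{(n)}\Big\},
\]
so it suffices to show $\lim_n C_t^{(n)}(h)=C_t^U(h)$. This interchange of a monotone limit with an infimum is the main obstacle.

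\emph{Upper bound.} Since $Q_t^{(n)}\le Q_t^U$, we have $C_t^{(n)}\le C_t^U$ for every $n$, so $\lim_n C_t^{(n)}\le C_t^U$.

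\emph{Lower bound.} For each $n$, pick a minimizer $a_n\in\argmin_{a\in\A} Q_t^{(n)}(h,a)$, which exists by \cref{lemma:W_is_lsc}. By compactness of $\A$, pass to a subsequence with $a_{n_k}\to\bar a$. Fix $m$. For $n_k\ge m$, monotonicity gives
\[
C_t^{(n_k)}(h)=Q_t^{(n_k)}(h,a_{n_k})\ge Q_t^{(m)}(h,a_{n_k}).
\]
Taking $\liminf_{k\to\infty}$ and using lower semicontinuity of $Q_t^{(m)}(h,\cdot)$ yields
\[
\lim_n C_t^{(n)}(h)\ge Q_t^{(m)}(h,\bar a).
\]
Now let $m\to\infty$. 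By monotone convergence, $Q_t^{(m)}(h,\bar a)\to Q_t^U(h,\bar a)\ge C_t^U(h)$. Hence $\lim_n C_t^{(n)}(h)\ge C_t^U(h)$.

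Together the two bounds give $\lim_n C_t^{(n)}=C_t^U$, and therefore $U_t=\min\{g_t,C_t^U\}$, which completes the proof.
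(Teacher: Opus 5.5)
Your proposal is correct and follows essentially the same route as the paper: monotone convergence to get $Q_t^{(n)}\uparrow Q_t^U$, lower semicontinuity of $Q_t^U(h,\cdot)$ as a supremum of l.s.c.\ functions, joint measurability by integrating the Borel function $U_{t+1}$ against the kernel $\bar P_t$, and the same minimizer-plus-compactness argument to interchange $\sup_n$ and $\inf_a$. The only difference is that you explicitly check that the iterates $W_t^{(n)}$ are nondecreasing, using $W_t^{(1)}=\min(g_t,1)\ge 0$ and the monotonicity of $\mathcal T_t$, whereas the paper takes this for granted.
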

\begin{proof}
Let $g_t(h)=\lambda(1-r_t(h))$ and fix $t$.
    Since $W_{t+1}^{(n)} \uparrow  U_{t+1}$ and $W_{t+1}^{(n)}\geq 0$, by monotone convergence for each $(h,a)$ we have $Q_t^{(n)}\uparrow Q_t^U$. By \cref{lemma:W_is_lsc}, each $Q_t^{(n)}(h,\cdot)$ is l.s.c., thus $Q_t^U$ is also l.s.c. being the suprema of l.s.c. functions. Furthermore, since each $U_{t+1}(h,a,y)$ is l.s.c., and $(h,a)\mapsto \bar P_t(\cdot\mid h,a)$ is  a Borel kernel, then $Q_t^U(h,a)$ is jointly Borel measurable.

    Next, we show that $\sup_n \inf_a Q_t^{(n)}(h,a)=\inf_a\sup_n Q_t^{(n)}(h,a)= C_t^U(h)$. To show this, let $m_n =\inf_a Q_t^{(n)}(h,a)$ and $m=\sup_n m_n$.
    First, note that by monotonicity for all $N$ we have $ m_n=\inf_a Q_t^{(n)}(h,a) \leq \inf_a Q_t^U(h,a)=C_t^U(h)$, and thus $m\leq C_t^U(h)$.
    Now, choose some minimizers $a_n\in \argmin_a Q_t^{(n)}(h,a)$. By compactness, there is some subsequence satisfying $a_{n_k}\to a$. Choose some integers $i_0$, then by monotonicity for all large $k$ we have
    \[
    Q_t^{(i_0)}(h,a_{n_k}) \leq Q_t^{(n_k)}(h,a_{n_k}) = m_{n_k}.
    \]
    Therefore $\liminf_{k} m_{n_k}=m\geq Q_t^{(i_0)}(h,a)$. Since this holds for any $i_0$, we have $m\geq \sup_{i_0} Q_t^{(i_0)}(h,a)= Q_t^U(h,a)\geq C_t^U(h) $, which shows that $\sup_n \inf_a Q_t^{(n)}(h,a)=\inf_a\sup_n Q_t^{(n)}(h,a)= C_t^U(h)$.

    Hence, we have that $U_t(h)=\sup_n W_t^{(n+1)}(h)=\sup_n \min \{g_t(h), C_t^{(n)}(h)\}$. Since $C_t^{(n)}(h)\uparrow C_t^U(h)$, we have $U_t(h)=\min\{g_t(h), C_t^U(h)\}$.
\end{proof}

Finally, we show that $U_t$ is actually the true optimal value $W_t^\star$.
\begin{theorem}
       Assume \cref{assumption:L_continuity_x_measurability_xtheta} and \cref{assumption:likelihood_continuity}.
    For every $t \in \mathbb{N}$,  we have that $U_t(h)=V_t^\star(h)+\lambda$, and $V_t^\star$ is l.s.c.
\end{theorem}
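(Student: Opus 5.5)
The plan is to prove $U_t=W_t^\star$, where $W_t^\star\coloneqq V_t^\star(\cdot;\lambda)+\lambda$, by establishing the two inequalities separately. Lower semicontinuity of $V_t^\star$ then comes for free: $U_t$ is a pointwise supremum of the l.s.c. functions $W_t^{(n)}$ from \cref{lemma:W_is_lsc}. Throughout, I work with the shifted cost $W$. Stopping at $h$ costs $g_t(h)=\lambda(1-r_t(h))\in[0,\lambda]$, and each continuation step costs $1$, so every stage cost is nonnegative. This is the positive-cost (negative dynamic programming) setting of \citet{bertsekas1996stochastic}, where value iteration started from $0$ is the natural monotone approximation from below.

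\textbf{Step 1: $U_t\le W_t^\star$.} I show by induction on $n$ that $W_t^{(n)}\le W_t^\star$ for all $t$. The base case holds because $W_t^{(0)}=0\le W_t^\star$, as all costs are nonnegative. For the inductive step I use the standard one-step decomposition of $W_t^\star$: condition on the first action of an arbitrary policy, either $a_{\rm stop}$ or some $a\in\A$, and apply the tower property together with the posterior predictive kernel $\bar P_t$ from \cref{lem:mixture_weak_cont}. This gives $W_t^\star\ge\mathcal T_tW_{t+1}^\star$, i.e.\ the inequality direction of \cref{eq:bellman_stop_continue}. Monotonicity of $\mathcal T_t$ then yields $W_t^{(n+1)}=\mathcal T_tW^{(n)}_{t+1}\le\mathcal T_tW^\star_{t+1}\le W_t^\star$. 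Taking the supremum over $n$ gives $U_t\le W_t^\star$.

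\textbf{Step 2: $U_t\ge W_t^\star$.} This is the main obstacle, since it requires exhibiting a policy whose cost is at most $U_t$. By \cref{lemma:QtU_Ut}, $U_t=\min\{g_t,C_t^U\}$ and $a\mapsto Q_t^U(h,a)$ is l.s.c. on the compact set $\A$ and jointly Borel. A measurable selection theorem for l.s.c. Borel functions on compact sets (\citep[Proposition D.5]{hernandez-lerma1996DiscreteTime}, as in \cref{prop:optimal_inference_rule}) therefore gives a measurable minimizer $a_t^U(h)$. Define the stationary-in-form policy $\bar\pi^U$ that stops if $g_t(h)\le C_t^U(h)$ and otherwise plays $a_t^U(h)$. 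Iterating the fixed-point identity $U_t=\min\{g_t,Q^U_t(\cdot,a_t^U)\}$ along the trajectory for $N$ steps shows
\[
U_t(h)=\mathbb E^{\bar\pi^U}\Big[(\bar\tau\wedge (t+N))-t+\mathbf 1\{\bar\tau\le t+N\}\,g_{\bar\tau}(H_{\bar\tau})+\mathbf 1\{\bar\tau>t+N\}\,U_{t+N}(H_{t+N})\,\Big|\,H_t=h\Big].
\]
Since $U\ge0$ and $U\le\lambda$, this first gives $\mathbb E[\bar\tau\wedge(t+N)-t]\le\lambda$ for every $N$, hence $\mathbb E[\bar\tau-t\mid H_t=h]\le\lambda<\infty$ and $\bar\tau<\infty$ a.s. The truncation term is bounded by $\lambda\,\mathbb P(\bar\tau>t+N)\to0$. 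Letting $N\to\infty$, with monotone convergence on the time term and dominated convergence on the bounded terminal term, yields $U_t(h)=\mathbb E^{\bar\pi^U}[\bar\tau-t+g_{\bar\tau}(H_{\bar\tau})\mid H_t=h]\ge W_t^\star(h)$.

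The delicate points in Step 2 are that the terminal cost must be integrable and that stopping must be finite a.s. Both follow from the uniform bound $U\le\lambda$: it caps the expected number of continuation steps and forces the truncation remainder to vanish. I would take care to state this explicitly, because no discounting is available in this setting. Combining Steps 1 and 2 gives $U_t=W_t^\star=V_t^\star+\lambda$, and $V_t^\star=U_t-\lambda$ is l.s.c. As a byproduct, $\bar\pi^U$ is the optimal greedy policy of \cref{thm:main_bellman_greedy}.
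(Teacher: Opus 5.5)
Your argument is correct. Step~2 is essentially the paper's: a measurable minimizer of $Q_t^U(h,\cdot)$, the greedy stop/continue policy, iteration of the fixed-point identity of \cref{lemma:QtU_Ut}, and the bound $0\le U\le\lambda$ to control $\mathbb E[\bar\tau-t\mid H_t=h]$ and make the truncation remainder vanish. Step~1 takes a different route. The paper proves $U_t\le W_t^\star$ by a verification argument: for an arbitrary policy it iterates $U_t\le g_t$ and $U_t\le Q_t^U(h,a)$ along that policy's trajectory up to $\min(\tau,t+N)$, then lets $N\to\infty$, treating the infinite-cost case separately from the finite-cost case where $\tau<\infty$ a.s. You instead induct on the value-iteration index, using that the optimal cost is a supersolution, $W_t^\star\ge\mathcal T_tW_{t+1}^\star$, together with monotonicity of $\mathcal T_t$. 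This is the classical negative-programming argument, and it needs neither the fixed-point property of $U$ nor a limit in $N$. What the paper's route buys is that it only ever integrates $U$, which is l.s.c.\ and hence Borel. By contrast, writing $\mathcal T_tW^\star_{t+1}$ tacitly presumes $W^\star_{t+1}$ is measurable, which is not known before the theorem is proved. You can repair this without changing your structure by inducting against an arbitrary policy $\pi$ with shifted cost $J^\pi_t$. The one-step decomposition gives $J_t^\pi(h)\ge\min\bigl\{g_t(h),\,1+\int_{\A}\int_{\Y}J^\pi_{t+1}(h,a,y)\,\bar P_t({\rm d}y\mid h,a)\,\pi({\rm d}a\mid h)\bigr\}$; randomizing between stopping and continuing only produces a convex combination, which stays above this minimum. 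Pointwise you have $J^\pi_{t+1}\ge W^\star_{t+1}\ge W^{(n)}_{t+1}$, and $W^{(n)}_{t+1}$ is l.s.c.\ by \cref{lemma:W_is_lsc}. Hence $J^\pi_t\ge\mathcal T_tW^{(n)}_{t+1}=W^{(n+1)}_t$, and taking the infimum over $\pi$ closes the induction.
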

\begin{proof}
Let $g_t(h)=\lambda(1-r_t(h))$ .
    We first show that $U_t$ is actually a lower bound on the cost of any policy. Then we construct  a policy that achieves equality.

    \noindent{\it Step 1 (lower bound).} Consider any admissible policy $\pi$. At any history $h$ at time $t$, 
    \begin{enumerate}
        \item if $\pi$ stops, then $U_t(h)\leq g_t(h)$ since $U_t(h)=\min\{g_t(h), C_t^U(h)\}$.
        \item if $\pi$ continues with some action $a$, then
        \[
        U_t(h) \leq 1+\int_{\Y} U_{t+1}(h,a,y)\bar P_t({\rm d}y\mid h,a),
        \]
        since $C_t^U(h)\leq Q_t^U(h,a)$.
    \end{enumerate}
    Let $\tau$ be the first timestep the policy $\pi$ decides to stop. Then, iterating up to $\min(\tau,t+N)$
    \[
    U_t(h)\leq \mathbb{E}^\pi\left[
    \sum_{s=t}^{\min(\tau,t+N)-1} 1+{\bf 1}_{\{\tau\leq t+N\}} g_\tau(H_\tau) + {\bf 1}_{\{\tau > t+N\}} U_{t+N}(H_{t+N})\Big | H_t=h
    \right]\eqqcolon J_N^\pi(h).
    \]
    If the policy achieves infinite cost, then the inequality is true for all $N$ since $U_t(h)$ is bounded. Then, consider the case where the cost is finite as $N\to\infty$. Since $g_\tau\geq 0$, this implies $\mathbb{E}^\pi[\tau-t\mid H_t=h]<\infty$, therefore $\tau<\infty$ almost surely. Consequently, by dominated convergence the remainder term $+ {\bf 1}_{\{\tau > t+N\}} U_{t+N}(H_{t+N})$ vanishes as $N\to\infty$.
    Therefore, letting $N\to \infty$ we obtain $U_t(h)\leq \lim_{N\to\infty} J_N^\pi(h)\coloneqq J^\pi(h)$, and taking infimum over $\pi$ we get $U_t(h)\leq W_t^\star(h)$.

    \noindent{\it Step 2 (equality).} From \cref{lemma:QtU_Ut},  $Q_t^U(h,a)$ is l.s.c. on $\A$ for each $h$, and $\A$ is compact. Furthemore $Q_t^U(h,a)$ is jointly Borel-measurable.  Then,  there exists a  measurable selector  $a_t^\star(h)\in \argmin_a Q_t^U(h,a)$ \citep{feinberg2021mdps} . Then both $C_t^U(h)=Q_t^U(h,a_t^\star(h))$ is Borel measurable. Consider then a policy $\pi^\star$ that stops at $h$ if $g_t(h) \leq C_t^U(h)$, and otherwise continue with $a_t^\star(h)$. Denote by $\tau^\ast$ this stopping rule. Along this policy, the Bellman minimum is attained with equality at every step, therefore
    \[
    U_t(h)=\mathbb{E}^{\pi^\ast}\left[
    \sum_{s=t}^{\min(\tau^\ast,t+N)-1} 1+{\bf 1}_{\{\tau^\ast\leq t+N\}} g_{\tau^\ast}(H_{\tau^\ast}) + {\bf 1}_{\{\tau^\ast > t+N\}} U_{t+N}(H_{t+N})\Big | H_t=h
    \right].
    \]
    Since $U_t(h)$ is bounded, we have
    \[
    \lambda \geq U_t(h)  \geq \mathbb{E}^{\pi^\ast}[\min(\tau^\ast,t+N) - t\mid H_t=h].
    \]
    Letting $N\to\infty$, we obtain  $\mathbb{E}^{\pi^\ast}[\tau-t\mid H_t=h]\leq \lambda$. Therefore the remainder term $ {\bf 1}_{\{\tau^\ast > t+N\}} U_{t+N}$ vanishes as $N\to\infty$ by dominated convergence since $U_{t+N}\leq \lambda$ and $\tau^\ast<\infty$ almost surely. Therefore we obtain $U_t(h)=J^{\pi^\ast}(h)$. Since $W^\star(h)$ is the minimal cost we obtain $W_t^\star(h)\leq U_t(h)$, but from the previous step we also have $U_t(h)\leq W_t^\star(h)$. Therefore $U_t(h)=W_t^\star(h)=V_t^\star(h;\lambda)+\lambda$. Finally, $V_t^\star$ is l.s.c. since $U_t$ is l.s.c.
\end{proof}

Hence, we conclude with the following result
\begin{proposition}[Actor over $\A$ and stopping action]\label{prop:actor_only_stop_comparison}
Assume \cref{assumption:L_continuity_x_measurability_xtheta} and \cref{assumption:likelihood_continuity}.
and let
 $a^\star(h)\in\arg\min_{a\in\A} Q_{t,\rm cont}^\star(h,a)$.
Then an optimal policy can be implemented by:
(i) selecting $a^\star(h)$ as the continuation action, and (ii) stopping iff
$Q_{t,\rm stop}^\star(h)\le Q_{t,\rm cont}^\star(h,a^\star(h))$.
\end{proposition}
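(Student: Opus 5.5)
The plan is to read the proposition off the preceding theorem, $U_t=V_t^\star+\lambda$, together with \cref{lemma:QtU_Ut}, by identifying the continuation $Q$-function with the shifted iterate limit.

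\emph{Step 1 (identify $Q^U$ with $Q^\star$).} Since $U_{t+1}=V_{t+1}^\star+\lambda$ pointwise, integrating against the probability kernel $\bar P_t(\cdot\mid h,a)$ gives
\[
Q_t^U(h,a)=1+\int_{\Y}\big(V_{t+1}^\star(h,a,y)+\lambda\big)\bar P_t({\rm d}y\mid h,a)=Q_{t,\rm cont}^\star(h,a;\lambda)+\lambda .
\]
Likewise $g_t(h)=\lambda(1-r_t(h))=Q_{t,\rm stop}^\star(h;\lambda)+\lambda$. Hence the identity $U_t=\min\{g_t,C_t^U\}$ from \cref{lemma:QtU_Ut}, shifted by $-\lambda$, is exactly the Bellman relation \eqref{eq:bellman_stop_continue}. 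Moreover, $a\mapsto Q_{t,\rm cont}^\star(h,a)$ is l.s.c. and jointly Borel, because these properties are inherited from $Q_t^U$.

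\emph{Step 2 (existence of $a^\star$).} Since $\A$ is compact and $a\mapsto Q_{t,\rm cont}^\star(h,a)$ is l.s.c., the argmin is nonempty. The measurable-selector result of \citep{feinberg2021mdps}, already used in Step 2 of the previous theorem, then gives a measurable $a^\star(h)$. The constant shift by $\lambda$ changes neither the minimizers nor the comparison, so "stop iff $Q_{t,\rm stop}^\star\le Q_{t,\rm cont}^\star(h,a^\star(h))$" is the same as "stop iff $g_t(h)\le C_t^U(h)$".

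\emph{Step 3 (optimality).} The resulting policy coincides with $\pi^\star$ constructed in Step 2 of the previous theorem, for any measurable argmin selection. That proof shows the policy attains $U_t(h)$, hence $V_t^\star$. The argument applies verbatim to any selector: the Bellman minimum holds with equality at each step, $\mathbb E[\tau^\star-t]\le\lambda$, and the remainder term vanishes by dominated convergence.

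The only delicate point I expect is this measurability and selection step, which is why I rely on the already-established joint Borel measurability of $Q_t^U$. Translating back to the sign convention of \cref{thm:main_bellman_greedy} (maximizing $-V$) then yields that statement as well.
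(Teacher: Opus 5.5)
Your argument is correct, but it reaches the conclusion by a somewhat different route than the paper. The paper's proof is a two-line local case analysis on the Bellman relation \eqref{eq:bellman_stop_continue}. The optimal decision at $h$ is whichever branch attains the minimum, and because $a^\star(h)$ attains $\inf_{a}Q^\star_{t,\mathrm{cont}}(h,a)$, the test $Q^\star_{t,\mathrm{stop}}(h)\le Q^\star_{t,\mathrm{cont}}(h,a^\star(h))$ coincides with $Q^\star_{t,\mathrm{stop}}(h)\le \inf_a Q^\star_{t,\mathrm{cont}}(h,a)$. It therefore takes three things for granted: that following the Bellman-minimizing choice at every step is globally optimal, that the argmin is nonempty, and that the chosen $a^\star$ yields an admissible policy.

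You instead use the shift identities $Q_t^U=Q^\star_{t,\mathrm{cont}}+\lambda$ and $g_t=Q^\star_{t,\mathrm{stop}}+\lambda$. With them you show that the proposed rule is exactly the policy $\pi^\star$ already verified in Step 2 of the preceding theorem. This gives rigor on precisely the points the paper leaves implicit:
\begin{itemize}
\item lower semicontinuity in $a$, and hence attainment of the minimum, transferred from \cref{lemma:QtU_Ut};
\item the restriction to measurable selections;
\item global optimality via $\mathbb E[\tau^\star-t]\le\lambda$ and the vanishing tail term.
\end{itemize}
The last point is where nonnegativity of the step and terminal costs is actually used. In undiscounted problems, a policy that is greedy with respect to the optimal value need not be optimal without such an argument.

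The paper's version, in turn, is shorter and depends only on the Bellman equation, not on how that equation was derived. It therefore transfers verbatim to any setting where the Bellman equation and a verification principle are already available.
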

\begin{proof}
By \eqref{eq:bellman_stop_continue}, at each history $h$ the optimal action is whichever attains the minimum between
$Q_{t,\rm stop}^\star(h)$ and $\inf_{a\in\A}Q_{t,\rm cont}^\star(h,a)$.
If $a^\star(h)$ attains the infimum over $\A$, then the comparison
$Q_{t,\rm stop}^\star(h)\le Q_{t,\rm cont}^\star(h,a^\star(h))$ is equivalent to
$Q_{t,\rm stop}^\star(h)\leq \inf_{a}Q_{t,\rm cont}^\star(h,a)$, i.e., stopping is optimal.
Otherwise continuing with $a^\star(h)$ is optimal.

\end{proof}
This last result shows that an algorithm can act in the augmented space $\bar\A$ while never explicitly parameterizing a ``stop action'' in the actor.

Hence an optimal policy can be implemented by:
\begin{enumerate}
\item choose a continuation action $a^\star(h)\in\arg\min_{a\in\A} Q_{t,\rm cont}^\star(h,a)$ (this is the actor over $\A$ only);
\item stop if $Q_{t,\rm stop}^\star(h)\le Q_{t,\rm cont}^\star(h,a^\star(h))$, otherwise continue with $a^\star(h)$.
\end{enumerate}
This is mathematically equivalent to having a single policy over $\bar\A$ that selects $\arg\min\{Q_{t,\rm stop}^\star(h),\min_a Q_{t,\rm cont}^\star(h,a)\}$, but it decomposes the decision into (i) a continuous control over $\A$ and (ii) an optimal-stopping comparison against a scalar stop value. Therefore,  one can learn  a $Q$-value for the stop decision and comparing it to the $Q$-value of the actor-chosen action: this is consistent with the optimal control structure of \eqref{eq:bellman_stop_continue}, provided the actor approximates the minimizer of $Q_{t,\rm cont}^\star(h,\cdot)$ and the critic estimates are calibrated.

\subsubsection{Reward Shaping and Removal of the Stop Action}
The stop action need not be explicitly parameterized by the actor. Indeed, for fixed $\lambda>0$, the Lagrangian objective
\[
\mathbb E^\pi\left[\sum_{s=t}^{\tau-1}1-\lambda r_\tau(H_\tau)\mid H_t=h\right]
\]
is equivalent, up to an additive constant independent of the policy, to maximizing the shaped return obtained from the one-step reward
\[
R_s^{\rm sh}\coloneqq -1+\lambda\left(r_{s+1}(H_{s+1})-r_s(H_s)\right).
\]
Thus the actor may be restricted to continuation actions $a\in\A$, while the stopping decision is implemented by comparing the best continuation advantage against the stop value, which is zero in the shaped formulation. Equivalently, using the rescaled reward
\[
-\frac1\lambda+r_{s+1}(H_{s+1})-r_s(H_s)
\]
gives the same optimal policies.

\begin{lemma}[Reward shaping]
\label{lemma:reward_shaping_stop_continue}
Fix $\lambda>0$.
For a continuation policy $\pi$ and stopping time $\tau$, define the original Lagrangian cost
\[
J_t^\lambda(h;\pi,\tau)
\coloneqq
\mathbb E^\pi\left[
\sum_{s=t}^{\tau-1}1-\lambda r_\tau(H_\tau)
\Big |H_t=h
\right],
\]
and the shaped return
\[
G_t^\lambda(h;\pi,\tau)
\coloneqq
\mathbb E^\pi\left[
\sum_{s=t}^{\tau-1}R_s^{\rm sh}(H_s,A_s,H_{s+1})
\Big | H_t=h
\right].
\]
Then
\[
\argmin_{\pi,\tau}J_t^\lambda(h;\pi,\tau)
=
\argmax_{\pi,\tau}G_t^\lambda(h;\pi,\tau).
\]
\end{lemma}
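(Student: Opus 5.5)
The plan is to show that the shaped return telescopes to the original Lagrangian cost up to a term that depends only on the conditioning history $h$. We then conclude by a sign flip. Everything is carried out conditionally on $H_t=h$, for an arbitrary admissible pair $(\pi,\tau)$ with $\tau\geq t$.

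First, the shaped rewards are summed pathwise. On $\{\tau<\infty\}$,
\[
\sum_{s=t}^{\tau-1}R_s^{\rm sh}=-(\tau-t)+\lambda\sum_{s=t}^{\tau-1}\bigl(r_{s+1}(H_{s+1})-r_s(H_s)\bigr)=-(\tau-t)+\lambda r_\tau(H_\tau)-\lambda r_t(h),
\]
because the differences telescope and $H_t=h$ is fixed by the conditioning. The original cost integrand is $(\tau-t)-\lambda r_\tau(H_\tau)$, so the shaped integrand equals minus the original integrand, minus the constant $\lambda r_t(h)$.

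Second, we take conditional expectations, and this is where some care is needed. Since $0\le r_s\le 1$, the sum $\lambda r_\tau(H_\tau)-\lambda r_t(h)$ is bounded by $\lambda$. If $\mathbb E^\pi[\tau-t\mid H_t=h]<\infty$, then $\tau<\infty$ a.s. and linearity applies to integrable terms, giving
\[
G_t^\lambda(h;\pi,\tau)=-J_t^\lambda(h;\pi,\tau)-\lambda r_t(h).
\]
If instead $\mathbb E^\pi[\tau-t\mid H_t=h]=\infty$, we use the same convention as in the Bellman analysis (partial sums up to $\min(\tau,t+N)$, then $N\to\infty$). The bounded reward part stays bounded, so $J=+\infty$ and $G=-\infty$ simultaneously. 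Such pairs are optimal for neither problem, since stopping immediately gives the finite values $J=-\lambda r_t(h)$ and $G=0$. The main obstacle is exactly this handling of non-integrable $\tau$ and of $\tau=\infty$ on a null or non-null set. I would address it by restricting both problems to pairs with finite expected stopping time, which is harmless by the previous remark.

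Finally, because $\lambda r_t(h)$ does not depend on $(\pi,\tau)$, the map $(\pi,\tau)\mapsto G_t^\lambda$ is a strictly decreasing affine transformation of $(\pi,\tau)\mapsto J_t^\lambda$ on the relevant class. Hence the minimizers of $J_t^\lambda$ and the maximizers of $G_t^\lambda$ coincide, including the case where both sets are empty. By the Bellman results above the minimizer set is in fact nonempty.
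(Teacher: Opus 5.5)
Your proposal is correct and follows essentially the same route as the paper: you telescope the shaped rewards pathwise to get $G_t^\lambda(h;\pi,\tau)=-J_t^\lambda(h;\pi,\tau)-\lambda r_t(h)$, then conclude because the offset does not depend on $(\pi,\tau)$. Your extra handling of pairs with $\mathbb E^\pi[\tau-t\mid H_t=h]=\infty$ is a useful refinement the paper glosses over, though your closing claim that the minimizer set is nonempty relies on the regularity assumptions of the Bellman theorem and is not needed for the lemma.
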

\begin{proof}
First, the shaped reward telescopes:
\begin{align*}
\sum_{s=t}^{\tau-1}
R_s^{\rm sh}(H_s,A_s,H_{s+1})
=
\sum_{s=t}^{\tau-1}
\left[
\lambda\left(r_{s+1}(H_{s+1})-r_s(H_s)\right)-1
\right]=
\lambda r_\tau(H_\tau)-\lambda r_t(H_t)
-
\sum_{s=t}^{\tau-1}1.
\end{align*}
Conditioning on $H_t=h$ gives
\[
G_t^\lambda(h;\pi,\tau)
=
\mathbb E^\pi\!\left[
\lambda r_\tau(H_\tau)-\sum_{s=t}^{\tau-1}1
\,\middle|\,H_t=h
\right]
-\lambda r_t(h).
\]
Since
\[
J_t^\lambda(h;\pi,\tau)
=
\mathbb E^\pi\!\left[
\sum_{s=t}^{\tau-1}1-\lambda r_\tau(H_\tau)
\,\middle|\,H_t=h
\right],
\]
we obtain
\[
G_t^\lambda(h;\pi,\tau)
=
-J_t^\lambda(h;\pi,\tau)-\lambda r_t(h).
\]
The term $-\lambda r_t(h)$ does not depend on $(\pi,\tau)$, so maximizing $G_t^\lambda$ is equivalent to minimizing $J_t^\lambda$.

\end{proof}

Using this reward shaping, we do not need to learn the stopping $Q$-value, as shown in the next proposition. The actor only needs to output a continuation action $a\in\A$. The stopping rule is a scalar gate:
\[
\text{continue iff } Q_{t,\rm cont}^{\rm sh}(h,a_t(h))>0.
\]
Equivalently, the stop value in the shaped formulation is identically zero, and the state-value target is
\[
S_t(h)=\max\{0,Q_{t,\rm cont}^{\rm sh}(h,a_t(h))\}.
\]

\begin{proposition}[Reward shaping does not require learning a stopping $Q$-value]
    Let
\[
S_t^\star(h)
\coloneqq
\sup_{\pi,\tau}G_t^\lambda(h;\pi,\tau)
\]
be the optimal shaped value. Then
\[
S_t^\star(h)
=
\max\left\{
0,
\sup_{a\in\A} \mathbb E\left[
-1+\lambda\left(r_{t+1}(H_{t+1})-r_t(h)\right)
+
S_{t+1}^\star(H_{t+1})
\Big| H_t=h,A_t=a
\right]
\right\}.
\]
Define the shaped continuation $Q$-value
\[
Q_{t,\rm cont}^{\rm sh}(h,a)
\coloneqq -1+
\mathbb E\left[
\lambda\left(r_{t+1}(H_{t+1})-r_t(h)\right)
+
S_{t+1}^\star(H_{t+1})
\Big  | H_t=h,A_t=a
\right].
\]
If the supremum over $a\in\A$ is attained by a measurable selector
$
a_t^\star(h)\in \argmax_{a\in\A}Q_{t,\rm cont}^{\rm sh}(h,a)$, 
then an optimal policy is implemented by
\[
\text{stop at }h
\quad\Longleftrightarrow\quad
Q_{t,\rm cont}^{\rm sh}(h,a_t^\star(h))\le 0,
\]
and otherwise continuing with $a_t^\star(h)$.
\end{proposition}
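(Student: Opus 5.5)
The proof transfers the already-established Bellman structure for the unshaped cost $J_t^\lambda$ to the shaped value through the identity in \cref{lemma:reward_shaping_stop_continue}. No new dynamic-programming argument is needed.

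\emph{Step 1: identify the shaped value.} The proof of \cref{lemma:reward_shaping_stop_continue} gives, for every $(\pi,\tau)$,
\[
G_t^\lambda(h;\pi,\tau)=-J_t^\lambda(h;\pi,\tau)-\lambda r_t(h).
\]
Taking the supremum over $(\pi,\tau)$ on the left and the infimum on the right yields
\[
S_t^\star(h)=-V_t^\star(h;\lambda)-\lambda r_t(h),
\]
where $V_t^\star$ is the optimal cost-to-go defined above. The corresponding relation holds at $t+1$. The stopping-time convention needs checking: $\tau=t$ (stop immediately) gives $G=0$ and $J=-\lambda r_t(h)$, so the constant is consistent.

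\emph{Step 2: substitute into the Bellman equation.} I will substitute into \eqref{eq:bellman_stop_continue}, namely
\[
V_t^\star=\min\{-\lambda r_t(h),\ \inf_a Q^\star_{t,\rm cont}(h,a)\}.
\]
Negating and subtracting $\lambda r_t(h)$ gives
\[
S_t^\star(h)=\max\left\{0,\ \sup_a\left(-Q^\star_{t,\rm cont}(h,a)-\lambda r_t(h)\right)\right\}.
\]
Next I expand
\[
Q^\star_{t,\rm cont}(h,a)=1+\mathbb E[V_{t+1}^\star(H_{t+1})\mid h,a],
\]
and replace $V_{t+1}^\star$ by $-S_{t+1}^\star-\lambda r_{t+1}$. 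This gives
\[
-Q^\star_{t,\rm cont}(h,a)-\lambda r_t(h)=\mathbb E\left[-1+\lambda(r_{t+1}(H_{t+1})-r_t(h))+S^\star_{t+1}(H_{t+1})\mid h,a\right]=Q^{\rm sh}_{t,\rm cont}(h,a).
\]
This is exactly the claimed recursion. All the quantities involved are bounded, since $r\in[0,1]$ and $V^\star\in[-\lambda,0]$, so the linear manipulations inside the conditional expectation are justified. The conditional expectation is under the posterior predictive kernel $\bar P_t$, which is the same kernel used for $Q^\star_{t,\rm cont}$.

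\emph{Step 3: derive the stopping rule.} Since
\[
Q^{\rm sh}_{t,\rm cont}(h,a)=-Q^\star_{t,\rm cont}(h,a)-\lambda r_t(h)
\]
and the shift does not depend on $a$, a maximizer $a_t^\star(h)$ of $Q^{\rm sh}_{t,\rm cont}$ is a minimizer of $Q^\star_{t,\rm cont}(h,\cdot)$. The condition $Q^{\rm sh}_{t,\rm cont}(h,a_t^\star(h))\le 0$ is equivalent to
\[
Q^\star_{t,\rm stop}(h)=-\lambda r_t(h)\le Q^\star_{t,\rm cont}(h,a_t^\star(h)).
\]
\cref{prop:actor_only_stop_comparison} then gives optimality of this policy for $J_t^\lambda$, and \cref{lemma:reward_shaping_stop_continue} gives optimality for $G_t^\lambda$. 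The attainment assumption is stated as a hypothesis, and the earlier lower-semicontinuity results in any case supply it.

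The main (mild) obstacle is Step 1: making sure the sup/inf exchange through the affine identity is valid. It is, because the identity holds policy-by-policy with an additive term $-\lambda r_t(h)$ that does not depend on the policy. A secondary point is to use the same admissible policy class for both problems, including the requirement that $\tau$ be a.s. finite.
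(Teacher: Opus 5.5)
Your proof is correct, but it takes a different route from the paper's. The paper argues directly in the shaped formulation. Stopping immediately gives the empty sum, hence value $0$. Continuing with $a$ yields the immediate shaped reward $-1+\lambda(r_{t+1}(H_{t+1})-r_t(h))$ plus the optimal future shaped value $S_{t+1}^\star(H_{t+1})$. This gives $S_t^\star(h)=\max\{0,\sup_{a}Q^{\rm sh}_{t,\rm cont}(h,a)\}$, and stopping is optimal exactly when the supremum is nonpositive, which under attainment is the stated test at $a_t^\star(h)$. In other words, the paper simply invokes the one-step dynamic-programming principle for the shaped problem. It leaves implicit why ``continue, then act optimally from $H_{t+1}$'' is achievable in continuous spaces, and why the supremum over continuation actions can be combined with the conditional expectation.

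You instead push the policy-wise affine identity $G_t^\lambda=-J_t^\lambda-\lambda r_t(h)$ through to the optimal values, giving $S_t^\star=-V_t^\star-\lambda r_t$ at every $t$. You then read off both the shaped recursion and the stopping rule from the already established unshaped Bellman equation \eqref{eq:bellman_stop_continue} and the greedy-optimality result. This buys rigor at essentially no cost: the principle of optimality, measurability and attainment are all inherited from the value-iteration and lower-semicontinuity construction. You also get the structural fact $Q^{\rm sh}_{t,\rm cont}(h,a)=-Q^\star_{t,\rm cont}(h,a)-\lambda r_t(h)$. Hence the shaped and unshaped greedy actions and stopping sets coincide, and the attainment hypothesis of the proposition is automatic.

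The price is that your argument depends on the standing regularity assumptions under which that Bellman equation was proved. The paper's argument is shorter and self-contained in the shaped formulation, though it tacitly needs comparable regularity to be fully justified.
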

\begin{proof}
    
Next, in the shaped formulation, stopping immediately produces the empty sum and hence shaped return $0$. If instead the learner continues with action $a$, it receives the immediate shaped reward
\[
\lambda\bigl(r_{t+1}(H_{t+1})-r_t(h)\bigr)-1
\]
and then obtains the optimal future shaped value $S_{t+1}^\star(H_{t+1})$. Therefore
\[
S_t^\star(h)
=
\max\left\{
0,\;
\sup_{a\in\A}Q_{t,\rm cont}^{\rm sh}(h,a)
\right\}.
\]
Thus stopping is optimal exactly when
\[
\sup_{a\in\A}Q_{t,\rm cont}^{\rm sh}(h,a)\le 0.
\]
If $a_t^\star(h)$ attains the supremum, this is equivalent to
\[
Q_{t,\rm cont}^{\rm sh}(h,a_t^\star(h))\le 0.
\]
Otherwise, if
\[
Q_{t,\rm cont}^{\rm sh}(h,a_t^\star(h))>0,
\]
continuing with $a_t^\star(h)$ attains the continuation branch and is optimal.
\end{proof}
\subsection{Fixed-confidence setting: $(\epsilon,\delta)$-correctness of  dual-optimal points}\label{app:theoretical_results:icpe:fixed_confidence:strict_feasibility_and_correctness}
In this section we provide fixed-confidence guarantees for the continuous ICPE setting.
Compared to the correctness argument in \citep{russo_learning_2025}, our proof differs in three aspects, each addressing a limitation of the finite-space analysis.
\begin{enumerate}
    \item \textbf{Posterior object.} The finite ICPE framework uses $\mathbb{P}(x^\star=x\mid H_t)$, which is ill-defined in continuous $\X$. We replace it with the posterior success probability $r_t(h)=\sup_{x\in\X}\mathbb{P}(L_\theta(x)\leq\epsilon\mid H_t=h)$. This is the natural continuous analogue and the only change forced by the setting.
    \item \textbf{Non-singleton dual optima (main technical improvement).} \citet{russo_learning_2025} assume the dual-optimal policy set $\mathcal{S}(\lambda)$ is a singleton for each $\lambda$. We relax this to local closedness of the near-optimal cost-reward set $\mathcal{K}_{\epsilon_0}(\lambda^\star)$ (\cref{assump:closed_attainable_cost_reward_set}). This is meaningful in continuous spaces where the policy class is richer and uniqueness is harder to verify. The key proof step is showing that if all policies in $\mathcal{K}_{\epsilon_0}(\lambda^\star)$ have $\rho<1-\delta$, then a uniform gap propagates to all near-optimal policies (Step~2 below), which requires the closedness assumption in an essential way.
    \item \textbf{Subdifferential argument (alternative proof technique).} Rather than deriving a contradiction from monotonicity of the optimal cost in $\lambda$ (as in \citep{russo_learning_2025}), we use the subdifferential characterization of \citet{hantoute2008Characterizations} directly. This avoids the monotonicity argument entirely: we show that every subgradient of $f(\lambda)=-g(\lambda)$ at $\lambda^\star$ is strictly negative, contradicting $0\in\partial f(\lambda^\star)$. This argument works with multiple dual optima without modification.
\end{enumerate}

We begin by stating our assumptions. To do so, let $\Pi$ denote the class of admissible policies on the augmented action space $\bar \A$, and for each $\pi$ let $\tau_\pi$ be  the corresponding stopping time, i.e., the first tiem the policy chooses the stop action. Define
\[
{\cal T}\coloneqq \{\pi\in \Pi: \mathbb{E}^\pi[\tau_\pi] < \infty\},
\]
be the set of admissible policies with finite expected sample complexity. 

\begin{assumption}[Strict feasibility]\label{assump:strict_feasibility}
We assume there exists a policy $\pi\in {\cal T}$ such that
\[
\mathbb{E}^\pi[r_{\tau_\pi}(H_{\tau_\pi})] > 1-\delta.
\]
\end{assumption}

We also make the following assumption of closedness on the attainable cost-reward set. For any $\pi\in {\cal T}$ define  $c(\pi)\coloneqq \mathbb{E}^\pi[\tau_\pi]$ and $\rho(\pi)\coloneqq \mathbb{E}^\pi[r_{\tau_\pi}(H_{\tau_\pi})]$. Define also
\[
g(\lambda)= \inf_{\pi\in \Pi: c(\pi)<\infty } c(\pi) + \lambda(1-\delta-\rho(\pi)),\qquad {\cal S}(\lambda)=\argmin_{\pi\in \Pi:c(\pi)<\infty} c(\pi)+\lambda(1-\delta-\rho(\pi)),
\]
and the set of attainable cost-rewards:
\[
{\cal K}\coloneqq \{(c(\pi),\rho(\pi)): \pi\in {\cal T}\}\subset [0,\infty)\times [0,1].
\]
We have the following properties on $g(\lambda)$.
\begin{lemma}\label{lemma:properties_g_lambda}
Under \cref{assump:strict_feasibility},  $g(\lambda)$ is  finite, concave on $[0,\infty)$ and attains a maximum on $[0,\infty)$.
\end{lemma}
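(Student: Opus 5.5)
The plan is to view $g$ as a pointwise infimum of affine functions of $\lambda$ and read off each property from that representation together with \cref{assump:strict_feasibility}. For each $\pi\in{\cal T}$, the map $\lambda\mapsto c(\pi)+\lambda(1-\delta-\rho(\pi))$ is affine. Hence $g$ is an infimum of affine functions, so it is concave on $[0,\infty)$ wherever it is finite, and upper semicontinuous there.

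\emph{Finiteness.} For the upper bound, I would use the strictly feasible policy $\pi_{\rm sf}$. It gives $g(\lambda)\le c(\pi_{\rm sf})+\lambda(1-\delta-\rho(\pi_{\rm sf}))<\infty$ for every $\lambda\ge0$. For the lower bound, I would use $c(\pi)\ge 0$ (indeed $\ge1$) and $\rho(\pi)\le 1$, which give $c(\pi)+\lambda(1-\delta-\rho(\pi))\ge -\lambda\delta$. Therefore $g(\lambda)\ge-\lambda\delta>-\infty$, and $g$ is finite-valued on $[0,\infty)$. A finite infimum of affine functions is concave and upper semicontinuous, so this settles concavity.

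\emph{Attainment of the maximum.} Set $\gamma\coloneqq\rho(\pi_{\rm sf})-(1-\delta)>0$. Then $g(\lambda)\le c(\pi_{\rm sf})-\lambda\gamma\to-\infty$ as $\lambda\to\infty$, so $g$ is coercive from above. Next, $g(0)=\inf_\pi c(\pi)\ge0$ is finite; it equals $1$ if stopping immediately costs one sample, and in any case $g(0)\ge0$. Define $\bar\lambda\coloneqq (c(\pi_{\rm sf})-g(0))/\gamma$. For $\lambda>\bar\lambda$ we have $g(\lambda)<g(0)$, so $\sup_{\lambda\ge0}g=\sup_{\lambda\in[0,\max(\bar\lambda,0)]}g$. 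On this compact interval $g$ is upper semicontinuous, since it is an infimum of continuous functions. Weierstrass's theorem for u.s.c. functions then gives a maximizer $\lambda^\star$.

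The only delicate point is making sure upper semicontinuity really holds at $\lambda=0$ and that no hidden $-\infty$ values occur. Both are covered by the uniform lower bound $-\lambda\delta$: it keeps $g$ finite, and the infimum-of-continuous-functions argument then applies on the whole of $[0,\infty)$, endpoint included. I expect this lemma to be routine, with no real obstacle.
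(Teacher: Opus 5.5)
Your proof is correct and follows the paper's argument essentially step for step: you use the same lower bound $g(\lambda)\ge-\lambda\delta$, the same upper bound $g(\lambda)\le c(\pi_{\rm sf})-\lambda\gamma\to-\infty$ from the strictly feasible policy, concavity as an infimum of affine functions, and attainment via upper semicontinuity plus coercivity. One small point in your favor is the endpoint: the paper gets continuity on $(0,\infty)$ from concavity and simply asserts upper semicontinuity at $\lambda=0$, whereas you derive upper semicontinuity on all of $[0,\infty)$ from the infimum-of-continuous-functions representation, which is the correct justification there because concavity alone does not rule out a downward jump at the boundary.
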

\begin{proof}
      For simplicity, we write the dual value directly on the attainable cost-reward set:
    \[
    g(\lambda) = \inf_{(c,\rho)\in {\cal K}} c+ \lambda(1-\delta-\rho),
    \]
    Then, $g(\lambda)\geq -\delta\lambda$. Let $(c,\rho)$ be a point satisfying \cref{assump:strict_feasibility}. Then $\rho>1-\delta$, and thus $g(\lambda)\leq c-\lambda\eta$ for $\eta=-(1-\delta-\rho)$. Since $\eta>0$, and $-\lambda \delta \leq g(\lambda)\leq c-\lambda\eta$, then $g(\lambda)\to-\infty$ as $\lambda\to\infty$.

    Finally, since $\lambda\mapsto c+ \lambda(1-\delta-\rho)$ is affine, the infimum of any family of concave functions is concave, therefore $g(\lambda)$ is concave.  Then, since $g$ is finite and concave, it's continuous on $(0,\infty)$ and u.s.c. in $0$. Therefore $g$ is u.s.c. on $[0,\infty)$, and since $\lim_{\lambda\to\infty}g(\lambda)\to-\infty$, there exists  $\lambda^\star\in[0,\infty)$ such that $g(\lambda^\star)=\max_{\lambda\geq 0} g(\lambda)$.
\end{proof}
We impose then the following assumption.
\begin{assumption}[Closed optimal cost-reward set]\label{assump:closed_attainable_cost_reward_set}
Let ${\cal G}=\argmax_\lambda g(\lambda)$. We assume that for every maximizer $\lambda^\star\in {\cal G}$ there exists $\epsilon_0>0$ such that
\[
{\cal K}_{\epsilon}(\lambda^\star)\coloneqq \{(c,\rho)\in {\cal K}: c+\lambda^\star(1-\delta-\rho)\leq g(\lambda^\star)+\epsilon\}
\]
is closed for all $\epsilon\in (0,\epsilon_0]$
\end{assumption}
Essentially, this geometric assumption is used to relax the assumption used in  \citep{russo_learning_2025} that ${\cal S}(\lambda)$ is a singleton for each $\lambda$, and allows us to generalize the argument to multiple optimal dual policies. 

To verify correctness, we use the fact that the sub-gradient of the optimal value of the dual problem is non-decreasing. To show this result, we employ the following proposition from \citep{hantoute2008Characterizations} (see Prop. 3.1 therein), which characterizes the subdifferential of the supremum of a family of affine functions. \begin{proposition}[Subdifferential of the supremum of affine functions \citep{hantoute2008Characterizations}]\label{prop:subdifferential_affine_functions} 
Given a non-empty set $\{(a_t,b_t):t\in {\cal T}\} \subset \mathbb{R}^2$, and the supremum function $f(x):\mathbb{R}\to\mathbb{R}\cup\{\infty\}$ 
\[ f(x)=\sup \{ a_tx- b_t: t\in {\cal T}\}, \] 
for every $x\in {\rm dom}f$ we have
\[ \partial f(x)= \bigcap_{\epsilon>0} {\rm cl}\left({\rm conv}\{a_t: t\in {\cal T}_\epsilon(x)\} + B(x)\right) \] 
with 
\[ {\cal T}_\epsilon(x)\coloneqq \{t\in {\cal T}: a_tx -b_t \geq f(x)-\epsilon\},\]
and 
\[ B(x)\coloneqq \left\{y\in \mathbb{R}: (y,yx) \in \left(\overline{\rm conv}\{(a_t,b_t):t\in {\cal T}\}\right)_\infty \right\}, \]
where $C_\infty$ is the recession cone of a set $C$ and $\overline{{\rm conv}(\cdot)}$ denotes the closed convex hull of a set. In particular, if $x\in {\rm int}({\rm dom} f)$ we have \[ \partial f(x)=\bigcap_{\epsilon>0} \overline{{\rm conv}}\left\{a_t: t\in {\cal T}_\epsilon(x)\right\}. \] 
\end{proposition}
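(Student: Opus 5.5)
The plan is to go through Fenchel conjugacy. Since $f$ is a supremum of affine functions, it is convex and lower semicontinuous. Its conjugate satisfies
\[
\operatorname{epi} f^*=\overline{\rm conv}\big(\{(a_t,b_t):t\in{\cal T}\}+\{0\}\times[0,\infty)\big).
\]
The inclusion $\supseteq$ holds because $f^*(a_t)\le b_t$ and $f^*$ is convex and lower semicontinuous. The inclusion $\subseteq$ holds because the right-hand side is a closed convex epigraph whose conjugate is again $f$, and Fenchel--Moreau then identifies it with $\operatorname{epi} f^*$.

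For $x\in{\rm dom} f$, the Fenchel--Young equality gives $y\in\partial f(x)$ iff $f(x)+f^*(y)=xy$. Equivalently, the point $(y,\,xy-f(x))$ lies in $\operatorname{epi} f^*$ and on the supporting line $\{(a,b): ax-b=f(x)\}$. Note that every point $(a,b)$ of the closed convex hull satisfies $ax-b\le f(x)$. The subdifferential is therefore exactly the set of first coordinates of the exposed face $F_x$ of $C\coloneqq\overline{\rm conv}\{(a_t,b_t)\}$ in the direction $(x,-1)$.

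\emph{Inclusion $\supseteq$.} Fix $\epsilon>0$ and take $y=\sum_i\alpha_i a_{t_i}+z$, with $t_i\in{\cal T}_\epsilon(x)$ and $z\in B(x)$. For every $x'$,
\[
f(x')\ge\sum_i\alpha_i(a_{t_i}x'-b_{t_i})\ge f(x)-\epsilon+\Big(\sum_i\alpha_ia_{t_i}\Big)(x'-x).
\]
The recession direction $(z,zx)$ contributes $z(x'-x)$ without violating this bound, because $(z,zx)\in C_\infty$ forces $zx'-zx\le 0$ wherever $f(x')<\infty$. This adds $z$ with the correct sign. Hence the closure of these sums lies in the $\epsilon$-subdifferential $\partial_\epsilon f(x)$, and intersecting over $\epsilon$ yields $\partial f(x)$ because $\bigcap_{\epsilon>0}\partial_\epsilon f(x)=\partial f(x)$.

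\emph{Inclusion $\subseteq$.} This is the main obstacle. Take $y\in\partial f(x)$, so $(y,xy-f(x))\in F_x\subset C$. In $\mathbb R^2$, I would use the decomposition of a closed convex hull as $C=\overline{{\rm conv}\{(a_t,b_t)\}+C_\infty}$. By this decomposition, $(y,xy-f(x))$ is a limit of points $p_k+d_k$, with $p_k\in{\rm conv}\{(a_t,b_t)\}$ and $d_k\in C_\infty$. The linear functional $\ell(a,b)=ax-b$ is at most $f(x)$ on $C$, and $\ell(d)\le 0$ on $C_\infty$. Since $\ell(p_k+d_k)\to f(x)$, both $\ell(p_k)\to f(x)$ and $\ell(d_k)\to0$.

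Consider the convex weights in $p_k$ placed on indices with $\ell<f(x)-\epsilon$. Because $\ell\le f(x)$ everywhere, their total mass must vanish as $k\to\infty$. After renormalizing, $p_k$ is therefore close to ${\rm conv}\{a_t:t\in{\cal T}_\epsilon(x)\}$ in the first coordinate. Separately, the limits of $d_k$, after normalization and taking subsequences, lie in $C_\infty\cap\{\ell=0\}=\{(z,zx)\}$, that is, in $B(x)$. The delicate point is that the discarded mass may sit at points of unbounded norm. I would handle this by splitting into the cases where $p_k$ stays bounded or diverges. In the divergent case, the normalized parts converge to a recession direction lying on $\{\ell=0\}$, and their first coordinates are absorbed into $B(x)$.

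\emph{Interior case.} For $x\in{\rm int}({\rm dom} f)$, any $(z,zx)\in C_\infty$ must satisfy $z(x'-x)\le0$ for all $x'$ in a neighbourhood of $x$, which forces $z=0$. Thus $B(x)=\{0\}$. Moreover, $\partial_\epsilon f(x)$ is bounded there, so the closure of the convex hull of $\{a_t:t\in{\cal T}_\epsilon(x)\}$ is $\overline{\rm conv}\{a_t:t\in{\cal T}_\epsilon(x)\}$, which gives the stated simplified formula.
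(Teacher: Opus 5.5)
The paper gives no proof of this proposition; it quotes Prop.~3.1 of Hantoute--L\'opez, so your argument has to stand on its own. The reduction to the exposed face of $C=\overline{\mathrm{conv}}\{(a_t,b_t)\}$, the inclusion $\supseteq$, and the interior case are fine. The inclusion $\subseteq$, however, has a real gap.

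The decomposition $C=\overline{\mathrm{conv}\{(a_t,b_t)\}+C_\infty}$ gives you no leverage, since you may always take $d_k=0$. More seriously, the step ``the bad mass vanishes, so $p_k$ is close to $\mathrm{conv}\{a_t:t\in\mathcal{T}_\epsilon(x)\}$ in the first coordinate'' is false even when $p_k$ is bounded. Vanishing mass on far-away generators need not give a vanishing contribution.

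Here is a concrete failure. Take generators $(0,0)$ and $(n,1)$, $n\ge 1$, at $x=0$. Then $f=0$ on $(-\infty,0]$ and $+\infty$ elsewhere, and $1\in\partial f(0)$. Let $p^\star=(y,xy-f(x))=(1,0)$. The sequence $p_k=(1-\tfrac{1}{k})(0,0)+\tfrac{1}{k}(k,1)=(1,\tfrac{1}{k})$ converges to $p^\star$, is bounded, and has bad mass $1/k$. Yet for $\epsilon<1$ the only good generator is $(0,0)$, whose first coordinate is $0$, not $1$. So your split into bounded versus divergent $p_k$ puts the recession-cone argument in the wrong case. It is precisely in the bounded case that $B(0)=[0,\infty)$ must absorb the limit of the discarded part.

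The missing step is to track that discarded part explicitly:
\begin{itemize}
\item Write $p_k=(1-\mu_k)g_k+\mu_k b_k$, with $g_k$ in the hull of the good generators and $b_k$ in the hull of the bad ones.
\item From $\ell(p_k)\to f(x)$, $\ell(g_k)\le f(x)$ and $\ell(b_k)<f(x)-\epsilon$, you get both $\mu_k\to 0$ and $\mu_k\ell(b_k)\to 0$.
\item If $\mu_k b_k$ stays bounded along a subsequence, pass to a limit $q$. Because $\mu_k\to 0$ and $b_k\in C$, $q$ is an asymptotic direction of the closed convex set $C$, so $q\in C_\infty$. Since $\ell(q)=0$, you get $q=(z,zx)$ with $z\in B(x)$. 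Then $g_k\to p^\star-q$, which gives $y\in\overline{\mathrm{conv}}\{a_t:t\in\mathcal{T}_\epsilon(x)\}+B(x)$.
\item If $\|\mu_k b_k\|\to\infty$, divide both $\mu_k b_k$ and $(1-\mu_k)g_k=p_k-\mu_k b_k$ by $\|\mu_k b_k\|$. This yields $\pm d\in C_\infty$ with $d\neq 0$ and $\ell(d)=0$. Hence $B(x)=\mathbb{R}$, and the inclusion is trivial because $\mathcal{T}_\epsilon(x)\neq\emptyset$.
\end{itemize}

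One further line is missing where you assert $p^\star\in F_x\subset C$. Fenchel--Young only places $p^\star$ in $\operatorname{epi}f^*=\overline{C+\{0\}\times[0,\infty)}$. You need that approximants $c_n+(0,s_n)$ satisfy $s_n\to 0$, which follows by applying $\ell$ and using $\ell(c_n)\le f(x)$.
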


This last proposition permits us to define the subdifferential of the supremum of affine functions, and, as we see now, we can also find a lower bound on any subdifferential $d\in \partial f(x)$. \\

We are now ready to prove $(\epsilon,\delta)$-PAC guarantees for ICPE in the continuous setting.

\begin{theorem}[$(\epsilon,\delta)$-correctness in the continuous ICPE case]\label{thm:icpe_correctness}
    Under \cref{assump:strict_feasibility}  and \cref{assump:closed_attainable_cost_reward_set} for all $\lambda^\star\in {\cal G}\coloneqq \argmax_\lambda g(\lambda)$ there exists a dual-optimal policy $\pi^\star\in {\cal S}(\lambda^\star)$ such that
    \[
    \mathbb{E}^{\pi^\star}[r_{\tau_{\pi^\star}}(H_{\tau_{\pi^\star}})] \geq 1-\delta,\quad \hbox{ and }\quad \mathbb{P}^{\pi^\star}\left(L_\theta\left(I_{\tau^\star}^\star(H_{\tau^\star}) \right) \leq \epsilon\right)\geq 1-\delta.
    \]
    where $I_t^\star(h)\in \argmax_h q_t(h,x)$ is the optimal inference selector.
\end{theorem}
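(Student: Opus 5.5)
Throughout, I work on the attainable cost–reward set $\mathcal K$ and convert the dual into a supremum of affine functions. Set
\[
f(\lambda)\coloneqq -g(\lambda)=\sup_{(c,\rho)\in\mathcal K}\bigl\{(\rho-(1-\delta))\lambda-c\bigr\}.
\]
This has exactly the form in \cref{prop:subdifferential_affine_functions}, with slopes $a=\rho-(1-\delta)$ and intercepts $b=c$. Since $\rho\in[0,1]$ and $c\ge 0$, we get $f(\lambda)\le \max\{\delta\lambda,-(1-\delta)\lambda\}<\infty$ for every $\lambda\in\mathbb R$. So $\mathrm{dom} f=\mathbb R$, every $\lambda^\star$ lies in its interior, and the simplified formula $\partial f(\lambda^\star)=\bigcap_{\epsilon>0}\overline{\mathrm{conv}}\{a_t:t\in\mathcal T_\epsilon(\lambda^\star)\}$ applies. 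By \cref{lemma:properties_g_lambda}, a maximizer $\lambda^\star\in\mathcal G$ exists, and $\lambda^\star$ minimizes the convex function $f$ over $[0,\infty)$. The first-order optimality condition is therefore that some $d\in\partial f(\lambda^\star)$ satisfies $d\ge0$ when $\lambda^\star=0$, and $d=0$ when $\lambda^\star>0$. In either case $\partial f(\lambda^\star)\cap[0,\infty)\neq\emptyset$.

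\textbf{Step 1: compactness and attainment.} Fix $\epsilon\in(0,\epsilon_0]$. Every point of $\mathcal K_\epsilon(\lambda^\star)$ satisfies $0\le\rho\le1$ and $c\le g(\lambda^\star)+\epsilon+\lambda^\star\delta$, so the set is bounded. By \cref{assump:closed_attainable_cost_reward_set} it is also closed, hence compact. It is nonempty by the definition of the infimum $g(\lambda^\star)$. The continuous map $(c,\rho)\mapsto c+\lambda^\star(1-\delta-\rho)$ therefore attains the value $g(\lambda^\star)$ on it. Every attaining point is $(c(\pi),\rho(\pi))$ for some $\pi\in\mathcal S(\lambda^\star)$, so $\mathcal S(\lambda^\star)\neq\emptyset$.

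\textbf{Step 2: a uniform gap (the main obstacle).} Suppose, for contradiction, that every $\pi\in\mathcal S(\lambda^\star)$ has $\rho(\pi)<1-\delta$. I claim there exist $\eta>0$ and $\bar\epsilon\in(0,\epsilon_0]$ such that $\rho\le 1-\delta-\eta$ for all $(c,\rho)\in\mathcal K_{\bar\epsilon}(\lambda^\star)$.

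If the claim failed, we could pick $\epsilon_n\downarrow0$ and points $(c_n,\rho_n)\in\mathcal K_{\epsilon_n}(\lambda^\star)\subset\mathcal K_{\epsilon_0}(\lambda^\star)$ with $\rho_n>1-\delta-1/n$. By the compactness from Step 1, a subsequence converges to some $(\bar c,\bar\rho)\in\mathcal K_{\epsilon_0}(\lambda^\star)\subset\mathcal K$. This limit satisfies $\bar c+\lambda^\star(1-\delta-\bar\rho)\le g(\lambda^\star)$, so equality holds, and also $\bar\rho\ge1-\delta$. It is thus realized by a dual-optimal policy with $\rho\ge1-\delta$, contradicting the standing assumption.

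This is precisely where closedness replaces the uniqueness assumption of \citet{russo_learning_2025}. The delicate point is that the limit must stay inside $\mathcal K$, i.e.\ be realized by an admissible policy. That membership is exactly what local closedness of $\mathcal K_{\epsilon_0}(\lambda^\star)$ provides, and it is the only place the assumption is used.

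\textbf{Step 3: subdifferential contradiction and correctness.} By Step 2, for every $\epsilon\le\bar\epsilon$ all slopes indexed by $\mathcal T_\epsilon(\lambda^\star)$ satisfy $a=\rho-(1-\delta)\le-\eta$. Note that $\mathcal T_\epsilon(\lambda^\star)$ is exactly $\mathcal K_\epsilon(\lambda^\star)$. Hence $\overline{\mathrm{conv}}\{a_t\}\subset(-\infty,-\eta]$, and so $\partial f(\lambda^\star)\subset(-\infty,-\eta]$. This contradicts $\partial f(\lambda^\star)\cap[0,\infty)\ne\emptyset$. Therefore some $\pi^\star\in\mathcal S(\lambda^\star)$ has $\mathbb E^{\pi^\star}[r_{\tau_{\pi^\star}}(H_{\tau_{\pi^\star}})]\ge1-\delta$.

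Finally, pair $\pi^\star$ with the measurable selector $I^\star_t(h)\in\argmax_x q_t(h,x)$ from \cref{prop:optimal_inference_rule}, for which $q_t(h,I_t^\star(h))=r_t(h)$. Then \cref{lem:stopped_success_decomp} gives
\[
\mathbb P^{\pi^\star}\bigl(L_\theta(I^\star_{\tau^\star}(H_{\tau^\star}))\le\epsilon\bigr)=\mathbb E^{\pi^\star}[r_{\tau^\star}(H_{\tau^\star})]\ge1-\delta,
\]
which is the claimed $(\epsilon,\delta)$-correctness.
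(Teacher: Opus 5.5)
Your proof is correct and follows the paper's route. Closedness of $\mathcal K_{\epsilon_0}(\lambda^\star)$ gives compactness, compactness gives a uniform gap $\rho\le 1-\delta-\eta$ on a near-optimal level set, and the Hantoute--L\'opez formula then forces every subgradient of $f=-g$ at $\lambda^\star$ below $-\eta$. Correctness follows from $q_t(h,I_t^\star(h))=r_t(h)$ and the stopped-success decomposition.

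Your proof streamlines the paper's in two places. First, you merge its first two steps (computing $\rho_{\max}$ over $M(\lambda^\star)$, then propagating a gap of half that size) into one sequential-compactness argument. Second, you observe that $f$ is finite on all of $\mathbb R$, so $\lambda^\star=0$ is also an interior point. Using the constrained optimality condition $\partial f(\lambda^\star)\cap[0,\infty)\neq\emptyset$, you absorb the paper's separate near/far-point treatment of $\lambda^\star=0$ into the same subdifferential contradiction.
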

\begin{proof}
    For simplicity, we write the dual value directly on the attainable cost-reward set:
    \[
    g(\lambda) = \inf_{(c,\rho)\in {\cal K}} c+ \lambda(1-\delta-\rho),
    \]
    and the active set $M(\lambda) \coloneqq \argmin_{(c,\rho)\in {\cal K}}c+ \lambda(1-\delta-\rho)$.

    Consider \cref{lemma:properties_g_lambda}, then there exists $\lambda^\star\in \argmax_\lambda g(\lambda)$ such that $\lambda\in [0,\infty)$. We now prove that $M(\lambda^\star)$  contains at-least a point $(c,\rho)$ with reward at-least $1-\delta$.

    By contradiction, ({\bf HYP}) assume that all points $(c,\rho)\in M(\lambda^\star)$ satisfy $\rho<1-\delta$. 
    \\
    
    \noindent\underline{\it First step: there exists $\alpha>0$ such that $\rho\leq 1-\delta-\alpha$ for optimal policies.}
    Under  ({\bf HYP}), we claim that there exists $\alpha>0$ such that $\rho\leq 1-\delta-\alpha$ for all $(c,\rho)\in M(\lambda^\star)$. First, note that by \cref{assump:closed_attainable_cost_reward_set} there exists $\epsilon_0$ such that $M(\lambda^\star)=\bigcap_{\epsilon\in (0,\epsilon_0]} {\cal K}_{\epsilon}(\lambda^\star)$. Note that  each $ {\cal K}_{\epsilon}(\lambda^\star)$ is closed, non-empty (since $g$ is finite), and since $\rho\in [0,1]$ and $c\leq g(\lambda^\star)+\epsilon+\lambda^\star\delta$, it's bounded. Thus each $ {\cal K}_{\epsilon}(\lambda^\star)$ is compact: therefore, also the intersection $M(\lambda^\star)$ is compact.

    Since $M(\lambda^\star)$ is compact, we have that the maximum is achieved, and since $(c,\rho)\mapsto\rho$ is continuous,  we set $\rho_{\rm max}= \max\{\rho: (c,\rho)  \in M(\lambda^\star)\} $ and  $\alpha=1-\delta-\rho_{\rm max} > 0$.
     \\
    
    \noindent\underline{\it Second step: near optimal policies satisfy $\rho\leq 1-\delta-\alpha/2$.} Always under  ({\bf HYP}), we now claim that for near-optimal points we actually have $\rho\leq 1-\delta-\alpha/2$. Suppose that this is not true:
    then, by \cref{assump:closed_attainable_cost_reward_set} there exists $\epsilon_0$ such that  for every $\epsilon\in (0,\epsilon_0]$ there exists a point  $(c,\rho)$ in ${\cal K}_{\epsilon}(\lambda^\star)$ such that $\rho >1-\delta-\alpha/2$.
    Let $\epsilon_n=1/n$ and consider such a sequence $(c_n,\rho_n)\in {\cal K}_{\epsilon_n}(\lambda^\star)$ satisfying $\rho_n>1-\delta-\alpha/2$ for all $n$ Therefore, for each $n$ we have that
    \[
    c_n + \lambda^\star(1-\delta-\rho_n)\leq g(\lambda^\star)+\frac{1}{n}.
    \]
    Since $\rho_n\in [0,1]$ and $c_n \leq g(\lambda^\star)+1/n  +\lambda^\star\delta$, thus bounded, we have that the sequence $(c_n,\rho_n)$ is bounded.  Since for each $\epsilon_n$ the set ${\cal K}_{\epsilon_n}(\lambda^\star)$ is closed, we can take a convergent subsequence that satisfies $(c_n,\rho_n)\to (c,\rho)$ for some $(c,\rho)\in {\cal K}_{\epsilon_0}$. Therefore, we have that
    \[
    \lim_{n\to\infty} c_n+\lambda^\star(1-\delta-\rho_n)\leq g(\lambda^\star)+1/n \Longrightarrow c+\lambda^\star(1-\delta-\rho)\leq g(\lambda^\star).
    \]
    However, for any $(c,\rho)$ we also have $g(\lambda^\star)\leq c+\lambda^\star(1-\delta-\rho)$, thus $c+\lambda^\star(1-\delta-\rho)=g(\lambda^\star)$, while also having $\rho \geq  1-\delta-\alpha/2$, which contradicts $\rho \leq 1-\delta-\alpha$. Therefore, for all $\epsilon_0$ close points we have $\rho\leq 1-\delta-\alpha/2$.

    We now distinguish the cases $\lambda^\star=0$ and $\lambda^\star>0$.\\

     \noindent\underline{\it Third step: case $\lambda^\star=0$ is not optimal.} Under  ({\bf HYP}), assume $\lambda^\star=0$ is optimal. We show that this is not the case, and there exists $\lambda: g(\lambda)>g(0)$. We proceed by showing there exists $\lambda>0$ small such that for any $(c,\rho)\in {\cal K}$ we have $c+\lambda(1-\delta-\rho)>g(0)$, and thus there exists $\lambda >0$ such that $g(\lambda)>g(0)$.

     Let $\lambda >0$ to be chosen. By \cref{assump:closed_attainable_cost_reward_set} and the previous step there exists $\epsilon_0$ such that for all $(c,\rho)\in {\cal K}_\epsilon(0)$ with $\epsilon\in (0,\epsilon_0]$ we have $1-\delta-\alpha/2\geq\rho$ and $c\leq g(0)+\epsilon$. Consider then the following two cases:
     \begin{enumerate}
         \item Case where $(c,\rho)$ satisfies $c \geq g(0)+\epsilon_0$ (far away point). Since $\rho\in [0,1]$, we have $1-\delta-\rho\in [-1,1]$, and thus $\lambda(1-\delta-\rho)\geq -\lambda$. Hence
         \[
         g(0)+\epsilon_0-\lambda \leq c+\lambda(1-\delta-\rho).
         \]

         \item Case where $(c,\rho)$ satisfies $c\leq g(0)+\epsilon_0$ (near optimal point). Then, in this case we  have that $(1-\delta-\rho)\geq \alpha/2$, thus, using that $g(0)\leq  c$, combining the two inequalities we have
         \[
         g(0)+\lambda \alpha/2 \leq  c+\lambda(1-\delta-\rho).
         \]
     \end{enumerate}
     Then, choose $\lambda \in (0,\epsilon_0/2)$: we obtain $-\lambda >-\epsilon_0/2$, and $\lambda >0$, thus
     \[
     g(0)<g(0)+\frac{\epsilon_0}{2} < c+\lambda(1-\delta-\rho),\qquad g(0) < g(0)+\lambda\frac{\alpha}{2} \leq c+\lambda(1-\delta-\rho).
     \]
   Therefore $g(0) < c+\lambda(1-\delta-\rho)$ for all $(c,\rho)\in {\cal K}$. This shows that $g(\lambda)>g(0)$, and contradicts the optimality of $\lambda^\star=0$.

   \noindent\underline{\it Fourth step: case $\lambda^\star>0$ is not optimal.} Define the function $f(\lambda)=-g(\lambda)$. From \cref{lemma:properties_g_lambda} then $f$ is convex. Since $\lambda^\star>0$, it lies in the interior of ${\rm dom}(f)$. Then, by \cref{prop:subdifferential_affine_functions} , we have
   \[
   \partial f(\lambda^\star)=\bigcap_{\epsilon\in (0,\epsilon_0]} \overline{\rm conv}\{\rho-(1-\delta): (c,\rho)\in {\cal K}_\epsilon(\lambda^\star)\}
   \]
   By step $2$ we have $1-\delta-\alpha/2\geq \rho$, therefore $\rho-(1-\delta)\leq -\alpha/2$, implying that
   \[
   d \leq -\frac{\alpha}{2} \qquad \forall d\in \partial f(\lambda^\star).
   \]
   But $\lambda^\star$ minimizes the convex function $f$ and is an interior point of the domain, so necessarily $0\in \partial f(\lambda^\star)$, which is a contradiction.

   \noindent \underline{\it Last step.} Since both cases are impossible, we conclude there exists $(c,\rho)\in M(\lambda^\star)$ with $\rho\geq 1-\delta$. Hence, there exists a dual optimal policy $\pi\in {\cal S}(\lambda^\star)$  such that $c(\pi)=c$ and $\rho(\pi)\geq 1-\delta$. Lastly, since $I_t^\star$ attains the supremum in the definition of $r_t(h)$, we have $r_t(h)=q_t(h,I_t^\star(h))= \mathbb{P}(L_\theta(I_t^\star(h))\leq \epsilon\mid H_t=h)$. Thus, we get
   \[
   1-\delta \leq \rho(\pi) =\mathbb{E}^\pi[r_{\tau_\pi}(H_{\tau_\pi})] = \mathbb{E}_{H_{\tau_\pi}}^\pi\left[ \mathbb{P}\left(L_\theta\left(I_{\tau_\pi}^\star(H_{\tau_\pi})\right)\leq \epsilon\Big  | H_{\tau_\pi}\right)\right]=\mathbb{P}\left(L_\theta\left(I_{\tau_\pi}^\star(H_{\tau_\pi})\right)\leq \epsilon\right).
   \]
\end{proof}

\begin{remark}
The above result yields a \emph{Bayesian fixed-confidence} guarantee under the prior $\nu$. It is therefore the natural continuous counterpart of the fixed-confidence correctness statement in the discrete ICPE setting.
\end{remark}

\subsection{Fixed-confidence setting: zero duality gap via perturbation values}
\label{app:zero_duality_gap_perturbation}

In the previous sections we reduced the Bayesian fixed-confidence problem to an
optimal-stopping problem with terminal posterior-success reward. We now study
when the corresponding Lagrangian relaxation is exact. The main point of this
section is that, although the policy space is infinite-dimensional, the duality
question can be analyzed through the two-dimensional attainable cost-reward set
\[
{\cal K}
\coloneqq
\{(c(\pi),\rho(\pi)):\pi\in{\cal T}\}
\subset [0,\infty)\times[0,1],
\]
where
\[
c(\pi)\coloneqq \mathbb E^\pi[\tau_\pi],
\qquad
\rho(\pi)\coloneqq
\mathbb E^\pi[r_{\tau_\pi}(H_{\tau_\pi})].
\]
The zero duality gap result uses a standard one-dimensional perturbation argument from convex duality \citep{rockafellar1974conjugate,ekeland1976convex,rockafellar1998variational,borwein2006convex}.  While the general technique idea is not new, what is new is its application to this specific problem studied in this manuscript, where the attainable cost-reward set $\mathcal{K}$ arises from an optimal-stopping control problem over continuous spaces.

After optimizing over the inference rule, the reduced primal problem is
\begin{equation}
\label{eq:reduced_primal_cost_reward}
P^\star\coloneqq\inf_{\pi\in{\cal T}} c(\pi)\qquad
\text{s.t.} \qquad \rho(\pi)\ge 1-\delta. \end{equation}
Equivalently, $P^\star=\inf\{c:(c,\rho)\in{\cal K},\ \rho\ge 1-\delta\}$.
The associated Lagrangian dual value is
\begin{equation}
\label{eq:reduced_dual_value}
D^\star\coloneqq\sup_{\lambda\geq 0} g(\lambda),\qquad g(\lambda)\coloneqq\inf_{\pi\in{\cal T}}\left\{c(\pi)+\lambda(1-\delta-\rho(\pi))\right\}.
\end{equation}
Equivalently, $g(\lambda)=\inf_{(c,\rho)\in{\cal K}}\{c+\lambda(1-\delta-\rho)\}$, and the multiplier satisfies $\lambda\geq 0$. Clearly, by Lagrangian duality, we have that the weak duality easily holds $D^\star\leq P^\star$.
\paragraph{Assumptions.}
We state some assumptions. The first one is a convexification assumption on the
policy class.

\begin{assumption}[Time-sharing] \label{assump:time_sharing}
For every $\pi_0,\pi_1\in{\cal T}$ and every $\alpha\in[0,1]$, there exists
a policy $\pi_\alpha\in{\cal T}$ such that
\[
c(\pi_\alpha)=\alpha c(\pi_1)+(1-\alpha)c(\pi_0),\qquad\rho(\pi_\alpha)=\alpha \rho(\pi_1)+(1-\alpha)\rho(\pi_0).
\]
\end{assumption}

This assumption is natural for randomized sequential policies: before the
episode starts, the learner samples an independent Bernoulli random variable and
then follows either $\pi_1$ or $\pi_0$ for the entire episode. We use
a-priori randomization over complete policies, rather than only randomized
actions at each history, because the latter does not automatically convexify the
full stopping-time law. Hence, under this assumption one can easily show that the cost-reward set ${\cal K}$ is convex.

\begin{assumption}[Strict feasibility]
\label{assump:strict_feasibility_perturbation}
There exists $\pi_{\rm sf}\in{\cal T}$ such that $\rho(\pi_{\rm sf})>1-\delta$.
\end{assumption}
This is the same assumption as in \cref{assump:strict_feasibility}.

The next assumption is not needed for zero duality gap. It is only needed when
we want to extract an actual primal-dual saddle point from the zero-gap identity.
Unlike global closedness of all bounded slices of ${\cal K}$, it only asks for
closedness of one near-optimal dual level set around a dual maximizer.

\begin{assumption}[Local closedness of near-optimal dual level sets]
\label{assump:closed_bounded_slices}
Let ${\cal G}\coloneqq\argmax_{\lambda\geq 0} g(\lambda)$. For every $\lambda^\star\in{\cal G}$, there exists $\epsilon_0>0$ such that
\[
{\cal K}_{\epsilon_0}(\lambda^\star)\coloneqq \{(c,\rho)\in{\cal K}: c+\lambda^\star(1-\delta-\rho)\leq g(\lambda^\star)+\epsilon_0\}
\]
is closed in $\mathbb R^2$.
\end{assumption}

\subsubsection{Zero Duality by Perturbation}
We now show how \cref{assump:time_sharing} and \cref{assump:strict_feasibility_perturbation} can be used to prove zero duality gap via a perturbed value.
Define the scalar perturbation value function
\begin{equation}
\label{eq:perturbation_value_phi}
\varphi(u)\coloneqq\inf\left\{c:(c,\rho)\in{\cal K},\ 1-\delta -\rho\leq u\right\},\qquad u\in\mathbb R,
\end{equation}
with the convention $\inf\emptyset=+\infty$. The original primal value is
\[
P^\star=\varphi(0).
\]
The variable $u$ is the allowed violation of the confidence constraint. If
$u>0$, the constraint is relaxed; if $u<0$, the constraint is strengthened.

\begin{lemma}[Basic properties of the perturbation value]
\label{lem:perturbation_value_properties}
Consider \cref{assump:time_sharing} and \cref{assump:strict_feasibility_perturbation}. Then
$\varphi$ is convex, nonincreasing, and finite on an open interval containing
$0$. In particular, $\varphi$ is continuous at $0$, and
$\partial\varphi(0)\neq\emptyset$.
\end{lemma}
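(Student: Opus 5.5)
I will establish the four claims in order: monotonicity, convexity, finiteness near $0$, and then continuity and nonempty subdifferential as consequences.

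\emph{Monotonicity.} If $u\le u'$, then the feasible set $\{(c,\rho)\in{\cal K}:1-\delta-\rho\le u\}$ is contained in the one defined with $u'$. Taking infima gives $\varphi(u')\le\varphi(u)$, so $\varphi$ is nonincreasing.

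\emph{Convexity.} I use \cref{assump:time_sharing}, which makes ${\cal K}$ convex. Fix $u_0,u_1$ with $\varphi(u_i)<\infty$, fix $\alpha\in[0,1]$ and $\eta>0$. Pick $(c_i,\rho_i)\in{\cal K}$ with $1-\delta-\rho_i\le u_i$ and $c_i\le\varphi(u_i)+\eta$. The convex combination $(c_\alpha,\rho_\alpha)$ lies in ${\cal K}$. It satisfies $1-\delta-\rho_\alpha\le \alpha u_1+(1-\alpha)u_0$ by linearity. Hence
\[
\varphi(\alpha u_1+(1-\alpha)u_0)\le \alpha\varphi(u_1)+(1-\alpha)\varphi(u_0)+\eta.
\]
Letting $\eta\downarrow0$ gives convexity. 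If some $\varphi(u_i)=+\infty$, the inequality is trivial.

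\emph{Finiteness near $0$.} Lower bound: every $c\ge0$, since $c(\pi)=\mathbb E[\tau_\pi]\ge 0$ (indeed $\ge1$). So $\varphi>-\infty$ everywhere; in particular $\varphi$ is proper and never takes the value $-\infty$. Upper bound: by \cref{assump:strict_feasibility_perturbation}, set $\eta\coloneqq\rho(\pi_{\rm sf})-(1-\delta)>0$. For every $u\ge-\eta$, the point $(c(\pi_{\rm sf}),\rho(\pi_{\rm sf}))$ is feasible, so $\varphi(u)\le c(\pi_{\rm sf})<\infty$. Thus $\varphi$ is finite on $(-\eta,\infty)$, an open interval containing $0$.

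\emph{Continuity and subdifferential.} A finite convex function on an open interval of $\mathbb R$ is continuous there, so $\varphi$ is continuous at $0$. It also has nonempty subdifferential at interior points of its domain; for instance, the left and right derivatives exist and are finite, and any slope between them is a subgradient. By monotonicity, every subgradient is $\le0$. This nonpositive sign is what later yields the multiplier $\lambda=-d\ge0$ in the zero-gap argument.

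\emph{Main obstacle.} The only delicate point is convexity, specifically that time-sharing must produce an admissible policy in ${\cal T}$ whose pair $(c,\rho)$ is \emph{exactly} the convex combination. This is precisely what \cref{assump:time_sharing} provides, so I will invoke it directly rather than build the mixture policy. The remaining care is bookkeeping: handling the $\eta$-approximate minimizers (infima may not be attained without closedness), and noting the lower bound $c\ge0$ so that $\varphi$ never equals $-\infty$, which the one-dimensional continuity and subdifferential facts require.
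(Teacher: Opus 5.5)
Your proof is correct and follows the paper's argument step for step. Both proofs use monotonicity via inclusion of feasible sets, convexity via time-sharing with $\eta$-approximate minimizers, finiteness on $(-\eta,\infty)$ from strict feasibility together with $c\ge 0$, and continuity plus a nonempty subdifferential from one-dimensional convex analysis; your extra observation that every subgradient is nonpositive is what the paper proves at the start of the zero-duality-gap theorem.
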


\begin{proof}
We prove the claims one by one.

\noindent\underline{\it Step 1: monotonicity.}
If $u_1\le u_2$, then
\[
\{(c,\rho)\in{\cal K}:1-\delta-\rho\leq u_1\}
\subseteq
\{(c,\rho)\in{\cal K}:1-\delta-\rho\leq u_2\}.
\]
Therefore
\[
\varphi(u_2)\le\varphi(u_1).
\]
Thus $\varphi$ is nonincreasing.

\noindent\underline{\it Step 2: convexity.}
Using \cref{assump:time_sharing} one can easily show that  ${\cal K}$ is convex. Take
$u_0,u_1\in\mathbb R$, $\alpha\in[0,1]$, and $\eta>0$. If
$\varphi(u_i)<\infty$, choose $(c_i,\rho_i)\in{\cal K}$ such that
\[1-\delta-\rho_i\le u_i,\qquad c_i\le \varphi(u_i)+\eta,\qquad i\in\{0,1\}.\]
If one of the two values is $+\infty$, the convexity inequality is trivial.
By convexity of ${\cal K}$,
\[(c_\alpha,\rho_\alpha)\coloneqq\alpha(c_1,\rho_1)+(1-\alpha)(c_0,\rho_0)\in{\cal K}.\]
Moreover,
\[1-\delta-\rho_\alpha=\alpha(1-\delta-\rho_1)+(1-\alpha)(1-\delta-\rho_0)\leq\alpha u_1+(1-\alpha)u_0.\]
Hence $(c_\alpha,\rho_\alpha)$ is feasible for
$\varphi(\alpha u_1+(1-\alpha)u_0)$, and so
\[
\varphi(\alpha u_1+(1-\alpha)u_0)
\leq
c_\alpha
\leq
\alpha\varphi(u_1)+(1-\alpha)\varphi(u_0)+\eta.
\]
Letting $\eta\downarrow0$ gives convexity.

\noindent\underline{\it Step 3: finiteness near zero.}
By strict feasibility, there exists
$(c_{\rm sf},\rho_{\rm sf})\in{\cal K}$ such that
\[
\rho_{\rm sf}>1-\delta.
\]
Let
\[
\eta_{\rm sf}\coloneqq \rho_{\rm sf}-(1-\delta)>0.
\]
Then
\[
1-\delta-\rho_{\rm sf}=-\eta_{\rm sf}.
\]
Therefore, for every $u\ge -\eta_{\rm sf}$, the same point
$(c_{\rm sf},\rho_{\rm sf})$ is feasible for $\varphi(u)$. Hence
\[
\varphi(u)\le c_{\rm sf}<\infty,
\qquad
\forall u\ge -\eta_{\rm sf}.
\]
Also, since $c\ge0$ for all $(c,\rho)\in{\cal K}$, we have
$\varphi(u)\ge0$ whenever $\varphi(u)<\infty$. Thus $\varphi$ is finite
on the open interval $(-\eta_{\rm sf},\infty)$, which contains $0$.

\noindent\underline{\it Step 4: continuity and existence of a subgradient.}
A proper convex function that is finite on an open interval is continuous on
that interval. Since $0\in{\rm int}({\rm dom}\,\varphi)$, the
one-dimensional subdifferential $\partial\varphi(0)$ is nonempty. Equivalently,
one may take any slope between the left and right derivatives of $\varphi$ at
zero.
\end{proof}

\begin{theorem}[Zero duality gap by perturbation]
\label{thm:zero_duality_gap_perturbation}
Assume \cref{assump:time_sharing} and \cref{assump:strict_feasibility_perturbation}. Then
there exists $\lambda^\star\geq0$ such that
\[
g(\lambda^\star)=P^\star.
\]
Consequently,
\begin{equation}
\label{eq:zero_duality_gap}
\inf_{\pi\in{\cal T}:\rho(\pi)\ge 1-\delta} c(\pi)=\sup_{\lambda\ge0}\inf_{\pi\in{\cal T}}\left\{c(\pi)+\lambda(1-\delta-\rho(\pi))\right\}.
\end{equation}
In particular, the Lagrangian dual has no duality gap, and the dual supremum is
attained.
\end{theorem}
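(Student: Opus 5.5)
The plan is to use the standard perturbation argument built on \cref{lem:perturbation_value_properties}. That lemma gives convexity, monotonicity and finiteness of $\varphi$ near $0$, and nonemptiness of $\partial\varphi(0)$. First pick any subgradient $s\in\partial\varphi(0)$. Since $\varphi$ is nonincreasing, $s\le 0$. To see this, use the subgradient inequality at $u=1$: it gives $\varphi(1)\ge\varphi(0)+s$, and $\varphi(1)\le\varphi(0)$, so $s\le0$. More carefully, $\varphi(u)-\varphi(0)\le 0$ for all $u>0$, while the subgradient inequality forces $\varphi(u)-\varphi(0)\ge su$. If $s>0$, taking $u>0$ would give a contradiction, because both $\varphi(u)$ and $\varphi(0)$ are finite on $(-\eta_{\rm sf},\infty)$. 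Set $\lambda^\star\coloneqq -s\ge 0$.

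Next I show $g(\lambda^\star)\ge P^\star$. Take any $(c,\rho)\in{\cal K}$ and let $u\coloneqq 1-\delta-\rho$. By definition $\varphi(u)\le c$, since $(c,\rho)$ is feasible for the perturbed problem at level $u$. Here $u\ge -\delta>-\infty$. If $u<-\eta_{\rm sf}$, $\varphi(u)$ is still finite because the point itself is feasible, so no issue arises. The subgradient inequality $\varphi(u)\ge\varphi(0)+su$ holds for all $u\in\mathbb R$; it is trivial where $\varphi=+\infty$, which cannot happen here anyway. Combining,
\[
c\ \ge\ \varphi(u)\ \ge\ \varphi(0)-\lambda^\star(1-\delta-\rho)=P^\star-\lambda^\star(1-\delta-\rho).
\]
Hence $c+\lambda^\star(1-\delta-\rho)\ge P^\star$ for every point of ${\cal K}$. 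Taking the infimum over ${\cal K}$ yields $g(\lambda^\star)\ge P^\star$.

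Finally, weak duality gives $g(\lambda)\le D^\star\le P^\star$ for all $\lambda\ge0$. For completeness I will verify it directly: for any feasible $(c,\rho)$ with $\rho\ge1-\delta$ and $\lambda\ge0$, we have $g(\lambda)\le c+\lambda(1-\delta-\rho)\le c$, and infimizing over feasible points gives $g(\lambda)\le P^\star$. Therefore $g(\lambda^\star)=P^\star=D^\star$. This establishes \eqref{eq:zero_duality_gap} and attainment of the dual supremum at $\lambda^\star$. Note that $P^\star=\varphi(0)$ is finite by the lemma, so no $\infty-\infty$ issue arises.

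The only delicate step is making sure the subgradient inequality is valid globally and not just near $0$. For a proper convex function on $\mathbb R$, a subgradient at an interior point of the domain satisfies $\varphi(u)\ge\varphi(0)+su$ for all $u$, by the definition of $\partial\varphi(0)$ for convex extended-valued functions. This is what lets the argument cover points of ${\cal K}$ with arbitrary $\rho$. Convexity of $\varphi$, and hence existence of the subgradient, is exactly where time-sharing (\cref{assump:time_sharing}) enters. Strict feasibility (\cref{assump:strict_feasibility_perturbation}) is what places $0$ in the interior of ${\rm dom}\,\varphi$.
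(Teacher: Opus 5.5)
Your proposal is correct and follows the paper's proof essentially step for step. You take $s\in\partial\varphi(0)$, use monotonicity of $\varphi$ to get $s\le 0$, set $\lambda^\star=-s$, apply the global subgradient inequality at $u=1-\delta-\rho$ for each $(c,\rho)\in{\cal K}$ to obtain $g(\lambda^\star)\ge P^\star$, and close with weak duality; the checks you add (finiteness of $\varphi$ at these $u$, finiteness of $P^\star$, a direct verification of weak duality) are valid details that the paper leaves implicit.
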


\begin{proof}
By \cref{lem:perturbation_value_properties}, choose
$s^\star\in\partial\varphi(0)$. Since $\varphi$ is nonincreasing, every
subgradient at zero is nonpositive. Indeed, for $h>0$, the subgradient
inequality gives
\[
\varphi(h)\ge \varphi(0)+s^\star h,
\]
while monotonicity gives $\varphi(h)\le\varphi(0)$. Therefore
$s^\star h\le0$, and hence $s^\star\le0$.

Define $\lambda^\star\coloneqq -s^\star\geq 0$. The subgradient inequality gives, for every $u\in\mathbb R$,
\[
\varphi(u)\ge \varphi(0)+s^\star u=P^\star-\lambda^\star u.
\]
Now fix any $(c,\rho)\in{\cal K}$ and set
$u=1-\delta-\rho$.
Since $(c,\rho)$ is feasible for $\varphi(u)$, we have $c\ge\varphi(u)$.
Thus
\[c\ge\varphi(u)\geq P^\star-\lambda^\star u=P^\star-\lambda^\star(1-\delta -\rho).\]
Equivalently,
$c+\lambda^\star(1-\delta-\rho)\ge P^\star$.
Taking the infimum over all $(c,\rho)\in{\cal K}$ gives
\[g(\lambda^\star)=\inf_{(c,\rho)\in{\cal K}}\{c+\lambda^\star(1-\delta-\rho)\}\geq P^\star.\]
By weak duality,  we also have
$g(\lambda^\star)\le P^\star$. Hence $g(\lambda^\star)=P^\star$.
Taking the supremum over $\lambda\ge0$ proves \cref{eq:zero_duality_gap}.
\end{proof}

\begin{remark}
\label{rem:why_slater}

Strict feasibility (\cref{assump:strict_feasibility_perturbation}) is a simple sufficient condition. The exact perturbation
condition is lower semicontinuity of $\varphi$ at the origin by the Fenchel-Moreau theorem. In fact,  $\varphi^{\ast\ast}$ is the lower
semicontinuous convex envelope of $\varphi$. Since $\varphi$ is already convex, equality $\varphi^{\ast\ast}(0)=\varphi(0)$ holds exactly when $\varphi$ is lower semicontinuous at $0$. Finally,
\cref{lem:perturbation_value_properties} shows that strict feasibility implies
continuity, hence lower semicontinuity, at $0$.
 We state \cref{thm:zero_duality_gap_perturbation} under strict
feasibility because it is easier to interpret and check. Strict feasibility
places $0$ in the interior of ${\rm dom}\,\varphi$, and finite convex
functions are continuous on the interior of their domain. The price is that
strict feasibility is stronger than necessary.
\end{remark}

\subsubsection{Primal attainment and KKT}
The zero-gap result above does not require any closedness assumption on ${\cal K}$. However, zero duality gap is only a statement about values. To obtain an actual policy that is both primal feasible and dual optimal, we need an attainment condition. The local closedness assumption above is enough: it compactifies a near-optimal dual level set around a dual maximizer, and a feasible minimizing sequence can be extracted inside this compact set.

\begin{lemma}[Primal-dual attainment from local closedness]
\label{lem:primal_attainment_closed_slices}
Assume ~\ref{assump:time_sharing}, ~\ref{assump:strict_feasibility_perturbation}, and ~\ref{assump:closed_bounded_slices}. Let $\lambda^\star\in{\cal G}$ be any dual maximizer. Then there exists $(c^\star,\rho^\star)\in{\cal K}$ such that
\[
\rho^\star\geq 1-\delta,\qquad c^\star=P^\star,\qquad c^\star+\lambda^\star(1-\delta-\rho^\star)=g(\lambda^\star).
\]
Equivalently, there exists a policy $\pi^\star\in{\cal T}$ such that
\[
c(\pi^\star)=P^\star,\qquad \rho(\pi^\star)\geq 1-\delta,\qquad \pi^\star\in{\cal S}(\lambda^\star).
\]
Moreover, $\lambda^\star(1-\delta-\rho(\pi^\star))=0$.
\end{lemma}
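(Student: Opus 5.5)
The plan is to combine the zero-gap identity of \cref{thm:zero_duality_gap_perturbation} with the correctness argument of \cref{thm:icpe_correctness}, and then read off complementary slackness from the values. Fix a dual maximizer $\lambda^\star\in{\cal G}$. \Cref{thm:zero_duality_gap_perturbation} gives $g(\lambda^\star)=\sup_{\lambda\ge0}g(\lambda)=P^\star$. Note that every maximizer attains the supremum, and the supremum equals $P^\star$.

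First I show that the active set $M(\lambda^\star)=\argmin_{(c,\rho)\in{\cal K}}\{c+\lambda^\star(1-\delta-\rho)\}$ is nonempty and compact. By \cref{assump:closed_bounded_slices}, ${\cal K}_{\epsilon_0}(\lambda^\star)$ is closed. It is also bounded, since $\rho\in[0,1]$ and $0\le c\le g(\lambda^\star)+\epsilon_0+\lambda^\star\delta$. It is nonempty because $g$ is finite (\cref{lemma:properties_g_lambda}). The map $(c,\rho)\mapsto c+\lambda^\star(1-\delta-\rho)$ is continuous, so it attains its minimum on this compact set. That minimum is $g(\lambda^\star)$, because points outside ${\cal K}_{\epsilon_0}$ have larger values. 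Hence $M(\lambda^\star)$ is a closed subset of a compact set, so it is nonempty and compact.

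Next I show that $M(\lambda^\star)$ contains a point with $\rho\ge1-\delta$, by rerunning Steps 1--4 of the proof of \cref{thm:icpe_correctness} verbatim. Those steps use only strict feasibility, closedness of the level set ${\cal K}_{\epsilon}(\lambda^\star)$, and the subdifferential formula of \cref{prop:subdifferential_affine_functions}. One detail needs care: \cref{assump:closed_bounded_slices} gives closedness only at the single level $\epsilon_0$, while \cref{thm:icpe_correctness} uses all $\epsilon\in(0,\epsilon_0]$. The repair is that ${\cal K}_\epsilon(\lambda^\star)={\cal K}_{\epsilon_0}(\lambda^\star)\cap\{c+\lambda^\star(1-\delta-\rho)\le g(\lambda^\star)+\epsilon\}$, an intersection of a closed set with a closed half-plane. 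So every smaller level set is closed as well, and the earlier argument applies unchanged. I expect this bookkeeping, together with checking that the $\lambda^\star=0$ case of Step 3 uses only these sets, to be the main obstacle. Everything else follows from the displayed identities. This produces $(c^\star,\rho^\star)\in M(\lambda^\star)$ with $\rho^\star\ge1-\delta$.

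Finally I extract the stated equalities. Membership in $M(\lambda^\star)$ gives $c^\star+\lambda^\star(1-\delta-\rho^\star)=g(\lambda^\star)=P^\star$. Since $\rho^\star\ge1-\delta$, the point is primal feasible, so $c^\star\ge P^\star$. Also $\lambda^\star(1-\delta-\rho^\star)\le0$, which gives $c^\star\ge P^\star=c^\star+\lambda^\star(1-\delta-\rho^\star)\ge c^\star$. Hence both inequalities are equalities: $c^\star=P^\star$ and $\lambda^\star(1-\delta-\rho^\star)=0$. Choosing $\pi^\star\in{\cal T}$ with $(c(\pi^\star),\rho(\pi^\star))=(c^\star,\rho^\star)$, which is possible by the definition of ${\cal K}$, gives $\pi^\star\in{\cal S}(\lambda^\star)$ with all the claimed properties.
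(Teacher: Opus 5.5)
Your last paragraph has a sign error that hides a genuine gap. From $\rho^\star\ge 1-\delta$ and $\lambda^\star\ge0$ you get $\lambda^\star(1-\delta-\rho^\star)\le 0$. Hence $P^\star=c^\star+\lambda^\star(1-\delta-\rho^\star)\le c^\star$, not $\ge c^\star$. That is the same inequality primal feasibility already gives, so nothing bounds $c^\star$ from above.

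In fact every active point satisfies $c=P^\star+\lambda^\star(\rho-(1-\delta))$. So for $\lambda^\star>0$, the claims $c^\star=P^\star$ and $\lambda^\star(1-\delta-\rho^\star)=0$ hold only if $\rho^\star=1-\delta$ exactly. The contradiction argument of \cref{thm:icpe_correctness} only yields \emph{some} active point with $\rho\ge1-\delta$.

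A concrete example at the level of ${\cal K}$, which is all your argument uses: let $1-\delta=0.9$ and ${\cal K}=\{(c,\rho):0.8\le\rho\le1,\ c\ge 10+100(\rho-0.9)\}$. This set is closed, convex and strictly feasible. Then $P^\star=10$ and $g(\lambda)=\min\{0.1\lambda,\,20-0.1\lambda\}$, so the unique dual maximizer is $\lambda^\star=100$. The active set $M(\lambda^\star)$ is the whole segment $c=10+100(\rho-0.9)$, $\rho\in[0.8,1]$. Nothing in Steps 1--4 excludes the active point $(20,1)$, yet it has $c=20\neq P^\star$ and $\lambda^\star(1-\delta-\rho)=-10\neq0$.

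The paper's proof never invokes Steps 1--4:
\begin{itemize}
\item It takes a feasible minimizing sequence $(c_n,\rho_n)\in{\cal K}$ with $\rho_n\ge1-\delta$ and $c_n\to P^\star$.
\item Zero gap gives $P^\star=g(\lambda^\star)\le c_n+\lambda^\star(1-\delta-\rho_n)\le c_n$, so the sequence eventually lies in the compact set ${\cal K}_{\epsilon_0}(\lambda^\star)$.
\item Any limit point then has $c^\star=P^\star$, $\rho^\star\ge1-\delta$ and Lagrangian value $g(\lambda^\star)$, which forces complementary slackness.
\end{itemize}
This is the natural fix.

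If you want to keep your route, you need two more ingredients when $\lambda^\star>0$:
\begin{itemize}
\item The mirror image of Steps 1--2, followed by a downward perturbation of $\lambda$ as in Step 3. This shows that $M(\lambda^\star)$ also contains a point with $\rho\le1-\delta$.
\item Convexity of $M(\lambda^\star)={\cal K}\cap\{c+\lambda^\star(1-\delta-\rho)=g(\lambda^\star)\}$ under \cref{assump:time_sharing}. This gives an active point with $\rho=1-\delta$ exactly, which has $c=g(\lambda^\star)=P^\star$.
\end{itemize}
For $\lambda^\star=0$ your argument already gives $c^\star=g(0)=P^\star$.

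Your observation that closedness of ${\cal K}_{\epsilon_0}(\lambda^\star)$ passes to all smaller levels, by intersecting with a closed half-plane, is correct. It is exactly what makes Steps 1--4 applicable under this lemma's weaker assumption.
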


\begin{proof}
By \cref{thm:zero_duality_gap_perturbation}, $g(\lambda^\star)=P^\star$. Let $(c_n,\rho_n)\in{\cal K}$ be a feasible minimizing sequence for the primal problem, so that
\[
\rho_n\geq 1-\delta,\qquad c_n\to P^\star.
\]
Define $L_{\lambda^\star}(c,\rho)\coloneqq c+\lambda^\star(1-\delta-\rho)$. Since $\rho_n\geq 1-\delta$ and $\lambda^\star\geq 0$, we have
\[
L_{\lambda^\star}(c_n,\rho_n)\leq c_n.
\]
On the other hand, by definition of $g(\lambda^\star)$,
\[
L_{\lambda^\star}(c_n,\rho_n)\geq g(\lambda^\star)=P^\star.
\]
Therefore,
\[
P^\star\leq L_{\lambda^\star}(c_n,\rho_n)\leq c_n\to P^\star,
\]
and hence $L_{\lambda^\star}(c_n,\rho_n)\to P^\star=g(\lambda^\star)$.

Let $\epsilon_0>0$ be given by \cref{assump:closed_bounded_slices}. For all sufficiently large $n$,
\[
L_{\lambda^\star}(c_n,\rho_n)\leq g(\lambda^\star)+\epsilon_0,
\]
so $(c_n,\rho_n)\in{\cal K}_{\epsilon_0}(\lambda^\star)$.

We now show that ${\cal K}_{\epsilon_0}(\lambda^\star)$ is compact. It is closed by assumption. It is also bounded: for every $(c,\rho)\in{\cal K}_{\epsilon_0}(\lambda^\star)$, since $\rho\in[0,1]$, we have $1-\delta-\rho\geq-\delta$, and thus
\[
c\leq g(\lambda^\star)+\epsilon_0+\lambda^\star\delta.
\]
Also $c\geq0$ and $\rho\in[0,1]$. Hence ${\cal K}_{\epsilon_0}(\lambda^\star)$ is closed and bounded in $\mathbb R^2$, and therefore compact.

Passing to a subsequence, we may assume $(c_n,\rho_n)\to(c^\star,\rho^\star)$ for some $(c^\star,\rho^\star)\in{\cal K}_{\epsilon_0}(\lambda^\star)\subseteq{\cal K}$. Since $c_n\to P^\star$, we have $c^\star=P^\star$. Since $\rho_n\geq 1-\delta$ for all $n$, we have $\rho^\star\geq 1-\delta$. By continuity of $L_{\lambda^\star}$,
\[
L_{\lambda^\star}(c^\star,\rho^\star)=\lim_n L_{\lambda^\star}(c_n,\rho_n)=P^\star=g(\lambda^\star).
\]
Because $(c^\star,\rho^\star)\in{\cal K}$, there exists $\pi^\star\in{\cal T}$ such that $c(\pi^\star)=c^\star$ and $\rho(\pi^\star)=\rho^\star$. The equality $L_{\lambda^\star}(c^\star,\rho^\star)=g(\lambda^\star)$ implies $\pi^\star\in{\cal S}(\lambda^\star)$. Finally, since $c^\star=P^\star$ and $L_{\lambda^\star}(c^\star,\rho^\star)=P^\star$, we obtain
\[
\lambda^\star(1-\delta-\rho^\star)=0.
\]
This proves the claim.
\end{proof}

Define the dual-optimal policy set
\[{\cal S}(\lambda)\coloneqq\argmin_{\pi\in{\cal T}}\left\{c(\pi)+\lambda(1-\delta-\rho(\pi))\right\}.\]
\begin{theorem}[KKT and Bayesian correctness]
\label{thm:kkt_bayesian_correctness}
Assume ~\ref{assump:time_sharing}, ~\ref{assump:strict_feasibility_perturbation}, and ~\ref{assump:closed_bounded_slices}. Let $\lambda^\star\in{\cal G}$ be any dual maximizer. Then there exists $\pi^\star\in{\cal S}(\lambda^\star)$ such that
\[
c(\pi^\star)=P^\star,\qquad \rho(\pi^\star)\geq 1-\delta,\qquad \lambda^\star(1-\delta-\rho(\pi^\star))=0.
\]
Consequently, if $I_t^\star(h)\in\argmax_{x\in\X}q_t(h,x)$ is a measurable posterior-optimal inference selector, then
\[
\mathbb P_{\theta\sim\nu}^{\pi^\star}\left(L_\theta\left(I_{\tau_{\pi^\star}}^\star(H_{\tau_{\pi^\star}})\right)\leq \epsilon\right)\geq 1-\delta.
\]
\end{theorem}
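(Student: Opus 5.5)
The plan is to obtain the first part of the statement directly from \cref{lem:primal_attainment_closed_slices}, which was proved under exactly the same three hypotheses: time-sharing (\cref{assump:time_sharing}), strict feasibility (\cref{assump:strict_feasibility_perturbation}) and local closedness (\cref{assump:closed_bounded_slices}). Concretely, I fix an arbitrary dual maximizer $\lambda^\star\in{\cal G}$; the set ${\cal G}$ is nonempty by \cref{thm:zero_duality_gap_perturbation} (or by \cref{lemma:properties_g_lambda}). That lemma then supplies a point $(c^\star,\rho^\star)\in{\cal K}$ realized by some policy $\pi^\star\in{\cal T}$ with the following properties:
\[
c(\pi^\star)=P^\star,\qquad \rho(\pi^\star)\geq 1-\delta,\qquad c(\pi^\star)+\lambda^\star(1-\delta-\rho(\pi^\star))=g(\lambda^\star).
\]
The last equality says precisely that $\pi^\star\in{\cal S}(\lambda^\star)$. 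Complementary slackness $\lambda^\star(1-\delta-\rho(\pi^\star))=0$ follows by subtracting $c(\pi^\star)=P^\star$ from this equality and using the zero-gap identity $g(\lambda^\star)=P^\star$. This gives all three KKT relations.

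For the correctness consequence, I convert $\rho(\pi^\star)$ into a success probability, following the last step of the proof of \cref{thm:icpe_correctness}. The selector $I_t^\star$ attains the maximum defining $r_t$; this maximum exists by \cref{lemma:uppersemicontinuity_q_reward} together with compactness of $\X$, and a measurable selection is given by \cref{prop:optimal_inference_rule}. Hence $r_t(h)=q_t(h,I_t^\star(h))$ for every $h$. Applying \cref{lem:stopped_success_decomp} with the stopping time $\tau_{\pi^\star}$ and the inference rule $I^\star$ yields
\[
\mathbb P_{\theta\sim\nu}^{\pi^\star}\bigl(L_\theta(I^\star_{\tau_{\pi^\star}}(H_{\tau_{\pi^\star}}))\le\epsilon\bigr)=\mathbb E^{\pi^\star}\bigl[q_{\tau_{\pi^\star}}(H_{\tau_{\pi^\star}},I^\star_{\tau_{\pi^\star}}(H_{\tau_{\pi^\star}}))\bigr]=\rho(\pi^\star)\ge 1-\delta.
\]

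The only point needing care is the use of \cref{lem:stopped_success_decomp} at a random time. Here $\tau_{\pi^\star}<\infty$ almost surely because $\pi^\star\in{\cal T}$, and the stopped history is measurable. The posterior kernel identity $\mathbb P(\cdot\mid H_t=h)=R_t(\cdot\mid h)$ holds for each $t$, so one decomposes over the events $\{\tau_{\pi^\star}=t\}\in\mathcal F_t$ and sums over $t$. The Bellman-embedding subtlety (a policy on $\bar\A$ versus a triple $(\pi,I,\tau)$) is handled by \cref{lem:stop_as_action}. I expect no substantive obstacle: the real work, namely compactness of the near-optimal level set and extraction of a feasible limit point, is already done in \cref{lem:primal_attainment_closed_slices}.
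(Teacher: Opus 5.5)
Your proposal is correct and matches the paper's proof: obtain $c(\pi^\star)=P^\star$, $\rho(\pi^\star)\ge 1-\delta$, $\pi^\star\in{\cal S}(\lambda^\star)$ and complementary slackness from \cref{lem:primal_attainment_closed_slices}, then use $r_t(h)=q_t(h,I_t^\star(h))$ and \cref{lem:stopped_success_decomp} to rewrite $\rho(\pi^\star)$ as the success probability. Your extra appeal to \cref{lemma:uppersemicontinuity_q_reward} for existence of the maximizer is unnecessary here, since the statement already assumes a measurable selector $I_t^\star$ is given.
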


\begin{proof}
By \cref{lem:primal_attainment_closed_slices}, there exists $\pi^\star\in{\cal T}$ such that
\[
c(\pi^\star)=P^\star,\qquad \rho(\pi^\star)\geq 1-\delta,\qquad \pi^\star\in{\cal S}(\lambda^\star),
\]
and
\[
\lambda^\star(1-\delta-\rho(\pi^\star))=0.
\]
It remains only to translate $\rho(\pi^\star)\geq 1-\delta$ into the desired correctness statement.

By definition,
\[
\rho(\pi^\star)=\mathbb E^{\pi^\star}\left[r_{\tau_{\pi^\star}}(H_{\tau_{\pi^\star}})\right].
\]
Since $I_t^\star$ is posterior-optimal, $r_t(h)=q_t(h,I_t^\star(h))$. Hence
\[
\rho(\pi^\star)=\mathbb E^{\pi^\star}\left[q_{\tau_{\pi^\star}}\left(H_{\tau_{\pi^\star}},I_{\tau_{\pi^\star}}^\star(H_{\tau_{\pi^\star}})\right)\right].
\]
By \cref{lem:stopped_success_decomp}, the last display equals
\[
\mathbb P_{\theta\sim\nu}^{\pi^\star}\left(L_\theta\left(I_{\tau_{\pi^\star}}^\star(H_{\tau_{\pi^\star}})\right)\leq \epsilon\right).
\]
Since $\rho(\pi^\star)\geq 1-\delta$, the result follows.
\end{proof}

\begin{remark}[What is stronger than the previous correctness statement]
\label{rem:zero_gap_vs_correctness}
The result above separates three issues that were previously entangled:
(i) zero duality gap, (ii) attainment of a primal-dual saddle point, (iii) Bayesian- $(\epsilon,\delta)$-correctness.
Time-sharing plus strict feasibility gives zero duality gap and dual attainment. The local closedness assumption is only used to extract an actual feasible dual-optimal policy from the zero-gap identity. Finally, Bayesian correctness follows from the posterior-optimal inference identity.
\end{remark}

\paragraph{Weaknesses and limitations.}
There are several limitations to keep in mind.
\begin{enumerate}
    \item First, the theorem is a statement about the randomized, time-sharing closure of the policy class. If one restricts to deterministic policies without ex-ante randomization, ${\cal K}$ need not be convex and a Lagrangian duality gap can occur.
    \item Second, zero duality gap does not imply that every dual-optimal policy is feasible. The theorem guarantees the existence of a feasible dual-optimal policy under the local closedness assumption. A learned policy that approximately minimizes the Lagrangian at $\lambda^\star$ is not automatically certified by this theorem.
    \item Third, local closedness is still an attainment assumption. It is weaker than requiring all bounded slices of ${\cal K}$ to be closed, but it is not automatic. Without some local closedness, the primal value may be approached by policies whose cost-reward pairs converge to a boundary point outside ${\cal K}$, so no exact optimal policy need exist.
\end{enumerate}

\subsection{Training-time certification of $(\epsilon,\delta)$-correctness}
\label{app:training_time_certification}

We now give a training-time correctness guarantee for ICPE that treats the learned policy, stopping rule, and recommender as a black box. The guarantee is designed for the practical workflow used in our experiments: every few training epochs we freeze the current model, evaluate it on a finite batch of fresh task realizations, and certify whether the checkpoint satisfies the $(\epsilon,\delta)$-guarantees.

The key point is that certification must account for repeated testing. A fixed-level certificate applied repeatedly at many checkpoints is not valid without correction. We therefore assign a certification budget $\alpha_m$ to checkpoint $m$, with total budget at most $\alpha\in (0,1)$, and certify each checkpoint using a checkpointwise mixture martingale.

Fix a target success threshold
\[
q^\star = 1-\delta,
\]
and an additional parameter $\alpha\in(0,1)$ that controls the probability of ever falsely certifying a checkpoint.

\paragraph{Protocol.}
We index certification checkpoints by $m\ge 1$. In the experiments, for example, checkpoint $m$ may correspond to every $2500$ training epochs. At checkpoint $m$, let
\[
w_m=(\phi_m,\psi_m,\vartheta_m)
\]
denote the frozen parameters of the inference model $I_\phi$, the critic $Q_\psi$ and the actor $\pi_{\vartheta}$. We denote by $I_m$, $\pi_m$, and $\tau_m$, respectively, the induced inference rule, sampling policy, and stopping time. The probability of success of checkpoint $m$ is
\[
p_m
\coloneqq
\mathbb P_{\theta\sim \nu}^{\pi_m}
\!\left(
I_m(H_{\tau_m})\in {\cal X}_\epsilon(\theta)
\right).
\]
Thus $p_m$ is the probability of success of the actual recommender rule used by the frozen checkpoint.

For each checkpoint $m$, after $w_m$ is fixed, we draw i.i.d. a batch of $B$ evaluation episodes. For $j=1,\dots,B$, let $\theta^{(m,j)}\sim\nu$ be the task in the $j$-th evaluation episode for checkpoint $m$, and let $H_{\tau_m}^{(m,j)}$ be the stopped history obtained by running the frozen snapshot $w_m$ on that task. Define the success indicator
\[
Z_{m,j}
\coloneqq
\mathbf 1
\!\left\{
I_m\!\left(H_{\tau_m}^{(m,j)}\right)
\in {\cal X}_\epsilon(\theta^{(m,j)})
\right\}.
\]
We write
\[
S_{m,t}
\coloneqq
\sum_{j=1}^t Z_{m,j},
\qquad
t=0,1,\dots,B.
\]

Hence,
For each checkpoint $m$, conditional on the training history before evaluating checkpoint $m$ and conditional on the frozen checkpoint $w_m$, the variables
\[
Z_{m,1},\dots,Z_{m,B}
\]
are independent and identically distributed as ${\rm Ber}(p_m)$. Moreover, the certification batch at checkpoint $m$ is not reused for training updates before the certification decision for checkpoint $m$ is made.

\paragraph{Checkpointwise mixture martingale.}
At checkpoint $m$, we test the null hypothesis
\[
H_{0,m}:\quad p_m\le q^\star.
\]
Let $\Pi_m$ be a probability distribution on $[q^\star,1]$, chosen before observing the certification batch at checkpoint $m$. For $r\in[q^\star,1]$, define
\[
L_{m,t}(r)
\coloneqq
\left(\frac{r}{q^\star}\right)^{S_{m,t}}
\left(\frac{1-r}{1-q^\star}\right)^{t-S_{m,t}},
\qquad
t=0,1,\dots,B,
\]
with the usual convention $0^0=1$. The checkpointwise mixture martingale is
\[
M_{m,t}
\coloneqq
\int_{q^\star}^{1} L_{m,t}(r)\,\Pi_m(dr),
\qquad
M_{m,0}=1.
\]
In implementation, $\Pi_m$ may be a finite grid distribution, in which case no numerical integration is needed. If
\[
\Pi_m = \sum_{\ell=1}^L \omega_{m,\ell}\,\delta_{r_{m,\ell}},
\qquad
\sum_{\ell=1}^L \omega_{m,\ell}=1,
\]
then
\[
M_{m,t}
=
\sum_{\ell=1}^L
\omega_{m,\ell}
\left(\frac{r_{m,\ell}}{q^\star}\right)^{S_{m,t}}
\left(\frac{1-r_{m,\ell}}{1-q^\star}\right)^{t-S_{m,t}}.
\]

\paragraph{Certification budgets.}
Let $(\alpha_m)_{m\ge1}$ be nonnegative certification budgets, chosen predictably before observing the certification batch at checkpoint $m$, and satisfying
\[
\sum_{m\ge1}\alpha_m\le \alpha.
\]

We declare checkpoint $m$ certified if
\[
\max_{1\le t\le B} M_{m,t}\ge \alpha_m^{-1}.
\]
Let
\[
{\cal C}
\coloneqq
\left\{
m\ge1:\ \max_{1\le t\le B}M_{m,t}\ge \alpha_m^{-1}
\right\}
\]
be the set of certified checkpoints.

\begin{proposition}[Training-time certification by checkpointwise mixture martingales]
\label{prop:checkpointwise_mixture_certification}
Suppose  that the certification budgets satisfy
$
\sum_{m\ge1}\alpha_m\le\alpha$ almost surely.
Then
\[
\mathbb P\!\left(
\forall m\in{\cal C},\ p_m>q^\star
\right)
\ge
1-\alpha.
\]
Equivalently, every certified checkpoint is $(\epsilon,\delta)$-correct, and, after training, any adaptively selected checkpoint
\[
\hat m\in{\cal C}
\]
is $(\epsilon,\delta)$-correct with confidence at least $1-\alpha$.
\end{proposition}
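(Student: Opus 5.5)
We bound the probability of the bad event $E\coloneqq\{\exists m\in{\cal C}:\ p_m\le q^\star\}$ by a union bound over checkpoints combined with Ville's inequality applied separately to each checkpoint's mixture process. Write
\[
E\subseteq\bigcup_{m\ge1}E_m,\qquad E_m\coloneqq\{p_m\le q^\star\}\cap\Big\{\max_{1\le t\le B}M_{m,t}\ge\alpha_m^{-1}\Big\}.
\]
It then suffices to show that $\mathbb P(E_m\mid{\cal G}_{m})\le\alpha_m$. Here ${\cal G}_m$ is the $\sigma$-algebra generated by the training history up to checkpoint $m$, the frozen parameters $w_m$, the prior $\Pi_m$ and the budget $\alpha_m$. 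All of these are ${\cal G}_m$-measurable by the predictability assumptions. Taking expectations and summing then gives $\mathbb P(E)\le\mathbb E[\sum_m\alpha_m]\le\alpha$.

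\textbf{Step 1: supermartingale property under the null.} Fix $m$ and condition on ${\cal G}_m$, so that $p_m$ is a fixed number. On $\{p_m\le q^\star\}$, the sequence $Z_{m,1},\dots,Z_{m,B}$ is i.i.d. ${\rm Ber}(p_m)$ by the stated protocol. For each fixed $r\in[q^\star,1]$, one step of the likelihood ratio satisfies
\[
\mathbb E\Big[\big(\tfrac{r}{q^\star}\big)^{Z}\big(\tfrac{1-r}{1-q^\star}\big)^{1-Z}\Big]=p_m\tfrac{r}{q^\star}+(1-p_m)\tfrac{1-r}{1-q^\star}.
\]
This expression is affine and nondecreasing in $p_m$, because $r/q^\star\ge1\ge(1-r)/(1-q^\star)$, and it equals $1$ at $p_m=q^\star$. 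Hence it is at most $1$ whenever $p_m\le q^\star$. Therefore $(L_{m,t}(r))_{t}$ is a nonnegative supermartingale with respect to the filtration $\sigma({\cal G}_m,Z_{m,1},\dots,Z_{m,t})$, and it starts at $1$. Because $\Pi_m$ is ${\cal G}_m$-measurable, Fubini (Tonelli, by nonnegativity) shows that the mixture $M_{m,t}$ is also a nonnegative supermartingale with $M_{m,0}=1$.

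\textbf{Step 2: Ville's inequality.} Apply Ville's (maximal) inequality to the finite-horizon nonnegative supermartingale $M_{m,t}$, conditionally on ${\cal G}_m$. This gives $\mathbb P(\max_{t\le B}M_{m,t}\ge\alpha_m^{-1}\mid{\cal G}_m)\le\alpha_m$ on $\{p_m\le q^\star\}$, so $\mathbb P(E_m\mid{\cal G}_m)\le\alpha_m\mathbf 1\{p_m\le q^\star\}\le\alpha_m$. If $\alpha_m=0$, the threshold is $+\infty$ and $E_m$ is empty. Since $\{p_m\le q^\star\}\in{\cal G}_m$, multiplying by its indicator is legitimate.

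\textbf{Step 3: summation.} Take expectations and sum over $m$. By countable subadditivity, $\mathbb P(E)\le\sum_m\mathbb E[\mathbb P(E_m\mid{\cal G}_m)]\le\mathbb E[\sum_m\alpha_m]\le\alpha$. The equivalent formulation follows directly. On $E^c$, every $m\in{\cal C}$ satisfies $p_m>q^\star=1-\delta$, and this holds simultaneously for all certified checkpoints. Any selector $\hat m\in{\cal C}$, however adaptively chosen, therefore inherits $p_{\hat m}>1-\delta$ on the same event.

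\textbf{Main obstacle.} The delicate point is the conditioning in Step 1. The budgets $\alpha_m$, the priors $\Pi_m$ and the parameters $w_m$ all depend on past data, including earlier certification batches. The argument is valid only because the batch at checkpoint $m$ is fresh and i.i.d. given ${\cal G}_m$, and because it is not used for training before the decision at checkpoint $m$ is made. The proof must state this conditional independence explicitly and invoke Ville's inequality conditionally on ${\cal G}_m$, not unconditionally.
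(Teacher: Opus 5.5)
Your proposal is correct and follows essentially the same route as the paper. The paper conditions on the pre-certification history and the frozen checkpoint, shows each $L_{m,t}(r)$ is a nonnegative supermartingale under $p_m\le q^\star$ via the explicit identity $1+(p_m-q^\star)(r-q^\star)/(q^\star(1-q^\star))$ (where you use monotonicity of the affine one-step mean in $p_m$), passes to the mixture by Tonelli, applies Ville's inequality conditionally to get $\mathbb P(A_m\mid\text{past})\le\alpha_m$, and finishes with a union bound and $\mathbb E[\sum_m\alpha_m]\le\alpha$; your explicit $\sigma$-algebra ${\cal G}_m$ and your treatment of the $\alpha_m=0$ case only make that same argument more precise.
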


\begin{proof}
Fix a checkpoint $m$. We condition on the training history before the certification batch for checkpoint $m$, and on the frozen checkpoint $w_m$. Under this conditioning, $p_m$, $\Pi_m$, and $\alpha_m$ are fixed, and
\[
Z_{m,1},\dots,Z_{m,B}
\stackrel{\rm i.i.d.}{\sim}
{\rm Bernoulli}(p_m).
\]

Assume the null hypothesis
\[
H_{0,m}: p_m\le q^\star
\]
holds. Fix any $r\in[q^\star,1]$. We show that $\{L_{m,t}(r)\}_{t=0}^B$ is a nonnegative supermartingale under $H_{0,m}$. Let ${\cal F}_{m,t}$ be the sigma-field generated by the training history, the frozen checkpoint $w_m$, and the first $t$ certification outcomes
\[
Z_{m,1},\dots,Z_{m,t}.
\]
For $t<B$,
\[
\frac{L_{m,t+1}(r)}{L_{m,t}(r)}
=
\begin{cases}
r/q^\star, & Z_{m,t+1}=1,\\[0.5ex]
(1-r)/(1-q^\star), & Z_{m,t+1}=0.
\end{cases}
\]
Therefore,
\begin{align*}
\mathbb E\!\left[
\frac{L_{m,t+1}(r)}{L_{m,t}(r)}
\,\middle|\,
{\cal F}_{m,t}
\right]
&=
p_m\frac{r}{q^\star}
+
(1-p_m)\frac{1-r}{1-q^\star}.
\end{align*}
A direct calculation gives
\[
p_m\frac{r}{q^\star}
+
(1-p_m)\frac{1-r}{1-q^\star}
=
1+
\frac{(p_m-q^\star)(r-q^\star)}
{q^\star(1-q^\star)}.
\]
Since $p_m\le q^\star$ and $r\ge q^\star$, the final term is nonpositive. Hence
\[
\mathbb E\!\left[
\frac{L_{m,t+1}(r)}{L_{m,t}(r)}
\,\middle|\,
{\cal F}_{m,t}
\right]
\le 1.
\]
Thus
\[
\mathbb E[L_{m,t+1}(r)\mid{\cal F}_{m,t}]
\le
L_{m,t}(r),
\]
so $\{L_{m,t}(r)\}_{t=0}^B$ is a nonnegative supermartingale.

Because $M_{m,t}$ is a mixture of the nonnegative supermartingales $L_{m,t}(r)$, Tonelli's theorem gives
\[
\mathbb E[M_{m,t+1}\mid{\cal F}_{m,t}]
=
\int_{q^\star}^1
\mathbb E[L_{m,t+1}(r)\mid{\cal F}_{m,t}]
\,\Pi_m(dr)
\le
\int_{q^\star}^1 L_{m,t}(r)\,\Pi_m(dr)
=
M_{m,t}.
\]
Therefore $\{M_{m,t}\}_{t=0}^B$ is a nonnegative supermartingale with $M_{m,0}=1$. By Ville's inequality,
\[
\mathbb P\!\left(
\max_{1\le t\le B} M_{m,t}\ge \alpha_m^{-1}
\,\middle|\,
\text{past},w_m,H_{0,m}
\right)
\le
\alpha_m.
\]

Let
\[
A_m
\coloneqq
\left\{
p_m\le q^\star
\text{ and }
\max_{1\le t\le B}M_{m,t}\ge \alpha_m^{-1}
\right\}
\]
be the event that checkpoint $m$ is falsely certified. From the conditional bound above,
\[
\mathbb P(A_m\mid \text{past})\le \alpha_m.
\]
Taking expectations,
\[
\mathbb P(A_m)\le \mathbb E[\alpha_m].
\]
By the union bound,
\[
\mathbb P\!\left(\exists m\ge1:\ A_m\right)
\le
\sum_{m\ge1}\mathbb P(A_m)
\le
\mathbb E\!\left[\sum_{m\ge1}\alpha_m\right]
\le
\alpha.
\]
Thus, with probability at least $1-\alpha$, no checkpoint with $p_m\le q^\star$ is certified. Equivalently,
\[
p_m>q^\star
\qquad
\forall m\in{\cal C}.
\]
Taking $q^\star=1-\delta$ gives the stated $(\epsilon,\delta)$-correctness guarantee.
\end{proof}

\begin{remark}[What the guarantee certifies]
\label{rem:what_checkpoint_certifies}
The guarantee is simultaneous over all certified checkpoints:
\[
\mathbb P\!\left(
\forall m\in{\cal C},\ p_m>1-\delta
\right)\ge 1-\alpha.
\]
Thus the final returned model need not be the first certified model. It may be any checkpoint selected after training, as long as it belongs to ${\cal C}$. The result does not certify checkpoints that were never certified, nor does it imply that all checkpoints after the first certified checkpoint are correct.
\end{remark}

\begin{remark}[Relation to the original ICPE training-time argument]
\label{rem:comparison_original_icpe}
The original finite-hypothesis ICPE argument pools evidence across checkpoints and tests a global null of the form
\[
\sup_m p_m\le q^\star.
\]
That pooled martingale can show that training has produced evidence against the hypothesis that all checkpoints are bad. However, by itself it does not certify an arbitrary later checkpoint unless one adds a persistence or monotonicity assumption on the sequence $(p_m)$, as done in \citep{russo_learning_2025}.

The checkpointwise martingale above is different. It tests each frozen checkpoint separately, spends error probability across checkpoints, and therefore certifies the checkpoints that actually pass the test. This matches the practical workflow in which we may check a model, fail to certify it, continue training, and later certify a different checkpoint. No monotonicity or persistence assumption on the training trajectory is required.
\end{remark}

\begin{remark}[Choice of certification budgets]
\label{rem:alpha_budget_choices}
If all checkpoints are treated symmetrically and a maximum number $M_{\max}$ of checks is fixed in advance, the uniform allocation
\[
\alpha_m=\frac{\alpha}{M_{\max}}
\]
is the simplest choice. If later checkpoints are expected to be better, one can use the exponentially back-loaded allocation
\[
\alpha_m
=
\alpha\,
\frac{(\gamma-1)\gamma^{m-1}}{\gamma^{M_{\max}}-1},
\qquad \gamma>1.
\]
This preserves the same correctness theorem because the proof only uses
\[
\sum_m\alpha_m\le\alpha.
\]
The choice of schedule affects power, not validity. Larger $\alpha_m$ makes checkpoint $m$ easier to certify, so back-loading the budget gives more power to later checkpoints at the expense of earlier ones.
\end{remark}

\begin{remark}[Finite-grid implementation]
\label{rem:finite_grid_implementation}
In practice we use a finite grid
\[
q^\star\le r_1<\cdots<r_L\le 1
\]
with weights $\omega_1,\dots,\omega_L$. Then
\[
M_{m,t}
=
\sum_{\ell=1}^L
\omega_\ell
\left(\frac{r_\ell}{q^\star}\right)^{S_{m,t}}
\left(\frac{1-r_\ell}{1-q^\star}\right)^{t-S_{m,t}},
\]
which is easy to compute and remains a valid mixture martingale. The grid should place mass on plausible alternatives above $q^\star$, for example $r_\ell\in\{0.91,0.92,0.93,0.94,0.95,0.97\}$ when $q^\star=0.9$. Placing mass closer to $q^\star$ improves power for small margins but requires larger batches; placing mass farther above $q^\star$ improves power when the checkpoint is substantially better than the target.
\end{remark}

\subsection{Choice of inference model and reward modeling}
\label{app:theoretical_results:gaussian_inference_reward}

This subsection clarifies the relation between the ideal posterior quantities used in the Bellman characterization and the inference model used in the implementation. The ideal inference rule maximizes
\[
q_t(h,x)\coloneqq \mathbb P(L_\theta(x)\leq\epsilon|H_t=h),
\qquad
r_t(h)\coloneqq \sup_{x\in\X}q_t(h,x),
\]
and any measurable maximizer is denoted by
\[
I_t^\star(h)\in\argmax_{x\in\X}q_t(h,x).
\]
The implementation does not parameterize $I_t^\star$ directly. Instead, it uses a stochastic selector $\hat I_t(\cdot|h)$, represented by a diagonal Gaussian base distribution
\[
\hat I_t(\cdot|h)=\mathcal N(\mu_t(h),\Sigma_t(h)).
\] 
The  recommendation is the  mean $\mu_t(h)$, while the critic evaluates samples from $\hat I_t(\cdot|h)$. 

\paragraph{Sampled reward.}
For fixed $(h,\theta)$, define the success probability of the implemented stochastic selector by
\[
\hat r_t(h,\theta)
\coloneqq \mathbb{E}_{x\sim \hat I_t(\cdot\mid h)}[{\bf 1}\{x\in {\cal X}_\epsilon(\theta)\}].
\]
Its posterior average is
\[
\hat r_t(h)\coloneqq
\mathbb E[\hat r_t(h,\theta)|H_t=h].
\]
Given $M$ Monte Carlo samples $X_t^{(1)},\ldots,X_t^{(M)}\sim\hat I_t(\cdot|h)$, we use
\[
\hat r_{t,M}(h,\theta)
\coloneqq
\frac1M\sum_{m=1}^M
{\bf 1}\{X_t^{(m)}\in\X_\epsilon(\theta)\}.
\]

\begin{lemma}[Sampled stochastic-selector reward]
\label{lemma:sampled_selector_reward}
Conditionally on $(H_t=h,\theta)$,
\[
\mathbb E[\hat r_{t,M}(h,\theta)|H_t=h,\theta]
=
\hat r_t(h,\theta),
\qquad
{\rm Var}(\hat r_{t,M}(h,\theta)|H_t=h,\theta)\leq \frac1{4M}.
\]
Furthermore,
\[
\hat r_t(h)
=
\mathbb E[q_t(h,\hat X_t)|H_t=h]
\leq
r_t(h).
\]
\end{lemma}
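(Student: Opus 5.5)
The lemma has three parts: unbiasedness, the variance bound, and the identity-plus-inequality for the posterior average. I would treat the first two by working conditionally on $(H_t=h,\theta)$ and the last by integrating out $\theta$ under the posterior kernel $R_t(\cdot|h)$ of \cref{lemma:posterior_distribution_param}.

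\textbf{Unbiasedness and variance.} Fix $(h,\theta)$. The samples $X_t^{(1)},\dots,X_t^{(M)}$ are i.i.d.\ from $\hat I_t(\cdot|h)$, and their law does not depend on $\theta$ because the selector is a function of $h$ alone. Hence each indicator $\mathbf 1\{X_t^{(m)}\in\X_\epsilon(\theta)\}$ is Bernoulli with mean $\hat r_t(h,\theta)$. By linearity the average has mean $\hat r_t(h,\theta)$. By independence its variance is $\hat r_t(h,\theta)(1-\hat r_t(h,\theta))/M\le 1/(4M)$, using $p(1-p)\le 1/4$. Measurability of the indicator in $(x,\theta)$ comes from \cref{assumption:L_continuity_x_measurability_xtheta}.

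\textbf{Identity for $\hat r_t(h)$.} Let $\hat X_t\sim\hat I_t(\cdot|h)$ be drawn independently of $\theta$ given $H_t=h$; this is the key structural fact. Then the joint conditional law of $(\theta,\hat X_t)$ given $H_t=h$ is the product $R_t(\cdot|h)\otimes\hat I_t(\cdot|h)$. Write
\[
\hat r_t(h)=\int_\Theta\int_\X \mathbf 1\{L_\theta(x)\le\epsilon\}\,\hat I_t({\rm d}x|h)\,R_t({\rm d}\theta|h).
\]
The integrand is nonnegative and jointly measurable, so Tonelli lets us swap the order of integration. The inner integral in $\theta$ is exactly $q_t(h,x)$ by \cref{eq:posterior_success_prob}. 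This gives $\hat r_t(h)=\int_\X q_t(h,x)\,\hat I_t({\rm d}x|h)=\mathbb E[q_t(h,\hat X_t)|H_t=h]$.

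\textbf{Inequality.} Pointwise, $q_t(h,x)\le\sup_{x'\in\X}q_t(h,x')=r_t(h)$ by \cref{eq:rt_def}. Integrating against the probability measure $\hat I_t(\cdot|h)$ then gives $\hat r_t(h)\le r_t(h)$.

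\textbf{Main subtlety.} The support issue: the Gaussian is defined on $\mathbb R^m$ rather than on $\X$. I would handle this by regarding samples outside $\X$ as failures, equivalently by extending $L_\theta$ to $+\infty$ off $\X$. This only decreases $\hat r_t$, so the inequality still holds.
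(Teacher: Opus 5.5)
Your proposal is correct and follows the paper's proof: the mean and the $1/(4M)$ variance bound come from averaging i.i.d.\ Bernoulli indicators, the identity comes from conditional independence of $\hat X_t$ and $\theta$ given $H_t=h$ together with Fubini/Tonelli, giving $\int q_t(h,x)\,\hat I_t({\rm d}x|h)$, and the inequality comes from bounding this by $\sup_x q_t(h,x)=r_t(h)$. Your added remarks fill in details the paper leaves implicit: joint measurability for Tonelli, and Gaussian mass falling outside $\X$, which you count as failures so that $q_t(h,x)=0$ there.
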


\begin{proof}
The first claim follows because $\hat r_{t,M}$ is the average of $M$ Bernoulli random variables with success probability $\hat r_t(h,\theta)$. The variance bound follows from $p(1-p)\leq 1/4$. For the posterior identity,  $\hat  X_t$ is conditional independent of $\theta$ given $H_t=h$, thus
\[
\hat r_t(h)
=
\int_\X
\mathbb P(x\in\X_\epsilon(\theta)|H_t=h)\hat I_t(dx|h)
=
\int_\X q_t(h,x)\hat I_t(dx|h).
\]
The last display is bounded by $\sup_{x\in\X}q_t(h,x)=r_t(h)$.
\end{proof}

The reward $\hat r_t(h)$ evaluates the stochastic selector we actually use. It is therefore conservative relative to the ideal deterministic reward $r_t(h)$, which assumes access to the best posterior recommendation.

\paragraph{Second-moment robustness.}
The next proposition gives two sufficient conditions under which the sampled reward is close to the ideal one. The first bound uses concentration around the task-level zero-loss target $x_\theta^\star$. The second uses concentration around the posterior-optimal rule $I_t^\star(h)$.

For $z\in\X$, define
\[
D_t(h,z)
\coloneqq
\mathbb E[\|\hat X_t-z\|^2|H_t=h],
\qquad
\hat X_t\sim\hat I_t(\cdot|h).
\]

\begin{proposition}[Second-moment robustness and ideal-reward gap]
\label{prop:second_moment_robustness_and_gap}
Assume that, for posterior-a.e. $\theta$, there exists $\rho(\theta)>0$ such that
\[
B(x_\theta^\star,\rho(\theta))\cap\X\subseteq\X_\epsilon(\theta).
\]
where $B(x,r)$ is an euclidean  ball of radius $r$ around $x$.
Then
\[
\hat r_t(h,\theta)
\geq
1-\frac{D_t(h,x_\theta^\star)}{\rho(\theta)^2}.
\]
Moreover, assume $\hat X_t$ is conditionally independent of $\theta$ given $H_t=h$. If $I_t^\star(h)\in\argmax_{x\in\X}q_t(h,x)$ and $q_t(h,\cdot)$ is $L_t(h)$-Lipschitz on $B(I_t^\star(h),R_t(h))\cap\X$, then
\[
0\leq r_t(h)-\hat r_t(h)
\leq
L_t(h)\sqrt{D_t(h,I_t^\star(h))}
+
\frac{D_t(h,I_t^\star(h))}{R_t(h)^2}.
\]
If $q_t(h,\cdot)$ is globally $L_t(h)$-Lipschitz on $\X$, the second term is unnecessary.
\end{proposition}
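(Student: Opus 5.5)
The plan has two parts, one per claim. In both, $\hat X_t\sim\hat I_t(\cdot|h)$ is drawn from the implemented selector.

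\emph{First bound (pointwise in $\theta$).} Fix $h$ and a posterior-typical $\theta$ for which the ball inclusion holds. Since $B(x_\theta^\star,\rho(\theta))\cap\X\subseteq\X_\epsilon(\theta)$, the failure event $\{\hat X_t\notin\X_\epsilon(\theta)\}$ is contained in $\{\|\hat X_t-x_\theta^\star\|\geq\rho(\theta)\}$. Here $\hat X_t$ takes values in $\X$ (or we restrict the selector to $\X$, as the definition of $\hat r_t(h,\theta)$ implicitly does). Markov's inequality applied to $\|\hat X_t-x_\theta^\star\|^2$ then gives
\[
1-\hat r_t(h,\theta)\le \mathbb P\bigl(\|\hat X_t-x_\theta^\star\|^2\ge\rho(\theta)^2\bigr)\le D_t(h,x_\theta^\star)/\rho(\theta)^2 .
\]
Here $D_t(h,x_\theta^\star)$ is read with $z=x_\theta^\star$ held fixed, because $\hat X_t$ does not depend on $\theta$ once $h$ is given. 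This is a one-line Chebyshev-type argument, the same in spirit as \cref{lem:markov_lower_bound}.

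\emph{Second bound (ideal-reward gap).}
\begin{enumerate}
\item By \cref{lemma:sampled_selector_reward}, conditional independence gives $\hat r_t(h)=\int_\X q_t(h,x)\,\hat I_t({\rm d}x|h)$. This is at most $r_t(h)$, which is the lower bound $0\le r_t(h)-\hat r_t(h)$.
\item Write $x^\star=I_t^\star(h)$, $R=R_t(h)$, $L=L_t(h)$, $D=D_t(h,x^\star)$. Then
\[
r_t(h)-\hat r_t(h)=\mathbb E\bigl[q_t(h,x^\star)-q_t(h,\hat X_t)\bigr].
\]
Split this expectation over the events $E=\{\|\hat X_t-x^\star\|<R\}$ and $E^c$.
\item On $E$, the point $\hat X_t$ lies in $B(x^\star,R)\cap\X$, where $q_t(h,\cdot)$ is $L$-Lipschitz. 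So the integrand is at most $L\|\hat X_t-x^\star\|$. Jensen/Cauchy--Schwarz bounds its expectation by $L\sqrt{D}$.
\item On $E^c$, use $0\le q_t\le1$: the integrand is at most $1$. Markov on the squared distance gives $\mathbb P(E^c)\le D/R^2$.
\item If $q_t(h,\cdot)$ is globally $L$-Lipschitz, skip the split. The whole expectation is then at most $L\,\mathbb E\|\hat X_t-x^\star\|\le L\sqrt D$.
\end{enumerate}

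The only delicate points are measure-theoretic bookkeeping rather than estimates. First, the phrase ``posterior-a.e.\ $\theta$'' means the first bound holds for $R_t(\cdot|h)$-a.e.\ $\theta$. Second, Step 3 needs measurability of $x\mapsto q_t(h,x)$ so that the integral is well defined; \cref{lemma:uppersemicontinuity_q_reward} supplies this. Third, at the boundary $\|\hat X_t-x^\star\|=R$ a closed ball is needed; either choice of open or closed ball works after adjusting the split to the boundary convention. I expect no real obstacle beyond stating these conventions cleanly.
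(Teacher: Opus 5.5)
Your proposal is correct and follows the paper's proof essentially step for step. For the first bound you use the margin inclusion plus Markov on $\|\hat X_t-x_\theta^\star\|^2$. For the second you use the identity $\hat r_t(h)=\mathbb E[q_t(h,\hat X_t)\mid H_t=h]$ from \cref{lemma:sampled_selector_reward}, then split on the ball around $I_t^\star(h)$ (Lipschitz plus Jensen inside, the trivial bound plus Markov outside), with no split needed in the globally Lipschitz case.

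One correction to your bookkeeping: the definition of $\hat r_t(h,\theta)$ does not restrict the selector to $\X$. Samples outside $\X$ simply count as failures, since $\X_\epsilon(\theta)\subseteq\X$. So both bounds genuinely require $\hat X_t\in\X$ almost surely, for example via the projection used in the implementation; the paper's proof makes this assumption tacitly as well.
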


\begin{proof}
For the first claim, the margin assumption gives
\[
\{\|\hat X_t-x_\theta^\star\|\leq \rho(\theta)\}
\subseteq
\{\hat X_t\in\X_\epsilon(\theta)\}.
\]
Therefore
\[
1-\hat r_t(h,\theta)
\leq
\mathbb P(\|\hat X_t-x_\theta^\star\|>\rho(\theta)|H_t=h,\theta).
\]
Since the law of $\hat X_t$ depends on $h$ but not on $\theta$ conditional on $h$, Markov's inequality gives
\[
1-\hat r_t(h,\theta)
\leq
\frac{D_t(h,x_\theta^\star)}{\rho(\theta)^2}.
\]

For the second claim, by \cref{lemma:sampled_selector_reward},
\[
\hat r_t(h)
=
\mathbb E[q_t(h,\hat X_t)|H_t=h].
\]
Since $I_t^\star(h)$ maximizes $q_t(h,\cdot)$,
\[
r_t(h)-\hat r_t(h)
=
\mathbb E[q_t(h,I_t^\star(h))-q_t(h,\hat X_t)|H_t=h]\geq0.
\]
Let
\[
E_h\coloneqq
\{\|\hat X_t-I_t^\star(h)\|\leq R_t(h)\}.
\]
On $E_h$, Lipschitzness gives
\[
q_t(h,I_t^\star(h))-q_t(h,\hat X_t)
\leq
L_t(h)\|\hat X_t-I_t^\star(h)\|.
\]
On $E_h^c$, the same difference is at most $1$. Hence
\[
r_t(h)-\hat r_t(h)
\leq
L_t(h)\mathbb E[\|\hat X_t-I_t^\star(h)\||H_t=h]
+
\mathbb P(E_h^c|H_t=h).
\]
Jensen's inequality gives
\[
\mathbb E[\|\hat X_t-I_t^\star(h)\||H_t=h]
\leq
\sqrt{D_t(h,I_t^\star(h))},
\]
and Markov's inequality gives
\[
\mathbb P(E_h^c|H_t=h)
\leq
\frac{D_t(h,I_t^\star(h))}{R_t(h)^2}.
\]
If $q_t(h,\cdot)$ is globally Lipschitz, take $E_h=\X$ and remove the last term.
\end{proof}

The first bound gives the margin interpretation of the sampled reward: if the stochastic selector has small second moment around $x_\theta^\star$, then its samples are likely to be $\epsilon$-optimal. The second bound is different: it compares the stochastic selector to the ideal posterior reward and is small when the selector concentrates around $I_t^\star(h)$ and $q_t(h,\cdot)$ is locally regular. Thus the sampled reward evaluates the implemented selector through the same event used for correctness; it does not assume that the NLL mean is exactly optimal.

\paragraph{Gaussian NLL.}
We next characterize the population Gaussian NLL objective. Let
\[
Z\coloneqq x_\theta^\star
\]
be the selected zero-loss target, viewed as a random variable under the posterior law $\theta|H_t=h$. For $\mu\in\mathbb R^d$ and $\Sigma\in\mathbb S_{++}^d$, define
\[
\mathcal L_h(\mu,\Sigma)
\coloneqq
\mathbb E[-\log \mathcal N(Z;\mu,\Sigma)|H_t=h].
\]
Let
\[
\mu_t^{\rm NLL}(h)
\coloneqq
\mathbb E[Z|H_t=h],
\qquad
\Sigma_t^{\rm NLL}(h)
\coloneqq
{\rm Cov}(Z|H_t=h).
\]

\begin{proposition}[Gaussian NLL moment projection]
\label{prop:gaussian_nll_moment_projection}
Assume $Z|H_t=h$ has finite second moment and positive definite covariance. Then the unique minimizer of $\mathcal L_h(\mu,\Sigma)$ over $\mu\in\mathbb R^d$ and $\Sigma\in\mathbb S_{++}^d$ is
\[
\mu=\mu_t^{\rm NLL}(h),
\qquad
\Sigma=\Sigma_t^{\rm NLL}(h).
\]
If the covariance is restricted to be diagonal, the optimal mean is still $\mu_t^{\rm NLL}(h)$ and the optimal diagonal entries are the posterior coordinate variances of $Z$.
\end{proposition}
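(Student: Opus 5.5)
The plan is to expand the Gaussian negative log-likelihood and split it into a mean part and a covariance part. For fixed $(\mu,\Sigma)$,
\[
-\log\mathcal N(Z;\mu,\Sigma)=\tfrac d2\log(2\pi)+\tfrac12\log\det\Sigma+\tfrac12(Z-\mu)^\top\Sigma^{-1}(Z-\mu).
\]
Finite second moment makes every term integrable. Writing $m=\mu_t^{\rm NLL}(h)$ and $S=\Sigma_t^{\rm NLL}(h)$, I would decompose $Z-\mu=(Z-m)+(m-\mu)$. The cross term has zero conditional mean, so
\[
\mathcal L_h(\mu,\Sigma)=\tfrac d2\log(2\pi)+\tfrac12\log\det\Sigma+\tfrac12\operatorname{tr}(\Sigma^{-1}S)+\tfrac12(m-\mu)^\top\Sigma^{-1}(m-\mu).
\]

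Since $\Sigma^{-1}\succ0$, the last term is nonnegative for every $\Sigma$ and vanishes iff $\mu=m$. So for any fixed $\Sigma$ the unique optimal mean is $m$. This holds equally when $\Sigma$ is restricted to be diagonal, which gives the claim about the mean in both the full and the diagonal case.

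It remains to minimize $F(\Sigma)=\log\det\Sigma+\operatorname{tr}(\Sigma^{-1}S)$ over $\mathbb S_{++}^d$. Set $P=S^{-1/2}\Sigma S^{-1/2}\succ0$, which is legitimate because $S$ is positive definite. Then $F(\Sigma)=\log\det S+\log\det P+\operatorname{tr}(P^{-1})$. In terms of the eigenvalues $p_i>0$ of $P$ this equals $\log\det S+\sum_i(\log p_i+1/p_i)$. The scalar function $x\mapsto\log x+1/x$ is uniquely minimized at $x=1$, so $F$ is uniquely minimized when all $p_i=1$, i.e.\ $P=I$ and $\Sigma=S$. This step, using the change of variables to get uniqueness cleanly rather than first-order conditions on a non-convex function of $\Sigma$, is the main obstacle. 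Joint uniqueness of $(\mu,\Sigma)$ then follows, since $\mathcal L_h(\mu,\Sigma)\ge\mathcal L_h(m,\Sigma)\ge\mathcal L_h(m,S)$ with equality forcing $\mu=m$ and $\Sigma=S$.

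For the diagonal restriction $\Sigma=\operatorname{diag}(s_1,\dots,s_d)$, we have $\operatorname{tr}(\Sigma^{-1}S)=\sum_i S_{ii}/s_i$, so the objective separates as $\sum_i(\log s_i+S_{ii}/s_i)$. Each term is minimized uniquely at $s_i=S_{ii}=\operatorname{Var}(Z_i\mid H_t=h)$, which is positive because $S\succ0$. Hence the optimal diagonal entries are the posterior coordinate variances of $Z$.
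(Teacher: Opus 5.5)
Your proof is correct, and its overall structure matches the paper's: for fixed $\Sigma$ you minimize over $\mu$ to get the posterior mean, which reduces the problem to minimizing $\log\det\Sigma+\operatorname{tr}(\Sigma^{-1}S)$, and you handle the diagonal case coordinatewise.

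The genuine difference is the covariance step. The paper sets the first-order condition $\Sigma^{-1}-\Sigma^{-1}S\Sigma^{-1}=0$, which gives $\Sigma=S$. It then appeals to strict convexity in the natural parameters to conclude that this stationary point is the unique global minimizer. You instead whiten with $P=S^{-1/2}\Sigma S^{-1/2}$ and reduce to the scalar function $x\mapsto\log x+1/x$ on the eigenvalues.

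The paper's route is shorter and reflects exponential-family structure, so it would carry over to other likelihood families. However, it leaves implicit the reparametrization (e.g.\ to $\Lambda=\Sigma^{-1}$, where $-\log\det\Lambda+\operatorname{tr}(\Lambda S)$ is strictly convex). That step is what actually certifies the stationary point as global, since the objective is not convex in $\Sigma$ itself.

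Your argument is fully elementary and self-contained. It proves global optimality and uniqueness directly, with no matrix calculus and no convexity claim. You also make explicit the vanishing cross term in the mean step, which the paper asserts without comment. Finally, your chain $\mathcal L_h(\mu,\Sigma)\ge\mathcal L_h(m,\Sigma)\ge\mathcal L_h(m,S)$ delivers joint uniqueness cleanly.
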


\begin{proof}
Up to an additive constant,
\[
\mathcal L_h(\mu,\Sigma)
=
\frac12\log\det\Sigma
+
\frac12\mathbb E[(Z-\mu)^\top\Sigma^{-1}(Z-\mu)|H_t=h].
\]
For fixed $\Sigma$, the second term is minimized at
$\mu=\mathbb E[Z|H_t=h]$. With this choice, the objective becomes
\[
\frac12\log\det\Sigma
+
\frac12{\rm tr}(\Sigma^{-1}\Sigma_t^{\rm NLL}(h)).
\]
The first-order condition in $\Sigma$ gives
\[
\Sigma=\Sigma_t^{\rm NLL}(h),
\]
and strict convexity in the natural parameters gives uniqueness. The diagonal case follows by the same calculation coordinate-wise.
\end{proof}

Thus Gaussian NLL performs a moment projection of the posterior law of $x_\theta^\star$: it matches the first two posterior moments, or the coordinate variances in the diagonal case. This does not imply that $\mu_t^{\rm NLL}(h)$ maximizes $q_t(h,\cdot)$ in general. The NLL objective learns the posterior target law, while optimality for stopping is defined by the $\epsilon$-success probability.

\paragraph{Near-optimality of the NLL mean.}
The NLL objective does not directly maximize $q_t(h,x)$. It learns the posterior mean of the selected target $x_\theta^\star$ under the Gaussian moment projection. The next result gives a simple condition under which this mean is nevertheless close to the ideal rule in posterior success probability. The condition is posterior concentration relative to the margin of the success set, not exact equality between the posterior mean and the maximizer of $q_t(h,\cdot)$.

\begin{proposition}[Near-optimality of the NLL mean]
\label{prop:nll_mean_near_optimality}
Fix a history $h$ and write
\[
Z\coloneqq x_\theta^\star,
\qquad
\mu_t^{\rm NLL}(h)\coloneqq \mathbb E[Z|H_t=h],
\qquad
V_t(h)\coloneqq \mathbb E[\|Z-\mu_t^{\rm NLL}(h)\|^2|H_t=h].
\]
Assume that there exists $\rho>0$ such that, posterior-a.s.,
$
B(x_\theta^\star,\rho)\cap\X\subseteq\X_\epsilon(\theta)$, where $B(x,r)$ is an Euclidean ball of radius $r$ around $x$.
Let $\hat \mu_t(h)\in\X$ be the deployed mean recommendation and assume
\[
\|\hat \mu_t(h)-\mu_t^{\rm NLL}(h)\|\leq e_t(h).
\]
Then
\[
0\leq r_t(h)-q_t(h,\hat \mu_t(h))
\leq
\frac{V_t(h)+e_t(h)^2}{\rho^2}.
\]
Moreover, if $I_t^\star(h)\in\argmax_{x\in\X}q_t(h,x)$, $q_t(h,\cdot)$ is $L_t(h)$-Lipschitz on $B(I_t^\star(h),R_t(h))\cap\X$, and
$\|\hat \mu_t(h)-I_t^\star(h)\|\leq R_t(h)$, then
\[
0\leq r_t(h)-q_t(h,\hat \mu_t(h))
\leq
L_t(h)\|\hat \mu_t(h)-I_t^\star(h)\|.
\]
\end{proposition}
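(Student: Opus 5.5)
The plan is to compare the deployed mean $\hat\mu_t(h)$ to the optimal value $r_t(h)$ by lower bounding $q_t(h,\hat\mu_t(h))$ directly, without comparing it to the maximizer. The lower bound $0\leq r_t(h)-q_t(h,\hat\mu_t(h))$ is immediate, since $r_t(h)=\sup_{x\in\X}q_t(h,x)$ and $\hat\mu_t(h)\in\X$. For the first upper bound we use $r_t(h)\leq 1$, so it suffices to show
\[
1-q_t(h,\hat\mu_t(h))\leq \frac{V_t(h)+e_t(h)^2}{\rho^2}.
\]
By the margin assumption, $\|\hat\mu_t(h)-x_\theta^\star\|\leq\rho$ together with $\hat\mu_t(h)\in\X$ implies $\hat\mu_t(h)\in\X_\epsilon(\theta)$, posterior-a.s. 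Hence $1-q_t(h,\hat\mu_t(h))\leq \mathbb P(\|Z-\hat\mu_t(h)\|>\rho\mid H_t=h)$. Since $\hat\mu_t(h)$ is a deterministic function of $h$, Markov's inequality on the squared distance bounds this by $\mathbb E[\|Z-\hat\mu_t(h)\|^2\mid H_t=h]/\rho^2$.

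The main step is to evaluate that second moment with the bias--variance decomposition about the posterior mean:
\[
\mathbb E[\|Z-\hat\mu_t(h)\|^2\mid H_t=h]=V_t(h)+\|\mu_t^{\rm NLL}(h)-\hat\mu_t(h)\|^2 .
\]
The cross term vanishes because $\mathbb E[Z-\mu_t^{\rm NLL}(h)\mid H_t=h]=0$ and $\hat\mu_t(h)$ is $h$-measurable. Bounding the second term by $e_t(h)^2$ gives the claimed bound, and this is exactly why the constant is $1$ rather than $2$. This needs $Z$ to have a finite posterior second moment. That holds because $Z=x_\theta^\star\in\X$ and $\X$ is compact. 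The same observation also justifies using $\mu_t^{\rm NLL}(h)$ as given by \cref{prop:gaussian_nll_moment_projection}, although only its definition as a conditional mean is actually used.

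For the second bound, write $r_t(h)=q_t(h,I_t^\star(h))$, which holds by \cref{prop:optimal_inference_rule}. The assumption $\|\hat\mu_t(h)-I_t^\star(h)\|\leq R_t(h)$, together with $\hat\mu_t(h)\in\X$, puts both points in $B(I_t^\star(h),R_t(h))\cap\X$. Local Lipschitzness then gives $q_t(h,I_t^\star(h))-q_t(h,\hat\mu_t(h))\leq L_t(h)\|\hat\mu_t(h)-I_t^\star(h)\|$, and nonnegativity follows as before.

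The only delicate point is measurability. The first bound needs the event $\{\|Z-\hat\mu_t(h)\|\leq\rho\}$ to be measurable, which holds because $\theta\mapsto x_\theta^\star$ is the continuous selector assumed in the problem setting. Everything else is elementary, so I expect no real obstacle beyond getting the bias--variance identity right and stating the inclusion from the margin assumption carefully.
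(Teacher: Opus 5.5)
Your proposal is correct and follows the paper's proof essentially step for step. You use the margin inclusion and Markov's inequality on $\|Z-\hat\mu_t(h)\|^2$, then the bias--variance identity $\mathbb E[\|Z-\hat\mu_t(h)\|^2\mid H_t=h]=V_t(h)+\|\hat\mu_t(h)-\mu_t^{\rm NLL}(h)\|^2$ together with $r_t(h)\le 1$ for the first bound, and $r_t(h)=q_t(h,I_t^\star(h))$ with local Lipschitzness for the second.
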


\begin{proof}
The margin assumption implies
\[
\{\|\hat \mu_t(h)-x_\theta^\star\|\leq \rho\}
\subseteq
\{\hat \mu_t(h)\in\X_\epsilon(\theta)\}.
\]
Therefore
\[
q_t(h,\hat \mu_t(h))
\geq
1-\mathbb P(\|\hat \mu_t(h)-Z\|>\rho|H_t=h).
\]
By Markov's inequality,
\[
q_t(h,\hat \mu_t(h))
\geq
1-
\frac{\mathbb E[\|\hat \mu_t(h)-Z\|^2|H_t=h]}{\rho^2}.
\]
Since $\hat \mu_t(h)$ is deterministic conditional on $H_t=h$,
\[
\mathbb E[\|\hat \mu_t(h)-Z\|^2|H_t=h]
=
V_t(h)+\|\hat \mu_t(h)-\mu_t^{\rm NLL}(h)\|^2
\leq
V_t(h)+e_t(h)^2.
\]
The first claim follows because $r_t(h)\leq1$. For the second claim, use
$r_t(h)=q_t(h,I_t^\star(h))$ and Lipschitzness:
\[
r_t(h)-q_t(h,\hat \mu_t(h))
=
q_t(h,I_t^\star(h))-q_t(h,\hat \mu_t(h))
\leq
L_t(h)\|\hat \mu_t(h)-I_t^\star(h)\|.
\]
\end{proof}
The proposition clarifies what is, and is not, implied by the Gaussian NLL. In general,
$\mu_t^{\rm NLL}(h)=\mathbb E[x_\theta^\star|H_t=h]$ need not maximize
$q_t(h,\cdot)$: if the posterior law of $x_\theta^\star$ is multimodal, the
mean can lie between modes. The first bound gives the guarantee tied to NLL
training. If the posterior uncertainty on $x_\theta^\star$ is small relative to
the margin of $\X_\epsilon(\theta)$, then the deployed mean is near-optimal in
posterior success probability. The Lipschitz bound is different: it says that
any deterministic recommendation close to an ideal maximizer $I_t^\star(h)$ is
near-optimal when $q_t(h,\cdot)$ is locally regular. Thus the Lipschitz argument
also applies to the NLL mean, but only if one separately controls its distance
to $I_t^\star(h)$.

For localization losses $L_\theta(x)=\|x-x_\theta^\star\|$, the margin condition
holds with $\rho(\theta)=\epsilon$. For smooth value-gap losses, it follows from
a local upper curvature bound near $x_\theta^\star$. For example, if
\[
f_\theta(x)-f_\theta(x_\theta^\star)
\leq
\frac{M}{2}\|x-x_\theta^\star\|^2
\]
near $x_\theta^\star$, then
\[
B\left(x_\theta^\star,\sqrt{\frac{2\epsilon}{M}}\right)
\cap\X
\subseteq
\X_\epsilon(\theta).
\]

\begin{remark}[Exact alignment under symmetric localization]
In special cases the NLL mean is exactly Bayes-optimal. Suppose the success sets
are translates of a fixed centrally symmetric convex set, i.e.,
\[
\X_\epsilon(\theta)=\{x\in\X:x-x_\theta^\star\in S_\epsilon\},
\]
and suppose $x_\theta^\star|H_t=h$ is Gaussian with mean
$\mu_t^{\rm NLL}(h)$, with $\mu_t^{\rm NLL}(h)\in\X$. Then
\[
\mu_t^{\rm NLL}(h)\in\argmax_{x\in\X}q_t(h,x).
\]
Indeed, $q_t(h,x)$ is the posterior probability that a Gaussian random variable
falls in a translate of $S_\epsilon$, and this probability is maximized when
the translate is centered at the Gaussian mean. Without this type of symmetry,
the posterior mean, posterior mode, and maximizer of $q_t(h,\cdot)$ can differ.
\end{remark}

\subsection{Robustness to prior misspecification}
\label{sec:prior_misspecification}

We now study deployment under a misspecified task prior. The controller \cicpe{} is trained under a prior $\nu$ on $\Theta$ and then frozen. At deployment, the same learned inference network, critic, action rule, cost parameter, and stopping rule are used, but the environment parameter is drawn from a different prior $\nu'$. Thus, conditional on a fixed environment $\theta$, the trajectory law $\mathbb P_\theta^{\pi_\phi}$ is unchanged; only the outer averaging measure over $\theta$ changes.

Let
\[
s(\theta)
\coloneqq
\mathbb P_\theta^{\pi_\phi}
\bigl(\mu_\phi(H_{\hat\tau})\in \X_\epsilon(\theta)\bigr)
\in[0,1],
\qquad
t(\theta)
\coloneqq
\mathbb E_\theta^{\pi_\phi}[\hat\tau]\in[0,\infty].
\]
For a prior $\eta$ on $\Theta$, define
\[
p^{(\eta)}\coloneqq \int_\Theta s(\theta)\,\eta(d\theta),
\qquad
T^{(\eta)}\coloneqq \int_\Theta t(\theta)\,\eta(d\theta).
\]
The training-prior guarantee is
\[
p^{(\nu)}\ge 1-\delta.
\]
The goal is to understand how $p^{(\nu')}$ and $T^{(\nu')}$ change when $\nu$ is replaced by $\nu'$.

\begin{assumption}[Measurable performance profiles]
\label{assump:prior_shift_profiles}
The maps $s:\Theta\to[0,1]$ and $t:\Theta\to[0,\infty]$ are Borel measurable. Moreover, $t\in L^1(\eta)$ for every prior $\eta$ considered below.
\end{assumption}

\begin{lemma}[Prior shift as reweighting]
\label{lem:prior_shift_reweighting}
Under \cref{assump:prior_shift_profiles}, correctness and stopping time under any prior $\eta$ are given by
\[
p^{(\eta)}=\mathbb E_{\theta\sim\eta}[s(\theta)],
\qquad
T^{(\eta)}=\mathbb E_{\theta\sim\eta}[t(\theta)].
\]
If $\eta\ll\nu$ with density ratio $w_\eta=d\eta/d\nu$, then
\[
p^{(\eta)}=\mathbb E_{\nu}[w_\eta s],
\qquad
T^{(\eta)}=\mathbb E_{\nu}[w_\eta t].
\]
\end{lemma}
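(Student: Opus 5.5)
The plan is to show that, once $\theta$ is fixed, the trajectory law does not depend on the prior, so any prior $\eta$ enters only through the outer integral over $\Theta$. The first step is to write the joint law of $(\theta,H_\infty)$ under deployment prior $\eta$ as $\eta({\rm d}\theta)\,\mathbb P_\theta^{\pi_\phi}({\rm d}h)$. This uses the Ionescu--Tulcea construction from the problem-modeling subsection, which is applied for each fixed $\theta$ with the frozen policy, stopping rule and recommender. None of these components involves $\eta$; they are deterministic or randomized measurable maps of the history. The next step is to apply the disintegration (tower rule) with respect to $\theta$. For correctness this gives
\[
\mathbb P^{(\eta)}\bigl(\mu_\phi(H_{\hat\tau})\in\X_\epsilon(\theta)\bigr)=\int_\Theta \mathbb P_\theta^{\pi_\phi}\bigl(\mu_\phi(H_{\hat\tau})\in\X_\epsilon(\theta)\bigr)\,\eta({\rm d}\theta)=\mathbb E_{\theta\sim\eta}[s(\theta)].
\]
Fubini--Tonelli justifies the interchange because the integrand is a bounded indicator and is jointly measurable. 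Joint measurability of the event follows from \cref{assumption:L_continuity_x_measurability_xtheta} together with measurability of $\mu_\phi$ and of $\hat\tau$. For the stopping time the same computation applies to the nonnegative variable $\hat\tau$, and Tonelli needs no integrability. Together with the assumption $t\in L^1(\eta)$ this yields $T^{(\eta)}=\mathbb E_\eta[t]$, which is finite.

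The second claim is a change of measure. If $\eta\ll\nu$, the Radon--Nikodym theorem gives $w_\eta={\rm d}\eta/{\rm d}\nu\ge0$. For any nonnegative measurable $f$ we then have $\int f\,{\rm d}\eta=\int f w_\eta\,{\rm d}\nu$. Applying this with $f=s$, which is bounded, and with $f=t$, which is nonnegative and possibly infinite-valued but integrable by \cref{assump:prior_shift_profiles}, yields both identities.

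I expect the only delicate point to be measurability, specifically that $\theta\mapsto s(\theta)$ and $\theta\mapsto t(\theta)$ are Borel and that the joint kernel $\theta\mapsto\mathbb P_\theta^{\pi_\phi}$ is measurable. \cref{assump:prior_shift_profiles} grants the former outright. For the latter I would rely on the measurability of the kernels $P_{\theta,t}$ and $\rho_\theta$ in $\theta$, which is implicit in defining the mixture law $\mathbf P_t^\pi$ as above. Given these, every remaining step is a direct application of the tower rule and Radon--Nikodym.
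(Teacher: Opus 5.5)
Your proposal is correct and follows essentially the same route as the paper: factor the joint law under $\eta$ as $\eta(\mathrm d\theta)\,\mathbb P_\theta^{\pi_\phi}(\mathrm dh)$, apply Tonelli to the success indicator and to the nonnegative $\hat\tau$ to get $p^{(\eta)}=\mathbb E_\eta[s]$ and $T^{(\eta)}=\mathbb E_\eta[t]$, and then change measure via $\mathrm d\eta=w_\eta\,\mathrm d\nu$. Your extra remarks on the joint measurability of the success event and of $\theta\mapsto\mathbb P_\theta^{\pi_\phi}$ only make explicit what the paper's short proof leaves implicit.
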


\begin{proof}
Under prior $\eta$, the joint law factors as
\[
\eta(d\theta)\,\mathbb P_\theta^{\pi_\phi}(dh).
\]
Applying Tonelli's theorem to the success indicator gives
\[
p^{(\eta)}
=
\int_\Theta
\mathbb P_\theta^{\pi_\phi}
\bigl(\mu_\phi(H_{\hat\tau})\in \X_\epsilon(\theta)\bigr)
\,\eta(d\theta)
=
\int_\Theta s(\theta)\,\eta(d\theta).
\]
The identity for $T^{(\eta)}$ follows similarly from Tonelli applied to the nonnegative random variable $\hat\tau$. The reweighting identities follow by the change of measure $d\eta=w_\eta d\nu$.
\end{proof}

The next proposition gives several complementary ways of transferring the training-prior guarantee $p^{(\nu)}\ge 1-\delta$ to a deployment prior $\nu'$. The density-ratio and $\chi^2$ bounds are useful when $\nu'\ll\nu$; the total-variation and good-set bounds do not require absolute continuity.

\begin{proposition}[Success probability under prior shift]
\label{prop:prior_shift_success_compact}
Assume \cref{assump:prior_shift_profiles}, and write $p=p^{(\nu)}$, $p'=p^{(\nu')}$. Then:
\begin{enumerate}
    \item[\rm (a)] If $\nu'\ll\nu$ and $C=\|d\nu'/d\nu\|_\infty$, then
    $
    1-p'\le C(1-p)
    $.
    In particular, $p\ge 1-\delta$ implies $p'\ge 1-C\delta$.

    \item[\rm (b)] If $\nu'\ll\nu$ and $\chi^2(\nu'\|\nu)<\infty$, then
    $
    |p'-p|
    \le
    \sqrt{\chi^2(\nu'\|\nu)\operatorname{Var}_\nu(s)}.
    $.
    Hence $p\ge 1-\delta$ implies
    \[
    p'\ge 1-\delta-\sqrt{\chi^2(\nu'\|\nu)\delta}.
    \]

    \item[\rm (c)] For arbitrary priors,
    $
    |p'-p|\le \operatorname{TV}(\nu',\nu)
    $.
    Consequently,
    \[
    p'\ge 1-\delta-\operatorname{TV}(\nu',\nu).
    \]
    By Pinsker's inequality, this also gives
    \[
    p'\ge 1-\delta-\sqrt{\frac12 D_{\mathrm{KL}}(\nu'\|\nu)}
    \]
    whenever $D_{\mathrm{KL}}(\nu'\|\nu)<\infty$, and the analogous bound with the KL arguments reversed.

    \item[\rm (d)] Let $G\subseteq\Theta$ be measurable. If $s(\theta)\ge 1-\eta$ on $G$, then
    \[
    p'\ge (1-\eta)\nu'(G).
    \]
    Moreover, if $p\ge 1-\delta$ and
    \[
    G_\eta\coloneqq\{\theta:s(\theta)\ge 1-\eta\},
    \]
    then
    \[
    \nu(G_\eta)\ge 1-\frac{\delta}{\eta},
    \qquad
    p'\ge (1-\eta)\nu'(G_\eta).
    \]
\end{enumerate}
\end{proposition}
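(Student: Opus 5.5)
The plan is to reduce every part to \cref{lem:prior_shift_reweighting}, which expresses $p=\mathbb E_\nu[s]$ and $p'=\mathbb E_{\nu'}[s]$ with $s\in[0,1]$ a fixed measurable profile. The four claims then become elementary inequalities for integrals of a bounded function under two measures. Throughout I work with the failure profile $1-s\in[0,1]$.

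\textbf{Part (a).} I will write
\[
1-p'=\mathbb E_\nu[w(1-s)],\qquad w=d\nu'/d\nu .
\]
Bounding $w\le C$ pointwise, which is legitimate because $1-s\ge0$, gives $1-p'\le C(1-p)$. Substituting $1-p\le\delta$ then yields the stated consequence.

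\textbf{Part (b).} I will use $\mathbb E_\nu[w]=1$ to center the difference:
\[
p'-p=\mathbb E_\nu[(w-1)(s-\mathbb E_\nu s)].
\]
Cauchy--Schwarz then gives $|p'-p|\le\sqrt{\mathbb E_\nu[(w-1)^2]\operatorname{Var}_\nu(s)}$, and the first factor is $\chi^2(\nu'\|\nu)$. For the corollary I need $\operatorname{Var}_\nu(s)\le\delta$. This follows because $s\in[0,1]$ implies
\[
\operatorname{Var}_\nu(s)\le\mathbb E_\nu[s]\,(1-\mathbb E_\nu[s])\le 1-p\le\delta .
\]

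\textbf{Part (c).} I will take a common dominating measure such as $\nu+\nu'$, with densities $f,f'$. Then
\[
p'-p=\int s\,(f'-f),
\]
and since $s\in[0,1]$ this integral is at most $\int (f'-f)_+=\operatorname{TV}$. The same bound applied to $1-s$ controls the other sign. The KL versions follow directly from Pinsker, and the reversed version from the symmetry of TV.

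\textbf{Part (d).} I will use $s\ge(1-\eta)\mathbf 1_G$ and integrate against $\nu'$. For the bound on $\nu(G_\eta)$, I will apply Markov's inequality to $1-s$ under $\nu$:
\[
\nu(G_\eta^c)=\nu(1-s>\eta)\le\frac{1-p}{\eta}\le\frac{\delta}{\eta}.
\]
The displayed Markov bound uses the strict inequality $1-s>\eta$, which is exactly the complement of $G_\eta$.

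The main point needing care is part (b): it must be justified that $(w-1)$ is in $L^2(\nu)$ when $\chi^2<\infty$, and the variance-by-mean bound for $[0,1]$-valued variables must be invoked. Everything else is routine.
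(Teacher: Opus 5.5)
Your proposal is correct and follows the paper's proof essentially step for step: density-ratio reweighting of the failure profile $1-s$ for (a), centering with $\mathbb E_\nu[w]=1$ plus Cauchy--Schwarz and $\operatorname{Var}_\nu(s)\le p(1-p)\le 1-p$ for (b), the bounded-integrand total-variation bound plus Pinsker for (c), and the minorant $s\ge(1-\eta)\mathbf 1_G$ together with Markov on $1-s$ for (d). The only cosmetic difference is that you spell out the TV step through a common dominating measure and $(f'-f)_+$, whereas the paper writes it directly as $\bigl|\int s\,d(\nu'-\nu)\bigr|\le\operatorname{TV}(\nu',\nu)$.
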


\begin{proof}
Let $e(\theta)=1-s(\theta)\in[0,1]$.

For (a), if $w=d\nu'/d\nu$, then
\[
1-p'=\mathbb E_{\nu'}[e]=\mathbb E_\nu[we]
\le \|w\|_\infty\mathbb E_\nu[e]
=C(1-p).
\]

For (b), since $\mathbb E_\nu[w]=1$,
\[
p'-p
=
\mathbb E_\nu[(w-1)s]
=
\mathbb E_\nu[(w-1)(s-p)].
\]
Cauchy--Schwarz yields
\[
|p'-p|
\le
\sqrt{\mathbb E_\nu[(w-1)^2]}\,
\sqrt{\operatorname{Var}_\nu(s)}
=
\sqrt{\chi^2(\nu'\|\nu)\operatorname{Var}_\nu(s)}.
\]
Since $0\le s\le1$, $s^2\le s$, hence
\[
\operatorname{Var}_\nu(s)\le p(1-p)\le 1-p.
\]
If $p\ge1-\delta$, then $\operatorname{Var}_\nu(s)\le\delta$, giving the displayed bound.

For (c), since $s\in[0,1]$,
\[
|p'-p|
=
\left|\int s\,d(\nu'-\nu)\right|
\le
\operatorname{TV}(\nu',\nu).
\]
Pinsker's inequality gives the KL consequences.

For (d),
\[
p'=\int_G s\,d\nu'+\int_{G^c}s\,d\nu'
\ge (1-\eta)\nu'(G).
\]
If $G=G_\eta$, then $G_\eta^c=\{e>\eta\}$. Since $\mathbb E_\nu[e]\le\delta$, Markov's inequality gives
\[
\nu(G_\eta^c)\le \frac{\delta}{\eta}.
\]
\end{proof}

The good-set formulation is often the most faithful explanation of empirical robustness. Average correctness under $\nu$ implies that the controller is accurate on a large $\nu$-measure set of environments. Deployment remains accurate whenever $\nu'$ continues to place most of its mass on that same set.

We now state the analogous bounds for the stopping time. Since $t$ is not bounded by one, the bounds require either density-ratio control, a second-moment assumption, or a bounded horizon.

\begin{proposition}[Stopping time under prior shift]
\label{prop:prior_shift_stopping_compact}
Assume \cref{assump:prior_shift_profiles}, and write $T=T^{(\nu)}$, $T'=T^{(\nu')}$. Then:
\begin{enumerate}
    \item[\rm (a)] If $\nu'\ll\nu$ and $C=\|d\nu'/d\nu\|_\infty$, then
    $T'\le CT$.

    \item[\rm (b)] If $\nu'\ll\nu$, $\chi^2(\nu'\|\nu)<\infty$, and $t\in L^2(\nu)$, then
    \[
    |T'-T|
    \le
    \sqrt{\chi^2(\nu'\|\nu)\operatorname{Var}_\nu(t)}.
    \]

    \item[\rm (c)] If $\hat\tau\le T_{\max}$ almost surely under every $\theta$, then
    \[
    |T'-T|\le T_{\max}\operatorname{TV}(\nu',\nu).
    \]

    \item[\rm (d)] If $\hat\tau\le T_{\max}$ almost surely and $t(\theta)\le \tau_0$ on a measurable set $F\subseteq\Theta$, then
    \[
    T'\le \tau_0+(T_{\max}-\tau_0)\nu'(F^c).
    \]
\end{enumerate}
\end{proposition}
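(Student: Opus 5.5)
The plan is to reuse the reweighting identity of \cref{lem:prior_shift_reweighting} and treat $t$ exactly as $s$ was treated in \cref{prop:prior_shift_success_compact}. The only changes are that $t$ is nonnegative but not bounded by one, and that a uniform bound $T_{\max}$ on $t$ replaces the unit bound where needed. Throughout, set $w=d\nu'/d\nu$ when $\nu'\ll\nu$. By \cref{lem:prior_shift_reweighting}, this gives $T'=\mathbb E_\nu[wt]$ and $T=\mathbb E_\nu[t]$.

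For (a), nonnegativity of $t$ gives
\[
T'=\mathbb E_\nu[wt]\le\|w\|_\infty\mathbb E_\nu[t]=CT.
\]
For (b), I would use $\mathbb E_\nu[w]=1$ to center, writing
\[
T'-T=\mathbb E_\nu[(w-1)t]=\mathbb E_\nu[(w-1)(t-T)].
\]
Cauchy--Schwarz then yields $\sqrt{\chi^2(\nu'\|\nu)}\sqrt{\operatorname{Var}_\nu(t)}$. The assumption $t\in L^2(\nu)$ ensures the variance is finite, and $\chi^2<\infty$ ensures $w-1\in L^2(\nu)$, so the product $(w-1)(t-T)$ is integrable.

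For (c), note that $\hat\tau\le T_{\max}$ almost surely implies $0\le t\le T_{\max}$ pointwise. Write $T'-T=\int t\,d(\nu'-\nu)$ and recenter the integrand at $T_{\max}/2$, which is allowed because $\nu'-\nu$ has zero total mass. The centered integrand $t-T_{\max}/2$ has sup norm at most $T_{\max}/2$. The bound $\left|\int f\,d(\nu'-\nu)\right|\le 2\|f\|_\infty\operatorname{TV}$ (total variation as sup over sets) then gives $T_{\max}\operatorname{TV}(\nu',\nu)$. This matches the convention used in part (c) of the success proposition, where a $[0,1]$-valued $s$ gives $|p'-p|\le\operatorname{TV}$; rescaling by $T_{\max}$ is consistent with that.

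For (d), split the integral over $F$ and $F^c$, using $t\le\tau_0$ on $F$ and $t\le T_{\max}$ on $F^c$:
\[
T'\le\tau_0\nu'(F)+T_{\max}\nu'(F^c)=\tau_0+(T_{\max}-\tau_0)\nu'(F^c).
\]
This requires $T_{\max}\ge\tau_0$. If $\tau_0>T_{\max}$, the bound $T'\le T_{\max}$ is stronger anyway, and I would record that case in a remark.

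None of these steps is hard. The only points needing care are (i) the integrability justification in (b), and (ii) the total-variation normalization in (c). For (i), $\mathbb E_\nu[w\,t]$ must be finite, which Cauchy--Schwarz itself provides. For (ii), the convention must match the one implicit in the success-probability proposition so that the constant is exactly $T_{\max}$, which the centering trick secures.
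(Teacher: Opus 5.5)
Your proposal is correct and follows the paper's proof essentially step for step: the reweighting identity for (a), centering via $\mathbb E_\nu[w]=1$ plus Cauchy--Schwarz for (b), the bounded-function total-variation bound for (c), and the split over $F$ and $F^c$ for (d). In (c) the paper rescales to $t/T_{\max}\in[0,1]$ and reuses the success-probability bound, while you recenter at $T_{\max}/2$; these are equivalent. In (d) your caveat is unnecessary: the bound $T'\le\tau_0\nu'(F)+T_{\max}\nu'(F^c)$ holds unconditionally, and rewriting it as $\tau_0+(T_{\max}-\tau_0)\nu'(F^c)$ uses only $\nu'(F)=1-\nu'(F^c)$, so no condition $T_{\max}\ge\tau_0$ is needed.
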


\begin{proof}
The first claim follows from
\[
T'=\mathbb E_\nu[wt]\le \|w\|_\infty\mathbb E_\nu[t]=CT.
\]
For the second, use
\[
T'-T=\mathbb E_\nu[(w-1)t]
=
\mathbb E_\nu[(w-1)(t-T)]
\]
and apply Cauchy--Schwarz. For the third, $t/T_{\max}\in[0,1]$, so the total-variation bound applies. For the fourth, split
\[
T'=\int_F t\,d\nu'+\int_{F^c}t\,d\nu'
\]
and use $t\le \tau_0$ on $F$ and $t\le T_{\max}$ everywhere.
\end{proof}

\begin{remark}[Beta--Uniform shifts]
For one-dimensional shifts with $\nu=\mathrm{Unif}(0,1)$ and
$\nu'=\mathrm{Beta}(\alpha,\beta)$, the constants in
\cref{prop:prior_shift_success_compact} can be evaluated in closed form.
For example,
\[
\chi^2(\nu'\|\nu)
=
\frac{B(2\alpha-1,2\beta-1)}{B(\alpha,\beta)^2}-1
\]
when $\alpha,\beta>1/2$, and is infinite otherwise. The essential supremum
$\|d\nu'/d\nu\|_\infty$ is finite exactly when $\alpha,\beta\ge1$, with the usual interior-mode formula when $\alpha,\beta>1$. The reverse KL used to index the robustness tables is
\[
D_{\mathrm{KL}}(\mathrm{Unif}\|\mathrm{Beta}(\alpha,\beta))
=
\log B(\alpha,\beta)+\alpha+\beta-2.
\]
These constants are useful for interpreting the experimental tables, but the robustness mechanism itself is entirely captured by the profile bounds above.
\end{remark}

The results above are deliberately environment-agnostic and help  identify  generic failure modes: robustness can fail only when the deployment prior emphasizes regions where the frozen controller is inaccurate or slow, or when $\nu'$ leaves the support region on which the controller was effectively trained.

\subsection{Sample Complexity: Value Estimation vs Argmax Localization}\label{subsec:sample_complexity_value_vs_argmax}
In this subsection we study the sample complexity of estimating the maximum value of a gaussian process $F\sim {\rm GP}(0, k_\ell)$ on $D=[0,1]^d$, and consider an RBF kernel $k_\ell(x,x')\coloneqq \exp\left( -\frac{\|x-x'\|^2}{2\ell^2}\right) $ with $x,x'\in D$.

We let the unknown lengthscale be random:
\[
    \Lambda\sim \nu,
    \qquad
    \operatorname{supp}(\nu)\subseteq [\ell_-,\ell_+]\subset(0,\infty).
\]

The learner observes
\[
    Y_t=F(a_t)+\xi_t,
    \qquad
    \xi_t\sim \mathcal N(0,\sigma^2),
\]
where $a_t\in D$ is chosen adaptively from the past history and
$\sigma>0$ is known.

Define
\[
    F^\star=\max_{x\in D}F(x),
    \qquad
    D^\star=\argmax_{x\in D}F(x),
\]
and write $X^\star$ whenever the maximizer is unique.

\paragraph{Argmax localization complexity.}
For $r>0$ and $\delta\in(0,1)$, define the Bayesian argmax-localization
complexity
\[
    T_{\arg,\nu}(r,\delta)
    =
    \inf_{\mathcal A}\mathbb E_{\Lambda,F,\xi}[\tau],
\]
where the infimum is over all sequential algorithms $\mathcal A$ that
output $\hat X$ and satisfy
\[
    \mathbb P_{\Lambda,F,\xi}
    \left(
       \inf_{X^\star\in D^\star} \|\hat X-X^\star\|\le r
    \right)
    \ge 1-\delta .
\]

\paragraph{Max-value estimation complexity.}
Similarly, define the Bayesian max-value-estimation complexity
\[
    T_{{\rm val},\nu}(\epsilon,\delta)
    =
    \inf_{\mathcal A}\mathbb E_{\Lambda,F,\xi}[\tau],
\]
where the infimum is over all sequential algorithms that output
$\hat v$ and satisfy
\[
    \mathbb P_{\Lambda,F,\xi}
    \left(
        |\hat v-F^\star|\le \epsilon
    \right)
    \ge 1-\delta .
\]

All probabilities and expectations are under the joint hierarchical law
of $(\Lambda,F)$, the observation noise, and any internal randomness of
the algorithm.

\paragraph{Main result.}  To compare the sample complexity of argmax localization vs max-value estimation, we use the fact that under regularity assumptions we approximately have $r\sim \sqrt{\epsilon}$. This follows from a Taylor's expansion
\[
F(X)=F(X^\star)+ \frac{1}{2}(X-X^\star)^\top \nabla^2 F(X^\star) (X-X^\star)+o(\|X-X^\star\|^2).
\]
from which we find  $F(X)-F(X^\star)\approx c \|X-X^\star\|_2^2$ for a suitable constant. Therefore, localizing to radius $r$ gives an error of roughly $ cr^2$ on the value. Therefore, for $\epsilon$ accuracy on the value, we can take the radius to be $r \sim \sqrt{ \epsilon}$.

Then, we have the following  main result.
\begin{theorem}[Bayesian value--argmax separation]
\label{thm:separation}
Consider the hierarchical RBF-GP model: $\Lambda \sim \nu$ with 
$\operatorname{supp}(\nu) \subseteq [\ell_-, \ell_+] \subset (0,\infty)$, 
and $F \mid \Lambda = \ell \sim \mathrm{GP}(0, k_\ell)$ on $D = [0,1]^d$, 
with observations $Y_t = F(a_t) + \xi_t$, $\xi_t \sim \mathcal{N}(0, \sigma^2)$.

Fix $\delta \in (0,1)$. Suppose there exists $\eta > 0$ such that $
\beta_\eta = \mathbb{P}_{\Lambda, F}(\mathcal{I}_\eta^c) < \delta$, where $\beta_\eta$ is defined in \cref{lem:regular_basin_quantile}.
Then:
\begin{enumerate}
    \item \textbf{(Argmax upper bound.)} There exist constants 
    $B < \infty$ and $C < \infty$, depending on $\nu, d, \delta, \sigma$ 
    but not on $\epsilon$, such that for all sufficiently small $\epsilon > 0$,
    \[
    T_{\arg,\nu}^{\rm Bayes}(\sqrt{\epsilon}, \delta) 
    \leq B + C\,\sigma^2\,\epsilon^{-3/2}\log\log \frac{1}{\epsilon}.
    \]
    
    \item \textbf{(Value lower bound.)} There exist constants 
    $c > 0$ and $c' < \infty$, depending on $\nu, d, \delta, \sigma$ 
    but not on $\epsilon$, such that for all $\epsilon > 0$,
    \[
    T_{{\rm val},\nu}^{\rm Bayes}(\epsilon, \delta) 
    \geq c\,\frac{\sigma^2}{\epsilon^2}\log\frac{1}{\delta} - c'.
    \]
\end{enumerate}
Consequently,
\[
\lim_{\epsilon \to 0} 
\frac{T_{{\rm val},\nu}^{\rm Bayes}(\epsilon, \delta)}{
T_{\arg,\nu}^{\rm Bayes}(\sqrt{\epsilon}, \delta)} = \infty.
\]
Thus, under the high-probability interior regularity condition, 
max-value estimation is asymptotically harder than argmax localization 
in the fully Bayesian hierarchical RBF model.
\end{theorem}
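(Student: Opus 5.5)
The proof has two independent parts: an upper bound for argmax localization via the two-stage algorithm T-BAL (\cref{alg:argmax}), and an information-theoretic lower bound for value estimation. The limit then follows by dividing the two bounds: $\epsilon^{-2}\log(1/\delta)$ dominates $\epsilon^{-3/2}\log\log(1/\epsilon)$ as $\epsilon\to0$.

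\textbf{Value lower bound.} I would reduce to a two-point Bayesian testing problem.

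First, condition on the event $\mathcal I_\eta$ of \cref{lem:regular_basin_quantile}. On this event the maximizer is interior, with a nondegenerate basin and a unique argmax. Its complement has mass $\beta_\eta<\delta$, so an algorithm that is $\delta$-correct overall must be $\delta'$-correct on $\mathcal I_\eta$, with $\delta'=(\delta-\beta_\eta)/(1-\beta_\eta)<1$.

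Second, build a pair of perturbed functions $F$ and $F+2\epsilon\, b$. Here $b$ is a bump with $b\le1$, equal to $1$ on the basin of $X^\star$ and supported near it, so the two maxima differ by $2\epsilon$. Any $\epsilon$-accurate estimator then separates the two hypotheses.

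Third, bound the information. Each query of the pair has KL divergence at most $(2\epsilon)^2/(2\sigma^2)$. The Bretagnolle--Huber / change-of-measure lemma of Kaufmann et al.\ then gives $\mathbb E[\tau]\cdot 2\epsilon^2/\sigma^2 \ge \mathrm{kl}(\delta',1-\delta')\gtrsim\log(1/\delta)$.

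To make this Bayesian, I would not pair two fixed functions. Instead I would use the fact that, conditionally on the posterior of $F$ away from the basin, $F^\star$ retains a Gaussian-like conditional density of width at least order $\sigma/\sqrt{N}$ after $N$ local samples. Intuitively, local observations near a flat top reveal the top value at rate at most $\sigma/\sqrt N$. The constant $c'$ absorbs the contribution of the bad event and of lengthscale averaging; the constants are uniform since $\ell\in[\ell_-,\ell_+]$.

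\textbf{Argmax upper bound.} I would analyze T-BAL, which runs in two stages.

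Stage 1 is a global search: sample a fixed grid at resolution depending on $\ell_-$ and $\eta$, with a fixed number of repetitions. With probability at least $1-(\delta-\beta_\eta)/2$, this identifies a point inside the regular basin from $\mathcal I_\eta$. Its cost is a constant $B$ independent of $\epsilon$.

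Stage 2 is stochastic gradient ascent. On the basin, $F$ is $\mu$-strongly concave and $L$-smooth, with constants set by $\eta$; the RBF sample paths are $C^\infty$, with derivative bounds holding with high probability. Gradients are estimated by central finite differences with step $h$ and $m$ repeats, giving bias $O(h^2)$ and variance $O(\sigma^2/(mh^2))$.

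To reach $\|x-X^\star\|\le\sqrt\epsilon$, strong concavity requires gradient accuracy of order $\sqrt\epsilon$ near the end. Choosing $h\asymp\epsilon^{1/4}$ makes the bias $O(\epsilon^{1/2})$ and gives per-step cost $m\asymp\sigma^2/(h^2\epsilon)=\sigma^2\epsilon^{-3/2}$. Geometric contraction then needs $O(\log(1/\epsilon))$ steps, with the sample sizes scheduled geometrically so the total is dominated by the last epoch. A union bound over epochs, using a confidence split $\delta/k^2$ per epoch, produces the $\log\log(1/\epsilon)$ factor. Finally, the failure events must be handled in expectation: I would cap the run at a fixed horizon so that $\mathbb E[\tau]$ stays bounded on bad events.

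\textbf{Main obstacle.} I expect the hardest step to be the Bayesian rigor of the lower bound. A frequentist two-point argument is easy, but the claim concerns the prior-averaged complexity under a GP. I would handle this by a conditioning argument: draw $F=G+Z\,b$, where $Z$ is a one-dimensional Gaussian coordinate of the GP along a bump direction and $G$ is independent of $Z$. Revealing $G$ to the learner only helps. The problem then reduces to estimating the top value, which moves by a multiple of $Z$, from observations whose Fisher information about $Z$ per sample is at most $\|b\|_\infty^2/\sigma^2$. A Bayesian Cramér--Rao/van Trees bound, combined with a Wald identity for the stopping time, then gives $\mathbb E[\tau]\gtrsim \sigma^2\log(1/\delta)/\epsilon^2$.
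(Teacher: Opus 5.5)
Your argmax upper bound follows the paper's T-BAL analysis: an $\epsilon$-independent grid stage, then central-difference ascent with $s_k\asymp e_k^{1/2}$ and $n_k\asymp\sigma^2e_k^{-2}s_k^{-2}$, geometric epochs, and a union bound over $K\asymp\log(1/\epsilon)$ steps that produces the $\log\log$ factor. No horizon cap is needed, since T-BAL's sample count is deterministic. One bookkeeping fix: the constants $\rho,\mu,L,M,\Gamma$ are not set by $\eta$. \cref{lem:regular_basin_quantile} makes them deterministic only on an event of probability $1-\alpha$ with $\alpha\in(\beta_\eta,\delta)$, so the stage budgets must be $(\delta-\alpha)/2$, not $(\delta-\beta_\eta)/2$.

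The value lower bound has a genuine gap in two steps of your final conditioning route. \emph{First}, the claim that ``the top value moves by a multiple of $Z$'' needs a uniform lower bound. The reduction requires $z\mapsto\max_x[G(x)+z\,b(x)]$ to have slope at least some $C_->0$, i.e.\ $b(X^\star)\ge C_-$ for every realization. In a decomposition $F=G+Zb$ with $G$ independent of $Z$, the profile $b(x)=\mathrm{Cov}(F(x),Z)/\mathrm{Var}(Z)$ is fixed before $F$ is drawn. A bump concentrated near ``the basin'' therefore cannot follow the random argmax. When $X^\star$ lands where $b$ is small, $F^\star$ barely depends on $Z$, and an accurate value estimate says nothing about $Z$.

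The paper instead uses the global coordinate $\Theta_\ell=s_\ell^{-1}\int_D F$. Its profile $c_\ell(x)=s_\ell^{-1}\int_D k_\ell(x,y)\,dy$ lies in $[C_-,C_+]$ on all of $D$. Hence $\theta\mapsto\max_x[c_\ell(x)\theta+W_\ell(x)]$ is bi-Lipschitz, and $|\hat v-F^\star|\le\epsilon$ yields an $\epsilon/C_-$-accurate estimate of $\Theta_\ell$ wherever the maximizer lies. A point evaluation $Z=F(x_0)$ would also work, since $k_\ell(x,x_0)\ge e^{-d/(2\ell_-^2)}$ on the unit cube.

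\emph{Second}, van Trees/Bayesian Cramér--Rao lower-bounds mean-squared error. An $(\epsilon,\delta)$ guarantee with fixed $\delta$ says nothing about the error on the failure event, so even after truncation the MSE it implies is of order $\delta$, not $\epsilon^2$. Comparing with van Trees then gives no $\epsilon$-dependence at all, and no $\log(1/\delta)$.

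Once $G$ (or $W_\ell$) and $\Lambda$ are revealed, the model is linear-Gaussian in the scalar, so you should use the exact posterior instead. Its precision is $P_\tau=1+\sigma^{-2}\sum_{s\le\tau}c(a_s)^2\le 1+C_+^2\tau/\sigma^2$. Any output succeeds with conditional probability at most $2\Phi(\epsilon'\sqrt{P_\tau})-1$. Jensen on the concave map $s\mapsto\Phi(\epsilon'\sqrt{s})$ then gives $z_{1-\delta/2}\le\epsilon'\sqrt{\mathbb{E}[P_\tau]}$, which is \cref{lemma:scalar_estimation}. With $\epsilon'=\epsilon/C_-$ this gives $\mathbb{E}[\tau]\ge\frac{\sigma^2}{C_+^2}\bigl[\frac{C_-^2z_{1-\delta/2}^2}{\epsilon^2}-1\bigr]_+$.

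Finally, drop your preliminary conditioning on $\mathcal I_\eta$ in the lower bound. It is unnecessary, since the bound holds under the full prior. Worse, $\mathcal I_\eta$ depends jointly on $G$ and $Z$, so conditioning on it would destroy the Gaussian independence structure the reduction relies on.
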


\begin{proof}
Set $\alpha \in (\beta_\eta,\delta)$. Since $\beta_\eta < \alpha$, 
\cref{lem:regular_basin_quantile} provides deterministic constants 
$\rho, \mu, L, M, \Gamma > 0$ and an event 
$\mathcal{E}_{\delta}$ with 
$\mathbb{P}(\mathcal{E}_{\alpha}) \geq 1 - \delta$.

\noindent\textit{Part 1.}We apply \cref{thm:tbal_sample_complexity} applies with $r = \sqrt{\epsilon}$, giving
\[
T_{\arg,\nu}^{\rm Bayes}(\sqrt{\epsilon}, \delta) 
\leq  B + O\left(\sigma^2 \epsilon^{-3/2}\log \log \frac{1}{\epsilon}\right).
\]

\noindent\textit{Part 2.}
By \cref{thm:max_value_lowerbound}, with 
$C_- = \exp(-d/(2\ell_-^2))$ and $C_+ = \exp(d/(4\ell_-^2))$,
\[
T_{{\rm val},\nu}^{\rm Bayes}(\epsilon, \delta) 
\geq \frac{\sigma^2}{C_+^2}
\left[\frac{C_-^2\, z_{1-\delta/2}^2}{\epsilon^2} - 1\right]_+.
\]
For fixed $\delta$ and small $\epsilon$, using 
$z_{1-\delta/2}^2 \geq c_0 \log(1/\delta)$, this gives
\[
T_{{\rm val},\nu}^{\rm Bayes}(\epsilon, \delta) 
\geq c\,\frac{\sigma^2}{\epsilon^2}\log\frac{1}{\delta} - c'.
\]

\noindent\textit{Separation.}
\[
\frac{T_{{\rm val},\nu}^{\rm Bayes}(\epsilon, \delta)}{
T_{\arg,\nu}^{\rm Bayes}(\sqrt{\epsilon}, \delta)}
\geq 
\frac{c\,\sigma^2 \epsilon^{-2} \log(1/\delta) - c'}{
B + C\,\sigma^2\,\epsilon^{-3/2}\log \log(1/\epsilon)}
\longrightarrow \infty
\qquad \text{as } \epsilon \downarrow 0,
\]
since $\epsilon^{-2} \gg \epsilon^{-3/2}\log \log(1/\epsilon)$.
\end{proof}

So value estimation can be asymptotically harder than argmax localization. Even at the radius where localizing the argmax would "in principle" tell you the value to accuracy $\epsilon$, the localization itself is cheaper than directly estimating the value.
The  reason is simple: the algorithm exploits the  geometry (gradient information,  smoothness) to localize at a fast rate, but it does not help with the problem of estimating the height of a function.

\subsubsection{Max-value estimation complexity}
 To derive a lower bound on $    T_{{\rm val},\nu}(\epsilon,\delta)$, we convert the problem of estimating the maximum into the problem of estimating a parameter $\Theta$. We note that the lower bound is not tight, but for our purpose (of showing that the argmax localization is easier) this is not important, as our goal is to show that even this approximate lower bound still yields an harder problem. In particular, we assume the learner has access to a particular quantity $W_\ell$ that appears in the proof. We obtain the following result (note that the constants are not optimized, and the only goal is to show the dependency on $\epsilon$).

\begin{theorem}\label{thm:max_value_lowerbound}
Consider the problem of estimating $F^\star=\max_{x\in D}F(x)$ where $D=[0,1]^d, F\mid \Lambda \sim {\rm GP}(0,k_\Lambda)$ with kernel $k_\ell(x,y)=\exp(-\|x-y\|_2^2/(2\ell^2))$ and $\Lambda\sim \nu$  with continuous support in $[\ell_-,\ell_+]\subset (0,\infty)$. Consider any sequential algorithm ${\cal A}$ that in each round selects $a_t$ and observes $Y_t=F(a_t)+\xi_t$, where $\xi_t\sim {\cal N}(0,1)$. If the algorithm outputs $\hat v$ at some stopping time $\tau$ satisfying $\mathbb{P}_{\Lambda, F}(|\hat v-F^\star|\leq \epsilon)\geq 1-\delta$, then we say that the algorithm is $(\epsilon,\delta)$-correct. Then, for any $(\epsilon,\delta)$-correct algorithm we have that
    \[
\inf_{\mathcal{A}:(\epsilon,\delta)-\text{correct}} \mathbb{E}[\tau] \geq \frac{\sigma^2}{C_+^2} \left[\frac{C_-^2 z_{1-\delta/2}^2 }{\epsilon^2}-1\right]_+,
\]
where $z_{1-\delta/2}=\Phi^{-1}(1-\delta/2)$\footnote{Inverse of the CDF of a standard normal distribution.}, $C_-=\exp(-d/(2\ell_-^2))$ and $C_+ =\exp(d / (4\ell_-^2))$.
\end{theorem}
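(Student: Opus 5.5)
The plan is to reduce max-value estimation to estimating one Gaussian scalar in a linear-Gaussian model, and to hand the learner every other source of randomness for free. Only the information about that scalar can then drive the stopping time.

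\textbf{Step 1: decomposition.} Fix an anchor $x_0\in D$, say the centre of the cube. Conditionally on $\Lambda=\ell$, write
\[
F(x)=\Theta\,\psi_\ell(x)+W_\ell(x),\qquad \Theta\coloneqq F(x_0)\sim\mathcal N(0,1),\qquad \psi_\ell(x)\coloneqq k_\ell(x,x_0),
\]
where $W_\ell\coloneqq F-\psi_\ell F(x_0)$ is a Gaussian process independent of $\Theta$ by Gaussian conditioning.

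On $D$ we have $\|x-x_0\|^2\le d$, so $\psi_\ell(x)\ge \exp(-d/(2\ell^2))\ge C_-$ for every $\ell\ge\ell_-$. We also have $\psi_\ell\le 1\le C_+$; the constant $C_+$ leaves room for the normalisation actually used in the statement, for instance if $\Theta$ is rescaled.

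We give the learner, as side information, $\Lambda$ and the whole path $W_\Lambda$. This can only decrease the infimum of $\mathbb E[\tau]$ over $(\epsilon,\delta)$-correct algorithms, so a lower bound for the enlarged class suffices. After subtracting the known $W_\Lambda(a_t)$, each observation becomes $\tilde Y_t=\Theta\psi(a_t)+\xi_t$. This is a Bayesian linear-Gaussian model with prior $\mathcal N(0,1)$ on $\Theta$ and known regressors $\psi(a_t)\in[C_-,C_+]$.

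\textbf{Step 2: $F^\star$ is a bi-Lipschitz function of $\Theta$.} For fixed $(\Lambda,W)$, the map $\theta\mapsto \Phi(\theta)\coloneqq\max_x\{\theta\psi(x)+W(x)\}$ is a maximum of affine functions, hence convex. Its one-sided derivatives are values $\psi(x^\star_\theta)\in[C_-,1]$ at maximisers, so $\Phi(\theta')-\Phi(\theta)\ge C_-(\theta'-\theta)$ for $\theta'>\theta$. Consequently, for any output $\hat v$, the set $\{\theta:|\hat v-\Phi(\theta)|\le\epsilon\}$ is an interval of length at most $2\epsilon/C_-$. Correctness of $\hat v$ therefore implies that $\Theta$ lies in an $\mathcal F_\tau$-measurable interval of length $2\epsilon/C_-$ (the side information is included in $\mathcal F_\tau$).

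\textbf{Step 3: posterior bound and the stopping time.} Given $\mathcal F_\tau$, the posterior of $\Theta$ is Gaussian with precision
\[
P_\tau=1+\sigma^{-2}\sum_{t\le\tau}\psi(a_t)^2\le 1+\tau C_+^2/\sigma^2 .
\]
The largest posterior mass an interval of length $2\epsilon/C_-$ can capture is $g(P_\tau)\coloneqq 2\Phi_{\mathcal N}\big(\epsilon\sqrt{P_\tau}/C_-\big)-1$, obtained by centring it at the posterior mean. By the tower rule,
\[
1-\delta\le \mathbb P(\text{correct})\le \mathbb E[g(P_\tau)]\le \mathbb E\big[g(1+\tau C_+^2/\sigma^2)\big].
\]
Since $\Phi_{\mathcal N}(c\sqrt{s})$ is concave in $s$, Jensen's inequality gives $\mathbb E[g(1+\tau C_+^2/\sigma^2)]\le g\big(1+\mathbb E[\tau]C_+^2/\sigma^2\big)$. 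This step needs only that $\tau$ is a stopping time with $\mathbb E[\tau]<\infty$, and the adaptivity of the $a_t$ is irrelevant because only the bound $\psi\le C_+$ is used. Inverting $g$ gives
\[
\epsilon\sqrt{1+\mathbb E[\tau]C_+^2/\sigma^2}/C_-\ge z_{1-\delta/2},
\]
which rearranges to the claimed bound.

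\textbf{Main obstacle.} I expect Step 2 to be the delicate point: proving the monotone, slope-at-least-$C_-$ dependence of $F^\star$ on $\Theta$ uniformly over possibly non-unique maximisers, and doing so measurably in $(\Lambda,W)$. I would handle it through the convex-envelope/subgradient description of $\Phi$ above, which needs only continuity of sample paths on the compact set $D$. The remaining concern is checking the concavity used in the Jensen step, which holds on all of $s\ge0$ because $\sqrt{s}$ is concave and $\Phi_{\mathcal N}$ is concave and increasing on $[0,\infty)$.
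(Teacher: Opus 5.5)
Your proof is correct and follows the paper's argument. You write $F$ as a standard Gaussian scalar times a known profile plus an independent process, reveal $(\Lambda,W_\Lambda)$ to the learner, use that $\theta\mapsto\max_x\{\theta\psi(x)+W(x)\}$ increases with slope at least $C_-$, and finish with the Gaussian posterior-precision bound and Jensen. The only real difference is the choice of scalar: the paper uses $\Theta_\ell=s_\ell^{-1}\int_D F$ with profile $c_\ell(x)=s_\ell^{-1}\int_D k_\ell(x,y)\,dy\in[C_-,C_+]$, while your point anchor $F(x_0)$ gives profile $k_\ell(\cdot,x_0)\le 1$, so your argument yields the bound with $C_+^2$ replaced by $1$.
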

\begin{proof}
    The proof relies on converting the problem into that of estimating a parameter $\Theta$.
We use the property of independent Gaussian r.v.: for jointly Gaussian $X,Y$ we have that $X=\frac{{\rm Cov}(X,Y)}{{\rm Var}(Y)}Y+W$, where $W$ is an independent zero-mean Gaussian.

In this case, we take $Y=\Theta_\ell$ and $X=F$, where we define  $\Theta_\ell$ as 
\[
\Theta_\ell \coloneqq\frac{1}{s_\ell} \int_D F(x){\rm d}x,\qquad s_\ell^2 \coloneqq \int_{D\times D} k_\ell(x,y)\ {\rm d}x \ {\rm d}y.
\]
Computing the covariance, we obtain
\begin{align*}
{\rm Cov}(F(x),\Theta_\ell)&= \mathbb{E}[F(x)\Theta_\ell],\\
&=\frac{1}{s_\ell}\mathbb{E}\left[F(x)\int_D F(y){\rm d}y\right],\\
&=\frac{1}{s_\ell}\int_D\mathbb{E}\left[F(x) F(y)\right]{\rm d}y,\\
&=\frac{1}{s_\ell}\int_D k_\ell(x,y)\ {\rm d}y \eqqcolon c_\ell(x).
\end{align*}
Therefore, we can rewrite the observation using that $F(x)=c_\ell(x)\Theta_\ell + W_\ell(x)$ for some GP $W_\ell$ with $0$-mean, and thus 
\[
Y_t=[c_\ell(a_t)\Theta_\ell + W_\ell(a_t)] + \xi_t.
\]

From the expression of $F(x)$ we observe that 
\[
V(\theta)=\max_x [c_\ell(x)\theta+W_\ell(x)],
\]
is increasing in $\theta$. Denote a maximizer by $x_\theta$, then, for $\theta'\neq\theta$ we have
\[
V(\theta') \geq c_\ell(x_\theta)\theta'+W_\ell(x_\theta)= V(\theta)+c_\ell(x_\theta)(\theta'-\theta).
\]
Similarly,
\[
V(\theta) \geq c_\ell(x_{\theta'}) \theta +W_\ell(x_{\theta'})=V(\theta')+c_\ell(x_{\theta'})(\theta-\theta').
\]
We now derive bounds on $c_\ell$. From the definition
\[
c_\ell(x) = \frac{1}{s_\ell} \int_D k_\ell(x,y)\ {\rm d}y,
\]
since $D$ is compact, and $k_\ell$ is the RBF kernel, we have that $k_\ell(x,y)\leq 1 \Rightarrow c_\ell(x)\leq 1/s_\ell$ and $k_\ell(x,y) \geq \exp\left( -d/(2\ell^2)\right) \Rightarrow c_\ell(x) \geq \exp\left( -d/(2\ell^2)\right)/s_\ell$. Since $s_\ell^2 \geq  \exp\left( -d/(2\ell^2)\right)$ and $s_\ell^2\leq 1$, we find
\[
\underbrace{\exp\left( -d/2\ell_-^2)\right)}_{\eqqcolon C_-} \leq c_\ell(x)\leq  \underbrace{\frac{1}{\exp\left( -d/(4\ell_-^2)\right)}}_{\eqqcolon C_+}.
\]
Then, from the bounds above on $V$ we find
\[
V(\theta')\geq V(\theta)+ C_-(\theta'-\theta),\qquad V(\theta)\geq V(\theta')-C_+ (\theta'-\theta),
\]
leading to
\[
C_-(\theta'-\theta)\leq V(\theta')-V(\theta) \leq C_+(\theta'-\theta).
\]
Therefore $V$ is bi-Lipschitz, and thus invertible. 
Choosing $u=V(\theta)$ and $v=V(\theta')$, we obtain
\[
|V^{-1}(u)-V^{-1}(v)| \leq \frac{1}{C_-} | u-v|.
\]
Therefore, with $\hat v= V(\hat \Theta)$ and $F^\star = V(\Theta)$ we get
\[
|\hat \Theta-\Theta | \leq \frac{1}{C_-} |\hat v-F^\star|.
\]
Hence, assuming the algorithm has access to $W_\ell$, we can obtain a lower bound on the sample complexity by using \cref{lemma:scalar_estimation}. For any $(\epsilon',\delta)$-algorithm that estimates $\Theta$, we can choose $\epsilon'=\epsilon/C_-$ to obtain $|V^{-1}(\hat \Theta)-V^{-1}(\Theta) |\leq \epsilon'$ at the stopping time. Therefore,
\[
\inf_{\mathcal{A}:(\epsilon,\delta)-\text{correct}} \mathbb{E}[\tau] \geq \frac{\sigma^2}{C_+^2} \left[\frac{C_-^2 z_{1-\delta/2}^2 }{\epsilon^2}-1\right]_+,
\]
\end{proof}

\begin{lemma}[Scalar estimation lemma]\label{lemma:scalar_estimation}
Let $\Theta\sim {\cal N}(0,1)$. Suppose to observe $Z_t=C_t\Theta+\xi_t$ with $|C_t|\leq C$ chosen adaptively and $\xi_t\sim {\cal N}(0,\sigma^2)$. Let $\tau$ be a stopping time such that at $\tau$ the algorithm outputs $\hat \Theta$ satisfying $\mathbb{P}(|\hat \Theta-\Theta|\leq \epsilon)\geq 1-\delta$ for $\epsilon>0,\delta\in (0,1)$. Then
\[
\inf_{\mathcal{A}:(\epsilon,\delta)-\text{correct}} \mathbb{E}[\tau] \geq \frac{\sigma^2}{C^2} \left[\frac{z_{1-\delta/2}^2 }{\epsilon^2}-1\right]_+,
\]
where $z_{1-\delta/2}=\Phi^{-1}(1-\delta/2)$.
\end{lemma}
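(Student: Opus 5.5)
The plan is to reduce to a Bayesian Cramér--Rao (van Trees) bound for a Gaussian prior, combined with a Gaussian posterior-concentration argument that produces the $z_{1-\delta/2}$ quantile.

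\emph{Step 1: the posterior is Gaussian for every adaptive design.} The prior is $\Theta\sim\mathcal N(0,1)$ and the likelihood is Gaussian. Since $C_t$ is chosen from past observations, conditioning on $H_t$ gives
\[
\Theta\mid H_t\sim\mathcal N\bigl(m_t,v_t\bigr),\qquad v_t=\Bigl(1+\sigma^{-2}\textstyle\sum_{s\le t}C_s^2\Bigr)^{-1}.
\]
Here $v_t$ depends only on the chosen $C_s$, not on the noise values, and the adaptivity of $C_s$ does not change the form of the likelihood. Because $|C_s|\le C$, we have $v_t\ge (1+tC^2/\sigma^2)^{-1}$.

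\emph{Step 2: the success probability given the stopped history.} For any $\sigma(H_\tau)$-measurable $\hat\Theta$, the conditional success probability is
\[
\mathbb P(|\hat\Theta-\Theta|\le\epsilon\mid H_\tau)=\mathbb P\bigl(|\hat\Theta-\Theta|\le\epsilon\mid H_\tau\bigr)\le 2\Phi(\epsilon/\sqrt{v_\tau})-1.
\]
The inequality holds because the centered interval maximizes Gaussian mass (Anderson, or unimodality and symmetry). Stopping is handled by the optional-stopping structure: on $\{\tau=t\}$ the posterior is that of $H_t$. Taking expectations,
\[
1-\delta\le\mathbb E\bigl[2\Phi(\epsilon/\sqrt{v_\tau})-1\bigr]\le \mathbb E\bigl[\psi(\tau)\bigr],\qquad \psi(n)\coloneqq 2\Phi\bigl(\epsilon\sqrt{1+nC^2/\sigma^2}\bigr)-1.
\]

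\emph{Step 3: pass from $\mathbb E[\psi(\tau)]$ to $\psi(\mathbb E\tau)$.} This is the main obstacle. It needs concavity of $n\mapsto \psi(n)$, and then Jensen gives $\psi(\mathbb E\tau)\ge 1-\delta$. That in turn yields $\epsilon^2(1+\mathbb E[\tau]C^2/\sigma^2)\ge z_{1-\delta/2}^2$, which rearranges to the claimed bound, with $[\cdot]_+$ coming from $\mathbb E\tau\ge0$.

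Write $u=\sqrt{a+bn}$. Concavity of $\Phi(\epsilon u)$ in $n$ reduces to checking the sign of $\frac{d}{dn}\bigl[\varphi(\epsilon u)\epsilon b/(2u)\bigr]$. This is negative because both $\varphi(\epsilon u)$ and $1/u$ are decreasing in $n$, so $\psi$ is increasing and concave and Jensen applies.

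If that verification stalls, the fallback is to use only the Bayesian Cramér--Rao route: $\mathbb E[(\hat\Theta-\Theta)^2]\ge \mathbb E[v_\tau]\ge 1/(1+C^2\mathbb E\tau/\sigma^2)$ by convexity of $x\mapsto1/(1+x)$. This produces the same $1/\epsilon^2$ scaling but with a weaker $\delta$-dependence. The quantile form is therefore best obtained from the concavity argument above.
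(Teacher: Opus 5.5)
Your proposal is correct and follows essentially the same route as the paper. Both arguments use the Gaussian posterior with precision $1+\sigma^{-2}\sum_s C_s^2$, the centered-interval bound $2\Phi(\epsilon/\sqrt{v_\tau})-1$ on the conditional success probability, and Jensen via concavity of $s\mapsto\Phi(\epsilon\sqrt{s})$. The only differences are cosmetic: you bound $P_\tau\le 1+C^2\tau/\sigma^2$ pointwise before applying Jensen in $\tau$, whereas the paper applies Jensen in $P_\tau$ first; your derivative check of concavity (valid because $\epsilon\sqrt{s}\ge 0$) is more careful than the paper's appeal to concavity of $\Phi$; and the van Trees framing and fallback are unnecessary.
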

\begin{proof}
Let ${\cal H}_t$ denote the filtration history after $t$ observations. Since everything is Gaussian, also the posterior law $\Theta \mid {\cal H}_t$ is Gaussian, of parameter $(m_t,v_t)$. In particular, the precision is
\[
P_t= v_t^{-1}=1 +  \frac{1}{\sigma^2}\sum_{s=1}^t C_s^2.
\]
After stopping, for any estimate $u$ we have
\[
\mathbb{P}(|u-\Theta|\leq \epsilon \mid {\cal H}_\tau)=\mathbb{P}(\Theta\in [u-\epsilon,u+\epsilon] \mid {\cal H}_\tau).
\]
This quantity is maximized when the interval is centered in $m_\tau$, therefore
\[
\mathbb{P}(|u-\Theta|\leq \epsilon \mid {\cal H}_\tau)\leq \Phi\left( \frac{\epsilon}{\sqrt{v_\tau}}\right)-\Phi\left( -\frac{\epsilon}{\sqrt{v_\tau}}\right)=2\Phi\left( \frac{\epsilon}{\sqrt{v_\tau}}\right)-1.
\]
Then, for any $(\epsilon,\delta)$-PAC algorithm we have
\[
1-\delta \leq \mathbb{E}[\mathbb{P}(|u-\Theta|\leq \epsilon \mid {\cal H}_\tau)] \leq  2\mathbb{E}[\Phi(\epsilon \sqrt{P_\tau})]-1.
\]
Since $\Phi$ is concave, the argument is increasing and concave in $P_\tau$, then $s \mapsto \Phi(\epsilon \sqrt{s})$ is concave, we also obtain
\[
1-\delta \leq 2\Phi\left(\epsilon\sqrt{ \mathbb{E}[P_\tau]}\right)-1.
\]
Taking the inverse, and defining $z_{1-\delta/2}=\Phi^{-1}(1-\delta/2)$, we find
\[
z_{1-\delta/2} \leq \epsilon \sqrt{ \mathbb{E}[P_\tau]}.
\]
Hence, we are just let with bounding $P_\tau:$
\[
P_\tau \leq 1+ \frac{1}{\sigma^2} C^2 \tau,
\]
from which we get
\[
z_{1-\delta/2} \leq \epsilon \sqrt{1+\frac{C^2}{\sigma^2}\mathbb{E}[\tau]}.
\]
Therefore
\[
\mathbb{E}[\tau] \geq \frac{\sigma^2}{C^2} \left[\frac{z_{1-\delta/2}^2 }{\epsilon^2}-1\right]_+
\]
\end{proof}

\subsubsection{Argmax localization  complexity}
We now study the problem of locating the argmax of a GP.
Recall that  for $r>0$ and $\delta\in(0,1)$, define the Bayesian argmax-localization
complexity
\[
    T_{\arg,\nu}(r,\delta)
    =
    \inf_{\mathcal A}\mathbb E_{\Lambda,F,\xi}[\tau],
\]
where the infimum is over all sequential algorithms $\mathcal A$ that
output $\hat X$ and satisfy
\[
    \mathbb P_{\Lambda,F,\xi}
    \left(
       \inf_{X^\star\in D^\star} \|\hat X-X^\star\|\le r
    \right)
    \ge 1-\delta .
\]

Our goal is to provide a meaningful upperbound on $T_{\arg,\nu}(r,\delta)$. We provide an algorithm, T-BAL (Two-Stage Bayesian Argmax Localization; see \cref{alg:argmax}), that locates the argmax of a GP with a finite number of samples. The algorithm works in two phases: we first locate the nice region where the argmax lies, and then perform gradient ascent on that region. The analysis relies on the argmax being away from the boundary of $D$. Therefore, we introduce teh following regularity event.

\begin{definition}[Interior regularity event]\label{def:interior_regularity_gp}
For $\eta>0$, let ${\rm dist}(D^\star, \partial D)=\inf_{X\in D^\star} {\rm dist}(X, \partial D)$ and  define
\[
    \mathcal I_\eta =
    \left\{
        {\rm dist}(D^\star,\partial D)\ge \eta
    \right\},\qquad
    \beta_\eta=\mathbb P_{\Lambda,F}(\mathcal I_\eta^c).
\]
\end{definition}
One can show that on $\mathcal I_\eta$, the set of maximizer is a singleton almost surely (i.e., $D^\star=\{X^\star\}$), and similarly one can show that the Hessian is non-degenerate in $X^\star$. In fact, we have the following.

\begin{remark}\label{remark:gp}
An RBF-GP on $D$ is a.s. $C^\infty$, and its restriction to  the interior ${\rm int}(D)$ is a.s. a Morse function (every critical point has invertible Hessian, all critical values are distinct, and there are finitely many critical points). If the maximizer is not attained at the boundary, then it is unique almost surely.
\end{remark}

We work under $\mathcal I_\eta$: this not only allows the maximizer to be unique, but also allows us to be at-least at a distance $\eta$ from the boundary, where it is more degenerate. Then, we can show the existence of the following constants.
\begin{lemma}[Regularity under the RBF prior]
\label{lem:regular_basin_quantile}
Define $B(x,\rho)$ to be the ball centered around $x$ of radius $\rho$ with some norm $\|\cdot\|$.
Fix $\eta>0$ and suppose $\beta_\eta<1$. For every
$\alpha\in(\beta_\eta,1)$
there exist deterministic constants $
    \rho_\alpha,\mu_\alpha,L_\alpha,M_\alpha,\Gamma_\alpha>0$
and an event $\mathcal E_\alpha\subseteq \mathcal I_\eta$ such that
$
    \mathbb P_{\Lambda,F}(\mathcal E_\alpha)\ge 1-\alpha$, 
and on $\mathcal E_\alpha$ the following properties hold:

\[
    B(X^\star,\rho_\alpha)\subset D,
\]
\[
    \mu_\alpha I
    \preceq
    -\nabla^2F(x)
    \preceq
    L_\alpha I
    \qquad
    \forall x\in B(X^\star,\rho_\alpha),
\]
\[
    \sup_{x\in B(X^\star,\rho_\alpha)}
    \|\nabla^3F(x)\|_{\rm op}
    \le M_\alpha,
\]
and
\[
    F^\star
    -
    \sup_{x\notin B(X^\star,\rho_\alpha/2)}F(x)
    \ge
    \Gamma_\alpha .
\]
\end{lemma}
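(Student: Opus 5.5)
The plan is a ``positive random variables have small quantiles'' argument. On $\mathcal I_\eta$, with $\mathbb P(\mathcal I_\eta)=1-\beta_\eta$, I would define random (i.e. $(\Lambda,F)$-measurable) versions of each constant, show they are almost surely strictly positive and finite, and then choose deterministic thresholds by quantiles so that the total exceptional mass is at most $\alpha-\beta_\eta$.

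The first step uses \cref{remark:gp}. Almost surely on $\mathcal I_\eta$, $F$ is $C^\infty$ on a neighbourhood of $D$, the maximizer $X^\star$ is unique and interior with ${\rm dist}(X^\star,\partial D)\ge\eta$, and $-\nabla^2F(X^\star)\succ 0$ because $X^\star$ is a nondegenerate critical point of a Morse function. Set $\lambda_{\min}:=\lambda_{\min}(-\nabla^2F(X^\star))>0$.

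Next I would define the random radius $R(F)$ as the supremum of $\rho\in(0,\eta]$ such that $-\nabla^2F(x)\succeq\tfrac12\lambda_{\min}I$ on $B(X^\star,\rho)$. By continuity of $\nabla^2F$, $R>0$ a.s. on $\mathcal I_\eta$. The remaining random quantities are
\begin{itemize}
\item $\tilde L:=\sup_{x\in D}\|\nabla^2F(x)\|_{\rm op}$,
\item $\tilde M:=\sup_{x\in D}\|\nabla^3F(x)\|_{\rm op}$,
\item $\tilde\Gamma(\rho):=F^\star-\sup_{x\notin B(X^\star,\rho/2)}F(x)$.
\end{itemize}
The suprema $\tilde L$ and $\tilde M$ are finite a.s. by smoothness on the compact $D$. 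Uniqueness of the maximizer together with compactness of $D\setminus B(X^\star,\rho/2)$ gives $\tilde\Gamma(\rho)>0$ for every $\rho>0$.

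Measurability of these functionals follows from continuity of the sample paths: suprema can be taken over countable dense sets, and $X^\star$ is a measurable argmax selector, as in \cref{prop:optimal_inference_rule}.

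Then I choose the deterministic constants by quantiles, splitting the slack $\gamma:=\alpha-\beta_\eta>0$ into four pieces of size $\gamma/4$. Since $R>0$ a.s. on $\mathcal I_\eta$, I pick $\rho_\alpha>0$ with $\mathbb P(\mathcal I_\eta\cap\{R<\rho_\alpha\})\le\gamma/4$. Similarly I pick $\mu_\alpha>0$ with $\mathbb P(\mathcal I_\eta\cap\{\lambda_{\min}/2<\mu_\alpha\})\le\gamma/4$. I take $L_\alpha$ and $M_\alpha$ large enough that $\mathbb P(\tilde L>L_\alpha)$ and $\mathbb P(\tilde M>M_\alpha)$ are each at most $\gamma/8$.

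With $\rho_\alpha$ now fixed, $\tilde\Gamma(\rho_\alpha)>0$ a.s. on $\mathcal I_\eta$, so I can choose $\Gamma_\alpha$ with $\mathbb P(\mathcal I_\eta\cap\{\tilde\Gamma(\rho_\alpha)<\Gamma_\alpha\})\le\gamma/4$. Define $\mathcal E_\alpha$ as $\mathcal I_\eta$ intersected with the complements of all these bad events. A union bound gives $\mathbb P(\mathcal E_\alpha^c)\le\beta_\eta+\gamma=\alpha$.

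On $\mathcal E_\alpha$ every claim follows directly from the definitions:
\begin{itemize}
\item $B(X^\star,\rho_\alpha)\subset D$ because $\rho_\alpha\le R\le\eta$;
\item $\mu_\alpha I\preceq\tfrac12\lambda_{\min}I\preceq-\nabla^2F$ on the ball, and $-\nabla^2F\preceq\tilde L I\preceq L_\alpha I$;
\item the third-derivative bound holds since $\tilde M\le M_\alpha$;
\item the gap bound holds since $\tilde\Gamma(\rho_\alpha)\ge\Gamma_\alpha$.
\end{itemize}

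The main obstacle is making the almost-sure facts rigorous under the hierarchical prior. This means establishing a.s. $C^3$ sample paths uniformly over $\Lambda\in[\ell_-,\ell_+]$, for example via Kolmogorov–Chentsov applied to derivative processes, conditioning on $\Lambda$ and integrating. It also requires the Morse/nondegeneracy property at $X^\star$, which I would take from \cref{remark:gp} (ultimately a Bulinskaya-type argument for the Gaussian field $(\nabla F,\nabla^2F)$). The ordering matters for the gap constant: $\Gamma_\alpha$ must be chosen after $\rho_\alpha$, which is why the quantile selection is done sequentially.
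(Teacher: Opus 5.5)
Your proposal is correct and follows essentially the same quantile argument as the paper. You define random versions of $\lambda_{\min}$, the radius, $\sup_D\|\nabla^2F\|_{\rm op}$, $\sup_D\|\nabla^3F\|_{\rm op}$ and the gap, note that each is a.s.\ positive and finite on $\mathcal I_\eta$, choose deterministic thresholds (with $\Gamma_\alpha$ chosen after $\rho_\alpha$) so that the bad events have total mass at most $\alpha-\beta_\eta$, and conclude with a union bound. The differences are cosmetic: you define the radius directly through $-\nabla^2F\succeq\frac12\lambda_{\min}I$ rather than the paper's perturbation condition $\|\nabla^2F(x)-\nabla^2F(X^\star)\|_{\rm op}\le\lambda^\star/2$, you split the slack unevenly rather than into fifths, and you additionally address measurability, which the paper leaves implicit.
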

\begin{proof}
    Let $\gamma= \alpha-\beta_\eta$. 

    Under $\mathcal I_\eta$ the maximizer  $X^\star$ is in the interior, and unique almost surely. Define $\lambda^\star = \lambda_{\rm min}(-\nabla^2  F(X^\star)) $: under $\mathcal I_\eta$ we have that $\lambda^\star>0$ almost surely, hence, there exists $\mu_\alpha>0$ s.t. $\mathbb{P}(\mathcal I_\eta \bigcap \{ \lambda^\star/2 < \mu_\alpha\}) \leq \gamma/5$.

    Next, we use the fact that $F\in C^\infty$ and $D$ is compact to obtain an upper bound on the Hessian: 
    \[
    \sup_{x \in D} \|\nabla^2 F(x)\|_{\rm op} \leq L_0 < \infty.
    \]
    Therefore, there exists $L_\alpha<\infty$ such that $\mathbb{P}(L_\alpha< L_0) \leq \gamma/5$. With a similar reasoning, we also obtain $\sup_{x\in D} \|\nabla^3 F(x)\|_{\rm op}\leq M_0 $, and thus there exists $M_\alpha < \infty$ such that $\mathbb{P}(M_\alpha < M_0)\leq \gamma/5$.

    Next, by continuity and the margin condition of $\mathcal I_\eta$, the following quantity exists  and is strictly positive almost surely
    \[
    \rho_0(\omega) = \sup \left\{q<\eta: \sup_{X \in B(X^\star,q)}\|\nabla^2 F(X)-\nabla^2F(X^\star)\|_{\rm op} \leq \frac{\lambda^\star}{2} \right\},
    \]
    where $\omega$ denotes a realization of $F$. Therefore, there exists $\rho_\alpha>0$ such that $\mathbb{P}(\mathcal I_\eta \bigcap \{ \rho_0 < \rho_\alpha\})\leq \gamma/5$.

    Define then $\Gamma(\rho_\alpha) = F^\star-\sup_{X\notin B(X^\star,\rho_\alpha/2)} F(X)$: under $\mathcal  I_\eta$ this is strictly positive since  the supremum does not attain $F^\star$. Hence, there exists $\Gamma_\alpha>0$ such that $\mathbb{P}(\mathcal I_\eta \bigcap \{\Gamma(\rho_\alpha)<\Gamma_\alpha\})\leq \gamma/5$.

    Define
    \[
    {\cal E} = \mathcal I_\eta \cap \{\lambda^\star/2 \geq \mu_\alpha\} \cap  \{L_\alpha \geq L_0\} \cap\{M_\alpha \geq  M_0\} \cap \{\rho_0\geq \rho_\alpha\} \cap \{  \Gamma(\rho_\alpha) \geq \Gamma_\alpha  \}.
    \]
    Then, since
    \[
    {\cal E}^c = \mathcal I_\eta^c \cup \{\lambda^\star/2 < \mu_\alpha \} \cup \cdots =  \mathcal I_\eta^c \cup  \left( \mathcal{I}_\eta \cap\{\lambda^\star/2 < \mu_\alpha \}\right) \cup \cdots
    \]
    we have $\mathbb{P}({\cal E}^c) \leq \beta_\eta + 5 \frac{\gamma}{5}=\alpha$.
    Hence, under $\mathcal E$ we have that $B(X^\star,\rho_\alpha)\subset D$,  and all the properties follow  quite immediately. We only show the lower bound on the Hessian: for any $X\in B(X^\star, \rho_\alpha)$:
    \[
    -\frac{\lambda^\star}{2}I \preceq \nabla^2 F(x)-\nabla^2 F(X^\star) \preceq  \frac{\lambda^\star}{2}I.
    \]
    Hence
    \[
    -\nabla^2 F(x) \succeq -\nabla^2 F(X^\star)- \frac{\lambda^\star}{2}I \succeq \frac{\lambda^\star}{2}I \succeq \mu_\alpha I.
    \]
    
\end{proof}

\begin{algorithm}[t]
\caption{Two-stage Bayesian argmax localization (T-BAL)}
\label{alg:argmax}
\begin{algorithmic}[1]
\Require Target radius $r$, confidence $\delta$, noise level $\sigma^2$, 
constants $\rho, \mu, L, M, \Gamma$ from \cref{lem:regular_basin_quantile}
\Statex
\Comment{\textbf{Stage 1: Coarse grid search}}
\State Set $h \gets  \min\{\rho/8, \sqrt{3\Gamma/(2L)}\}$
\State Construct $h$-net $\mathcal{G}$ of $D$ with $ |\mathcal{G}| \leq C_d h^{-d}$
\State Set $n_0 \gets \lceil 128\sigma^2 \Gamma^{-2} \log(2|\mathcal{G}| /\delta_0) \rceil$
\For{each $g \in \mathcal{G}$}
    \State Query $F(g)$ exactly $n_0$ times; compute sample mean $\hat{F}(g)$
\EndFor
\State $x_0 \gets \arg\max_{g \in \mathcal{G}} \hat{F}(g)$
\Statex
\Comment{\textbf{Stage 2: Local finite-difference gradient ascent}}
\State Set $K \gets \lceil \frac{4L}{3\mu}\log\frac{\rho}{2r} \rceil$, \quad $q \gets 1 - \frac{3\mu}{4L}$, \quad $e_0 \gets \rho/2$
\For{$k = 0, \ldots, K-1$}
    \State $e_k \gets e_0 \, q^k$
    \State $s_k \gets \min\left\{\rho/8,\; \sqrt{3\mu e_k / (4\sqrt{d}\, M)}\right\}$
    \State $n_k \gets \lceil 64 d\cdot\sigma^2 \mu^{-2} e_k^{-2} s_k^{-2} \log(2dK/\delta_1) \rceil$
    \For{$j = 1, \ldots, d$}
        \State Query $F(x_k + s_k e_j)$ and $F(x_k - s_k e_j)$ each $n_k$ times
        \State $\hat{g}_{k,j} \gets \frac{\bar{Y}(x_k + s_k e_j) - \bar{Y}(x_k - s_k e_j)}{2s_k}$
    \EndFor
    \State $x_{k+1} \gets x_k + \frac{1}{L}\hat{g}_k$
\EndFor
\State \Return $\hat{X} = x_K$
\end{algorithmic}
\end{algorithm}

\paragraph{Argmax upper bound.} Under the event $\mathcal I_\eta$, we want to construct an algorithm that localizes $X^\star$ to within radius $r$ with probability $1-\delta$. The idea is to show an upper bound on the minimal lower bound of the type
\[
T_{\arg,\nu}(r,\delta)\coloneqq\inf_{{\cal A}}\mathbb{E}[\tau] \leq B_\alpha + O(\sigma^2 r^{-\gamma} \log(1/r)),
\]
for some $\gamma>0$.

Under $\mathcal I_\eta$, we have the guarantees from \cref{lem:regular_basin_quantile}, and thus
\[
F(X)=F(X^\star)+ \frac{1}{2}(X-X^\star)^\top \nabla^2 F(X^\star) (X-X^\star)+o(\|X-X^\star\|^2).
\]
from which we find  $F(X)-F(X^\star)\approx \frac{\mu_\alpha}{2 } \|X-X^\star\|_2^2$. Therefore, localizing to radius $r$ gives an error of roughly $ \frac{\mu_\alpha r^2}{2}$ on the value. Therefore, for $\epsilon$ accuracy on the value, we can take the radius to be $r \sim \sqrt{ \epsilon}$

Therefore, under $\mathcal I_\eta$ if $\beta_\eta <1$, and 
\[
\liminf_{\epsilon\to 0}\frac{T_{{\rm val},\nu}(\epsilon,\delta)}{T_{\arg,\nu}(\sqrt{\epsilon},\delta)} \to \infty 
\]
one can argue that the problem of estimating the max-value is intrinsically harder than the problem of estimating the argmax as the accuracy radius decreases. In particular, considering  the lower bound on the max-value estimation problem \cref{thm:max_value_lowerbound}, and the proposed upper bound on $T_{\arg,\nu}(r,\delta)$, we obtain that
\[
\frac{T_{{\rm val},\nu}(\epsilon,\delta)}{T_{\arg,\nu}(\sqrt{\epsilon},\delta)}\geq \frac{\Omega(\epsilon^{-2})}{O( \epsilon^{-\gamma/2} \log\log(1/\sqrt{\epsilon}))},
\]
which diverges for $\gamma\in (0,4)$ as $\epsilon\to 0$.

So value estimation is asymptotically harder than argmax localization. Even at the radius where localizing the argmax would "in principle" tell you the value to accuracy $\epsilon$, the localization itself is cheaper than directly estimating the value.
The  reason is simple: the algorithm exploits the  geometry (gradient information,  smoothness) to localize at a fast rate, but it does not help with the problem of estimating the height of a function.

\paragraph{Analysis of T-BAL (\cref{alg:argmax}).} We provide now an algorithm for argmax localization in Gaussian processes. The algorithm, Two-Stage Bayesian Argmax Localization (T-BAL), outlined in \cref{alg:argmax}, works in two phases. In the first phase we try to find a point inside $B(X^\star,\rho/2)$, where $\rho$ is described in \cref{lem:regular_basin_quantile}.  In the second phase, assuming we are inside the above ball, we perform gradient ascent to find the maximum using finite differences to approximate the gradients.

The second phase analysed gradient descent when $x_0\in B(X^\star,\rho)$.

\begin{theorem}[Sample Complexity of T-BAL] \label{thm:tbal_sample_complexity} 
Consider the event ${\cal E}_\alpha $ in \cref{lem:regular_basin_quantile}. Set $r>0,\delta\in (0,1), \alpha\in (\beta_\eta,\delta)$ and $\delta_0=\delta_1=(\delta-\alpha)/2$. Then, T-BAL (\cref{alg:argmax})  satisfies $\mathbb{P}_{F,\Lambda}(\|\hat X-X^\star\|\leq r)\geq 1-\delta$, using at-most
    \[
    B_\alpha+O\left(\frac{ \sigma^2}{\mu^2 } \max(2,L/\mu)  \left[ \frac{ d^2}{\rho^2 r^2} + \frac{ d^{5/2} M }{ \mu r^3}\right] \log \frac{ Ld \log(\rho/r)}{\mu\delta_1}.\right)
    \]
    samples, where $B_\alpha$ is an appropriate finite constant for each $\alpha$ that does not depend on $r$.
\end{theorem}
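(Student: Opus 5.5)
The argument splits into a union of failure events, one per stage, plus a deterministic convergence analysis on the good events. Let $\mathcal E_\alpha$ be the event of \cref{lem:regular_basin_quantile}, with $\mathbb P(\mathcal E_\alpha^c)\le\alpha$. Let $G_0$ be the event that every Stage-1 sample mean satisfies $|\hat F(g)-F(g)|\le\Gamma/8$. Let $G_1$ be the event that every Stage-2 coordinate difference estimate has noise at most its prescribed tolerance. I will show three things: on $\mathcal E_\alpha\cap G_0$ the Stage-1 output satisfies $x_0\in B(X^\star,\rho/2)$; on $\mathcal E_\alpha\cap G_0\cap G_1$ the output satisfies $\|x_K-X^\star\|\le r$; and $\mathbb P(G_0^c)\le\delta_0$ and $\mathbb P(G_1^c)\le\delta_1$. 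Together these give failure probability at most $\alpha+\delta_0+\delta_1=\delta$. The noise is Gaussian with variance $\sigma^2$ and is independent of $(\Lambda,F)$, so the concentration steps hold conditionally on $F$ and can be integrated out.

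For Stage 1, sub-Gaussian tails with $n_0=\lceil128\sigma^2\Gamma^{-2}\log(2|\mathcal G|/\delta_0)\rceil$ give $\mathbb P(|\hat F(g)-F(g)|>\Gamma/8)\le\delta_0/|\mathcal G|$, and a union bound over the net gives $\mathbb P(G_0^c)\le\delta_0$. Some net point $g^\circ$ satisfies $\|g^\circ-X^\star\|\le h\le\rho/8$. Since $-\nabla^2F\preceq L I$ on the ball and $\nabla F(X^\star)=0$, this gives $F(g^\circ)\ge F^\star-Lh^2/2\ge F^\star-3\Gamma/4$ by the choice of $h$. To absorb the estimation error, I would tighten the choice of $h$ by a constant if needed, so that $F(g^\circ)-\Gamma/8 > F^\star-\Gamma+\Gamma/8$. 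Then any $g\notin B(X^\star,\rho/2)$ has $F(g)\le F^\star-\Gamma$ and is beaten, so $x_0\in B(X^\star,\rho/2)$.

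For Stage 2, I run the induction hypothesis $\|x_k-X^\star\|\le e_k$. On the ball, $F$ is $\mu$-strongly concave and $L$-smooth, so the exact step $x+\nabla F(x)/L$ contracts the distance by the factor $\sqrt{1-\mu/L}\le 1-\mu/(2L)$. Write the gradient error as $\|\hat g_k-\nabla F(x_k)\|\le b_k+\xi_k$. The bias is $b_k\le\sqrt d\,Ms_k^2/6$ by a third-order Taylor expansion of the central difference, which is where $M$ enters. The noise term $\xi_k$ comes from averaging $n_k$ samples per point and has standard deviation $\sigma/(s_k\sqrt{2n_k})$ per coordinate. The choices of $s_k$ and $n_k$ make $(b_k+\xi_k)/L\le\mu e_k/(4L)$ on $G_1$, with a union bound over $dK$ coordinates and iterations giving $\mathbb P(G_1^c)\le\delta_1$. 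Then $e_{k+1}=(1-\mu/(2L))e_k+\mu e_k/(4L)=(1-\mu/(4L))e_k$, which is at most $q e_k$ up to the constant bookkeeping in $q$. The steps stay inside $B(X^\star,\rho)$, so the curvature bounds remain valid. After $K$ steps, $e_K\le e_0q^K\le r$.

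For the sample count, Stage 1 uses $|\mathcal G|n_0=B_\alpha$, which does not depend on $r$. Stage 2 uses $\sum_k 2d\,n_k$. While $s_k=\rho/8$, the terms scale as $d\sigma^2\mu^{-2}e_k^{-2}\rho^{-2}$. Once $s_k^2\asymp\mu e_k/(\sqrt dM)$, they scale as $d^{3/2}M\sigma^2\mu^{-3}e_k^{-3}$. Both sums are geometric in $e_k^{-1}$, so they are dominated by the last term $e_k\asymp r$, with a factor $1/(1-q^2)\lesssim L/\mu$. Multiplying by $d$ and the log factor $\log(2dK/\delta_1)$ with $K\asymp(L/\mu)\log(\rho/r)$ yields the stated bound.

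The main obstacle is the Stage-2 error budget. The tolerance $\mu e_k/4$ has to be split between bias and noise consistently with the algorithm's constants $3\mu/(4L)$, $\sqrt{3\mu e_k/(4\sqrt dM)}$ and $64$, and the iterates must provably never leave the ball where the curvature bounds hold. My first attempt would be to pick an explicit split, with bias at most $\mu e_k/8$ and noise at most $\mu e_k/8$, and then adjust constants by absolute factors, which the $O(\cdot)$ tolerates.
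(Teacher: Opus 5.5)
Your architecture matches the paper's proof. It has the same union bound $\alpha+\delta_0+\delta_1=\delta$, the same $\Gamma$-gap argument on the $h$-net, the same Taylor bias bound and Gaussian-tail noise bound with a union bound over $dK$ coordinates, and the same geometric sum dominated by $e_k\asymp r$. The gap is in the Stage-2 contraction. The strongly-concave/smooth argument gives a per-step factor $\sqrt{1-\mu/L}\le 1-\mu/(2L)$. With the error budget $\mu e_k/4$, your recursion becomes $\|x_{k+1}-X^\star\|\le(1-\mu/(4L))e_k$. You then claim this is ``at most $qe_k$ up to constant bookkeeping,'' but it is not: $(1-\mu/(4L))e_k > (1-3\mu/(4L))e_k = qe_k$.

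No split of the tolerance between bias and noise can fix this. Indeed $\sqrt{1-x}>1-3x/4$ for every $x\in(0,8/9)$, so your bound cannot certify the schedule even for exact gradient steps. T-BAL fixes both $q=1-3\mu/(4L)$ and $K=\lceil\frac{4L}{3\mu}\log\frac{\rho}{2r}\rceil$. With the rate you can actually prove, the induction only certifies $\|x_K-X^\star\|$ of order $(\rho/2)(2r/\rho)^{1/3}$ for small $\mu/L$, which is far above $r$. Using your rate would require changing the algorithm: a slower $q$ and roughly three times as many iterations. That proves a statement about a different algorithm.

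The missing ingredient is the sharper contraction the paper uses. Since $\nabla F(X^\star)=0$, the fundamental theorem of calculus gives $\nabla F(x)=-A(x)(x-X^\star)$, where $A(x)=-\int_0^1\nabla^2F(X^\star+t(x-X^\star))\,{\rm d}t$. This matrix is symmetric with $\mu I\preceq A(x)\preceq LI$, because the segment from $X^\star$ to $x$ lies in $B(X^\star,\rho)$. Hence $x+\nabla F(x)/L-X^\star=(I-A(x)/L)(x-X^\star)$, and $I-A(x)/L$ has eigenvalues in $[0,1-\mu/L]$. The contraction factor is therefore $1-\mu/L$, not $\sqrt{1-\mu/L}$. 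Then $(1-\mu/L)e_k+\mu e_k/(4L)=qe_k$ exactly, so the induction $\|x_k-X^\star\|\le e_k$ closes with the algorithm's own $q$ and $K$. This route also only needs the segment $[X^\star,x]$ inside the ball, so you need not check where the gradient step lands.

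A minor point on Stage 1: you need not shrink $h$. The chain $\hat F(g^\circ)\ge F^\star-7\Gamma/8\ge\hat F(g)$ already suffices almost surely. An exact tie between independent Gaussian sample means has probability zero, and altering $h$ would again change the algorithm under study.
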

\begin{proof}

In this proof we consider the  analysis of the second stage, while the first stage and the constant $B_\alpha$ are provided in \cref{prop:phase_1_sample_complexity}.
    The idea is to prove a bound on gradient ascent. We first show that one step of the ideal gradient ascent brings us closer to $X^\star$. We then find the value of $s_k$ and $n_k$ to compute the approximate gradient up to the desired accuracy. After that, we estimate how many iterations we need, and compute the total number of required samples.

    \noindent \underline{\it One step gradient ascent.} Let $x^+=x+\frac{1}{L}\nabla F(x)$. 

    Define $\phi(t)=  \nabla F(X^\star  +t(x-X^\star))$.
    Since $\nabla F(X^\star)=0$, using the fundamental theorem of calculus we have
    \[
    \nabla F(x)=\phi(1)-\phi(0)=\int_0^1 \phi'(t){\rm d}t= (x-X^\star)\underbrace{\int_0^1 \nabla^2 F(x^\star + t(x-X^\star)){\rm d}t}_{\eqqcolon -A(x)}.
    \]
    Then
    \[
    x^+ -X^\star= x - \frac{1}{L} (x-X^\star)A(x) -X^\star = (I- \frac{A(x)}{L})(x-X^\star).
    \]
    The matrix $(I- \frac{A(x)}{L})$ has eigenvalues in $[0,1-\mu/L] $ for $x\in B(X^\star,\rho)$. Therefore,
    \[
    \|x^+-X^\star \|\leq \left(1-\frac{\mu}{L}\right)\|x-X^\star\|
    \]
    so the contraction factor is $q_0=1-\mu/L$.

    \noindent \underline{\it Noisy gradients.} Suppose we do not have access to the exact gradient, but only to a noisy gradient $\hat g_k$ in round $k$. Assume the noisy gradient satisfy
    \[
    \| \hat g_k - \nabla F(x_k)\| \leq \frac{\mu e_k}{4},
    \]
    where $e_k=e_0 q^k$. Then
    \begin{align*}
    \| x_{k+1}-X^\star\|&= \|x_k + \frac{1}{L}\hat g_k - X^\star\|,\\
    &=\|x_k + \frac{1}{L}\nabla F(x_k)+\frac{1}{L}\hat g_k -\frac{1}{L}\nabla F(x_k)- X^\star\|,\\
    &\leq \|x_k + \frac{1}{L}\nabla F(x_k)-X^\star\|+\frac{1}{L}\|\hat g_k -\nabla F(x_k)\|,\\
    &\leq \left(1-\frac{\mu}{L}\right)\|x_k-X^\star\| + \frac{\mu e_k}{4L},\\
    &\leq  \left(1-\frac{3\mu}{4L}\right)e_k.
    \end{align*}
    So noisy gradients still guarantee convergence as long as we can show $ \| \hat g_k - \nabla F(x_k)\| \leq \frac{\mu e_k}{4}$.

    \noindent \underline{\it Finite-difference gradient bound: bias term.} We now bound $\|\hat g_k -\nabla F(x_k)\|$ through a bias-variance decomposition:
    \[
    \|\hat g_k -\nabla F(x_k)\| \leq \|\hat g_k - \mathbb{E}[\hat g_k]\|+\|\mathbb{E}[\hat g_k] -\nabla F(x_k)\|.
    \]

    We begin with the bias term  $\|\mathbb{E}[\hat g_k] -\nabla F(x_k)\|$. Note that the noisy gradients are computed as follow for each direction $e_j$:
    \[ \hat{g}_{k,j} \gets \frac{\bar{Y}(x_k + s_k e_j) - \bar{Y}(x_k - s_k e_j)}{2s_k},\]
    where $\bar Y$ is an average over $n_k$ samples of the values observed. The idea is to bound $|\mathbb{E}[\hat g_{k,j}] -\partial_j F(x_k)|$ using Taylor series and the fact that $\nabla^3 F$ is bounded  in $B(X^\star,\rho)$ by \cref{lem:regular_basin_quantile}.

    Then, fix a direction $e_j$ and let $\phi(t)=F(x_k+te_j)$. Then
    \begin{align*}
    \phi(s_k)&=\phi(0)+\phi'(0)s_k + \frac{1}{2}\phi''(0)s_k^2 + \frac{s_k^3}{6}\phi'''(\xi_+),\\
     \phi(-s_k)&=\phi(0)-\phi'(0)s_k + \frac{1}{2}\phi''(0)s_k^2 - \frac{s_k^3}{6}\phi'''(\xi_-),
    \end{align*}
    where $\xi_+ \in (0,s_k),\xi_- \in (-s_k,0)$. Then
    \[
    \frac{\phi(s_k)-\phi(-s_k)}{2s_k} = \phi'(0) + \frac{s_k^2}{12} [\phi'''(\xi_+)+\phi'''(\xi_-)].
    \]
    Hence, for $s_k$ sufficiently small, $s_k< \rho/2$  (we choose $s_k<\rho/8$) we have $|\phi'''|\leq M$, and therefore
    \[
    \Big|\frac{\phi(s_k)-\phi(-s_k)}{2s_k} - \phi'(0) \Big|
 \leq \frac{s_k^2 M}{6},   \]
 leading to 
 \[\|\mathbb{E}[\hat g_k]-\nabla F(x_k)\|\leq \frac{\sqrt{d}s_k^2 M}{6}.\]

 Setting $\frac{\sqrt{d}s_k^2 M}{6} \leq \mu e_k/8$ yields
 \[
 s_k  \leq  \sqrt{\frac{3 \mu e_k}{4\sqrt{d} M}}
 \]
 and we set $s_k=\min\left\{\rho/8, \sqrt{\frac{3 \mu e_k}{4\sqrt{d} M}}\right\}$. Furthermore, note that for this choice of $s_k$ we can guarantee that we stay inside the nice region $B(X^\star,\rho)$: since $\|x_k-X^\star\|\leq e_k \leq \rho/2$ and $s_k\leq \rho/8$, then
 \[
 \|x_k\pm s_k e_j - X^\star\| \leq  \frac{5\rho}{8} < \rho.
 \]

 \noindent \underline{\it Finite-difference gradient bound: variance term.} We now bound the variance term $\|\hat g_k -\mathbb{E}[\hat g_k]\|$. We have
 \[
 \hat g_{k,j} -\mathbb{E}[\hat g_{k,j}] = \frac{\bar \xi(x_k +s_ke_j)-\bar \xi (x_k -s_k e_j)}{2s_k},\qquad \bar\xi(x)=\frac{1}{n_k}\sum_{i=1}^{n_k}\xi_i.
 \]
 Therefore, ${\rm Var}(\hat g_{k,j})= \frac{2\sigma^2}{4s_k^2 n_k}.$ Then
 \[
 \mathbb{P}(|\hat g_{k,j} -\mathbb{E}[\hat g_{k,j}]| >  \mu e_k/(8\sqrt{d})) \leq 2\exp\left(-\frac{\mu^2 e_k^2 s_k^2 n_k}{64 d\cdot \sigma^2}\right).
 \]
 Set the right hand-side smaller than $\delta_1/(d K)$, where $K$ is the total number of iterations $k=0,\dots, K-1$, to obtain
 \[
n_k \geq \frac{64 d\cdot\sigma^2}{\mu^2e_k^2s_k^2}\log \frac{2d K}{\delta_1}.
 \]
 Then, a union bound over $j=1,\dots, d$ and $k=0,\dots, K-1$, yields
 \[
 \mathbb{P}(\exists j\in \{1,\dots, d\},k\in \{0,\dots,K-1\}: |\hat g_{k,j} -\mathbb{E}[\hat g_{k,j}]| >  \mu e_k/(8\sqrt{d})) \leq \delta_1.
 \]
 Hence, with probability $1-\delta_1$ we have that
 \[
  \|\hat g_k -\nabla F(x_k)\| \leq  \frac{\mu e_k}{8}+ \frac{\mu e_k}{8}= \frac{\mu e_k}{4},
 \]
 which is what we wanted to show.

 \noindent \underline{\it Sample complexity.}  Since we need $e_K \leq r$, and $e_K=e_0 q^K$, we have
 \[
 K=\left\lceil \frac{\log( e_0/r)}{\log(1/q)}\right\rceil.
 \]
 Since $\log(1/q)-\log q=-\log (1-3\mu/4L)\geq 3\mu/4L$, we have
 \[
 K\leq 
 4L\frac{\log(e_0/r)}{3\mu}
 \leq 
 4L\frac{\log(\rho/(2r))}{3\mu},
 \]
 where we used $e_0\leq \rho/2$.
 Then, summing the number of samples from $k=0, \dots, K-1$, we obtain
 \begin{align*}
 \sum_{k=0}^{K-1} 2d n_k= \sum_{k=0}^{K-1}   \frac{128 d^2\cdot\sigma^2}{\mu^2e_k^2s_k^2}\log \frac{2d K}{\delta_1}.
 \end{align*}
 Use that $\frac{1}{s_k^2} \leq \frac{64}{\rho^2} +  \frac{4\sqrt{d}M}{3\mu e_k}$ and that $e_k=e_0q^k$ with $q\in (0,1)$:
\begin{align*}
 \sum_{k=0}^{K-1} 2d n_k&\leq  \sum_{k=0}^{K-1}   \left[\frac{128\cdot 64\cdot  d^2\cdot\sigma^2}{\mu^2e_k^2 \rho^2}+\frac{512 d^{5/2}M\cdot\sigma^2}{3\mu^3e_k^3} \right]\log \frac{2d K}{\delta_1},\\
   &\underset{\sim}{< }\log \frac{2d K}{\delta_1}\sum_{k=0}^{K-1}   \left[\frac{1}{e_k^2}+\frac{1}{e_k^3} \right].
 \end{align*}
 Regarding the series, 
 \[
 \sum_{k=0}^{K-1}   \frac{1}{e_k^p}=\frac{1}{e_0^p}\sum_{k=0}^{K-1}   \frac{1}{q^{pk}}\leq e_0^{-p}\frac{1-q^{-pK}}{1-q^{-p}} \leq \frac{ e_0^{-p}q^{-pK}}{q^{-p}-1} = \frac{e_K^{-p}}{q^{-p}-1}.
 \]
 Now, note  that since $K$ is the smallest integer achieving $e_K=e_0q^K \leq r$, we have $e_0 q^{K-1} \geq r$, and thus $e_K =qe_{K-1}\geq qr \Rightarrow (qr)^{-p} \geq e_K^{-p}$ . We obtain
 \[
 \sum_{k=0}^{K-1}   \frac{1}{e_k^p}\leq  \frac{(qr)^{-p}}{q^{-p}-1} = \frac{r^{-p}}{1-q^p}.
 \]
 To lower bound the denominator, recall that $q$ is of the form $(1-x)$. Using that $(1-x)^p \leq e^{-xp}$ for $x\in (0,1)$, we have $1-q^p \geq 1-e^{-p 3\mu/(4L)}$.

 Since $1-e^{-t} \geq t/2$ for $t\in (0,1)$, for $p$ sufficiently small we obtain $1-q^p \geq  \frac{3 p \mu}{8L}$. If $p$ is large, such that $t\geq 1$, then $1-e^{-t}\geq 1-e^{-1}\geq 1/2$. Therefore, $\frac{1}{1-q^p} \leq \max(2, \frac{8L}{3p\mu})$, and 
 \[
  \sum_{k=0}^{K-1}   \frac{1}{e_k^p}\leq   r^{-p}\max\left(2, \frac{8L}{3p\mu}\right).
 \]
 Since $p\in \{2,3\}$ we also have $  \sum_{k=0}^{K-1}   \frac{1}{e_k^p}\leq   r^{-p}\max\left(2, \frac{8L}{6\mu}\right)$.
 
 Then, we conclude that the second phase sample complexity is upper bounded by
 \begin{align*}
  \sum_{k=0}^{K-1} 2d n_k &\leq \frac{ 128 \sigma^2}{\mu^2 }\cdot \max\left(2, \frac{8L}{6\mu}\right)\cdot  \left[ \frac{64 d^2}{\rho^2 r^2} + \frac{4 d^{5/2} M }{3 \mu r^3}\right] \log \frac{2dK}{\delta_1},\\
  &\leq\frac{ 128 \sigma^2}{\mu^2 }\cdot \max\left(2, \frac{8L}{6\mu}\right)\cdot  \left[ \frac{64 d^2}{\rho^2 r^2} + \frac{4 d^{5/2} M }{3 \mu r^3}\right] \log \frac{8 Ld \log(\rho/(2r))}{3\mu\delta_1}.
 \end{align*}

  \noindent \underline{\it Connecting everything together.} In phase $1$ we have probablity $\delta_0$ of failure, while in phase 2 we have probability $\delta_1$ of failure. Since we work under the event ${\cal E}_\alpha$ with failure probability $\alpha$, we have $\mathbb{P}(\text{failure}) \leq \delta_0+\delta_1+\alpha  = \delta$.
\end{proof}

The first phase analysis is provided in the following proposition.
We construct an $h$-net ${\cal G}$ of $D$ that depends on the geometric  of the problem (see \cref{lem:regular_basin_quantile}). For each point in ${\cal G}$, we sample $F(g)$ exactly $n_0$ times, such that we have good concentration, and we return the point that achieves the maximum.
\begin{proposition}\label{prop:phase_1_sample_complexity}
Consider \cref{alg:argmax}, and let $\delta_0\in (0,1)$. Under ${\cal E}_\alpha$ (see \cref{lem:regular_basin_quantile}),  the first phase samples $B_\alpha\leq 2^dh^{-d} \lceil 128\sigma^2 \Gamma^{-2} \log(2^{d+1}h^{-d}/\delta_0)\rceil$ queries. Furthermore, we have that $x_0 \in B(X^\star,\rho/2)$ 
with probability $1-\delta_0$.
\end{proposition}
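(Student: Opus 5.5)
The proof has two parts: counting the queries of Stage 1, then showing that the empirical grid maximizer $x_0$ lies in $B(X^\star,\rho/2)$ with probability at least $1-\delta_0$. Throughout, we work on $\mathcal E_\alpha$ and use the constants $\rho,\mu,L,M,\Gamma$ of \cref{lem:regular_basin_quantile}.

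\textbf{Counting.} A standard volumetric argument on $D=[0,1]^d$ gives an $h$-net with $|\mathcal G|\le (2/h)^d=2^dh^{-d}$, for example a cubic grid of suitable spacing. Each grid point is queried $n_0$ times. Substituting $|\mathcal G|\le 2^dh^{-d}$ into $\log(2|\mathcal G|/\delta_0)$ gives the stated bound on $B_\alpha$. Since $h=\min\{\rho/8,\sqrt{3\Gamma/(2L)}\}$ depends only on $\alpha$, this bound does not depend on $r$.

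\textbf{Concentration.} For each $g\in\mathcal G$, the sample mean $\hat F(g)$ is $F(g)$ plus Gaussian noise of variance $\sigma^2/n_0$. A Gaussian tail bound and a union bound over $\mathcal G$ give
\[
\max_{g\in\mathcal G}|\hat F(g)-F(g)|\le \Gamma/8
\]
with probability at least $1-2|\mathcal G|\exp(-n_0\Gamma^2/(128\sigma^2))\ge 1-\delta_0$, by the choice of $n_0$. All remaining steps are deterministic on this event.

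\textbf{A near-optimal grid point.} Because $\mathcal G$ is an $h$-net, there is $g^\star\in\mathcal G$ with $\|g^\star-X^\star\|\le h\le\rho/8$, so the segment from $X^\star$ to $g^\star$ lies in $B(X^\star,\rho)$. On that ball, $-\nabla^2F\preceq LI$ and $\nabla F(X^\star)=0$. A second-order Taylor bound then gives
\[
F(g^\star)\ge F^\star-\tfrac{L}{2}h^2\ge F^\star-\tfrac{3\Gamma}{4},
\]
using $h^2\le 3\Gamma/(2L)$.

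\textbf{Ruling out far points.} Suppose the empirical maximizer $x_0$ lay outside $B(X^\star,\rho/2)$. The separation property of \cref{lem:regular_basin_quantile} gives $F(x_0)\le F^\star-\Gamma$. On the concentration event,
\[
\hat F(x_0)\le F^\star-\Gamma+\Gamma/8<F^\star-\tfrac{3\Gamma}{4}-\Gamma/8\le \hat F(g^\star).
\]
This contradicts maximality of $x_0$, so $x_0\in B(X^\star,\rho/2)$.

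\textbf{Main obstacle.} The hardest step is the margin bookkeeping: the Taylor loss $\tfrac{L}{2}h^2$ plus twice the estimation error must stay below $\Gamma$. With $h^2\le 3\Gamma/(2L)$ the Taylor loss is exactly $3\Gamma/4$, so the estimation tolerance must be at most $\Gamma/8$. I would first verify that the constant $128$ in $n_0$ indeed delivers deviation $\Gamma/8$. The requirement is $\exp(-n_0\Gamma^2/(128\sigma^2))\le \delta_0/(2|\mathcal G|)$, and the Gaussian tail bound $2\exp(-n_0(\Gamma/8)^2/(2\sigma^2))=2\exp(-n_0\Gamma^2/(128\sigma^2))$ matches it, so the constants close.
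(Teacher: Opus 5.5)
Your route is the same as the paper's: an $h$-net with $|\mathcal G|\le 2^dh^{-d}$, a Gaussian tail plus union bound at tolerance $\Gamma/8$ (your check of the constant $128$ is correct), the Taylor bound $F(g^\star)\ge F^\star-\tfrac{L}{2}h^2\ge F^\star-\tfrac34\Gamma$ at a grid point within $h\le\rho/8$ of $X^\star$, and the separation gap $\Gamma$ outside $B(X^\star,\rho/2)$.

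The step that fails as written is the final comparison. The two middle quantities in your chain, $F^\star-\Gamma+\Gamma/8$ and $F^\star-\tfrac34\Gamma-\Gamma/8$, are both equal to $F^\star-\tfrac78\Gamma$, so the strict inequality is false. Likewise, in your ``main obstacle'' paragraph the budget $\tfrac34\Gamma+2\cdot\tfrac18\Gamma$ equals $\Gamma$ rather than staying below it. With zero margin you only obtain $\hat F(x_0)\le\hat F(g^\star)$. That does not contradict maximality of $x_0$, because a tie broken in favour of a far grid point is not excluded. The paper's proof sits on the same borderline: it only shows that far points cannot exceed $\hat F(g^\star)$.

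The repair is one line. Each $\hat F(g)-F(g)$ is Gaussian and hence atomless, so the event $\{\max_{g\in\mathcal G}|\hat F(g)-F(g)|<\Gamma/8\}$ satisfies the same bound $\ge 1-\delta_0$. On that event every grid point $g\notin B(X^\star,\rho/2)$ has $\hat F(g)<F^\star-\tfrac78\Gamma<\hat F(g^\star)$, so $g$ cannot be the empirical maximizer. Equivalently, ties among the $\hat F(g)$ occur with probability zero.

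A smaller caveat, inherited from the statement rather than introduced by you, concerns the count $2^dh^{-d}$. It is the paper's grid count $(1/h+1)^d$ with $h\le 1$, not the output of a volumetric argument. A cubic grid of spacing $h$ is an $h$-net in the sup-norm. In the Euclidean norm that your Taylor step uses, it is an $h$-net only for $d\le 4$, and for large $d$ no Euclidean $h$-net of $[0,1]^d$ is that small. So either the count or the Taylor loss hides a $d$-dependent factor. This looseness is shared with the paper's proof, so it is not a gap relative to it, but you should not attribute the bound to a volumetric argument.
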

\begin{proof}
Consider the event ${\cal E}_\alpha$ in \cref{lem:regular_basin_quantile}, and omit the subscript $\alpha$ for simplicity.
We construct an $h$-net ${\cal G}$ of $D$ such that for every $x\in D$ there exists $g\in {\cal D}$ satisfying $\|x-g\|\leq h$. Hence, there exists a point $g^\star\in {\cal G}$ satisfying $\|X^\star-g^\star\|\leq h$.

At each grid point $g$ take $n_0$ samples and compute the sample mean $\hat F(g)$: by the Gaussian tail bound, we have
\[
\mathbb{P}(|\hat F(g)-F(g)| > \Gamma/8) \leq 2\exp\left( -\frac{n_0\Gamma^2}{128\sigma^2}\right).
\]
Choosing $n_0= \frac{128\sigma^2}{\Gamma^2} \log\left(\frac{2|{\cal G}|}{\delta_0}\right)$, for some $\delta_0\in (0,1)$, and taking a union bound over $g$, we obtain
\[
\mathbb{P}(\exists g\in {\cal G}:|\hat F(g)-F(g)| > \Gamma/8) \leq \delta_0.
\]

We now choose $h$ small enough such that a lower bound on $\hat F(g^\star)$ upper bounds a valid upper bound on $\hat F(g)$ for $g\notin B(X^\star,\rho/2)$.
 Under the event ${\cal E}=\{\forall g\in {\cal G}: |\hat F(g)- F(g)| \leq \Gamma/8\}$, we have that for $g\notin B(X^\star, \rho/2)$
\[
\hat F(g^\star)\geq F(g^\star)-\Gamma/8,\qquad \hbox{and} \qquad \hat F(g)\leq F(g)+\Gamma/8 \leq F^\star -\Gamma +\Gamma/8 = F^\star-\frac{7}{8}\Gamma,
\]
where we used the fact from \cref{lem:regular_basin_quantile} that $F^\star - F(g)\geq \Gamma$ for $g\notin B(X^\star,\rho/2)$.
To construct a lower bound on $F(g^\star)$, we use the gradient properties of $F$ in  $B(X^\star,\rho)$. To that aim, we need to ensure $g^\star$ is sufficiently inside the ball, that is, choose $h$ small enough.

We require $h\leq \rho/2$, and for simplicity, we just set $h\leq \rho/8$. Then, $g^\star \in B(X^\star,\rho/2)$, nd since  in the ball the function is  smooth, with $\nabla F(X^\star)=0$, we have
\[
F(g^\star)\geq F(X^\star)- \frac{L}{2} \|X^\star - g^\star\|^2\geq F^\star -\frac{L}{2}h^2 .
\]
Hence, we require
\[
F^\star -\frac{L}{2}h^2 -\frac{\Gamma}{8} \geq  F^\star-\frac{7}{8}\Gamma,
\]
which is satisfied if $h^2 \leq \frac{3}{2L}\Gamma$. Hence, any point outside $B(X^\star,\rho/2)$ cannot upper bound $\hat F(g^\star)$. 

Therefore, $x_0=\argmax_{g\in {\cal G}} \hat F(g)$ satisfies $x_0\in B(X^\star,\rho/2)$ and
\[
\hat F(x_0)\geq \hat F(g^\star)\geq  F^\star- \frac{L}{2}\left(\min\{\rho/8, \sqrt{3\Gamma/(2L)}\}\right)^2-\frac{\Gamma}{8},
\]

Lastly, the number of points sampled depends on the number of points in ${\cal G}$ is bounded by $(1/h+1)^d$. Since $h\leq 1$, then $|{\cal G}|\leq 2^d h^{-d}$.
\end{proof}

%% file: sections/appendix/algorithms.tex
\section{Appendix: Algorithms}\label{sec:appendix:algorithms}
This appendix describes the implementation of \cicpe{} used in the experiments.  We keep the notation of the main text: $H_t$ is the current history, $I_\phi(\cdot|H_t)$ is the inference distribution over the target $x_\theta^\star$, $Q_\psi(H_t,a)$ is the critic for a continuation action $a\in\A$, and $Q_\psi(H_t,a_{\rm stop})$ is the value of stopping.  The implementation follows the Lagrangian view in \cref{app:theoretical_results:icpe:fixed_confidence:dual}: the inference model learns a stochastic selector, the critic learns the stop/continue Bellman comparison, and the actor rule proposes the next continuation action.  The main practical point is that all three objects are trained from replay.  For this reason we use target networks, conservative critic targets, and simple regularizers that keep early noisy estimates from determining the stopping boundary.

\subsection{History Encoder and Time Pooling Layer}
The inference network, the critic, and the learned TD3 actors use the same sequential template, although their parameters are separate.  Each interaction step is embedded as a token by concatenating the query and the next observation, $u_s=[A_s;Y_{s+1}]$, and passing it through a small embedding network.  The resulting sequence $(e_1,\ldots,e_t)$ is processed by a causal sequential backbone.  In the code this backbone can be an LSTM, an attention stack, or a recurrent/linear-attention variant.  Padding is masked throughout, so replay batches can contain histories of different lengths without leaking future observations.

The readout is not simply the last hidden state.  After obtaining hidden states $h_1,\ldots,h_t\in\mathbb R^d$, we use a query-conditioned pooling layer.  Given a query $q$, it computes
\[
\alpha_i(q,H_t)
=
\frac{\exp(\langle W_q q,W_k h_i\rangle/\sqrt d)}
{\sum_{j=1}^t \exp(\langle W_q q,W_k h_j\rangle/\sqrt d)},
\qquad
v(q,H_t)=\sum_{i=1}^t \alpha_i(q,H_t)W_v h_i,
\]
and then applies a feature-wise gate,
$
 z(q,H_t)=v(q,H_t)\odot \mathrm{silu}(W_m q)
$.
This is useful because the relevant part of the history depends on what the model is asked to do.  For inference and stopping, the query is a time embedding, since we need a representation of the current prefix.  For the continuation critic, the query is the candidate action $a$, so the same history can be read differently when evaluating different future measurements.  The critic therefore has two readouts from the same encoded history: a time-conditioned readout for $Q_\psi(H_t,a_{\rm stop})$ and an action-conditioned readout for $Q_\psi(H_t,a)$.

\begin{figure}[b]
    \centering
    \includegraphics[width=\linewidth]{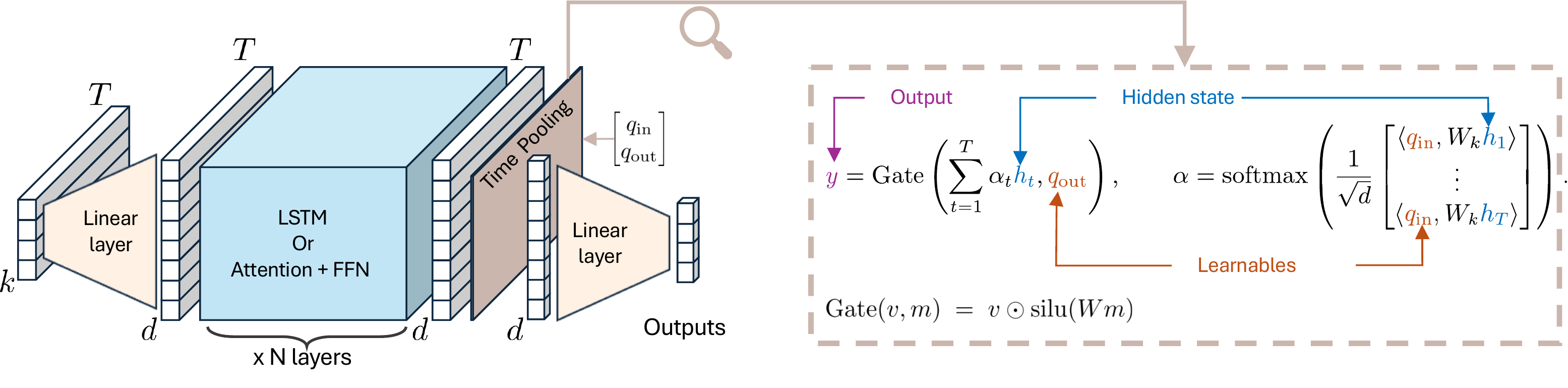}
    \caption{Sequential encoder and query-pooling readout used by the inference network $I_\phi$, the critic $Q_\psi$, and the learned TD3 actors.}
    \label{fig:architecture_network}
\end{figure}

\subsection{Replay Buffer and Prefix Sampling}
Training is off-policy.  A rollout samples a task $\theta\sim\nu$, interacts with the corresponding environment until the critic stops or the maximum horizon is reached, and stores the whole trajectory together with $x_\theta^\star$ and a flag indicating whether the trajectory ended by executing the stop action.  Minibatches are built by first sampling trajectories and then sampling prefixes inside them.  The prefix sampler uses a mixture of uniformly sampled prefixes, prefixes around the current average stopping time, and terminal prefixes.  This gives the inference network examples from all time scales, but also gives the critic many prefixes close to the point where the decision changes from continue to stop.

There is one convention that matters.  If a trajectory terminates because the agent stops at time $t$, the training prefix for that terminal state is $H_t$, because no new observation is collected after the stop action.  If the trajectory terminates only because the maximum horizon is reached, the last collected observation is included.  This makes replay consistent with deployment: stopping is a decision made before paying for the next sample.

\subsection{Inference Update and Its Regularization}
For a prefix $h$ and target $x_\theta^\star$, the inference model outputs a diagonal Gaussian
\[
I_\phi(\cdot|h)=\mathcal N(\mu_\phi(h),\operatorname{diag}(\sigma_\phi^2(h))).
\]
The ideal objective in the main text is the negative log-likelihood in \cref{eq:loss_inference}.  In the implementation we use a robust version.  Define the per-coordinate averaged NLL
\[
\ell_\phi(h,x_\theta^\star)
= -\frac{1}{d_{\X}}\sum_{j=1}^{d_{\X}}
\log \mathcal N\big((x_\theta^\star)_j;\mu_{\phi,j}(h),\sigma_{\phi,j}^2(h)\big),
\]
and
\[
\rho_\tau(u)=\min\{u,0\}+\tau\log\left(1+\frac{[u]_+}{\tau}\right),
\qquad [u]_+=\max\{u,0\}.
\]

For Euclidean recommendation problems we train with
\begin{equation}\label{eq:appendix_robust_nll_loss}
\mathcal L_{\rm inf}(\phi)
=
\mathbb E_{(h,x_\theta^\star)\sim{\cal B}}
\left[
\rho_\tau\big(\ell_\phi(h,x_\theta^\star)\big)
+\alpha_{\rm anc}\operatorname{SmoothL1}\big(\mu_\phi(h),x_\theta^\star\big)
\right].
\end{equation}
The robust transform is a simple way of saying that early bad prefixes should not dominate the variance head.  A standard Gaussian NLL can become very large when the model underestimates uncertainty for one minibatch, and this can push the log-variance to extremes.  The logarithmic tail keeps the ranking of ordinary examples, but reduces the influence of rare outliers.  The SmoothL1 anchor is also important: the final decision is the mean $\mu_\phi(H_\tau)$, so we want the mean to remain a good deterministic recommendation even when the Gaussian still has large uncertainty.  We additionally clamp the predicted log-standard deviations to fixed lower and upper bounds  for numerical stability.

For the $\epsilon$-best-arm problem, the target is directional.  The magnitude of a vector is not meaningful once arms are represented on the sphere, and correctness is measured by cosine distance.  Therefore recommendations and samples are normalized before they are evaluated.  In this case we keep the robust NLL and replace the Euclidean anchor by a spherical alignment term.  With $Z\sim I_\phi(\cdot|h)$,
\begin{equation}\label{eq:appendix_eps_best_arm_loss}
\mathcal L_{\rm sph}(\phi)
= -\mathbb E\left[
\left\langle \frac{Z}{\|Z\|_2},\frac{x_\theta^\star}{\|x_\theta^\star\|_2}\right\rangle\right].
\end{equation}
The reason is that an Euclidean anchor would penalize harmless radial errors, while the bandit loss only cares about the direction.   

A target copy $I_{\bar\phi}$ is maintained by Polyak averaging.  The critic never uses the online inference model inside its TD targets; it uses $I_{\bar\phi}$.  This separation is important because the reward itself is learned through $I_\phi$, and bootstrapping from a rapidly moving reward makes the stop/continue comparison unstable.

\subsection{Reward, Critic Update, and Critic Regularization}
The critic learns two quantities from the same replay prefixes.  The stopping head learns the value of recommending now, and the continuation head learns the value of paying for one more observation and then acting optimally.  Since the ideal reward $r_t(h)=\max_x q_t(h,x)$ is not available, we use the target inference model to estimate how likely the implemented stochastic selector is to be already $\epsilon$-correct:
\begin{equation}\label{eq:appendix_sampled_reward}
\hat r_{\bar\phi,m}(h,\theta)
=
\frac{1}{m}\sum_{k=1}^m
{\bf 1}\left\{L_\theta\left(X^{(k)}\right)\leq \epsilon\right\},
\qquad
X^{(k)}\sim I_{\bar\phi}(\cdot|h).
\end{equation}
   The reason for sampling is that the stopping decision should depend on posterior concentration, not only on the posterior mean.  If $\mu_\phi(h)$ is close to $x_\theta^\star$ but $\sigma_\phi(h)$ is still large, stopping is risky.  The sampled reward makes this visible to the critic, and this is exactly the gap controlled by \cref{prop:second_moment_robustness_and_gap}.

Let $d$ be the terminal flag for a replay transition, and let $a^+(h')$ be the target continuation action at the next prefix.  This target action depends on the actor rule.  For TS and TTPS it is the target inference mean plus small smoothing noise, projected to the feasible action set.  For TD3 it is the target actor action, again with small target-policy smoothing noise.  The smoothing noise is much smaller than the posterior sampling noise used during rollouts.

When using twin critics, we scalarize target values by the conservative minimum
\[
\bar Q_{\bar\psi}(h,a)=\min\{Q_{\bar\psi,1}(h,a),Q_{\bar\psi,2}(h,a)\},
\]
and define
\[
V_{\bar\psi}(h')=
\max\left\{\bar Q_{\bar\psi}(h',a_{\rm stop}),\bar Q_{\bar\psi}(h',a^+(h'))\right\}.
\]
The TD targets are
\begin{align}
 y_{\rm stop}(h,\theta)
 &= \hat r_{\bar\phi,m}(h,\theta),\label{eq:appendix_stop_target}\\
 y_{\rm cont}(h,a,h',d,\theta)
 &= -c(1-d)+d\,\hat r_{\bar\phi,m}(h',\theta)
 +\gamma(1-d)V_{\bar\psi}(h').\label{eq:appendix_cont_target}
\end{align}
Thus the stopping head is directly supervised by the current confidence, while the continuation head is supervised by the gain from collecting the next observation.  The action-head loss is applied only to prefixes that did not already stop, because a stopped transition has no genuine continuation action.  The stopping head is trained on every prefix, since stopping is a valid action at every prefix.  With $M=1$ for non-stopped replay transitions and $M=0$ for stopped transitions, the critic loss is
\begin{equation}\label{eq:appendix_critic_loss}
\begin{aligned}
\mathcal L_Q(\psi)
=&\frac{1}{2}\mathbb E_{\cal B}\left[
M\sum_{j=1}^2\left(Q_{\psi,j}(h,a)-y_{\rm cont}\right)^2
\right]+\frac{w_{\rm stop}}{2}\mathbb E_{\cal B}\left[
\sum_{j=1}^2\left(Q_{\psi,j}(h,a_{\rm stop})-y_{\rm stop}\right)^2
\right],
\end{aligned}
\end{equation}
with the obvious single-critic version when twin critics are disabled.  We also optionally clip the bootstrap values in a minibatch to moderate quantiles before taking the maximum in $V_{\bar\psi}$. It prevents a few very optimistic target values from propagating through replay and moving the stopping boundary too early.


At rollout time the default stopping test is
\[
Q_\psi(H_t,a_{\rm stop})\geq Q_\psi(H_t,A_t),
\]
where both quantities are scalarized by the conservative twin rule.  This is exactly the learned version of the Bellman comparison in \cref{thm:main_bellman_greedy}.  

\subsection{Actor Rules: TS, TTPS, and TD3}
The actor rule only chooses continuation actions.  The stop action is handled by the critic comparison above.  We use three actor rules depending on the relation between the query space $\A$ and the recommendation space $\X$.

\noindent{\bf Thompson sampling.}
When $\A=\X$, we can use the inference distribution itself as the actor.  The TS rule samples
\begin{equation}\label{eq:appendix_ts_rule}
A_t=\mu_\phi(H_t)+\sigma_\phi(H_t)\odot \xi_t,
\qquad \xi_t\sim\mathcal N(0,I).
\end{equation}
At evaluation in greedy mode we set $\xi_t=0$, so the action is the projected posterior mean.  This choice is deliberately simple.  Early in a task, the learned posterior is broad and TS explores.  Later, when the inference distribution contracts, the same rule automatically becomes exploitative.  No separate actor is trained, which removes one source of approximation error and is appropriate when informative queries are themselves plausible recommendations.  In the critic target, the next action for TS is the target inference mean, with only the small target-smoothing noise described above.

\noindent{\bf Top-two posterior sampling.}
TTPS is also inference-based and therefore also assumes $\A=\X$.  It first draws a Thompson sample $Z_1$.  With probability $1/2$ this sample is used directly.  With the remaining probability, the rule draws a challenger $Z_2$ and keeps the sample that is farther from the posterior mean, provided the current sample is too close to the mean.  In Euclidean tasks the distance is $\|z-\mu_\phi(H_t)\|_2$; in $\epsilon$-best-arm it is $1-\langle z,\mu_\phi(H_t)\rangle$.  The intuition is  that the posterior mean is the current recommendation, while farther posterior samples represent plausible alternatives.  TTPS spends part of the sampling budget checking those alternatives before the critic decides that stopping is safe.  As for TS, the target action used by the critic is the target inference mean plus small smoothing noise.

\noindent{\bf TD3 actor.}
TD3 is the actor rule used when we want to learn the continuation action from the critic, and especially when $\A$ and $\X$ are different objects.  A target actor $\pi_{\bar\vartheta}$ is used in the critic target, while the online actor $\pi_\vartheta$ is updated on a delayed schedule.  The target action is
\[
a^+(h')=\tanh(\pi_{\bar\vartheta}(h'))+\zeta,
\qquad \zeta\sim\mathcal N(0,\sigma_{\rm tgt}^2 I),
\]
with clipping or normalization depending on the domain.  The target-policy smoothing noise $\zeta$ prevents the critic from learning sharp artificial peaks in action space, and the delayed actor update prevents the actor from chasing a critic that is still changing after every minibatch.

For a deterministic actor, the main update is
\begin{equation}\label{eq:appendix_td3_actor_loss}
\mathcal L_{\rm act}^{\rm det}(\vartheta)
= -\mathbb E_{h\sim{\cal B}}\left[Q_{\psi,1}\left(h,\tanh(\pi_\vartheta(h))\right)\right].
\end{equation}
The actor uses the first critic for the policy gradient, while the target uses the first critic.  

When $\A=\X$ and the actor is Gaussian, we regularize the TD3 actor toward the inference distribution,
\begin{equation}\label{eq:appendix_td3_kl_actor_loss}
\mathcal L_{\rm act}^{\rm KL}(\vartheta)
=
-\mathbb E_{h\sim{\cal B}}\left[Q_{\psi,1}(h,A_\vartheta(h))\right]
+
\beta_{\rm KL}\mathbb E_{h\sim{\cal B}}\left[
{\rm KL}\left(I_\phi(\cdot|h)\|\pi_\vartheta(\cdot|h)\right)
\right],
\qquad A_\vartheta(h)\sim\pi_\vartheta(\cdot|h).
\end{equation}
\input{sections/appendix/full_algo}

During training we also perform random exploration to encourage parametric exploration (with small probability we sample a random action), and void collapsing of the actor.
\subsection{Cost Update for Fixed Confidence}
The scalar cost $c$ is the implemented Lagrange tradeoff between confidence and sample complexity.  We update it from the empirical stopped success rate.  For a batch of completed rollouts, let
\[
D_i=
\begin{cases}
\|\mu_\phi(H_{\tau_i}^{(i)})-x_{\theta_i}^\star\|_2, & \text{Euclidean tasks},\\
1-\left\langle \mu_\phi(H_{\tau_i}^{(i)}),x_{\theta_i}^\star\right\rangle, & \epsilon\text{-best-arm tasks},
\end{cases}
\]
and use the smooth accuracy proxy
\[
\hat p=\frac{1}{B}\sum_{i=1}^B
\sigma\left(\frac{\epsilon-D_i}{\kappa\epsilon}\right).
\]
The update is
\begin{equation}\label{eq:appendix_cost_update}
 c\leftarrow \operatorname{Proj}_{[0,1]}
 \left(c-\eta_c\big((1-\delta)-\hat p\big)\right).
\end{equation}
Thus, if the observed correctness is below $1-\delta$, the cost decreases and continuing becomes cheaper, so trajectories become longer.  If correctness is above the target, the cost increases and stopping becomes more aggressive.    The sign of the update is still exactly the dual intuition in \cref{eq:cost_update}.

%% file: sections/appendix/full_algo.tex
\begin{algorithm}[t!]
 \footnotesize
 \caption{Implementation of \cicpe{}}
 \label{algo:appendix_cicpe_implementation}
\begin{algorithmic}[1]
 \State Initialize replay buffer ${\cal B}$, inference network $I_\phi$, critic $Q_\psi$, actor rule ${\rm Act}\in\{\mathrm{TS},\mathrm{TTPS},\mathrm{TD3}\}$, target networks $I_{\bar\phi},Q_{\bar\psi}$, and cost $c$.
 \Statex \texttt{\color{blue}// Training phase}
 \While{training is not over}
   \State Sample a batch of tasks $\theta\sim\nu$ and initialize their histories $H_1$.
   \For{$t=1,\ldots,T_{\max}$}
      \State Propose continuation actions $A_t$ using TS, TTPS, or the TD3 actor with its current exploration schedule.
      \State Stop tasks satisfying $Q_\psi(H_t,a_{\rm stop})\geq Q_\psi(H_t,A_t)$ after the warmup and minimum-time gates.
      \State Execute $A_t$ on the remaining tasks, observe $Y_{t+1}$, and append $(A_t,Y_{t+1})$ to the histories.
   \EndFor
   \State Store the completed trajectories, stop flags, terminal times, and targets $x_\theta^\star$ in ${\cal B}$.
   \State Sample replay prefixes and update $I_\phi$ with \cref{eq:appendix_robust_nll_loss}; for $\epsilon$-best-arm also use \cref{eq:appendix_eps_best_arm_loss}.
   \State Estimate sampled rewards with $I_{\bar\phi}$ using \cref{eq:appendix_sampled_reward} and update $Q_\psi$ using \cref{eq:appendix_stop_target,eq:appendix_cont_target,eq:appendix_critic_loss}.
   \State If using TD3, update $\pi_\vartheta$ on the delayed actor schedule using \cref{eq:appendix_td3_actor_loss,eq:appendix_td3_kl_actor_loss}.
   \State Polyak-update $I_{\bar\phi}$, $Q_{\bar\psi}$, and, when present, $\pi_{\bar\vartheta}$; update $c$ with \cref{eq:appendix_cost_update}.
 \EndWhile
 \Statex \hrulefill
 \Statex \texttt{\color{blue}// Deployment phase}
 \State Freeze the learned networks and initialize a fresh test task.
 \For{$t=1,\ldots,T_{\max}$}
    \State Propose $A_t$ using the selected actor rule.
    \If{$Q_\psi(H_t,a_{\rm stop})\geq Q_\psi(H_t,A_t)$}
       \State {\bf return} $\hat x=\mu_\phi(H_t)$.
    \EndIf
    \State Execute $A_t$, observe $Y_{t+1}$, and update $H_{t+1}$.
 \EndFor
 \State {\bf return} $\hat x=\mu_\phi(H_{T_{\max}+1})$.
\end{algorithmic}
\end{algorithm}

%% file: sections/appendix/results.tex
\section{Appendix: Numerical Results}\label{sec:appendix:numerical_results}
In this section we present more details on the numerical results. We refer the reader to the code for more details (see the \texttt{README.md} file), especially regarding the hyperparameters. We now present the synthetic benchmarks with additional numerical results. These additional results include sweeps over various values of $\epsilon,\sigma$, as well as checking robustness to prior misspecification. We conclude with details and additional results regarding the geochemical exploration task.

\paragraph{Computational resources.}
All experiments were run on NVIDIA V100 GPU or NVIDIA L40S GPU.
For the synthetic benchmarks, each C-ICPE training run takes
approximately 12 hours. With 3 random seeds, at least 4
$(\varepsilon, \sigma)$ configurations, and 3 dimensionalities per
benchmark, the total training budget per synthetic task is
$12 \times 3 \times 4 \times 3 = 432$ GPU-hours. For the geochemical
exploration task, each training run takes approximately 70 hours;
with 3 seeds, 2 $\varepsilon$ configurations, and a single
dimensionality ($d=2$), the total is $70 \times 3 \times 2 = 420$
GPU-hours.

\paragraph{Confidence intervals via hierarchical bootstrap.}
To account for variability across both task instances and trajectory
randomness, we report 95\% confidence intervals computed via
hierarchical bootstrap \citep{efron1992bootstrap}.
For each trained model (seed), we sample 300 test environments
$\theta \sim \nu$ and collect 15 independent trajectories per
environment. The total
variance of a statistic $\hat\mu$ (e.g., accuracy) decomposes as
\[
\mathrm{Var}(\hat{\mu}) =
\underbrace{\mathrm{Var}_{s}\bigl(\mathbb{E}[\hat{\mu} \mid s]\bigr)}_{\text{between-seed}}
+ \underbrace{\mathbb{E}_{s}\bigl[\mathrm{Var}_{\theta}\bigl(
\mathbb{E}[\hat{\mu} \mid s, \theta]\bigr)\bigr]}_{\text{between-environment}}
+ \underbrace{\mathbb{E}_{s,\theta}\bigl[\mathrm{Var}(\hat{\mu}
\mid s, \theta)\bigr]}_{\text{within-environment}},
\]
where the first term captures variability due  to training, the second due to which tasks are drawn
from $\nu$, and the third due to observation noise and policy
stochasticity within a fixed task. A single bootstrap replicate is
constructed by (i) resampling seeds with replacement, then (ii) for
each resampled seed, resampling environments with replacement, then
(iii) for each resampled environment, resampling trajectories with
replacement, and computing the statistic on the resampled dataset.
This three-level resampling preserves all components of variance. We
draw 10{,}000 bootstrap replicates and report the 2.5\% and 97.5\%
percentiles as the confidence interval.

\begin{remark}[On testing the inference model]
    Several of our synthetic localization benchmarks fall close to the symmetric
case described in \cref{app:theoretical_results:gaussian_inference_reward} (after \cref{prop:nll_mean_near_optimality}): the
loss is a distance to a selected target $x_\theta^\star$, so
$\X_\epsilon(\theta)$ is a ball, interval, or cap around this target. In the
optimization benchmarks, such as the value estimation task and the geochemical task, the loss is
instead induced by value gaps and the success sets need not be symmetric or
convex. These experiments therefore test the method beyond the setting where
the Gaussian NLL mean has an exact Bayes-optimality interpretation.
\end{remark}

\subsection{Synthetic Benchmarks: description }\label{sec:appendix_synthetic_benchmarks}
The synthetic benchmarks in \cref{sec:empirical_evaluation} are designed to isolate different aspects of the continuous fixed-confidence problem.  Binary search is the cleanest localization problem: every query returns a noisy comparison with the unknown target.  The $\epsilon$-best-arm problem keeps the same idea of identifying $x_\theta^\star$, but changes the geometry to a sphere and makes the loss directional rather than Euclidean.  Ackley minimization adds nuisance parameters and a multimodal response surface, so the agent has to learn an exploration rule that is not purely local.  Finally, GP max-value estimation is included because it is the case where the query space and the recommendation space are genuinely different: the agent queries a location, but it recommends a scalar value.  This is the setting where a TD3 actor is necessary, since posterior samples from the inference network are no longer valid actions.

\paragraph{Common protocol.}
Each training episode starts by sampling a fresh task parameter $\theta$ from the task prior $\nu$.  The agent then observes a sequential history $H_t=(A_1,Y_2,\ldots,A_{t-1},Y_t)$ and either stops or selects a new action.  We use a maximum horizon $t_{\max}$; if the learned stopping rule does not stop before this horizon, the episode is truncated and the final recommendation is still evaluated.  For the synthetic experiments reported in the survival plots and correctness tables, we use $t_{\max}=100$ unless otherwise stated.

All fixed-confidence runs use $\delta=0.1$.  The reported accuracy is
\[
\widehat{\mathrm{Acc}}
=\frac{1}{n}\sum_{i=1}^n {\bf 1}\left\{L_{\theta_i}(\hat x_i)\leq \epsilon\right\},
\]
and the goal is to achieve accuracy at least $1-\delta$ while minimizing the expected stopping time $\mathbb E[\tau]$.  For Euclidean localization tasks we use $L_\theta(x)=\|x-x_\theta^\star\|_2$.  For $\epsilon$-best-arm we use the directional loss $L_\theta(x)=1-\langle x,x_\theta^\star\rangle$ after normalizing both vectors to the unit sphere.  For GP max-value estimation the loss is the scalar absolute error $L_\theta(x)=|x-v_\theta^\star|$.  Confidence intervals are computed with hierarchical bootstrap over test episodes and random seeds.

\subsubsection{Noisy binary search.}
In binary search the unknown target is $x_\theta^\star=\theta\in[-1,1]^d$, sampled uniformly.  A query $a\in[-1,1]^d$ returns one noisy comparison per coordinate,
\begin{equation}\label{eq:appendix_binary_search_observation}
Y_{t,i}=\xi_{t,i}\operatorname{sign}(\theta_i-A_{t,i}),
\qquad
\mathbb P(\xi_{t,i}=1)=1-p,
\qquad
\mathbb P(\xi_{t,i}=-1)=p,
\end{equation}
independently over $i$ and $t$.  We use this problem because the statistically useful action is interpretable: a good policy should place queries near the current posterior median in each coordinate and shrink the feasible region.  This makes binary search a sanity check for the inference network and critic.  If the inference model does not contract its posterior, or if the critic cannot recognize when the posterior radius is below $\epsilon$, the method will fail even in this simple setting.  Since $\A=\X=[-1,1]^d$, TS and TTPS can act directly by sampling from the learned posterior over the target.

\begin{figure}[t]
\centering\includegraphics[width=\textwidth]{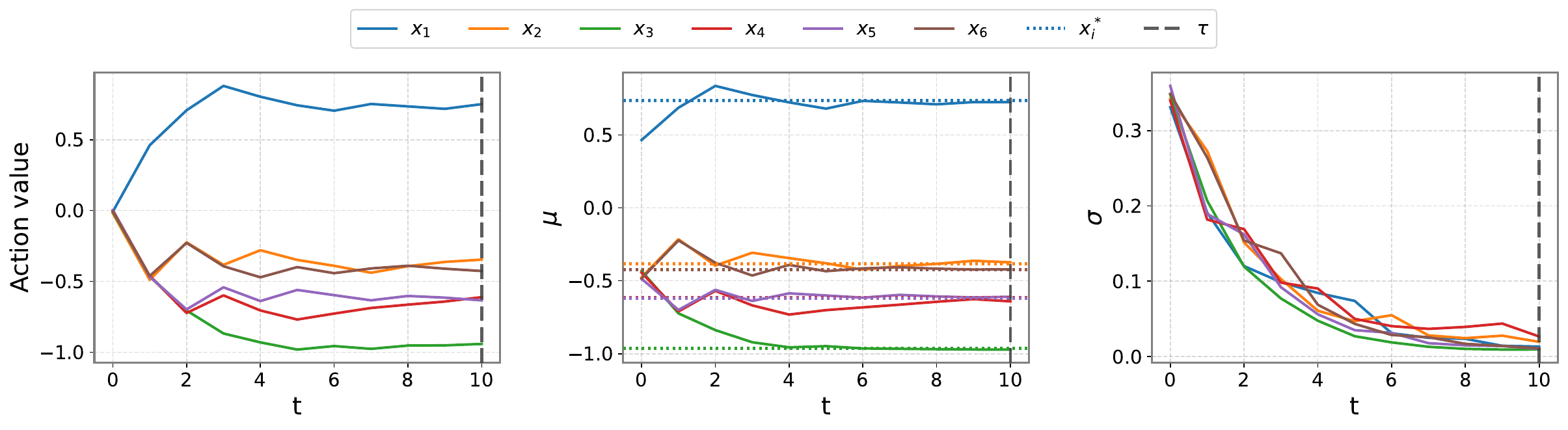}
\caption{Binary search: visualization of how \cicpe{} explores in $6$ dimensions. From left to right: the query action (left), posterior mean (middle), and posterior standard deviation (right) along an exploration trajectory in the noisy binary search problem.}
\label{tab:binary_search_visualization}
\end{figure}

\subsubsection{$\epsilon$-best-arm identification on the sphere.}
For the continuous $\epsilon$-best-arm problem, the task parameter is a direction $x_\theta^\star=\theta\in\mathbb{S}^{d-1}$ sampled by normalizing a standard Gaussian vector.  The agent queries a vector $a$ and observes
\begin{equation}\label{eq:appendix_eps_best_arm_observation}
Y_t=\theta^\top A_t+\xi_t,
\qquad
\xi_t\sim\mathcal N(0,\sigma^2).
\end{equation}
The recommendation is correct if $\theta^\top \hat x\geq 1-\epsilon$.  Although the implementation stores the enclosing action bounds as $[-1,1]^d$, the inference mean, posterior samples, TTPS candidates, and uniform baseline actions are projected to $\mathbb{S}^{d-1}$ for this benchmark.  This projection is important: Euclidean uncertainty is not the right object near the sphere, and two vectors with the same direction but different norms should not be treated as different hypotheses.  The reason for this benchmark is that each observation is a scalar projection.  The agent must choose directions that disambiguate the posterior over $\theta$, while the stopping rule must reason in terms of cosine error rather than Euclidean error.

This experiment also lets us compare to a specialized frequentist fixed-confidence baseline.  Lazy Track-and-Stop uses the known linear observation structure and an analytic generalized-likelihood-ratio stopping rule.  In our implementation it queries canonical directions and keeps a least-squares estimate of $\theta$, stopping only after both the likelihood-ratio condition and the spectral coverage condition are satisfied.  This is not a general baseline for all our tasks, but it is a useful reference point on the one benchmark where a specialized fixed-confidence method is available. However, note that in this problem the optimal exploration strategy is uniform \citep{jedra2020optimal}. Therefore, it shows to what degree \cicpe{} is able to learn a good inference model.

\subsubsection{Ackley minimizer identification.}
The Ackley task is a shifted and randomly parametrized global-optimization problem.  The task parameter is
\[
\theta=(a,b,c,\theta^\star),
\qquad
\theta^\star\sim\mathrm{Unif}([-1,1]^d),
\]
where $\theta^\star$ is the global minimizer and $(a,b,c)$ control the shape of the response surface.  In the reported runs we fix $a=10$ and sample $b\sim\mathrm{Unif}[0.1,0.5]$ and $c\sim\mathrm{Unif}[\pi,4\pi]$.  Given $u=A_t-\theta^\star$, the unnormalized Ackley value is
\begin{equation}\label{eq:appendix_ackley_function}
F_{a,b,c}(u)
= a+e
-a\exp\left(-b\sqrt{\frac{1}{d}\sum_{j=1}^d u_j^2}\right)
-\exp\left(\frac{1}{d}\sum_{j=1}^d \cos(cu_j)\right).
\end{equation}
We observe the sign-inverted and normalized value (more on this in the next page)
\begin{equation}\label{eq:appendix_ackley_observation}
Y_t=1-2\frac{F_{a,b,c}(A_t-\theta^\star)}{Z_{\rm norm}(b,c,d)}+\xi_t,
\qquad
\xi_t\sim\mathcal N(0,\sigma^2),
\end{equation}
so that larger observations are better and the target remains the minimizer $\theta^\star$.  The nuisance parameters are not provided to the agent.  Thus, across episodes, \cicpe{} must infer not only where the optimum is but also how observations should be interpreted for that episode.

Ackley is included because it is deliberately hostile to naive local search.  The function has many oscillations near the optimum and a broad outer region where observations can be weakly informative.  The active policy therefore has to balance broad exploration with local refinement, and the critic has to stop based on whether the inferred minimizer is accurate, not based on whether the last observed function value was high. 

\begin{figure}[t]
    \centering
    \begin{subfigure}{0.45\textwidth}
        \centering
        \includegraphics[width=\textwidth]{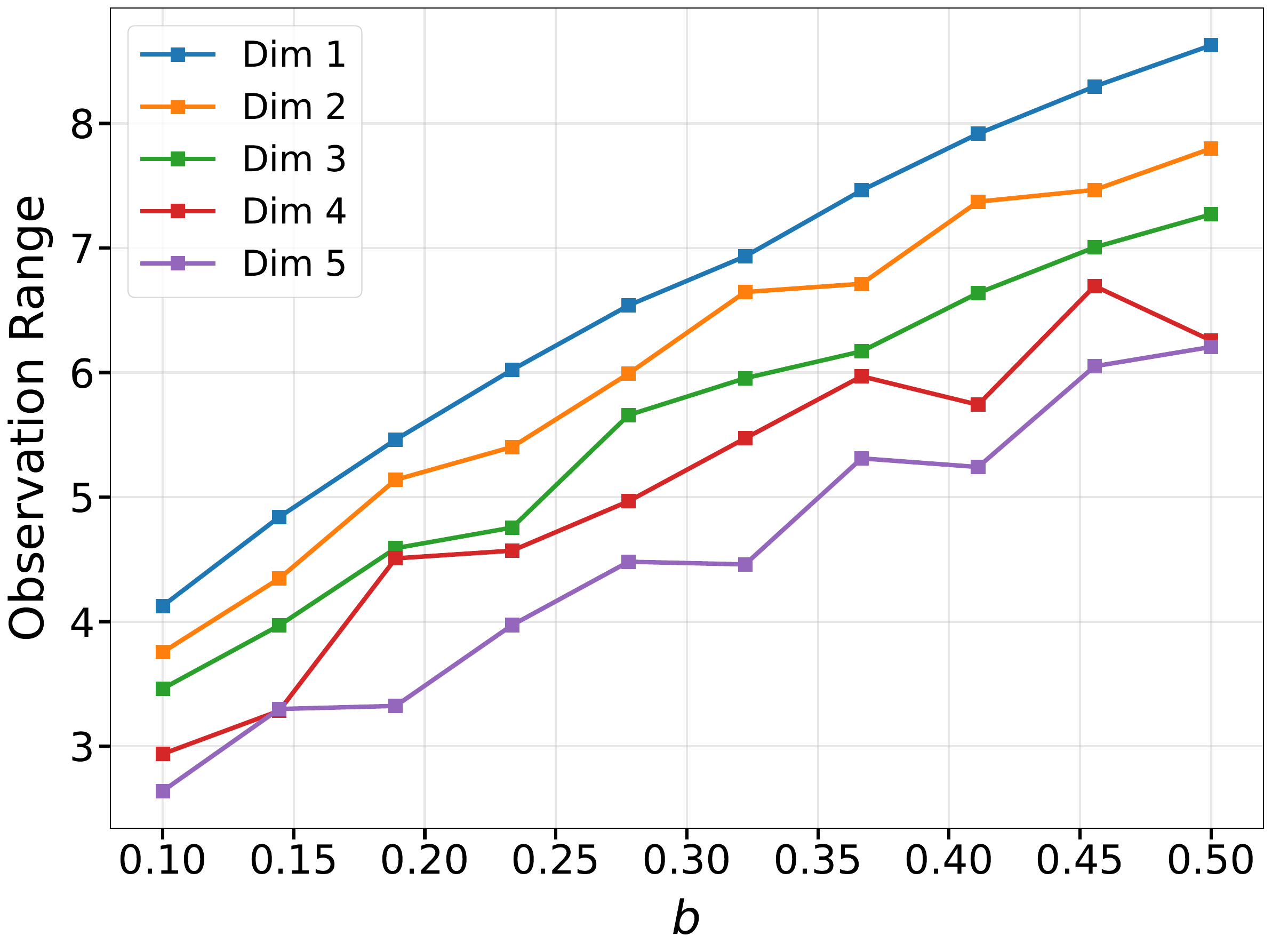}
        \caption{}
        \label{fig:ackley_b_vs_range}
    \end{subfigure}
    \hfill
    \begin{subfigure}{0.45\textwidth}
        \centering
        \includegraphics[width=\textwidth]{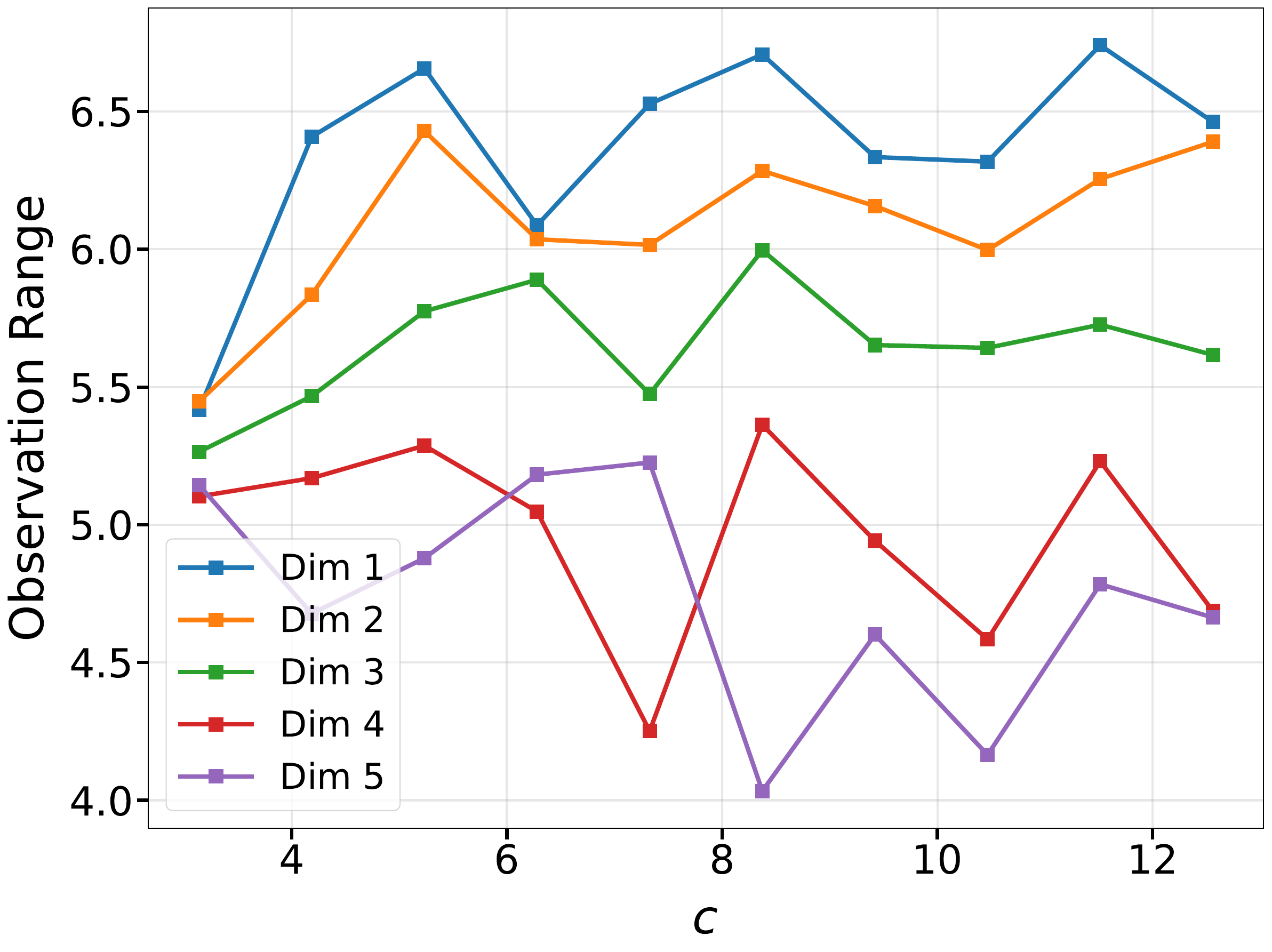}
        \caption{}
        \label{fig:ackley_c_vs_range}
    \end{subfigure}
    \caption{Effect of Ackley function's parameters on output range across multiple dimensions: (a) range vs $b$; (b) range vs $c$.}
    \label{fig:ackley_parameters_ranges_effect}
\end{figure}

\paragraph{Ackley function output normalization.}
The Ackley function's global minimum is always at the origin with a value of 0, but the maximum value within our defined recommendation space $\mathcal{X}$ depends on the function parameters and dimensionality. \cref{fig:ackley_b_vs_range,fig:ackley_c_vs_range} show how the values of $b$ and $c$ affect the function output ranges. Larger $b$ values consistently increase the range, while $c$ has less significant effect on the output ranges. From the figures, we also see that the ranges depend on dimensionality, where lower dimensions tend to have larger ranges. The issue with varying output ranges is that the influence of noise can vary across different priors sampled, and we want the noise effect to be on the same scale. Additionally, without normalization, \cicpe{} must not only learn the relative patterns from $H_t$ but also account for the scale differences across different $H_t$. For these reasons, we derive a normalization constant empirically from multiple samples across different $b$, $c$ values and dimensionalities: $Z_{\text{norm}}(b,c,d) = \pi - 0.21 \cdot D + 9.68 \cdot b + 0.04 \cdot c$.

\subsubsection{GP max-value estimation}
The GP benchmark separates the action and recommendation spaces.  At the beginning of an episode we sample a latent function
\[
f\sim\mathrm{GP}(0,k_{\mathrm{RBF}}(\ell,\sigma_f)),
\qquad
\ell\sim\mathrm{Unif}[0.05,0.2],
\qquad
\sigma_f=1,
\]
on $[0,1]^d$, with $d=1$ in the implementation.  The sample path is generated on a dense grid using circulant embedding.  Queries are continuous points $A_t\in[0,1]^d$; the observed value is obtained by linear interpolation in one dimension or bilinear interpolation in two dimensions, followed by Gaussian noise:
\begin{figure}[t]
\centering
        \includegraphics[width=\textwidth]{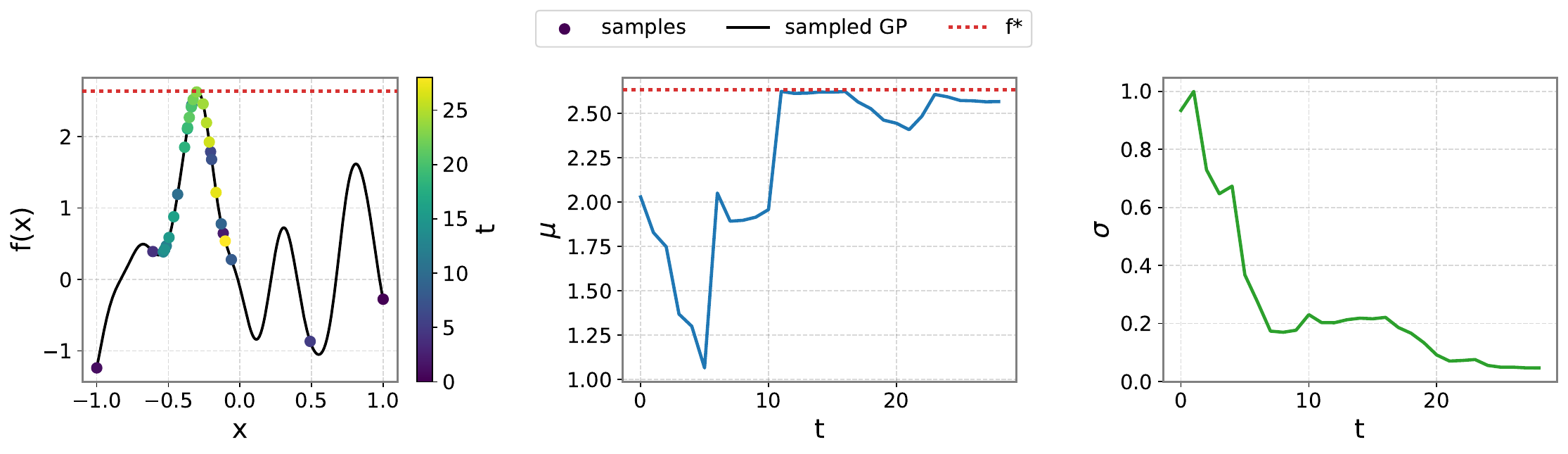}
\caption{Visualization of how \cicpe{} explores in the max-value estimation problem. From left to right: the query action (left), posterior mean (middle), and posterior standard deviation (right) along an exploration trajectory in the GP value estimation problem. Darker samples are queried earlier in the trajectory, while lighter samples are queried later.}
\label{tab:GP_visualization}
\end{figure}
\begin{equation}\label{eq:appendix_gp_value_observation}
Y_t=f(A_t)+\xi_t,
\qquad
\xi_t\sim\mathcal N(0,\sigma^2).
\end{equation}
The target is the scalar maximum value
\[
v_\theta^\star=\max_{u\in[0,1]^d} f(u),
\qquad
\X\subseteq\mathbb R,
\qquad
\A=[0,1]^d,
\]
where the maximum is computed on the same grid used to generate the episode.  The final recommendation is the inference mean for this scalar value, and success is $|\hat x-v_\theta^\star|\leq\epsilon$.

The reason for using max-value estimation rather than another argmax-localization task is that it tests the decoupling of inference and exploration.  In binary search, $\epsilon$-best-arm, and Ackley, a posterior sample of $x_\theta^\star$ is itself a reasonable query, so TS and TTPS can be implemented directly from the inference distribution.  For GP value estimation this would be meaningless: a sample from the inference model is a scalar value, not a point in $[0,1]^d$.  Therefore the action must be learned through the critic.  We use the TD3 actor for this benchmark because the critic can assign value to a query according to how much it is expected to improve the future estimate of $v_\theta^\star$, even though the query is not itself a recommendation.  This experiment is consequently the main empirical check that \cicpe{} handles the general $\A\neq\X$.

\subsection{Synthetic Benchmarks: baselines }\label{sec:appendix_baselines_reporting}
The most important baseline is \cicpeuniform{}, which keeps the same inference network, critic, stopping rule, replay buffer, and fixed-confidence cost update as \cicpe{}, but replaces the learned active query rule by uniform exploration.  This isolates the value of active experimentation: if \cicpe{} improves over \cicpeuniform{}, the gain cannot be explained by the inference model alone, because both methods use the same form of inference and the same stopping mechanism.

For tasks with $\A=\X$, we evaluate TS and TTPS because they use the learned posterior in the most direct way.  TS samples a plausible target and queries it.  TTPS keeps the posterior mean as the current recommendation and intentionally samples a plausible challenger that is sufficiently different.  This is useful when many posterior samples are small perturbations around the current mean: such samples do not test the remaining uncertainty, whereas a challenger query can reveal whether another region is still plausible.  For GP value estimation, TS and TTPS are not the right action rules for the reason described above, so we use TD3.

For all benchmarks, we set the sample budget to $t_{\max}=100$ by
default. If the trained \cicpe{} policy failed to reach the target ($1-\delta$)-accuracy, we extended the sample budget by $50$. For easier setting, such as low dimension, we instead used a smaller budget. The settings in which $t_{max}\neq100$ are listed in Table \ref{tab:icpe_budgets}, all other configurations use $t_{\max}=100$.

\begin{table}[h]
\centering
\begin{tabular}{lccc}
\toprule
Environment & $d$ & $\varepsilon$ & $t_{\max}$ \\
\midrule
$\varepsilon$-Best-Arm   & 15 & 0.005 & 150 \\
GP-value estimation       &  1 & 0.2   &  60 \\
Geochemical               &  2 & 0.15  & 150 \\
\bottomrule
\end{tabular}
\vspace{0.1cm}
\caption{Sample budget $t_{\max}$ for setting where $t_{max} \neq100$}
\label{tab:icpe_budgets}
\end{table}

We also compare against standard fixed-budget optimization methods implemented through Optuna \citep{akiba2019optuna}: TPE, GP-based Bayesian optimization, and CMA-ES.  We also compare with GP-UCB via BoTorch \citep{balandat2020botorch}. These baselines do not have a learned stopping rule and are not optimized for $(\epsilon,\delta)$-correctness.  To make the comparison conservative, we give them budgets tied to the empirical stopping time of the corresponding \cicpe{} variant.

They are then evaluated under the same success criterion $L_\theta(\hat x)\leq\epsilon$.  This comparison asks whether a generic optimizer, with a fixed budget equal to \cicpe{}'s expected sample complexity, already reaches the fixed-confidence target. 

\begin{itemize}
    \item \textbf{TPE} \citep{bergstra2011algorithms}: splits observations into good and bad groups based on a quantile threshold and models the objective by building a density estimator for each group, then selects candidates that maximize the ratio of good-to-bad.
    \item \textbf{CMA-ES} \citep{hansen2016cma}: an evolutionary algorithm that samples a population of candidates and iteratively updates a multivariate Gaussian distribution by adapting its covariance matrix based on the successful candidates.
    \item \textbf{GP-logEI} \citep{ament2023unexpected}: updates the Mat\'ern kernel's hyperparameters by maximizing the marginal log-likelihood on the past observations, and uses log expected improvement as the acquisition function.
    \item \textbf{GP-UCB} \citep{srinivas2010gaussian}: a variant of GP-based Bayesian Optimization with a Mat\'ern kernel that uses upper confidence bound as the acquisition function.
    \item \textbf{Uniform bin}: partitions the query space into $\left\lceil \sqrt{\mathbb{E}[\tau]} \right\rceil$ bins, where $\mathbb{E}[\tau]$ is the corresponding \cicpe{} variant's expected sample complexity, and uniformly sample within each bin. We compute the average value of each bin and report the maximum.
    \item \textbf{Uniform top 5\%}: queries uniformly and return the mean of the top 5\% values. The number of query matches the corresponding \cicpe{} variant's expected sample complexity.

    \item \textbf{Lazy Track-and-Stop round robin} \citep{jedra2020optimal}: queries the canonical basis in a round-robin fashion. The method maintains a least square estimate $\hat \theta_t$, and stops whenever
    
    \begin{equation*}
    Z_t \;\ge\; \beta(\delta, t)
    \quad\text{and}\quad
    \min_{j} N_t(j)
    \;\ge\;
    \max\!\Bigl(c,\;
    \tfrac{\rho(\delta, t)}{\lVert \hat\theta_t \rVert^{2}}\Bigr),
    \end{equation*}
    
    where $Z_t$ is the generalized likelihood ratio (GLR) for the
    $\varepsilon_t$ best-arm hypothesis evaluated against the
    worst-case competitor on the $\varepsilon_t$-boundary. This measures how much the current guess is better than the closest alternative. $N_t(j)$ is the number of pulls of the  $j$-th element of the canonical basis, and $\beta(\delta,t)$, $\rho(\delta,t)$, $\varepsilon_t$ are the threshold rule, spectral threshold, and gap-relaxation threshold. Lastly $c$ is a constant defined in \citep{jedra2020optimal}. See also \citep{jedra2020optimal} for more details. Upon stopping, the agent recommends $\hat a_t = \hat\theta_t / \lVert \hat\theta_t \rVert$.


    \item \textbf{Lazy Track-and-Stop uniform} \citep{jedra2020optimal}: Same as Lazy Track-and-Stop round robin except it samples the basis uniformly.
\end{itemize}

\clearpage
\subsection{Synthetic Benchmarks:  numerical results }\label{sec:appendix_additional_numerical_results}

We now present detailed accuracy and sample complexity results across
all benchmarks, sweeping over $(\varepsilon, \sigma, d)$
configurations. Tables 
report mean accuracy and sample complexity with 95\% confidence
intervals for every method and parameter combination. To
complement these aggregate statistics, we examine the stopping
behavior of each actor through two diagnostics: (i)~the survival function $P(\tau > t)$, which reveals how
quickly the learned critic commits to stopping, and (ii)~the
standard deviation of the inference model over the horizon, which
tracks how rapidly the posterior uncertainty around $x_\theta^\star$
contracts.
Across all benchmarks, \cicpe{} with learned exploration (TS, TTPS,
or TD3) consistently meets the $1 - \delta$ accuracy target while
\cicpeuniform{} degrades as dimension increases, particularly on
Ackley and binary search where directed exploration is essential.

\subsubsection{Noisy Binary Search}

\begin{figure}[htbp]
    \centering
        \includegraphics[width=\textwidth]{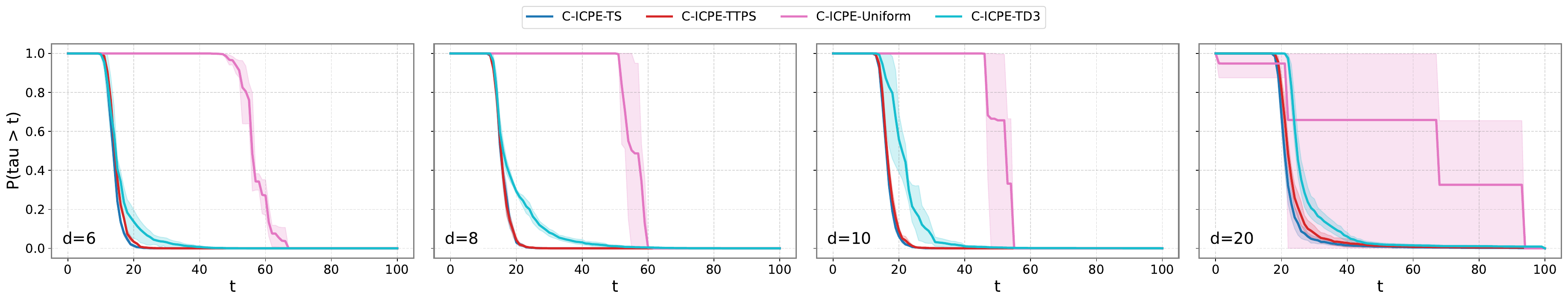}
       
        \includegraphics[width=\textwidth]{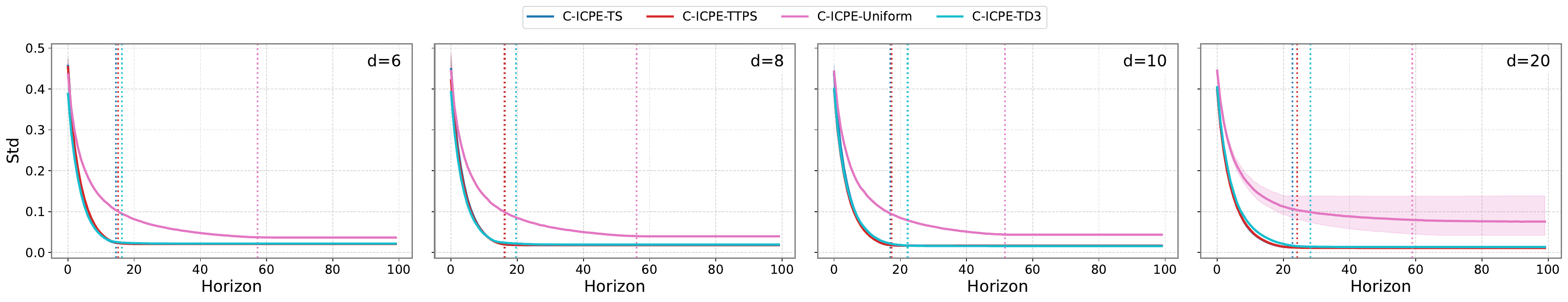}
    \caption{Results for Binary Search Problem with fixed confidence $\delta=0.1$ and $N=100$ across different dimensions at the most hardest $(\varepsilon, \sigma)$ setting: (top) survival function of $\tau$; (bottom) inference uncertainty convergence.}
    \label{fig:binary_search_survival_function_std}
\end{figure}

Tables~\ref{tab:binary-search-accuracy} and~\ref{tab:binary-search-sample-complexity} report accuracy
and sample complexity across all $(d, \varepsilon, \sigma)$
configurations; Figure~\ref{fig:binary_search_survival_function_std}
shows the survival function of $\tau$ and the convergence of the
inference model's standard deviation. In \cref{tab:binary_search_visualization} we also show ho \cicpe{} explores in $5$ dimensions, depicting the queries chosen by the actor, the posterior mean, and the posterior standard deviation over timesteps.

\textit{Accuracy.} All three active actors (TS, TTPS, TD3) meet the
$1 - \delta = 0.90$ target across every configuration tested, with
mean accuracy between $0.895$ and $0.916$. The confidence intervals
confirm that the target is met reliably: even the lower bounds
remain at or above $0.886$. \cicpeuniform{} matches the active
methods at $d = 6$ with $\varepsilon = 0.2$ (accuracy $\geq 0.901$)
but degrades sharply as either $d$ increases or $\varepsilon$
decreases. At $(\varepsilon, \sigma) = (0.1, 0.05)$, accuracy drops
from $0.688$ at $d = 6$ to $0.440$ at $d = 8$, $0.171$ at $d = 10$,
and $0.006$ at $d = 20$. This confirms that passive exploration
cannot accumulate sufficient directional information per coordinate
to localize the target within the allowed horizon in high dimensions.

\textit{Sample complexity.} Among active methods, \cicpets{} and
\cicpettps{} achieve comparable sample complexity across all
settings: at $d = 20$, $(\varepsilon, \sigma) = (0.1, 0.05)$,
\cicpets{} stops in $22.7$ queries on average and \cicpettps{} in
$24.2$, both with tight confidence intervals. \cicpetdthree{} is
competitive at moderate dimensions ($d \leq 10$) but exhibits higher
mean stopping times and substantially wider confidence intervals at
$d = 20$ (e.g., $35.8\ [19.2, 67.6]$ at $\varepsilon = 0.2$,
$\sigma = 0.05$), suggesting that the learned actor is less stable
in high dimensions. Sample complexity scales sublinearly in $d$ for
the active actors: \cicpets{} increases from $11.1$ ($d = 6$) to
$16.4$ ($d = 20$) at $(\varepsilon, \sigma) = (0.2, 0.05)$.

\textit{Stopping behavior.} The survival functions
(Figure~\ref{fig:binary_search_survival_function_std}, top)
corroborate the sample complexity results. At $d \leq 10$, \cicpets{}
and \cicpettps{} exhibit sharp transitions: $P(\tau > t)$ drops from
$1$ to $0$ within a narrow window, indicating that the critic
identifies a consistent stopping point. \cicpeuniform{} has a
heavy-tailed survival function that extends to the horizon, and at
$d = 20$ it rarely stops before $t_{\max}$. The inference standard
deviation (bottom row) confirms that the active actors' posteriors
contract rapidly, reaching near-zero uncertainty before the median
stopping time (dashed vertical lines), while \cicpeuniform{} at
$d = 20$ retains high residual uncertainty throughout the episode.

\begin{table}[htbp]
\centering
\small
\setlength{\tabcolsep}{2.5pt}
\renewcommand{\arraystretch}{1.08}
\begin{tabular}{@{}llrrrr@{}}
\toprule
$d$ & Method & \multicolumn{2}{c}{$\varepsilon=0.2$} & \multicolumn{2}{c}{$\varepsilon=0.1$} \\
\cmidrule(lr){3-4}\cmidrule(lr){5-6}
 &  & $\sigma=0.05$ & $\sigma=0.1$ & $\sigma=0.025$ & $\sigma=0.05$ \\
\midrule
\multirow[t]{4}{*}{6} & C-ICPE-TD3 & $0.895$ {\tiny [.886,.902]} & $0.898$ {\tiny [.886,.908]} & $0.903$ {\tiny [.893,.914]} & \textbf{0.901} {\tiny [.889,.912]} \\
 & C-ICPE-TS & \textbf{0.904} {\tiny [.895,.912]} & \textbf{0.907} {\tiny [.899,.913]} & \textbf{0.909} {\tiny [.899,.920]} & $0.900$ {\tiny [.893,.909]} \\
 & C-ICPE-TTPS & $0.897$ {\tiny [.890,.905]} & $0.903$ {\tiny [.893,.913]} & $0.900$ {\tiny [.893,.907]} & $0.898$ {\tiny [.889,.905]} \\
 & C-ICPE-uniform & $0.901$ {\tiny [.894,.908]} & $0.904$ {\tiny [.893,.914]} & $0.854$ {\tiny [.835,.874]} & $0.688$ {\tiny [.666,.709]} \\
\midrule
\multirow[t]{4}{*}{8} & C-ICPE-TD3 & $0.903$ {\tiny [.897,.911]} & $0.898$ {\tiny [.890,.907]} & $0.901$ {\tiny [.891,.912]} & $0.899$ {\tiny [.890,.906]} \\
 & C-ICPE-TS & $0.905$ {\tiny [.892,.917]} & $0.898$ {\tiny [.889,.905]} & \textbf{0.916} {\tiny [.902,.930]} & \textbf{0.909} {\tiny [.899,.918]} \\
 & C-ICPE-TTPS & $0.903$ {\tiny [.896,.911]} & \textbf{0.905} {\tiny [.893,.915]} & $0.907$ {\tiny [.895,.917]} & $0.901$ {\tiny [.891,.909]} \\
 & C-ICPE-uniform & \textbf{0.907} {\tiny [.897,.918]} & $0.894$ {\tiny [.886,.901]} & $0.724$ {\tiny [.693,.756]} & $0.440$ {\tiny [.406,.477]} \\
\midrule
\multirow[t]{4}{*}{10} & C-ICPE-TD3 & $0.901$ {\tiny [.892,.908]} & $0.898$ {\tiny [.889,.904]} & $0.904$ {\tiny [.893,.914]} & \textbf{0.915} {\tiny [.891,.948]} \\
 & C-ICPE-TS & $0.896$ {\tiny [.886,.903]} & \textbf{0.903} {\tiny [.890,.916]} & \textbf{0.916} {\tiny [.905,.925]} & $0.904$ {\tiny [.892,.915]} \\
 & C-ICPE-TTPS & \textbf{0.902} {\tiny [.895,.910]} & $0.900$ {\tiny [.891,.906]} & $0.901$ {\tiny [.891,.911]} & $0.904$ {\tiny [.897,.912]} \\
 & C-ICPE-uniform & $0.873$ {\tiny [.853,.891]} & $0.860$ {\tiny [.847,.873]} & $0.493$ {\tiny [.471,.514]} & $0.171$ {\tiny [.136,.201]} \\
\midrule
\multirow[t]{4}{*}{20} & C-ICPE-TD3 & $0.897$ {\tiny [.888,.905]} & $0.905$ {\tiny [.896,.913]} & \textbf{0.925} {\tiny [.908,.940]} & $0.903$ {\tiny [.895,.912]} \\
 & C-ICPE-TS & \textbf{0.911} {\tiny [.893,.931]} & $0.903$ {\tiny [.892,.914]} & $0.901$ {\tiny [.891,.910]} & $0.903$ {\tiny [.889,.919]} \\
 & C-ICPE-TTPS & $0.896$ {\tiny [.887,.903]} & \textbf{0.907} {\tiny [.897,.917]} & $0.900$ {\tiny [.890,.909]} & \textbf{0.913} {\tiny [.906,.921]} \\
 & C-ICPE-uniform & $0.662$ {\tiny [.630,.698]} & $0.079$ {\tiny [.001,.136]} & $0.036$ {\tiny [.022,.049]} & $0.006$ {\tiny [.000,.013]} \\
\bottomrule
\end{tabular}
\vspace{0.1cm}
\caption{Binary search: accuracy (mean and 95\% CI) for every $(d, \varepsilon, \sigma)$ configuration.}
\label{tab:binary-search-accuracy}
\end{table}

\begin{table}[htbp]
\centering
\small
\setlength{\tabcolsep}{2.5pt}
\renewcommand{\arraystretch}{1.08}
\begin{tabular}{@{}llrrrr@{}}
\toprule
$d$ & Method & \multicolumn{2}{c}{$\varepsilon=0.2$} & \multicolumn{2}{c}{$\varepsilon=0.1$} \\
\cmidrule(lr){3-4}\cmidrule(lr){5-6}
 &  & $\sigma=0.05$ & $\sigma=0.1$ & $\sigma=0.025$ & $\sigma=0.05$ \\
\midrule
\multirow[t]{4}{*}{6} & C-ICPE-TD3 & \textbf{10.4} {\tiny [10.2,10.6]} & $15.5$ {\tiny [15.2,15.9]} & $13.3$ {\tiny [12.5,14.2]} & $16.3$ {\tiny [15.7,16.8]} \\
 & C-ICPE-TS & $11.1$ {\tiny [11.0,11.3]} & \textbf{15.3} {\tiny [15.2,15.4]} & \textbf{12.0} {\tiny [11.9,12.0]} & \textbf{14.4} {\tiny [14.3,14.6]} \\
 & C-ICPE-TTPS & $11.3$ {\tiny [11.0,11.6]} & $16.2$ {\tiny [16.1,16.3]} & $12.4$ {\tiny [12.1,12.6]} & $15.1$ {\tiny [14.9,15.4]} \\
 & C-ICPE-uniform & $36.3$ {\tiny [36.0,36.7]} & $54.1$ {\tiny [52.4,56.9]} & $60.0$ {\tiny [58.2,62.3]} & $57.3$ {\tiny [56.1,58.1]} \\
\midrule
\multirow[t]{4}{*}{8} & C-ICPE-TD3 & \textbf{11.8} {\tiny [11.7,11.9]} & \textbf{17.3} {\tiny [17.0,17.5]} & $15.8$ {\tiny [15.0,16.8]} & $19.7$ {\tiny [19.1,20.2]} \\
 & C-ICPE-TS & $12.0$ {\tiny [11.5,12.4]} & $17.4$ {\tiny [16.9,17.7]} & \textbf{13.0} {\tiny [12.6,13.4]} & $16.2$ {\tiny [15.8,16.5]} \\
 & C-ICPE-TTPS & $12.3$ {\tiny [12.2,12.4]} & $17.8$ {\tiny [17.6,18.0]} & $13.4$ {\tiny [13.2,13.6]} & \textbf{16.2} {\tiny [15.9,16.5]} \\
 & C-ICPE-uniform & $44.0$ {\tiny [43.4,44.7]} & $62.1$ {\tiny [61.6,62.7]} & $62.1$ {\tiny [60.0,64.4]} & $56.0$ {\tiny [53.8,58.8]} \\
\midrule
\multirow[t]{4}{*}{10} & C-ICPE-TD3 & \textbf{12.7} {\tiny [12.5,12.8]} & $19.4$ {\tiny [18.5,20.7]} & $18.2$ {\tiny [16.2,20.4]} & $22.2$ {\tiny [20.5,23.6]} \\
 & C-ICPE-TS & $12.8$ {\tiny [12.7,12.8]} & \textbf{18.5} {\tiny [18.1,18.8]} & \textbf{14.0} {\tiny [13.8,14.1]} & \textbf{17.0} {\tiny [16.7,17.3]} \\
 & C-ICPE-TTPS & $13.4$ {\tiny [12.9,13.8]} & $18.8$ {\tiny [18.7,18.9]} & $14.1$ {\tiny [14.0,14.2]} & $17.3$ {\tiny [17.0,17.7]} \\
 & C-ICPE-uniform & $47.6$ {\tiny [45.2,49.3]} & $68.1$ {\tiny [67.2,68.9]} & $59.2$ {\tiny [57.9,60.4]} & $51.7$ {\tiny [47.1,55.0]} \\
\midrule
\multirow[t]{4}{*}{20} & C-ICPE-TD3 & $35.8$ {\tiny [19.2,67.6]} & $33.0$ {\tiny [27.2,42.2]} & $26.7$ {\tiny [23.1,32.3]} & $28.2$ {\tiny [26.9,29.5]} \\
 & C-ICPE-TS & \textbf{16.4} {\tiny [16.2,16.6]} & \textbf{23.3} {\tiny [23.0,23.5]} & \textbf{17.6} {\tiny [17.3,17.9]} & \textbf{22.7} {\tiny [21.8,23.5]} \\
 & C-ICPE-TTPS & $16.4$ {\tiny [16.1,16.6]} & $24.4$ {\tiny [24.1,24.7]} & $18.7$ {\tiny [18.0,19.4]} & $24.2$ {\tiny [23.2,25.5]} \\
 & C-ICPE-uniform & $64.0$ {\tiny [62.0,66.0]} & $47.5$ {\tiny [29.5,62.4]} & $81.1$ {\tiny [71.6,88.2]} & $59.6$ {\tiny [19.4,91.9]} \\
\bottomrule
\end{tabular}
\vspace{0.1cm}
\caption{Binary search: sample complexity (mean and 95\% CI) for every $(d, \varepsilon, \sigma)$ configuration.}
\label{tab:binary-search-sample-complexity}
\end{table}

\clearpage
\subsubsection{$\epsilon$-best arm problem}

\begin{figure}[htbp]
    \centering
        \includegraphics[width=\textwidth]{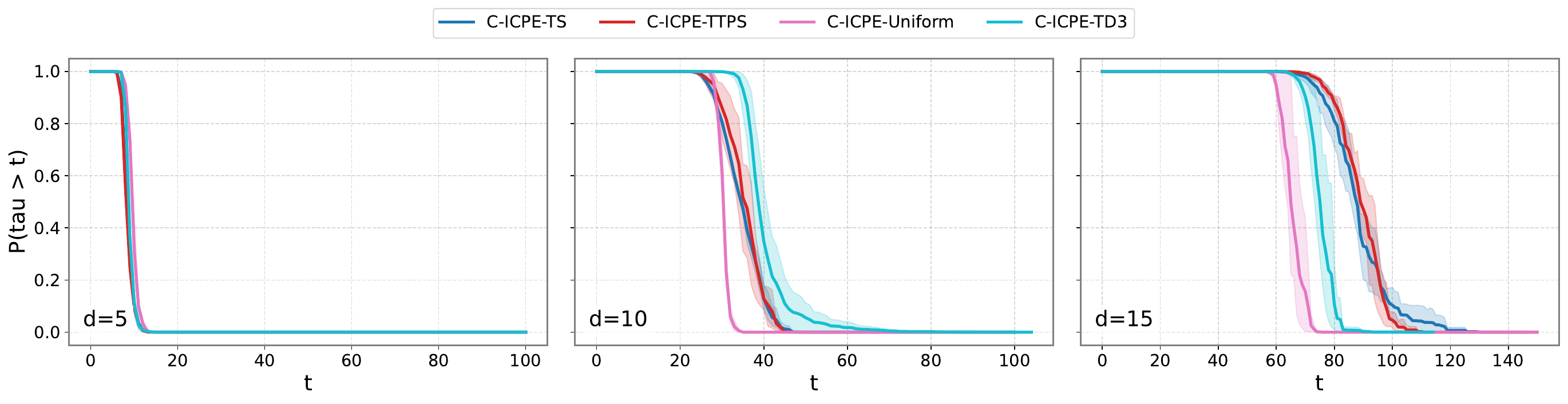}
    
        \includegraphics[width=\textwidth]{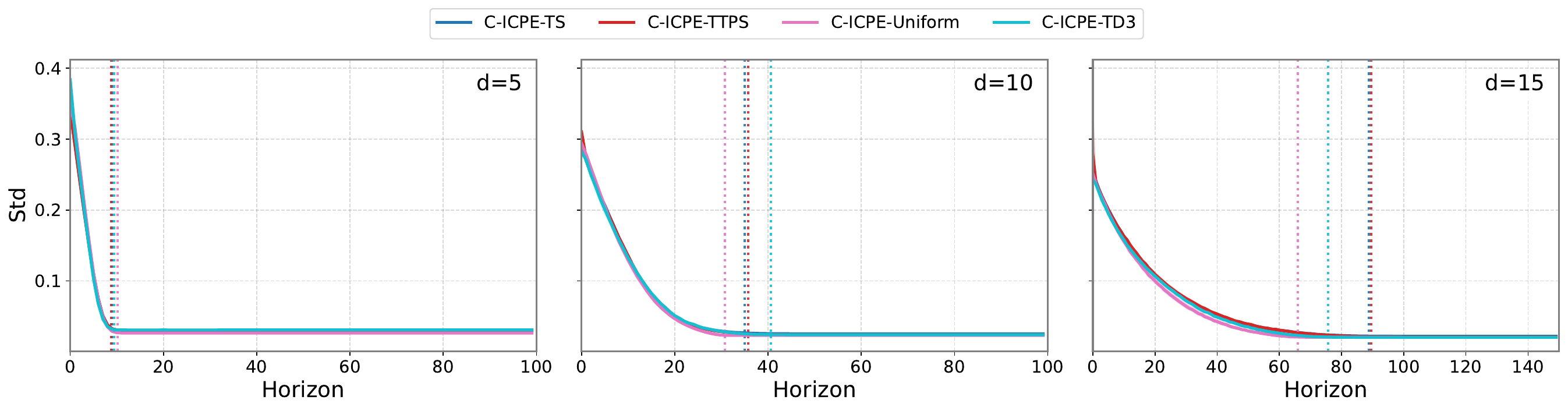}

    \caption{Results for $\epsilon$-Best-Arm Identification Problem with fixed confidence $\delta=0.005$ and $N=100/150$ across different dimensions at the most hardest $(\varepsilon, \sigma)$ setting: (top) survival function of $\tau$; (bottom) inference uncertainty convergence.}
    \label{fig:eps_best_arm_survival_function_std}
\end{figure}

Tables~\ref{tab:eps-best-arm-accuracy} and~\ref{tab:eps-best-arm-sample-complexity} report accuracy
and sample complexity;
Figure~\ref{fig:eps_best_arm_survival_function_std} shows the survival
function and inference uncertainty convergence.

\textit{Accuracy.} All \cicpe{} variants meet the $1 - \delta$
target across every $(d, \varepsilon, \sigma)$ configuration, with
mean accuracy between $0.897$ and $0.939$. Notably,
\cicpeuniform{} also meets the target throughout (accuracy
$0.897$--$0.936$), consistent with the rotational symmetry of the
problem: since the loss $L_\theta(x) = 1 - \theta^\top x$ is
invariant to orthogonal transformations, no query direction is
intrinsically more informative than another, and isotropic
exploration is  in general optimal. Lazy Track-and-Stop achieves
perfect accuracy ($1.000$) in all configurations by exploiting the
known linear observation structure and an analytic likelihood-ratio
stopping rule.

\textit{Sample complexity.} The gap between \cicpe{} and LT\&S is
substantial and grows with dimension. LT\&S queries exactly $d$
samples in every configuration, one per canonical direction,
achieving the minimum for a full-rank linear estimator. \cicpe{}
uses $2$--$6\times$ more queries: at $d = 15$,
$(\varepsilon, \sigma) = (0.005, 0.0025)$, \cicpets{} stops at
$88.8$ and \cicpettps{} at $89.5$, compared to $15.0$ for LT\&S.
This gap reflects the cost of a model-agnostic stopping rule:
\cicpe{} does not know the observation model is linear and must
learn when to stop from interaction data alone. Among \cicpe{}
variants, \cicpeuniform{} is competitive with and sometimes more
sample-efficient than the active actors. At $d = 15$,
$(\varepsilon, \sigma) = (0.005, 0.0025)$, \cicpeuniform{} stops at
$65.9$ while \cicpets{} requires $88.8$ and \cicpettps{} $89.5$.
This inversion occurs because the problem's symmetry makes directed
exploration unnecessary, and the overhead of posterior-driven action
selection, which occasionally concentrates queries in already
well-estimated directions, slows convergence relative to uniform
coverage. We also note that the current noise levels are low relative to
$\varepsilon$, placing the problem in a regime where LT\&S resolves
the target direction in a single pass of $d$ orthogonal queries. At
higher noise levels, where multiple measurement rounds are necessary, we expect the relative performance of \cicpe{} to improve.

\textit{Stopping behavior.} The survival functions
(Figure~\ref{fig:eps_best_arm_survival_function_std}, top) show that all
\cicpe{} actors have similar stopping profiles at $d = 5$, where
episodes terminate within $t \approx 10$. At $d = 15$, the curves
spread: \cicpeuniform{} stops earlier (median $\approx 60$) than
\cicpets{} and \cicpettps{} (median $\approx 75$--$85$), again
reflecting the advantage of isotropic coverage in this symmetric
problem. The inference standard deviation (bottom row) converges at
comparable rates across all actors, confirming that the posterior
contracts uniformly regardless of the exploration strategy, the
sample complexity differences are driven by stopping calibration,
not by differences in information acquisition.

\begin{table}[htbp]
\centering
\small
\setlength{\tabcolsep}{2.5pt}
\renewcommand{\arraystretch}{1.08}
\begin{tabular}{@{}llrrrr@{}}
\toprule
$d$ & Method & \multicolumn{2}{c}{$\varepsilon=0.02$} & \multicolumn{2}{c}{$\varepsilon=0.005$} \\
\cmidrule(lr){3-4}\cmidrule(lr){5-6}
 &  & $\sigma=0.005$ & $\sigma=0.01$ & $\sigma=0.00125$ & $\sigma=0.0025$ \\
\midrule
\multirow[t]{6}{*}{5} & C-ICPE-TD3 & $0.925$ {\tiny [.907,.943]} & $0.915$ {\tiny [.905,.926]} & $0.898$ {\tiny [.890,.907]} & $0.902$ {\tiny [.893,.912]} \\
 & C-ICPE-TS & $0.939$ {\tiny [.912,.958]} & $0.939$ {\tiny [.929,.947]} & $0.908$ {\tiny [.890,.930]} & $0.904$ {\tiny [.891,.917]} \\
 & C-ICPE-TTPS & $0.934$ {\tiny [.922,.944]} & $0.938$ {\tiny [.930,.944]} & $0.902$ {\tiny [.891,.911]} & $0.900$ {\tiny [.893,.909]} \\
 & C-ICPE-uniform & $0.936$ {\tiny [.925,.946]} & $0.931$ {\tiny [.919,.943]} & $0.922$ {\tiny [.914,.928]} & $0.927$ {\tiny [.918,.935]} \\
 & LT\&S round-robin & \textbf{1.000} {\tiny [1.000,1.000]} & \textbf{1.000} {\tiny [1.000,1.000]} & \textbf{1.000} {\tiny [1.000,1.000]} & \textbf{1.000} {\tiny [1.000,1.000]} \\
 & LT\&S uniform & $1.000$ {\tiny [1.000,1.000]} & $1.000$ {\tiny [1.000,1.000]} & $1.000$ {\tiny [1.000,1.000]} & $1.000$ {\tiny [1.000,1.000]} \\
\midrule
\multirow[t]{6}{*}{10} & C-ICPE-TD3 & $0.920$ {\tiny [.908,.931]} & $0.901$ {\tiny [.891,.909]} & $0.908$ {\tiny [.896,.919]} & $0.919$ {\tiny [.906,.932]} \\
 & C-ICPE-TS & $0.902$ {\tiny [.893,.911]} & $0.909$ {\tiny [.897,.920]} & $0.908$ {\tiny [.892,.922]} & $0.915$ {\tiny [.899,.930]} \\
 & C-ICPE-TTPS & $0.917$ {\tiny [.907,.926]} & $0.904$ {\tiny [.897,.911]} & $0.911$ {\tiny [.900,.921]} & $0.906$ {\tiny [.892,.920]} \\
 & C-ICPE-uniform & $0.911$ {\tiny [.901,.921]} & $0.905$ {\tiny [.895,.913]} & $0.905$ {\tiny [.894,.916]} & $0.900$ {\tiny [.890,.910]} \\
 & LT\&S round-robin & \textbf{1.000} {\tiny [1.000,1.000]} & \textbf{1.000} {\tiny [1.000,1.000]} & \textbf{1.000} {\tiny [1.000,1.000]} & \textbf{1.000} {\tiny [1.000,1.000]} \\
 & LT\&S uniform & $1.000$ {\tiny [1.000,1.000]} & $1.000$ {\tiny [1.000,1.000]} & $1.000$ {\tiny [1.000,1.000]} & $1.000$ {\tiny [1.000,1.000]} \\
\midrule
\multirow[t]{6}{*}{15} & C-ICPE-TD3 & $0.904$ {\tiny [.894,.913]} & $0.906$ {\tiny [.896,.915]} & $0.909$ {\tiny [.898,.920]} & $0.910$ {\tiny [.902,.917]} \\
 & C-ICPE-TS & $0.918$ {\tiny [.898,.937]} & $0.918$ {\tiny [.900,.932]} & $0.908$ {\tiny [.895,.921]} & $0.898$ {\tiny [.889,.907]} \\
 & C-ICPE-TTPS & $0.907$ {\tiny [.894,.919]} & $0.907$ {\tiny [.894,.919]} & $0.910$ {\tiny [.887,.934]} & $0.923$ {\tiny [.893,.959]} \\
 & C-ICPE-uniform & $0.897$ {\tiny [.887,.905]} & $0.902$ {\tiny [.889,.913]} & $0.906$ {\tiny [.893,.919]} & $0.901$ {\tiny [.892,.910]} \\
 & LT\&S round-robin & \textbf{1.000} {\tiny [1.000,1.000]} & \textbf{1.000} {\tiny [1.000,1.000]} & \textbf{1.000} {\tiny [1.000,1.000]} & \textbf{1.000} {\tiny [1.000,1.000]} \\
 & LT\&S uniform & $1.000$ {\tiny [1.000,1.000]} & $1.000$ {\tiny [1.000,1.000]} & $1.000$ {\tiny [1.000,1.000]} & $1.000$ {\tiny [1.000,1.000]} \\
\bottomrule
\end{tabular}
\vspace{0.1cm}
\caption{$\epsilon$-best arm: accuracy (mean and 95\% CI) for every $(d, \varepsilon, \sigma)$ configuration.}
\label{tab:eps-best-arm-accuracy}
\end{table}

\begin{table}[htbp]
\centering
\small
\setlength{\tabcolsep}{2.5pt}
\renewcommand{\arraystretch}{1.08}
\begin{tabular}{@{}llrrrr@{}}
\toprule
$d$ & Method & \multicolumn{2}{c}{$\varepsilon=0.02$} & \multicolumn{2}{c}{$\varepsilon=0.005$} \\
\cmidrule(lr){3-4}\cmidrule(lr){5-6}
 &  & $\sigma=0.005$ & $\sigma=0.01$ & $\sigma=0.00125$ & $\sigma=0.0025$ \\
\midrule
\multirow[t]{6}{*}{5} & C-ICPE-TD3 & $7.4$ {\tiny [7.3,7.5]} & $7.5$ {\tiny [7.4,7.5]} & $9.4$ {\tiny [9.2,9.6]} & $9.4$ {\tiny [9.3,9.5]} \\
 & C-ICPE-TS & $7.3$ {\tiny [7.2,7.5]} & $7.6$ {\tiny [7.5,7.6]} & $8.9$ {\tiny [8.6,9.1]} & $9.0$ {\tiny [8.8,9.2]} \\
 & C-ICPE-TTPS & $7.3$ {\tiny [7.1,7.4]} & $7.5$ {\tiny [7.4,7.5]} & $8.7$ {\tiny [8.6,8.9]} & $8.8$ {\tiny [8.7,9.0]} \\
 & C-ICPE-uniform & $8.3$ {\tiny [8.2,8.4]} & $8.2$ {\tiny [8.0,8.4]} & $10.0$ {\tiny [9.9,10.1]} & $10.1$ {\tiny [10.0,10.3]} \\
 & LT\&S round-robin & \textbf{5.0} {\tiny [5.0,5.0]} & \textbf{5.0} {\tiny [5.0,5.0]} & \textbf{5.0} {\tiny [5.0,5.0]} & \textbf{5.0} {\tiny [5.0,5.0]} \\
 & LT\&S uniform & $5.0$ {\tiny [5.0,5.0]} & $5.0$ {\tiny [5.0,5.0]} & $5.0$ {\tiny [5.0,5.0]} & $5.0$ {\tiny [5.0,5.0]} \\
\midrule
\multirow[t]{6}{*}{10} & C-ICPE-TD3 & $21.7$ {\tiny [21.1,22.6]} & $21.9$ {\tiny [21.6,22.3]} & $37.3$ {\tiny [35.3,39.0]} & $40.6$ {\tiny [38.2,43.3]} \\
 & C-ICPE-TS & $20.0$ {\tiny [19.5,20.7]} & $23.9$ {\tiny [22.8,25.3]} & $31.4$ {\tiny [30.1,33.7]} & $35.0$ {\tiny [34.7,35.5]} \\
 & C-ICPE-TTPS & $20.7$ {\tiny [20.4,21.0]} & $22.8$ {\tiny [22.6,23.0]} & $32.7$ {\tiny [30.9,34.6]} & $35.8$ {\tiny [33.8,37.3]} \\
 & C-ICPE-uniform & $22.1$ {\tiny [21.9,22.3]} & $22.4$ {\tiny [22.2,22.5]} & $31.1$ {\tiny [30.8,31.4]} & $30.7$ {\tiny [30.5,31.1]} \\
 & LT\&S round-robin & \textbf{10.0} {\tiny [10.0,10.0]} & \textbf{10.0} {\tiny [10.0,10.0]} & \textbf{10.0} {\tiny [10.0,10.0]} & \textbf{10.0} {\tiny [10.0,10.0]} \\
 & LT\&S uniform & $10.0$ {\tiny [10.0,10.0]} & $10.0$ {\tiny [10.0,10.0]} & $10.0$ {\tiny [10.0,10.0]} & $10.0$ {\tiny [10.0,10.0]} \\
\midrule
\multirow[t]{6}{*}{15} & C-ICPE-TD3 & $46.9$ {\tiny [46.5,47.2]} & $48.6$ {\tiny [48.1,49.2]} & $71.4$ {\tiny [70.2,72.6]} & $75.7$ {\tiny [73.4,78.9]} \\
 & C-ICPE-TS & $46.5$ {\tiny [46.0,47.1]} & $53.8$ {\tiny [53.1,54.7]} & $74.8$ {\tiny [73.3,76.3]} & $88.8$ {\tiny [85.4,92.7]} \\
 & C-ICPE-TTPS & $45.2$ {\tiny [44.0,47.0]} & $52.7$ {\tiny [50.9,54.1]} & $78.9$ {\tiny [74.6,87.0]} & $89.5$ {\tiny [87.7,91.7]} \\
 & C-ICPE-uniform & $44.5$ {\tiny [43.3,45.8]} & $45.1$ {\tiny [43.6,46.9]} & $68.1$ {\tiny [65.6,70.8]} & $65.9$ {\tiny [64.1,69.4]} \\
 & LT\&S round-robin & \textbf{15.0} {\tiny [15.0,15.0]} & \textbf{15.0} {\tiny [15.0,15.0]} & \textbf{15.0} {\tiny [15.0,15.0]} & \textbf{15.0} {\tiny [15.0,15.0]} \\
 & LT\&S uniform & $15.0$ {\tiny [15.0,15.0]} & $15.0$ {\tiny [15.0,15.0]} & $15.0$ {\tiny [15.0,15.0]} & $15.0$ {\tiny [15.0,15.0]} \\
\bottomrule
\end{tabular}
\vspace{0.1cm}
\caption{$\epsilon$-best arm: sample complexity (mean and 95\% CI) for every $(d, \varepsilon, \sigma)$ configuration.}
\label{tab:eps-best-arm-sample-complexity}
\end{table}

\clearpage
\subsubsection{Ackley minimization}

\begin{figure}[htbp]
    \centering
        \includegraphics[width=\textwidth]{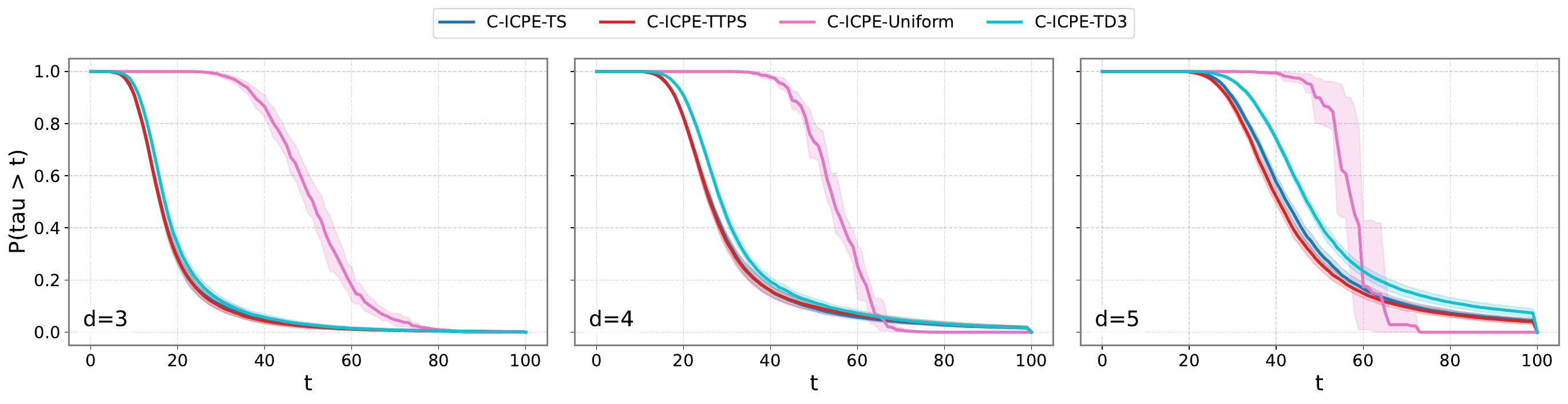}
        \includegraphics[width=\textwidth]{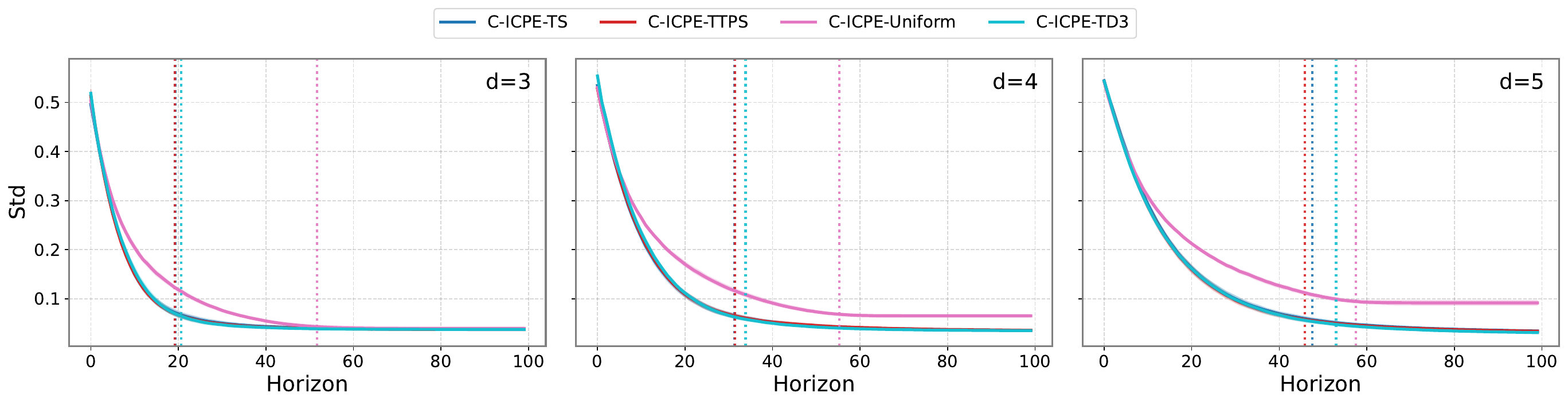}
    \caption{Results for Ackley function with fixed confidence $\delta=0.1$ and $N=100$ across different dimensions at the most hardest $(\varepsilon, \sigma)$ setting: (top) survival function of $\tau$; (bottom) inference uncertainty convergence.}
    \label{fig:ackley_survival_function_std}
\end{figure}

Tables~\ref{tab:ackley-accuracy} and~\ref{tab:ackley-sample-complexity} report accuracy
and sample complexity;
Figure~\ref{fig:ackley_survival_function_std} shows the survival
function and inference uncertainty convergence.

\textit{Accuracy.} All active \cicpe{} variants meet the
$1 - \delta = 0.90$ target across every $(d, \varepsilon, \sigma)$
configuration, with mean accuracy between $0.896$ and $0.928$.
\cicpeuniform{} meets the target at $d = 3$ (accuracy $\geq 0.890$)
but degrades as dimension increases: at
$(\varepsilon, \sigma) = (0.1, 0.05)$, accuracy drops from $0.890$
at $d = 3$ to $0.589$ at $d = 4$ and $0.237$ at $d = 5$. Unlike the
$\varepsilon$-best arm problem, the Ackley function has no symmetry
that makes uniform exploration competitive: the multimodal landscape
and flat outer region require directed queries to distinguish the
global minimizer from local optima. The Bayesian optimization
baselines fail across the board. GP-UCB achieves the highest
accuracy among them ($0.344$--$0.426$ at $d = 3$) but remains far
below the $0.90$ target even at the easiest configuration. TPE and
CMA-ES are near zero at $d \geq 4$. These methods optimize a
fixed-budget objective without a stopping rule and are not designed
for $(\varepsilon, \delta)$-correctness; the comparison confirms
that standard continuous optimization does not yield
fixed-confidence guarantees at comparable sample budgets.

\textit{Sample complexity.} Active actors use roughly half the
samples of \cicpeuniform{} across all dimensions. At $d = 5$,
$(\varepsilon, \sigma) = (0.1, 0.05)$, \cicpets{} stops at $47.5$
and \cicpettps{} at $45.8$, compared to $57.5$ for
\cicpeuniform{}. \cicpets{} and \cicpettps{} achieve similar sample
complexity throughout, while \cicpetdthree{} is slightly higher
(e.g., $53.0$ at the same setting). Sample complexity grows with
dimension for all methods: \cicpets{} increases from $14.9$ ($d=3$)
to $34.4$ ($d=5$) at $(\varepsilon, \sigma) = (0.2, 0.05)$.
Increasing $\sigma$ at fixed $\varepsilon$ consistently raises
sample complexity, as expected from the noisier observations.

\textit{Stopping behavior.} The survival functions
(Figure~\ref{fig:ackley_survival_function_std}, top) show a clear
separation between active and passive exploration. At $d = 3$,
\cicpets{} and \cicpettps{} exhibit sharp transitions around
$t \approx 15$--$25$, while \cicpeuniform{} has a heavy tail
extending past $t = 80$. At $d = 5$, the active actors stop around
$t \approx 35$--$50$ while \cicpeuniform{} rarely stops before
$t = 60$ and retains substantial probability mass near the horizon.
The inference standard deviation (bottom row) reveals a qualitative
difference from binary search: the posterior uncertainty plateaus
around $0.05$--$0.1$ rather than converging to zero. This reflects
the inherent difficulty of the Ackley landscape, the multimodal
structure and observation noise prevent the inference model from
achieving the same posterior concentration as in the unimodal binary
search setting. Nevertheless, the critic learns to stop at an
appropriate uncertainty level that is sufficient for
$\varepsilon$-correctness.

\begin{table}[htbp]
\centering
\small
\setlength{\tabcolsep}{2.5pt}
\renewcommand{\arraystretch}{1.08}
\begin{tabular}{@{}llrrrr@{}}
\toprule
$d$ & Method & \multicolumn{2}{c}{$\varepsilon=0.2$} & \multicolumn{2}{c}{$\varepsilon=0.1$} \\
\cmidrule(lr){3-4}\cmidrule(lr){5-6}
 &  & $\sigma=0.05$ & $\sigma=0.1$ & $\sigma=0.025$ & $\sigma=0.05$ \\
\midrule
\multirow[t]{4}{*}{3} & C-ICPE-TD3 & $15.3$ {\tiny [14.6,15.9]} & $18.9$ {\tiny [18.2,19.6]} & $18.9$ {\tiny [18.2,19.8]} & $20.6$ {\tiny [19.9,21.3]} \\
 & C-ICPE-TS & \textbf{14.9} {\tiny [14.5,15.5]} & $18.8$ {\tiny [18.1,19.7]} & \textbf{17.5} {\tiny [16.9,18.0]} & $19.3$ {\tiny [18.5,20.2]} \\
 & C-ICPE-TTPS & $15.4$ {\tiny [14.8,16.1]} & \textbf{18.7} {\tiny [18.1,19.4]} & $18.0$ {\tiny [17.3,18.7]} & \textbf{19.2} {\tiny [18.6,19.9]} \\
 & C-ICPE-uniform & $37.6$ {\tiny [34.4,40.2]} & $47.9$ {\tiny [46.3,49.4]} & $50.5$ {\tiny [48.6,52.2]} & $51.7$ {\tiny [49.8,53.7]} \\
\midrule
\multirow[t]{4}{*}{4} & C-ICPE-TD3 & $25.9$ {\tiny [25.0,26.7]} & $32.8$ {\tiny [31.8,33.9]} & $31.5$ {\tiny [30.5,32.6]} & $33.9$ {\tiny [33.1,34.7]} \\
 & C-ICPE-TS & $24.4$ {\tiny [23.2,25.8]} & \textbf{31.1} {\tiny [29.7,32.6]} & \textbf{27.5} {\tiny [26.3,28.9]} & \textbf{31.3} {\tiny [30.4,32.1]} \\
 & C-ICPE-TTPS & \textbf{24.2} {\tiny [22.7,25.8]} & $32.2$ {\tiny [31.0,33.6]} & $27.9$ {\tiny [27.0,29.0]} & $31.4$ {\tiny [30.4,32.4]} \\
 & C-ICPE-uniform & $62.4$ {\tiny [60.3,64.6]} & $59.4$ {\tiny [56.8,61.4]} & $57.8$ {\tiny [57.3,58.3]} & $55.2$ {\tiny [54.0,56.5]} \\
\midrule
\multirow[t]{4}{*}{5} & C-ICPE-TD3 & $38.1$ {\tiny [37.0,39.3]} & $50.7$ {\tiny [49.3,52.3]} & $44.1$ {\tiny [42.2,46.2]} & $53.0$ {\tiny [51.9,54.3]} \\
 & C-ICPE-TS & \textbf{34.4} {\tiny [33.3,35.9]} & \textbf{45.8} {\tiny [44.2,47.1]} & $38.5$ {\tiny [37.5,39.5]} & $47.5$ {\tiny [46.3,48.6]} \\
 & C-ICPE-TTPS & $35.6$ {\tiny [33.4,38.0]} & $47.0$ {\tiny [45.7,48.1]} & \textbf{37.7} {\tiny [36.9,38.9]} & \textbf{45.8} {\tiny [44.8,46.9]} \\
 & C-ICPE-uniform & $70.0$ {\tiny [63.9,79.2]} & $58.2$ {\tiny [57.1,59.7]} & $61.0$ {\tiny [55.3,68.6]} & $57.5$ {\tiny [55.1,60.1]} \\
\bottomrule
\end{tabular}
\vspace{0.1cm}
\caption{Ackley: sample complexity (mean and 95\% CI) for every $(d, \varepsilon, \sigma)$ configuration.}
\label{tab:ackley-sample-complexity}
\end{table}

\begin{table}[t]
\centering
\small
\setlength{\tabcolsep}{2.5pt}
\renewcommand{\arraystretch}{1.08}
\begin{tabular}{@{}llrrrr@{}}
\toprule
$d$ & Method & \multicolumn{2}{c}{$\varepsilon=0.2$} & \multicolumn{2}{c}{$\varepsilon=0.1$} \\
\cmidrule(lr){3-4}\cmidrule(lr){5-6}
 &  & $\sigma=0.05$ & $\sigma=0.1$ & $\sigma=0.025$ & $\sigma=0.05$ \\
\midrule
\multirow[t]{8}{*}{3} & C-ICPE-TD3 & \textbf{0.915} {\tiny [.902,.927]} & $0.904$ {\tiny [.894,.914]} & $0.898$ {\tiny [.888,.906]} & $0.908$ {\tiny [.899,.918]} \\
 & C-ICPE-TS & $0.896$ {\tiny [.886,.904]} & $0.899$ {\tiny [.890,.909]} & $0.899$ {\tiny [.887,.906]} & $0.907$ {\tiny [.895,.917]} \\
 & C-ICPE-TTPS & $0.902$ {\tiny [.894,.913]} & \textbf{0.907} {\tiny [.894,.919]} & \textbf{0.911} {\tiny [.900,.920]} & \textbf{0.912} {\tiny [.899,.924]} \\
 & C-ICPE-uniform & $0.902$ {\tiny [.886,.915]} & $0.905$ {\tiny [.890,.918]} & $0.906$ {\tiny [.894,.914]} & $0.890$ {\tiny [.868,.910]} \\
 & TPE & $0.098$ {\tiny [.072,.124]} & $0.136$ {\tiny [.106,.166]} & $0.016$ {\tiny [.005,.027]} & $0.028$ {\tiny [.014,.042]} \\
 & CMA-ES & $0.086$ {\tiny [.061,.111]} & $0.124$ {\tiny [.095,.153]} & $0.020$ {\tiny [.008,.032]} & $0.034$ {\tiny [.018,.050]} \\
 & GP-logEI & $0.322$ {\tiny [.281,.363]} & $0.356$ {\tiny [.314,.398]} & $0.218$ {\tiny [.182,.254]} & $0.194$ {\tiny [.159,.229]} \\
 & GP-UCB & $0.400$ {\tiny [.357,.443]} & $0.426$ {\tiny [.383,.469]} & $0.344$ {\tiny [.302,.386]} & $0.350$ {\tiny [.308,.392]} \\
\midrule
\multirow[t]{8}{*}{4} & C-ICPE-TD3 & $0.907$ {\tiny [.895,.919]} & \textbf{0.906} {\tiny [.895,.914]} & $0.899$ {\tiny [.888,.908]} & $0.899$ {\tiny [.891,.907]} \\
 & C-ICPE-TS & $0.907$ {\tiny [.899,.918]} & $0.904$ {\tiny [.892,.916]} & \textbf{0.914} {\tiny [.905,.924]} & \textbf{0.913} {\tiny [.900,.923]} \\
 & C-ICPE-TTPS & $0.898$ {\tiny [.886,.908]} & $0.898$ {\tiny [.888,.908]} & $0.908$ {\tiny [.897,.919]} & $0.906$ {\tiny [.894,.915]} \\
 & C-ICPE-uniform & \textbf{0.908} {\tiny [.897,.919]} & $0.739$ {\tiny [.713,.765]} & $0.735$ {\tiny [.712,.762]} & $0.589$ {\tiny [.560,.616]} \\
 & TPE & $0.048$ {\tiny [.029,.067]} & $0.084$ {\tiny [.060,.108]} & $0.006$ {\tiny [.000,.013]} & $0.006$ {\tiny [.000,.013]} \\
 & CMA-ES & $0.020$ {\tiny [.008,.032]} & $0.056$ {\tiny [.036,.076]} & $0.002$ {\tiny [.000,.006]} & $0.002$ {\tiny [.000,.006]} \\
 & GP-logEI & $0.270$ {\tiny [.231,.309]} & $0.206$ {\tiny [.171,.241]} & $0.132$ {\tiny [.102,.162]} & $0.104$ {\tiny [.077,.131]} \\
 & GP-UCB & $0.344$ {\tiny [.302,.386]} & $0.308$ {\tiny [.267,.349]} & $0.336$ {\tiny [.295,.377]} & $0.276$ {\tiny [.237,.315]} \\
\midrule
\multirow[t]{8}{*}{5} & C-ICPE-TD3 & $0.904$ {\tiny [.888,.918]} & $0.904$ {\tiny [.891,.916]} & $0.907$ {\tiny [.892,.920]} & $0.904$ {\tiny [.891,.916]} \\
 & C-ICPE-TS & \textbf{0.909} {\tiny [.900,.920]} & \textbf{0.912} {\tiny [.899,.921]} & $0.913$ {\tiny [.905,.925]} & $0.911$ {\tiny [.897,.921]} \\
 & C-ICPE-TTPS & $0.905$ {\tiny [.893,.915]} & $0.904$ {\tiny [.889,.915]} & \textbf{0.928} {\tiny [.916,.939]} & \textbf{0.912} {\tiny [.895,.927]} \\
 & C-ICPE-uniform & $0.690$ {\tiny [.618,.756]} & $0.416$ {\tiny [.401,.437]} & $0.428$ {\tiny [.332,.538]} & $0.237$ {\tiny [.203,.279]} \\
 & TPE & $0.030$ {\tiny [.015,.045]} & $0.042$ {\tiny [.024,.060]} & $0.000$ {\tiny [.000,.000]} & $0.002$ {\tiny [.000,.006]} \\
 & CMA-ES & $0.014$ {\tiny [.004,.024]} & $0.040$ {\tiny [.023,.057]} & $0.002$ {\tiny [.000,.006]} & $0.004$ {\tiny [.000,.010]} \\
 & GP-logEI & $0.154$ {\tiny [.122,.186]} & $0.074$ {\tiny [.051,.097]} & $0.068$ {\tiny [.046,.090]} & $0.044$ {\tiny [.026,.062]} \\
 & GP-UCB & $0.344$ {\tiny [.302,.386]} & $0.214$ {\tiny [.178,.250]} & $0.284$ {\tiny [.244,.324]} & $0.220$ {\tiny [.184,.256]} \\
\bottomrule
\end{tabular}
\vspace{0.1cm}
\caption{Ackley: accuracy (mean and 95\% CI) for every $(d, \varepsilon, \sigma)$ configuration.}
\label{tab:ackley-accuracy}
\end{table}

\clearpage
\subsubsection{GP max-value estimation}

\begin{figure}[htbp]
    \centering
    \begin{subfigure}{0.48\textwidth}
        \centering
        \includegraphics[width=\textwidth]{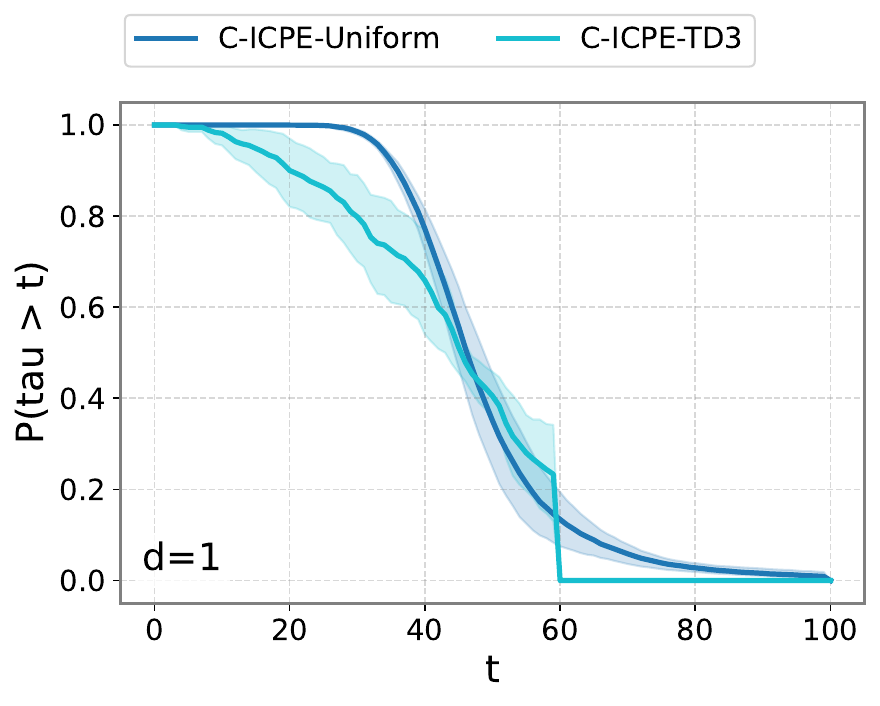}
        \caption{}
        \label{fig:gpValue_subfig-a}
    \end{subfigure}
    \hfill
    \begin{subfigure}{0.48\textwidth}
        \centering
        \includegraphics[width=\textwidth]{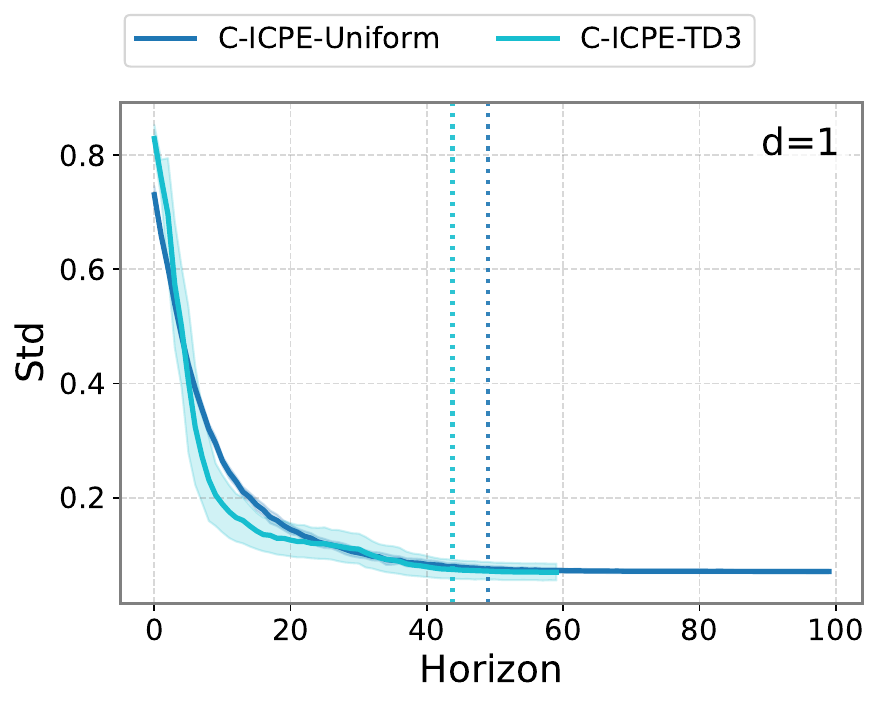}
        \caption{}
        \label{fig:gpValue_subfig-b}
    \end{subfigure}
    \caption{Results for GP value estimation with fixed confidence $\delta=0.1$ and $N=100$ across different dimensions at the most hardest $(\varepsilon, \sigma)$ setting: (a) survival function of $\tau$; (b) inference uncertainty convergence.}
    \label{fig:gpValue_survival_function_std}
\end{figure}

Tables~\ref{tab:gp-value-estimation-accuracy} and~\ref{tab:gp-value-estimation-sample-complexity} report accuracy and
sample complexity;
Figure~\ref{fig:gpValue_survival_function_std} shows the survival
function and inference uncertainty convergence. This is the
$\mathcal{X} \neq \mathcal{A}$ setting: the agent queries locations
$a \in [0,1]^d$ but recommends a scalar value estimate
$\hat{x} \in \mathbb{R}$. Only \cicpetdthree{} and
\cicpeuniform{} are evaluated, since TS and TTPS require
$\mathcal{X} = \mathcal{A}$.  In \cref{tab:GP_visualization} we also show ho \cicpe{} explores in this problem, depicting the queries chosen by the actor, the posterior mean, and the posterior standard deviation over timesteps.

\textit{Accuracy.} Both \cicpe{} variants meet the $1 - \delta$
target across all $(\varepsilon, \sigma)$ configurations.
\cicpetdthree{} achieves $0.906$--$0.930$ and \cicpeuniform{}
$0.901$--$0.931$; the two methods are comparable in accuracy, with
\cicpeuniform{} slightly higher at the noisier settings (e.g.,
$0.927$ vs.\ $0.910$ at $\varepsilon = 0.2$, $\sigma = 0.1$). The
non-parametric baselines fail to meet the target: Uniform bin
achieves $0.130$--$0.230$ and Uniform top~5\% reaches
$0.648$--$0.762$. Bayesian optimization baselines are not applicable
to this task, as they return locations rather than value estimates.

\textit{Sample complexity.} \cicpetdthree{} uses substantially fewer
samples than \cicpeuniform{}, particularly at the larger tolerance.
At $(\varepsilon, \sigma) = (0.2, 0.05)$, \cicpetdthree{} stops at
$13.7$ queries on average compared to $31.7$ for \cicpeuniform{} —
a $2.3\times$ reduction. At $(\varepsilon, \sigma) = (0.2, 0.1)$
the ratio is $1.8\times$ ($22.7$ vs.\ $41.0$). The gap narrows at
$\varepsilon = 0.15$: $39.9$ vs.\ $41.6$ at $\sigma = 0.0375$ and
$43.7$ vs.\ $49.0$ at $\sigma = 0.075$. This pattern indicates
that the learned exploration policy provides the largest benefit
when the tolerance is generous enough that a well-chosen sequence
of queries can resolve the max-value quickly, whereas at tighter
tolerances both methods require extensive coverage and the advantage
of directed exploration diminishes.

\textit{Stopping behavior.} The survival function
(left plot in \cref{fig:gpValue_survival_function_std}) shows that
\cicpetdthree{} stops earlier than \cicpeuniform{}, with the bulk
of episodes terminating between $t = 20$ and $t = 60$. Both methods
exhibit gradual transitions rather than the sharp drops observed in
binary search, reflecting the greater variability in task difficulty
under the GP prior (functions with short lengthscales require more
queries to resolve the peak value). The inference standard deviation
(right plot in \cref{fig:gpValue_survival_function_std}) starts high
($\approx 0.8$) and contracts rapidly in the first $20-40$ queries,
then plateaus. This residual uncertainty is
consistent with the difficulty of estimating a function's global
maximum from noisy pointwise observations: even after localizing
the region of high values, the precise peak height remains uncertain
until sufficient samples accumulate near the optimum.

\begin{table}[htbp]
\centering
\small
\setlength{\tabcolsep}{2.5pt}
\renewcommand{\arraystretch}{1.08}
\begin{tabular}{@{}llrrrr@{}}
\toprule
$d$ & Method & \multicolumn{2}{c}{$\varepsilon=0.2$} & \multicolumn{2}{c}{$\varepsilon=0.15$} \\
\cmidrule(lr){3-4}\cmidrule(lr){5-6}
 &  & $\sigma=0.05$ & $\sigma=0.1$ & $\sigma=0.0375$ & $\sigma=0.075$ \\
\midrule
\multirow[t]{4}{*}{1} & C-ICPE-TD3 & \textbf{0.906} {\tiny [.878,.928]} & $0.910$ {\tiny [.883,.932]} & \textbf{0.920} {\tiny [.890,.942]} & $0.930$ {\tiny [.903,.948]} \\
 & C-ICPE-uniform & $0.902$ {\tiny [.889,.914]} & \textbf{0.927} {\tiny [.915,.938]} & $0.901$ {\tiny [.887,.916]} & \textbf{0.931} {\tiny [.915,.946]} \\
 & Uniform bin & $0.130$ {\tiny [.100,.160]} & $0.176$ {\tiny [.143,.209]} & $0.230$ {\tiny [.193,.267]} & $0.226$ {\tiny [.189,.263]} \\
 & Uniform top 5\% & $0.648$ {\tiny [.606,.690]} & $0.672$ {\tiny [.631,.713]} & $0.762$ {\tiny [.725,.799]} & $0.696$ {\tiny [.656,.736]} \\
\bottomrule
\end{tabular}
\vspace{0.1cm}
\caption{GP value estimation: accuracy (mean and 95\% CI) for every $(d, \varepsilon, \sigma)$ configuration.}
\label{tab:gp-value-estimation-accuracy}
\end{table}

\begin{table}[htbp]
\centering
\small
\setlength{\tabcolsep}{2.5pt}
\renewcommand{\arraystretch}{1.08}
\begin{tabular}{@{}llrrrr@{}}
\toprule
$d$ & Method & \multicolumn{2}{c}{$\varepsilon=0.2$} & \multicolumn{2}{c}{$\varepsilon=0.15$} \\
\cmidrule(lr){3-4}\cmidrule(lr){5-6}
 &  & $\sigma=0.05$ & $\sigma=0.1$ & $\sigma=0.0375$ & $\sigma=0.075$ \\
\midrule
\multirow[t]{2}{*}{1} & C-ICPE-TD3 & \textbf{13.7} {\tiny [12.3,15.4]} & \textbf{22.7} {\tiny [21.8,23.8]} & \textbf{39.9} {\tiny [36.6,43.4]} & \textbf{43.7} {\tiny [41.3,45.8]} \\
 & C-ICPE-uniform & $31.7$ {\tiny [30.6,32.8]} & $41.0$ {\tiny [39.8,42.1]} & $41.6$ {\tiny [39.5,43.9]} & $49.0$ {\tiny [46.7,51.2]} \\
\bottomrule
\end{tabular}
\vspace{0.1cm}
\caption{GP value estimation: sample complexity (mean and 95\% CI) for every $(d, \varepsilon, \sigma)$ configuration.}
\label{tab:gp-value-estimation-sample-complexity}
\end{table}

\clearpage
\subsection{Synthetic Benchmarks: robustness }
When training \cicpe{}, the task prior $\nu$ is uniform over the
parameter space. We investigate robustness to prior misspecification
by evaluating frozen \cicpe{} models under Beta$(\alpha, \beta)$
deployment priors with varying concentration parameters.
\cref{tab:binary_search_robustness,tab:eps_best_arm_robustness,tab:ackley_robustness,tab:gp_val_robustness} report accuracy and
confidence intervals for all actor variants at the most challenging
$(\varepsilon, \sigma)$ configuration and highest dimensionality per
benchmark.

\subsubsection{Noisy Binary Search}

The training prior is $\mathrm{Uniform}[-1,1]^{20}$, corresponding
to $\alpha = \beta = 1$ (white box). In \cref{tab:binary_search_robustness} we report the results. \cicpets{} and \cicpettps{}
are robust across the full grid of Beta priors: accuracy remains
above $0.83$ in all configurations, even under substantial
distributional shift ($\alpha = 0.5, \beta = 7$ or vice versa).
Performance improves mildly when both $\alpha, \beta \geq 3$, since
concentrated priors place more mass in the interior of
$[-1,1]^d$, where localization is easier. \cicpetdthree{} matches
or exceeds the other actors when the deployment prior is
concentrated ($\alpha, \beta \geq 3$, reaching $0.943$), but
degrades sharply when either parameter is small: at $\alpha = 0.5,
\beta = 7$ accuracy drops to $0.612$. Small $\alpha$ or $\beta$
produces a U-shaped Beta distribution that concentrates mass near
the boundary of the domain, where targets are harder to
disambiguate and the learned actor generalizes poorly.
\cicpeuniform{} fails entirely in $d = 20$ (accuracy $< 0.01$),
confirming that passive exploration cannot localize a target in
high-dimensional binary search within the allowed horizon.

\begin{figure}[htbp]
\centering
    \begin{subfigure}{0.9\textwidth}
        \centering
        \includegraphics[width=\textwidth]{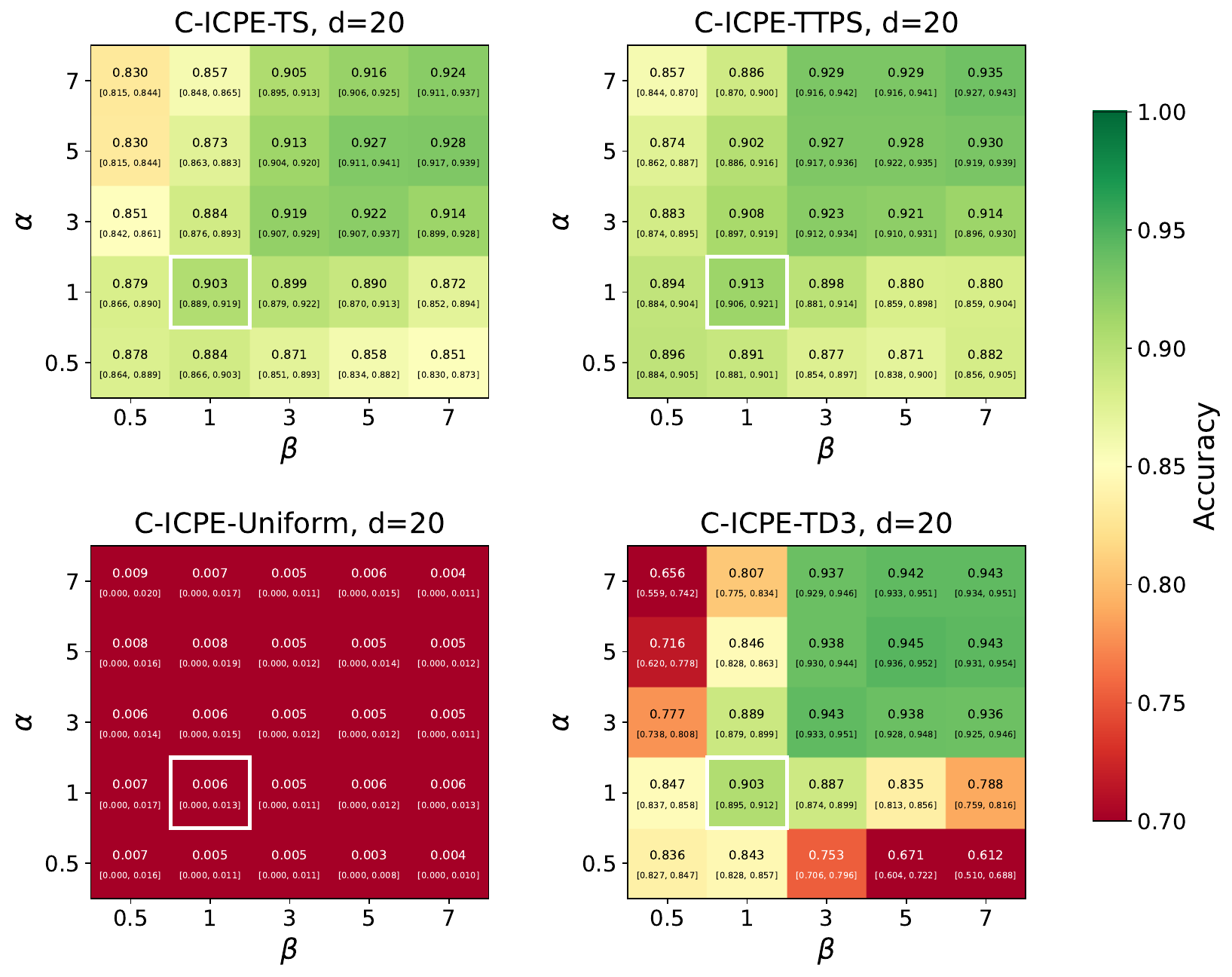}
    \end{subfigure}
\vspace{0.2cm}
\caption{Robustness to prior misspecification on the 20D noisy binary search problem ($\varepsilon=0.1$ and $\sigma=0.05$). Each heatmap reports the mean accuracy and the confidence intervals under varying Beta prior hyperparameters $(\alpha, \beta) \in \{0.5, 1, 3, 5, 7\}$. The white box indicates the matched prior ($\alpha=\beta=1$) during training.}
\label{tab:binary_search_robustness}
\end{figure}

\subsubsection{$\epsilon$-best arm problem}
We report results in \cref{tab:eps_best_arm_robustness}.
All four \cicpe{} variants are remarkably stable across the full
Beta prior grid at $d = 15$: accuracy varies by less than $0.04$
across all $(\alpha, \beta)$ configurations for each actor. This
robustness is a direct consequence of the rotational symmetry of the
problem. Since $\theta$ is drawn on $\mathbb{S}^{d-1}$ and the loss
$L_\theta(x) = 1 - \theta^\top x$ is invariant to orthogonal
transformations, the intrinsic difficulty of each task instance does
not depend on the location of $\theta$ on the sphere. Reweighting
the prior therefore has little effect on the distribution of problem
difficulty, unlike binary search or Ackley where boundary effects
create heterogeneous difficulty across the parameter space. Notably,
\cicpeuniform{} performs well here ($\approx 0.89$--$0.90$ uniformly),
consistent with the observation that isotropic exploration is optimal \citep{jedra2020optimal}. \cicpettps{} achieves the
highest accuracy overall ($0.92$--$0.94$), suggesting that the
challenger mechanism provides a modest benefit even in a setting
where uniform exploration is already near-optimal.

\begin{figure}[htbp]
\centering
    \begin{subfigure}{0.9\textwidth}
        \centering
        \includegraphics[width=\textwidth]{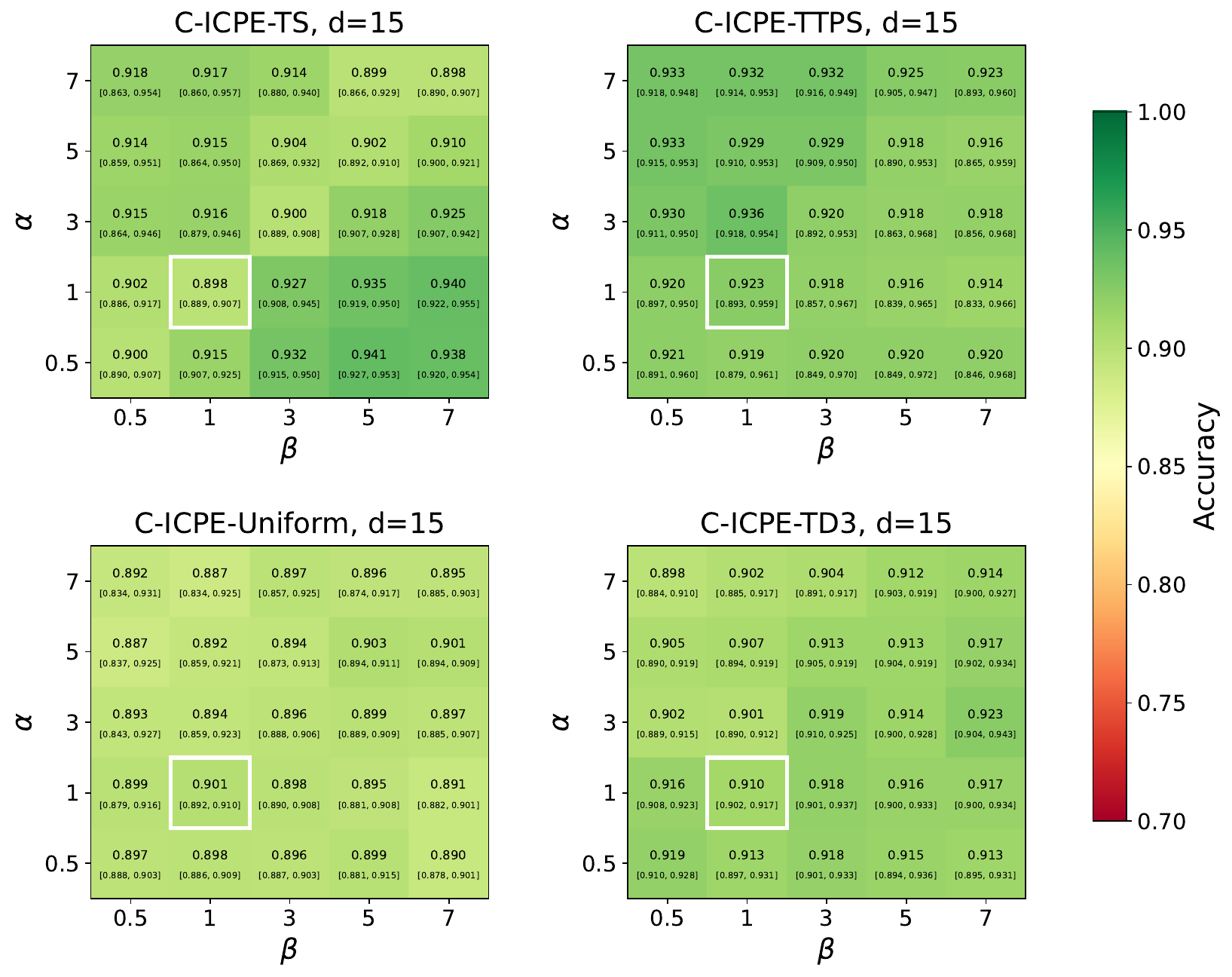}
    \end{subfigure}
\vspace{0.2cm}
\caption{Robustness to prior misspecification on the 15D $\varepsilon$-best-arm identification problem ($\varepsilon=0.005$ and $\sigma=0.0025$). Each heatmap reports the mean accuracy and the confidence intervals under varying Beta prior hyperparameters $(\alpha, \beta) \in \{0.5, 1, 3, 5, 7\}$. The white box indicates the matched prior ($\alpha=\beta=1$) during training.}
\label{tab:eps_best_arm_robustness}
\end{figure}

\subsubsection{Ackley minimization}

Results are reported in \cref{tab:ackley_robustness}.
The Ackley benchmark at $d = 5$ exhibits the strongest sensitivity
to prior misspecification among all tasks. At the training prior
($\alpha = \beta = 1$), \cicpets{} achieves $0.911$ and \cicpettps{}
$0.912$. When both $\alpha, \beta \geq 3$, i.e., the deployment
prior concentrates mass toward the interior of $[-1,1]^d$, all
active methods improve substantially, with \cicpets{} reaching
$0.975$ and \cicpetdthree{} $0.963$. The gains reflect the
structure of the Ackley function: targets near the center of the
domain sit in a region of higher curvature where observations are
more informative, making identification easier.

Conversely, when either $\alpha$ or $\beta$ is small ($0.5$), the
Beta prior becomes U-shaped or boundary-skewed, placing significant
mass on targets near the edges of $[-1,1]^d$. In the flat outer
region of the Ackley function, observations carry little signal, and
the learned policies, trained under a uniform prior that rarely
produces such extreme configurations, degrade. The effect is most
pronounced for $\alpha=0.5,\beta=7$. This asymmetry between interior and boundary
targets is specific to the Ackley geometry and is absent in the
rotationally symmetric $\varepsilon$-best arm problem.
\cicpeuniform{} fails across the board (accuracy $\leq 0.45$),
confirming that directed exploration is essential for this
multimodal benchmark regardless of the prior.

\begin{figure}[htbp]
\centering
    \begin{subfigure}{0.9\textwidth}
        \centering
        \includegraphics[width=\textwidth]{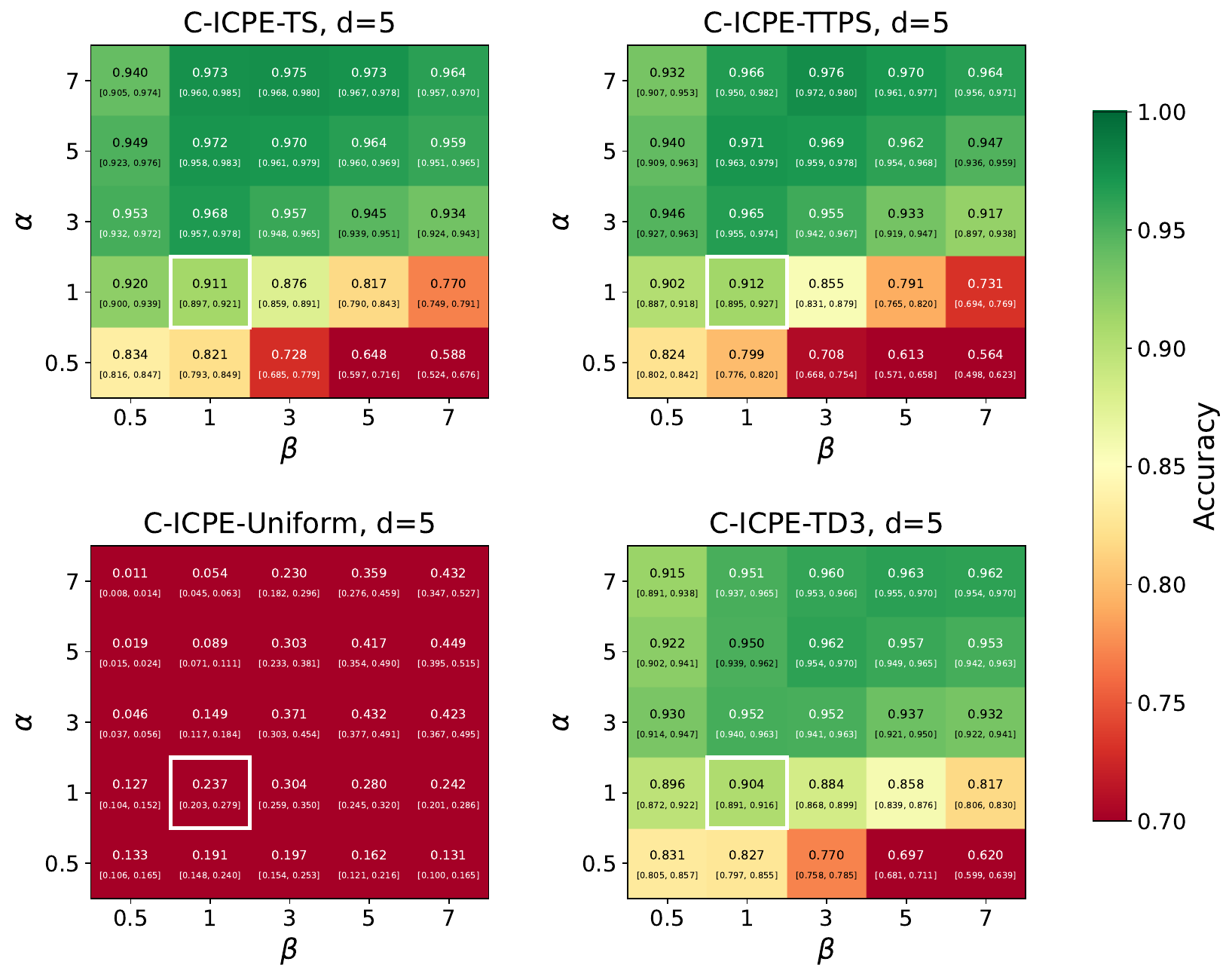}
    \end{subfigure}
\vspace{0.2cm}
\caption{Robustness to prior misspecification on the 5D Ackley function ($\varepsilon=0.1$ and $\sigma=0.05$). Each heatmap reports the mean accuracy and the confidence intervals under varying Beta prior hyperparameters $(\alpha, \beta) \in \{0.5, 1, 3, 5, 7\}$. The white box indicates the matched prior ($\alpha=\beta=1$) during training.}
\label{tab:ackley_robustness}
\end{figure}

\subsubsection{GP max-value estimation}
We report results in \cref{tab:gp_val_robustness}.
This is the $\mathcal{X} \neq \mathcal{A}$ setting ($d = 1$), so
only \cicpeuniform{} and \cicpetdthree{} are applicable.
\cicpetdthree{} achieves the highest accuracy when the deployment
prior is well-matched or concentrated with small $\beta$: it reaches
$0.992$ at $(\alpha, \beta) = (7, 0.5)$ and remains above $0.94$
throughout the upper-left triangle of the grid. However, it degrades
when $\beta$ is large ($0.772$ at $\alpha = 0.5, \beta = 7$;
$0.817$ at $\alpha = 1, \beta = 7$), indicating that the learned
exploration policy is sensitive to prior shifts that alter the
distribution of task difficulty. \cicpeuniform{}, by contrast, is
more robust: accuracy stays between $0.837$ and $0.965$ across the
entire grid, with a milder gradient from upper-left to lower-right.
Because the uniform actor does not depend on a learned exploration
policy, its performance varies only through the stopping criterion
and inference model, both of which appear stable under moderate
prior shift. At the training prior ($\alpha = \beta = 1$), the two
methods are comparable ($0.920$ vs.\ $0.913$), but \cicpetdthree{}
offers a clear advantage when the deployment prior concentrates mass
on tasks where directed exploration helps most ($\alpha \geq 3$,
$\beta \leq 1$). The overall pattern suggests that the TD3 actor
learns an exploration strategy well-adapted to the training
distribution but with limited extrapolation to deployment priors
that shift the typical task structure.
\begin{figure}[t]
\centering
    \begin{subfigure}{0.9\textwidth}
        \centering
        \includegraphics[width=\textwidth]{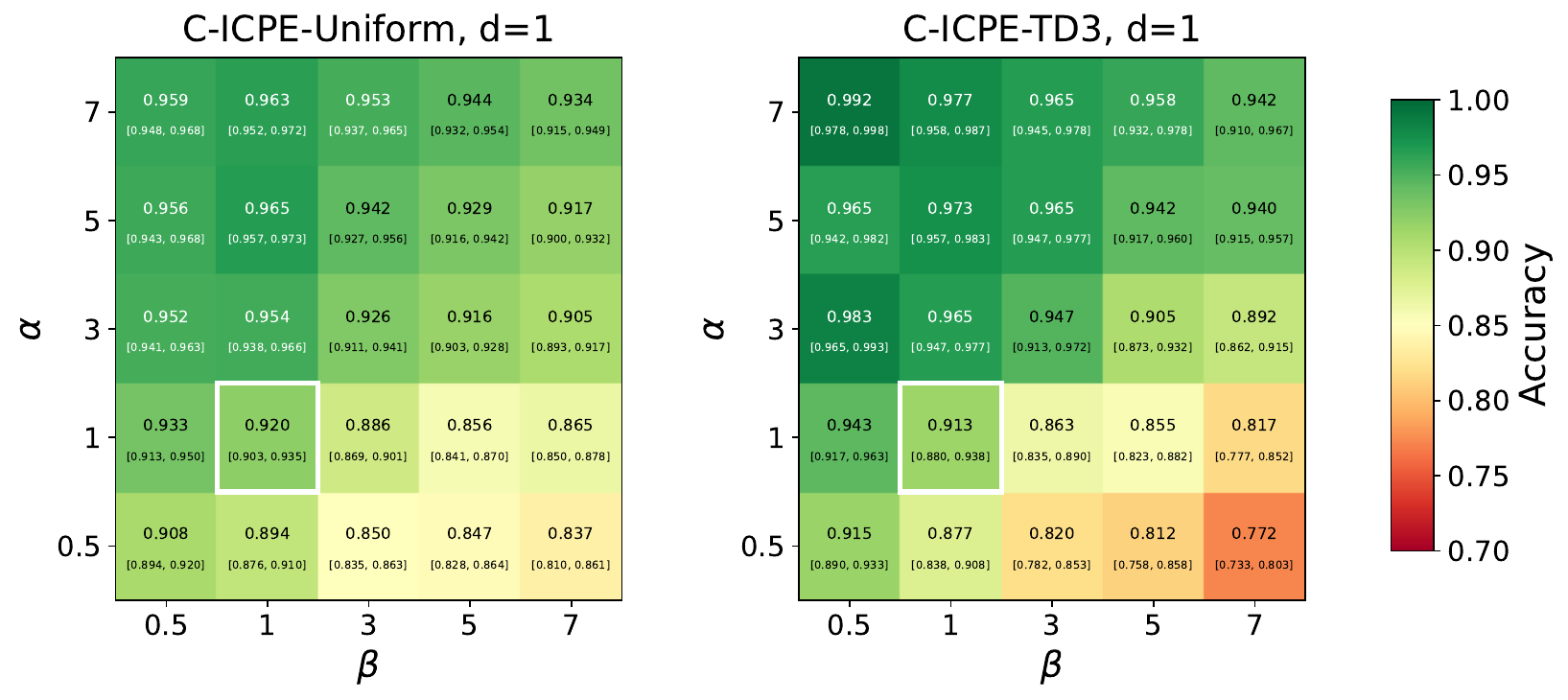}
    \end{subfigure}
\vspace{0.2cm}
\caption{Robustness to prior misspecification on the GP value estimation problem ($\varepsilon=0.15$ and $\sigma=0.075$). Each heatmap reports the mean accuracy and the confidence intervals under varying Beta prior hyperparameters $(\alpha, \beta) \in \{0.5, 1, 3, 5, 7\}$. The white box indicates the matched prior ($\alpha=\beta=1$) during training.}
\label{tab:gp_val_robustness}
\end{figure}

\clearpage
\subsection{Geochemical Exploration: Experimental Details and Numerical Results}\label{app:geochem}

\begin{figure}[b]
   \centering
         \includegraphics[width=.35\linewidth]{images/geochemical/geochem_2d_eval_env213_traj9.pdf}
         \caption{Example copper concentration in a 2D region in the geochemical exploration task. Red regions indicate concentration of copper within $\epsilon$ of the maximum value.}
         \label{fig:placeholder_2}
\end{figure}

The geochemical exploration task is a stylized version of a real problem in mineral exploration: given a budget of field samples (each requiring physical collection, transport, and laboratory analysis), identify the most promising location for further investigation. In practice, each sample costs hundreds to thousands of dollars, and field campaigns are logistically constrained. A method that can identify the target location with $(\epsilon,\delta)$-guarantees while minimizing the number of samples has direct economic value.
\subsubsection{Dataset and Motivation}

We use data from the USGS National Geochemical Survey \citep{usgs_geochem}, which provides soil and sediment measurements of element concentrations across the continental United States. We focus on copper (Cu) concentrations, which are of direct interest in mineral exploration: copper deposits are spatially heterogeneous, and identifying regions of peak concentration from sparse, noisy field measurements is a costly sequential decision problem. 

The dataset contains point measurements at irregularly spaced locations, each reporting the concentration of multiple elements. We extract copper concentration values and apply a log-transform followed by z-score normalization per region, yielding standardized log-concentrations that serve as observations.

As a concrete example, \cref{fig:kingman_region} shows the Kingman region in southeastern California and southern Nevada. This region contains the Mountain Pass rare earth mine (35.5$^\circ$N, 115.5$^\circ$W), an open-pit mine of rare earth elements in the Mojave Desert. A satellite image of the mine and surrounding terrain, acquired by the Advanced Spaceborne Thermal Emission and Reflection Radiometer (ASTER) instrument on NASA's Terra spacecraft on March 28, 2010, is shown in \cref{fig:mountain_pass_satellite} \citep{nasa_mountain_pass}. The region exhibits spatially varying copper concentrations with a clear peak near the mining district, making it a representative example of the kind of localization problem \cicpe{} is designed to solve.

\subsubsection{Region Partitioning and GP Fitting}

We partition the geochemical survey data into 17 geographic regions, each covering approximately $1^\circ \times 2^\circ$ in latitude and longitude. The regions span diverse geological settings across the western and southeastern United States: Gadsden, Bozeman, Billings, Wells, Needles, Jenkins, Montgomery, Millett, Prescott, Lovelock, Aurora, Holbrook, Atlanta, Ely, Kingman, Winnemucca, and Baker. \cref{fig:region_grid} shows all 17 regions with sample locations colored by normalized log-copper concentration. The regions vary substantially in sample density (from ${\sim}50$ to ${\sim}500$ measurements), spatial structure, and concentration range, providing a diverse task distribution for meta-training.

For each region, we fit a sparse variational Gaussian process (SVGP) with a Mat\'ern-3/2 ARD kernel and Gaussian likelihood. The model is:
\begin{equation}
    f \sim \mathrm{GP}(0, \sigma_f^2 \, k_{\mathrm{Mat\acute{e}rn\text{-}3/2}}(\cdot, \cdot \,; \ell_1, \ell_2)), \qquad y_i = f(\mathbf{s}_i) + \xi_i, \quad \xi_i \sim \mathcal{N}(0, \sigma_n^2),
\end{equation}
where $\mathbf{s}_i \in \mathbb{R}^2$ are UTM coordinates (normalized to $[0,1]^2$ for numerical stability), $y_i$ is the standardized log-copper concentration, and $(\ell_1, \ell_2, \sigma_f, \sigma_n)$ are per-region hyperparameters learned by maximizing the variational evidence lower bound (ELBO). We use $M = 500$ inducing points initialized via $k$-means clustering of the observation locations, and optimize for 1000 Adam iterations at learning rate $0.01$.

\cref{fig:elbo_curves} shows the ELBO training curves for all 17 regions. All regions converge smoothly, confirming that the SVGP fits are well-behaved. The fitted hyperparameters, in particular the  lengthscales $(\ell_1, \ell_2)$, vary across regions, reflecting different spatial correlation structures: some regions exhibit short-range variability (small lengthscales) while others have smoother concentration surfaces (large lengthscales).

\subsubsection{Ground Truth Construction}

For each fitted SVGP, we evaluate the posterior mean on a dense $200 \times 200$ grid over the normalized domain $[0,1]^2$. The ground truth target is defined as:
\begin{equation}
    \theta^\star = \arg\max_{\mathbf{s} \in \mathrm{grid}} \mu_{\mathrm{GP}}(\mathbf{s}),
\end{equation}
i.e., the grid location with the highest posterior mean copper concentration. 

\subsubsection{Task Prior and Train/Test Split}

The 17 regions are split into training and evaluation sets. Training regions define the task prior $\nu$: during meta-training, each episode samples a region uniformly from the training set and presents \cicpe{} with the corresponding fitted GP as the unknown function. The agent queries 2D locations $a \in [0,1]^2$ and observes noisy evaluations $y = \mu_{\mathrm{GP}}(a) + \xi$, $\xi \sim \mathcal{N}(0, \sigma_n^2)$, where $\sigma_n$ is the fitted noise standard deviation for that region. The goal is to identify the location $\theta^\star$ of peak concentration to within $\epsilon$ with probability at least $1 - \delta$.

Evaluation is performed on held-out regions whose spatial structure, lengthscales, and concentration patterns were not seen during training. This tests two properties simultaneously:
\begin{enumerate}
    \item \textbf{$(\epsilon,\delta)$-correct identification on realistic functions.} The GP posterior means are spatially structured, non-stationary (due to irregular sampling), and vary in smoothness across regions, a substantial departure from the synthetic benchmarks.
    \item \textbf{Robustness to distribution shift.} The evaluation regions have different hyperparameters $(\ell_1, \ell_2, \sigma_f, \sigma_n)$ from the training regions, so the agent must generalize across spatial correlation structures it has not encountered during meta-training.
\end{enumerate}

\begin{figure}[htbp]
    \centering
    \includegraphics[width=0.45\textwidth]{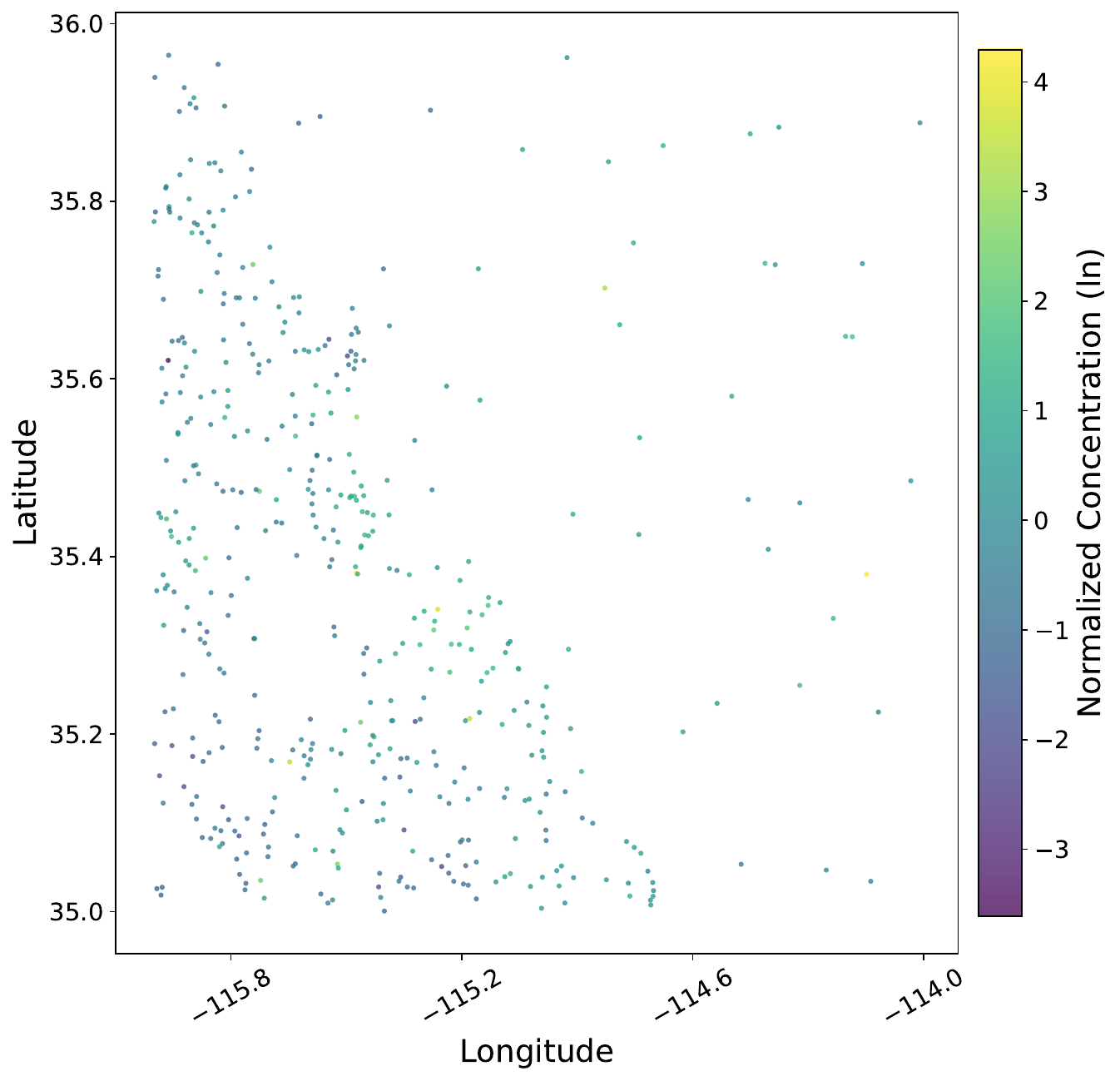}
    \hfill
    \includegraphics[width=0.42\textwidth]{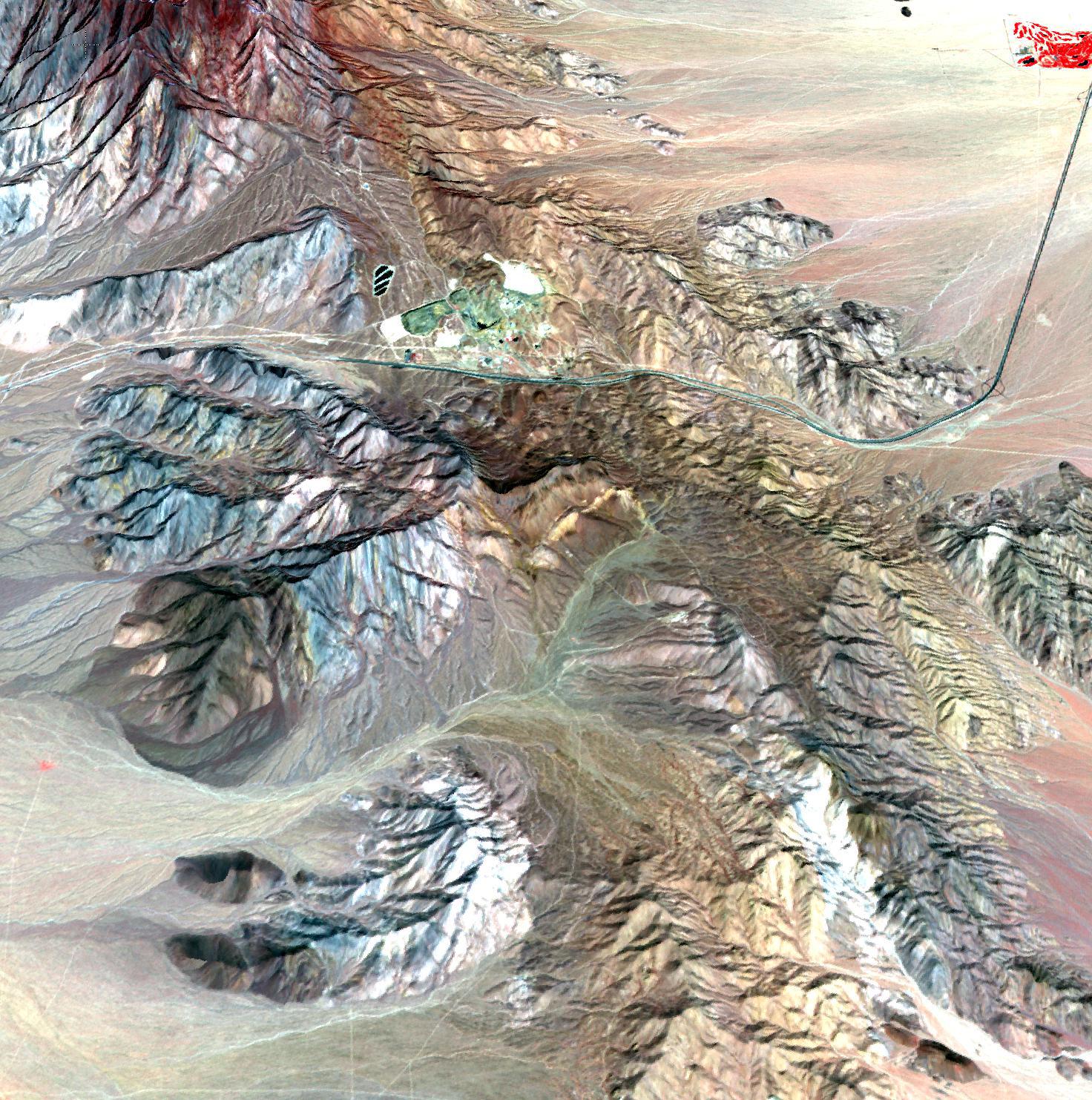}
    \caption{\textbf{Left:} Normalized log-copper concentration in the Kingman region (southeastern California / southern Nevada). Each point is a soil sample from the USGS National Geochemical Survey \citep{usgs_geochem}; the red star marks the location of peak GP posterior mean. \textbf{Right:} ASTER satellite image of the Mountain Pass rare earth mine (35.5$^\circ$N, 115.5$^\circ$W) within this region, acquired March 28, 2010. The mine area is visible as the light-colored open pit in the upper center of the image. Credit: NASA/GSFC/METI/ERSDAC/JAROS, and U.S./Japan ASTER Science Team \citep{nasa_mountain_pass}.}
    \label{fig:kingman_region}
    \label{fig:mountain_pass_satellite}
\end{figure}

\begin{figure}[htbp]
    \centering
    \includegraphics[width=0.8\textwidth]{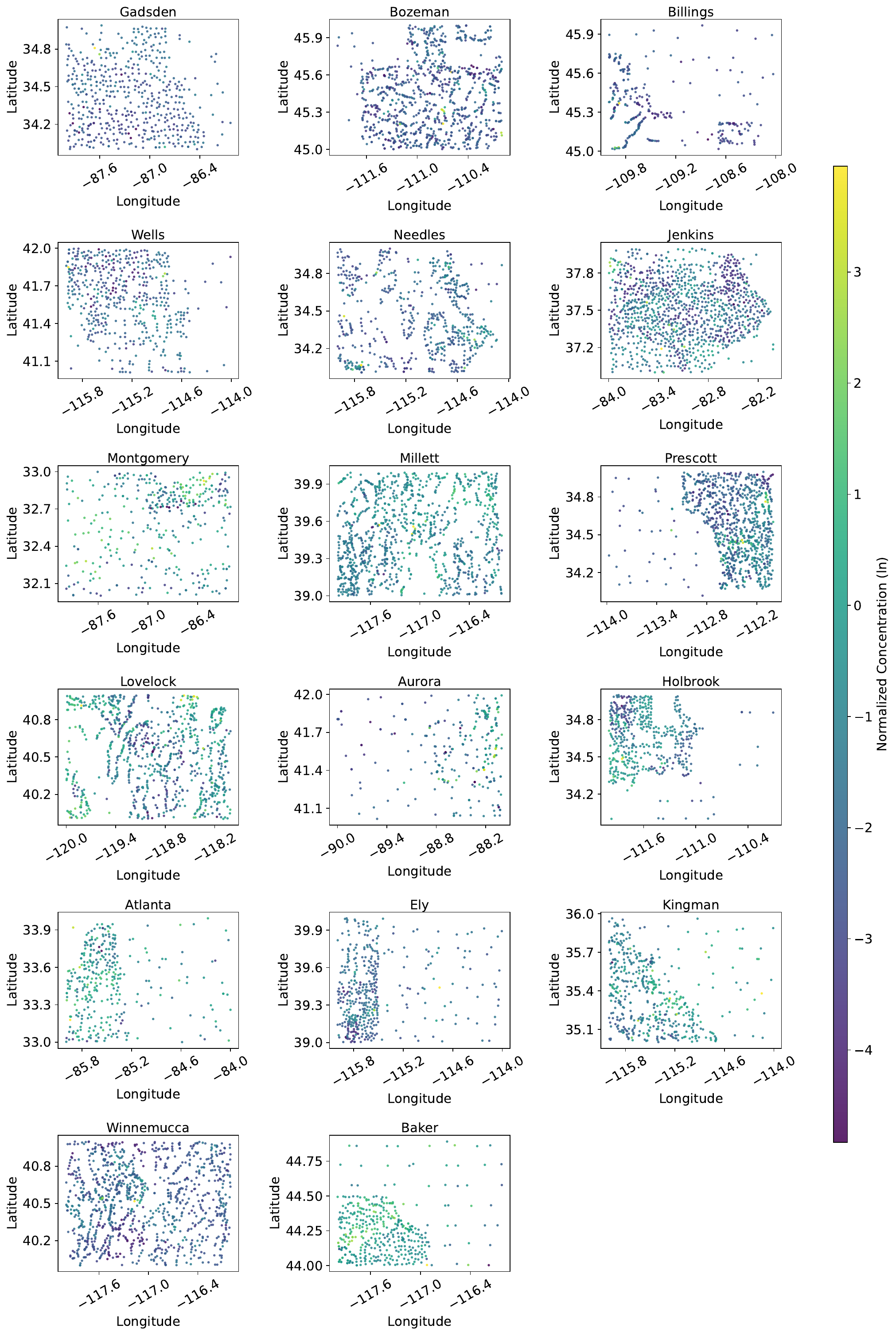}
    \caption{All 17 geographic regions used in the geochemical experiment. Each panel shows soil sample locations colored by normalized log-copper concentration. Regions are split into training and evaluation sets; evaluation regions have spatial structure not seen during meta-training.}
    \label{fig:region_grid}
\end{figure}

\begin{figure}[htbp]
    \centering
    \includegraphics[width=0.8\textwidth]{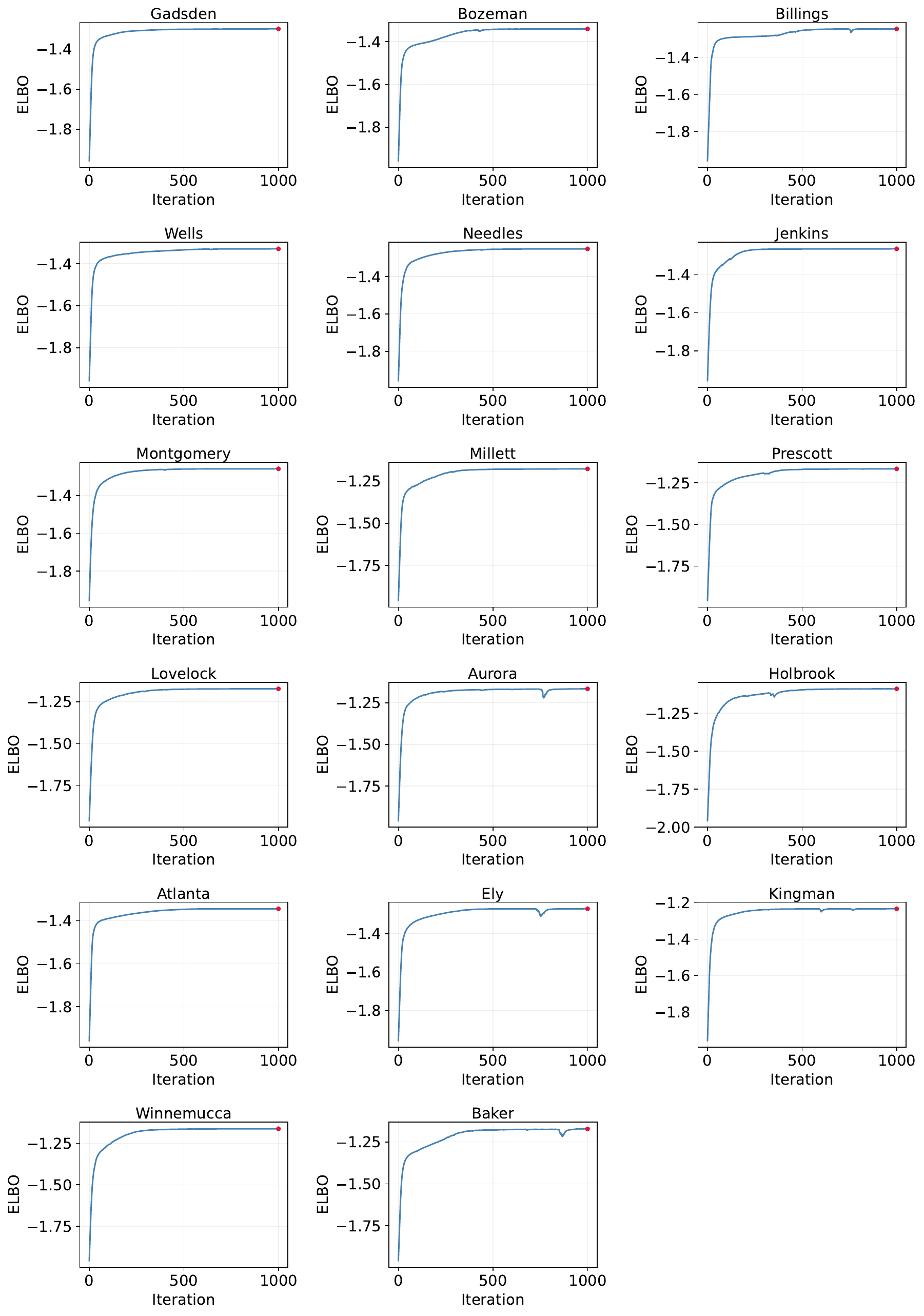}
    \caption{ELBO training curves for sparse variational GP fits across all 17 regions (1000 Adam iterations, $M=500$ inducing points). All regions converge smoothly, indicating well-behaved GP fits.}
    \label{fig:elbo_curves}
\end{figure}

\clearpage
\subsubsection{Numerical Results}

\paragraph{Baselines.}
We compare C-ICPE-TD3 and C-ICPE-TS against four black-box optimization
baselines that do not possess a stopping rule for $(\varepsilon,\delta)$-correct
identification: TPE~\cite{bergstra2011algorithms},
CMA-ES~\cite{hansen2016cma}, GP-logEI~\cite{ament2023unexpected}, and
GP-UCB~\cite{srinivas2010gaussian}. Since these methods optimize a
fixed-budget objective and have no principled mechanism for adaptive
stopping, we allocate each a fixed sample budget calibrated to the mean
sample complexity of C-ICPE-TS: $N=22$ for $\varepsilon=0.2$ and $N=39$
for $\varepsilon=0.15$. After exhausting this budget, each baseline returns
the best-observed location as its recommendation. This protocol is
deliberately generous: the baselines receive as many samples as
\texttt{C-ICPE-TS} typically needs on average, yet bear no cost for
deciding when to stop. 

\begin{table}[htbp]
\centering
\small
\setlength{\tabcolsep}{2.5pt}
\renewcommand{\arraystretch}{1.08}
\begin{tabular}{@{}llrr@{}}
\toprule
 &  & $\varepsilon=0.2$ & $\varepsilon=0.15$ \\
 &  & $\sigma=0$ & $\sigma=0$ \\
\midrule
\multirow[t]{6}{*}{2} & C-ICPE-TD3 & \textbf{0.913} {\tiny [.894,.931]} & $0.916$ {\tiny [.904,.927]} \\
 & C-ICPE-TS & $0.913$ {\tiny [.894,.930]} & \textbf{0.925} {\tiny [.905,.944]} \\
 & TPE & $0.438$ {\tiny [.394,.482]} & $0.526$ {\tiny [.482,.570]} \\
 & CMA-ES & $0.484$ {\tiny [.440,.528]} & $0.514$ {\tiny [.470,.558]} \\
 & GP-logEI & $0.564$ {\tiny [.520,.608]} & $0.724$ {\tiny [.685,.763]} \\
 & GP-UCB & $0.730$ {\tiny [.691,.769]} & $0.720$ {\tiny [.681,.759]} \\
\bottomrule
\end{tabular}
\vspace{0.1cm}
\caption{Geochem: accuracy (mean and 95\% CI) for every $(d, \varepsilon, \sigma)$ configuration.}
\label{tab:geochem-accuracy}
\end{table}

\begin{table}[htbp]
\centering
\small
\setlength{\tabcolsep}{2.5pt}
\renewcommand{\arraystretch}{1.08}
\begin{tabular}{@{}llrr@{}}
\toprule
 &  & $\varepsilon=0.2$ & $\varepsilon=0.15$ \\
 &  & $\sigma=0$ & $\sigma=0$ \\
\midrule
\multirow[t]{2}{*}{2} & C-ICPE-TD3 & $23.9$ {\tiny [21.6,26.2]} & $44.4$ {\tiny [42.4,46.5]} \\
 & C-ICPE-TS & \textbf{22.0} {\tiny [20.0,23.9]} & \textbf{38.7} {\tiny [34.2,43.5]} \\
\bottomrule
\end{tabular}
\vspace{0.1cm}
\caption{Geochem: sample complexity (mean and 95\% CI) for every $(d, \varepsilon, \sigma)$ configuration.}
\label{tab:geochem-sample-complexity}
\end{table}

\paragraph{Accuracy and sample complexity.}
Table~\ref{tab:geochem-accuracy} reports identification accuracy (mean and
95\% CI over held-out regions) at the two tolerance levels, with
$\delta = 0.1$ and maximum horizon $N = 150$. Both C-ICPE variants
exceed the $1 - \delta = 0.90$ correctness target in every
configuration. At $\varepsilon = 0.2$, C-ICPE-TD3 and C-ICPE-TS are
tied at $0.913$ $[.894, .931]$ and $0.913$ $[.894, .930]$,
respectively. At the harder $\varepsilon = 0.15$, C-ICPE-TS pulls
ahead with $0.925$ $[.905, .944]$ versus $0.916$ $[.904, .927]$ for
C-ICPE-TD3. Among the baselines, GP-UCB is the strongest, reaching
$0.730$ $[.691, .769]$ at $\varepsilon = 0.2$ and $0.720$
$[.681, .759]$ at $\varepsilon = 0.15$, still roughly $19$--$20$
percentage points below C-ICPE despite receiving a comparable sample
budget. GP-logEI performs comparably to GP-UCB at $\varepsilon = 0.15$
($0.724$) but falls to $0.564$ at $\varepsilon = 0.2$. TPE and CMA-ES
remain below $0.53$ in both settings, indicating that gradient-free
search without a surrogate is ineffective on these spatially structured
surfaces.

Table~\ref{tab:geochem-sample-complexity} reports sample complexity for the two
C-ICPE variants. C-ICPE-TS is more sample-efficient in both settings:
$22.0$ $[20.0, 23.9]$ versus $23.9$ $[21.6, 26.2]$ at
$\varepsilon = 0.2$, and $38.7$ $[34.2, 43.5]$ versus $44.4$
$[42.4, 46.5]$ at $\varepsilon = 0.15$. The gap widens at the tighter
tolerance, suggesting that Thompson sampling's implicit exploration
adapts more efficiently to the difficulty of each region.
\begin{figure}[t]
    \centering
    \begin{subfigure}{0.48\textwidth}
        \centering
        \includegraphics[width=\textwidth]{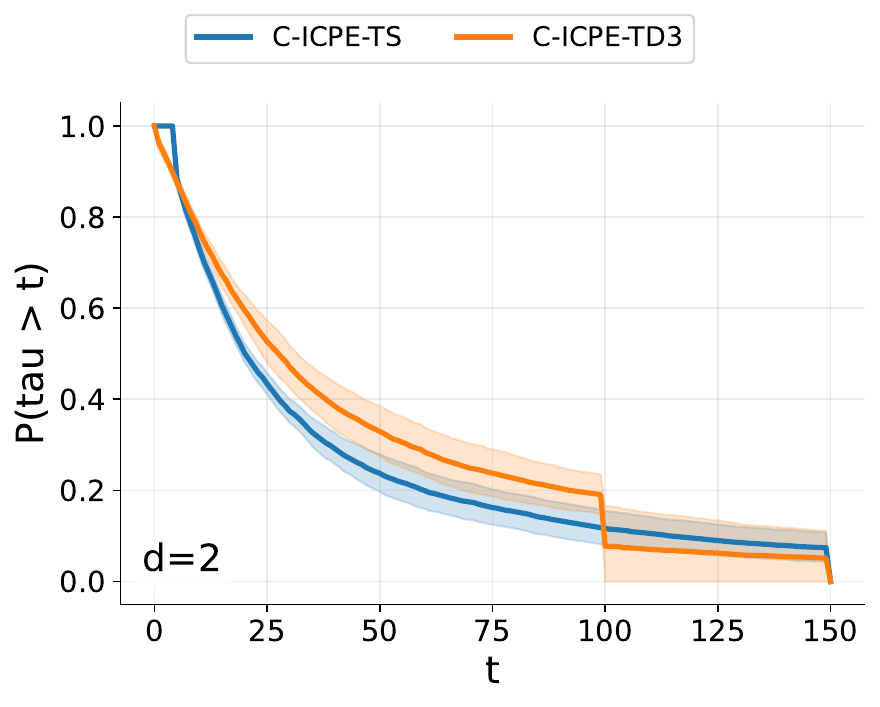}
        \caption{}
        \label{fig:geochem_subfig-a}
    \end{subfigure}
    \hfill
    \begin{subfigure}{0.48\textwidth}
        \centering
        \includegraphics[width=\textwidth]{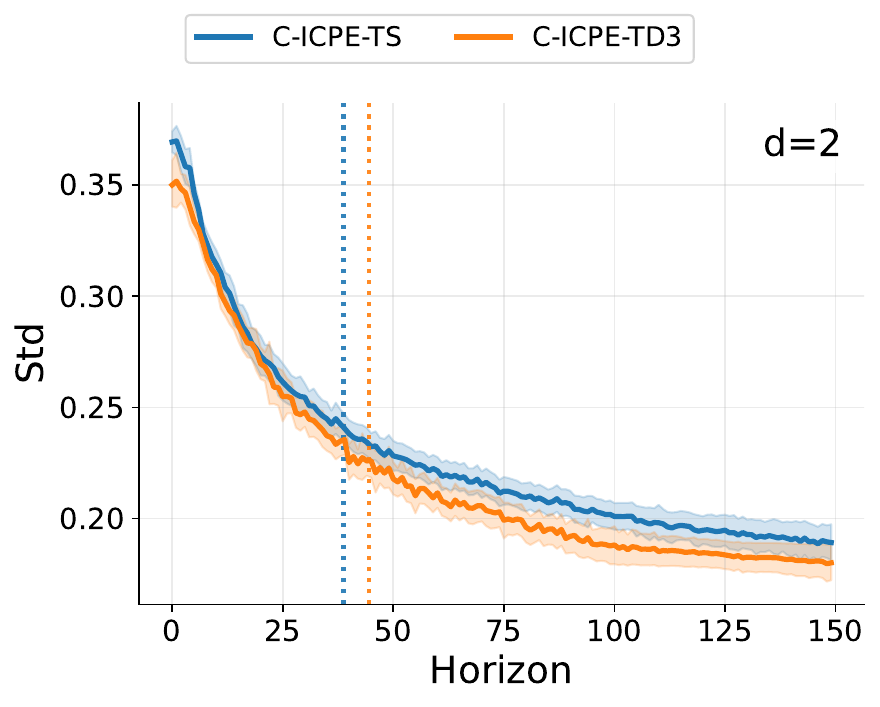}
        \caption{}
        \label{fig:geochem_subfig-b}
    \end{subfigure}
    \caption{Results for geochemical problem with fixed confidence $\delta=0.1$ and $N=150$ across different dimensions at the most hardest $\varepsilon$ setting: (a) survival function of $\tau$; (b) inference uncertainty convergence.}
    \label{fig:geochemical_survival_function_std}
\end{figure}

\paragraph{Survival function and uncertainty convergence.}
Figure~\ref{fig:geochem_subfig-a} displays the survival function
$\mathbb{P}(\tau > t)$ at the hardest setting ($\varepsilon = 0.15$,
$d = 2$). Both variants exhibit a rapid initial decline. The shaded confidence bands for C-ICPE-TD3 are
noticeably slightly wider, reflecting higher variance in stopping times, consistent
with the wider confidence interval in Table~\ref{tab:geochem-sample-complexity}. 

Figure~\ref{fig:geochem_subfig-b} shows the posterior standard
deviation of the recommendation as a function of the horizon. Both
methods converge to approximately $\mathrm{Std} \approx 0.19$ by
$t = 150$. The vertical dashed lines mark each method's median
stopping time; C-ICPE-TS stops earlier than C-ICPE-TD3, and at both
stopping points the standard deviation has already dropped below $0.25$.
This confirms that C-ICPE-TS stops at a point where the inference
model's posterior is sufficiently concentrated, rather than stopping
prematurely. However, it's interesting to note that the posterior variance has an overall larger decrease with TD3, confirming that C-ICPE-TD3 is learning a good explorative policy. The slightly larger stopping time may then be due to the fact that training was stopped early, and one could have trained for longer for better performance of C-ICPE-TD3. 

\paragraph{Robustness to prior misspecification.}
Figure~\ref{tab:geochem_robustness} reports accuracy under misspecified Beta
priors $(\alpha, \beta) \in \{0.5, 1, 3, 5, 7\}^2$ on the normalized
$[0,1]^2$ domain, with $\varepsilon = 0.15$. The matched prior used
during meta-training corresponds to $\alpha = \beta = 1$ (uniform).
C-ICPE-TS maintains accuracy between $0.90$ and $0.93$ across the
entire $5 \times 5$ grid, with no discernible degradation even at
extreme configurations such as $(\alpha, \beta) = (0.5, 7)$ or
$(7, 0.5)$. C-ICPE-TD3 is similarly stable in the upper portion of the
grid ($\alpha \geq 3$), but shows a mild decline to $0.90$ at
$(\alpha, \beta) = (0.5, 5)$ and $(0.5, 7)$. Across all 25
configurations, every cell remains at or above $0.90$, satisfying the
$1 - \delta$ correctness target. This degree of robustness is notably
stronger than what is observed on the Ackley benchmark
(Figure~\ref{tab:ackley_robustness}), where boundary-skewed priors cause
substantial degradation.

\begin{figure}[htbp]
\centering
    \begin{subfigure}{0.9\textwidth}
        \centering
        \includegraphics[width=\textwidth]{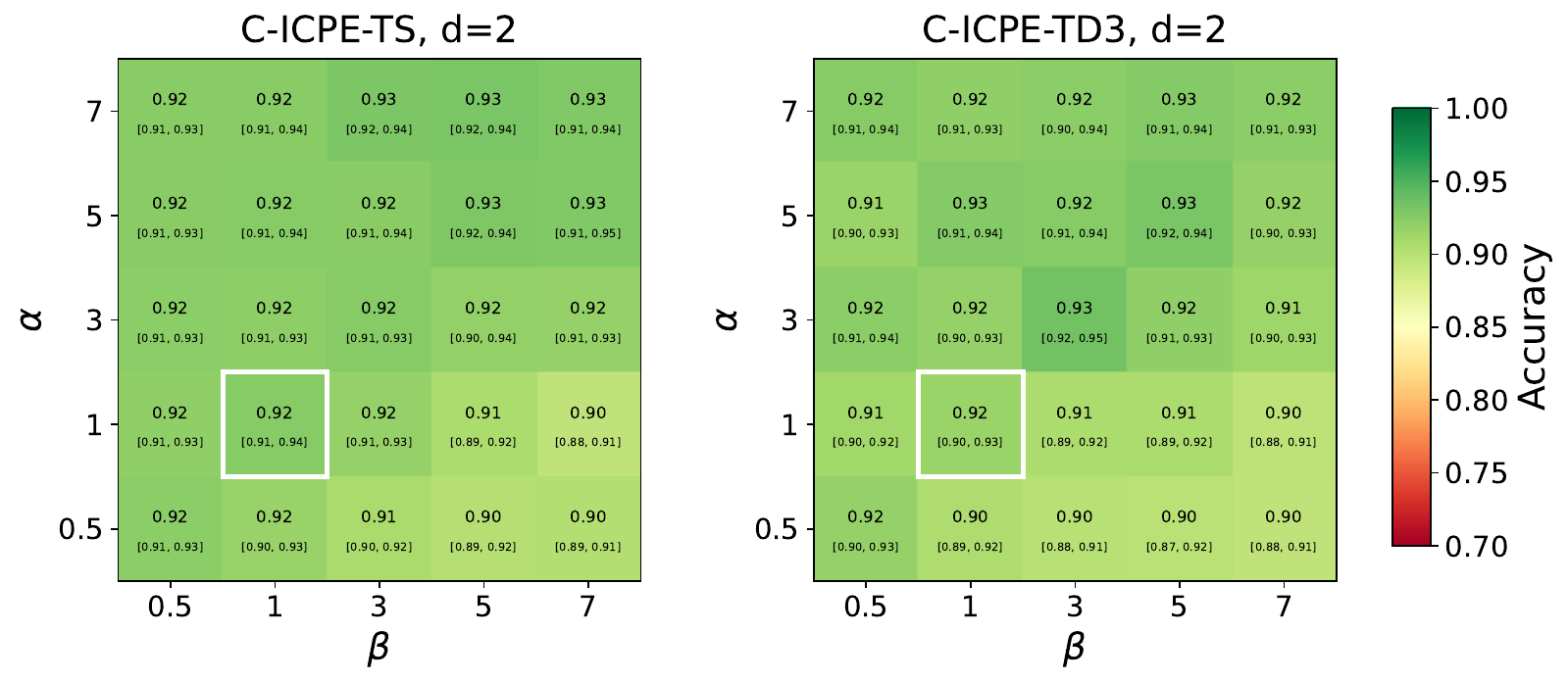}
    \end{subfigure}
\vspace{0.2cm}
\caption{Robustness to prior misspecification on the Geochemical exploration ($\varepsilon=0.15$). Each heatmap reports the mean accuracy and the confidence intervals under varying Beta prior hyperparameters $(\alpha, \beta) \in \{0.5, 1, 3, 5, 7\}$. The white box indicates the matched prior ($\alpha=\beta=1$) during training.}
\label{tab:geochem_robustness}
\end{figure}

Taken together, these results demonstrate that C-ICPE transfers to a
real-data task involving genuine distribution shift: the evaluation
regions have spatial correlation structures, lengthscales, and noise
levels not seen during meta-training, yet both variants maintain
$(\varepsilon, \delta)$-correctness while using fewer samples than
fixed-budget baselines that fail to meet the accuracy target. This
validates \cicpe{} as a practical tool for sequential experimental design.

%% file: ref.bib
@inproceedings{takemori2025instance,
  title={Instance-Optimal Pure Exploration for Linear Bandits on Continuous Arms},
  author={Takemori, Sho and Umeda, Yuhei and Gopalan, Aditya},
  booktitle={Forty-second International Conference on Machine Learning},
  year={2025}
}

@inproceedings{shang2020ttps,
  title={Fixed-confidence guarantees for bayesian best-arm identification},
  author={Shang, Xuedong and Heide, Rianne and Menard, Pierre and Kaufmann, Emilie and Valko, Michal},
  booktitle={International Conference on Artificial Intelligence and Statistics},
  pages={1823--1832},
  year={2020},
  organization={PMLR}
}

@inproceedings{srinivas2010gaussian,
  title={Gaussian process optimization in the bandit setting: no regret and experimental design},
  author={Srinivas, Niranjan and Krause, Andreas and Kakade, Sham and Seeger, Matthias},
  booktitle={Proceedings of the 27th International Conference on International Conference on Machine Learning},
  pages={1015--1022},
  year={2010}
}

@article{wilson2024stopping,
  title={Stopping Bayesian optimization with probabilistic regret bounds},
  author={Wilson, James T},
  journal={Advances in Neural Information Processing Systems},
  volume={37},
  pages={98264--98296},
  year={2024}
}

@misc{nasa_mountain_pass,
   author       = {{NASA/GSFC/METI/ERSDAC/JAROS, and U.S./Japan ASTER Science Team}},
   title        = {Mountain Pass Mine, California},
   year         = {2010},
   note         = {Image PIA13979, acquired March 28, 2010},
   url          = {https://www.jpl.nasa.gov/images/pia13979-mountain-pass-mine-california}
 }

@techreport{usgs_geochem,
  type = {Report},
  title = {The {{National Geochemical Survey}}: {{Database}} and Documentation},
  year = 2004,
  edition = {Version 1.0},
  number = {2004-1001},
  doi = {10.3133/ofr20041001},
  langid = {english}
}

@article{kaufmann2021mixture,
  title={Mixture martingales revisited with applications to sequential tests and confidence intervals},
  author={Kaufmann, Emilie and Koolen, Wouter M},
  journal={Journal of Machine Learning Research},
  volume={22},
  number={246},
  pages={1--44},
  year={2021}
}

@article{liu2017materials,
  title={Materials discovery and design using machine learning},
  author={Liu, Yue and Zhao, Tianlu and Ju, Wangwei and Shi, Siqi},
  journal={Journal of Materiomics},
  volume={3},
  number={3},
  pages={159--177},
  year={2017},
  publisher={Elsevier}
}

@misc{poiani2025Pure,
  title = {Pure {{Exploration}} with {{Infinite Answers}}},
  author = {Poiani, Riccardo and Bernasconi, Martino and Celli, Andrea},
  year = 2025,
  month = may,
  number = {arXiv:2505.22473},
  eprint = {2505.22473},
  primaryclass = {cs},
  publisher = {arXiv},
  doi = {10.48550/arXiv.2505.22473},
  urldate = {2026-02-12},
  archiveprefix = {arXiv}
}

@inproceedings{wang2021Fast,
  title = {Fast {{Pure Exploration}} via {{Frank-Wolfe}}},
  booktitle = {Advances in {{Neural Information Processing Systems}}},
  author = {Wang, Po-An and Tzeng, Ruo-Chun and Proutiere, Alexandre},
  year = 2021,
  volume = {34},
  pages = {5810--5821},
  publisher = {Curran Associates, Inc.},
  urldate = {2026-02-12}
}

@article{hantoute2008Characterizations,
  title = {Characterizations of the Subdifferential of the Supremum of Convex Functions},
  author = {Hantoute, A and L{\'o}pez, {\relax MA}},
  year = {2008},
  journal = {Journal of Convex Analysis},
  volume = {15},
  pages = {831--858}
}

@article{naghshvar2013Active,
  title = {Active {{Sequential Hypothesis Testing}}},
  author = {Naghshvar, Mohammad and Javidi, Tara},
  year = 2013,
  journal = {The Annals of Statistics},
  volume = {41},
  number = {6},
  eprint = {23566746},
  eprinttype = {jstor},
  pages = {2703--2738},
  publisher = {Institute of Mathematical Statistics},
  issn = {0090-5364},
  urldate = {2026-01-27}
}

@misc{russo_learning_2025,
    title = {Learning to {Explore}: {An} {In}-{Context} {Learning} {Approach} for {Pure} {Exploration}},
    shorttitle = {Learning to {Explore}},
    doi = {10.48550/arXiv.2506.01876},
    language = {en},
    urldate = {2025-07-12},
    publisher = {arXiv},
    author = {Russo, Alessio and Welch, Ryan and Pacchiano, Aldo},
    month = jun,
    year = {2025},
    note = {arXiv:2506.01876 [cs]},
}

@article{chernoff_sequential_1959,
    title = {Sequential design of experiments},
    volume = {30},
    doi = {10.1214/aoms/1177706205},
    number = {3},
    journal = {The Annals of Mathematical Statistics},
    author = {Chernoff, Herman},
    year = {1959},
    note = {Publisher: Institute of Mathematical Statistics},
    pages = {755 -- 770},
}

@article{rainforth2024modern,
    title = {Modern {Bayesian} experimental design},
    volume = {39},
    number = {1},
    journal = {Statistical Science},
    author = {Rainforth, Tom and Foster, Adam and Ivanova, Desi R and Bickford Smith, Freddie},
    year = {2024},
    note = {Publisher: Institute of Mathematical Statistics},
    pages = {100--114},
}

@article{garivier2016optimal,
  title = {Optimal Best Arm Identification with Fixed Confidence},
  author = {Garivier, Aur{\'e}lien and Kaufmann, Emilie},
  year = {2016},
  journal = {Proceedings of the 29th Conference on Learning Theory},
  series = {Proceedings of Machine Learning Research},
  volume = {49},
  pages = {998--1027},
}

@book{puterman2014markov,
    title = {Markov decision processes: discrete stochastic dynamic programming},
    publisher = {John Wiley \& Sons},
    author = {Puterman, Martin L},
    year = {2014},
}

@inproceedings{poiani_best-arm_2025,
    title = {Best-{Arm} {Identification} in {Unimodal} {Bandits}},
    language = {en},
    urldate = {2025-06-01},
    booktitle = {Proceedings of {The} 28th {International} {Conference} on {Artificial} {Intelligence} and {Statistics}},
    publisher = {PMLR},
    author = {Poiani, Riccardo and Jourdan, Marc and Kaufmann, Emilie and Degenne, Rémy},
    month = apr,
    year = {2025},
    note = {ISSN: 2640-3498},
    pages = {2233--2241},
}

@inproceedings{degenne_non-asymptotic_2019,
    title = {Non-asymptotic pure exploration by solving games},
    volume = {32},
    booktitle = {Advances in neural information processing systems},
    publisher = {Curran Associates, Inc.},
    author = {Degenne, Rémy and Koolen, Wouter M and Ménard, Pierre},
    editor = {Wallach, H. and Larochelle, H. and Beygelzimer, A. and dAlché-Buc, F. and Fox, E. and Garnett, R.},
    year = {2019},
}

@inproceedings{pmlr-v258-russo25a,
    series = {Proceedings of machine learning research},
    title = {Pure exploration with feedback graphs},
    volume = {258},
    booktitle = {Proceedings of the 28th international conference on artificial intelligence and statistics},
    publisher = {PMLR},
    author = {Russo, Alessio and Song, Yichen and Pacchiano, Aldo},
    year = {2025},
    pages = {1810--1818},
}

@inproceedings{russomulti,
    title = {Multi-reward best policy identification},
    volume = {37},
    booktitle = {Advances in neural information processing systems},
    author = {Russo, Alessio and Vannella, Filippo},
    year = {2024},
    pages = {105583--105662},
}

@inproceedings{russo_adaptive_2025,
    series = {Proceedings of machine learning research},
    title = {Adaptive exploration for multi-reward multi-policy evaluation},
    volume = {267},
    booktitle = {Proceedings of the 42nd international conference on machine learning},
    publisher = {PMLR},
    author = {Russo, Alessio and Pacchiano, Aldo},
    editor = {Singh, Aarti and Fazel, Maryam and Hsu, Daniel and Lacoste-Julien, Simon and Berkenkamp, Felix and Maharaj, Tegan and Wagstaff, Kiri and Zhu, Jerry},
    month = jul,
    year = {2025},
    pages = {52382--52421},
}

@article{garivier2021nonasymptotic,
    title = {Nonasymptotic sequential tests for overlapping hypotheses applied to near-optimal arm identification in bandit models},
    volume = {40},
    number = {1},
    journal = {Sequential Analysis},
    author = {Garivier, Aurélien and Kaufmann, Emilie},
    year = {2021},
    note = {Publisher: Taylor \& Francis},
    pages = {61--96},
}

@inproceedings{al2021navigating,
    title = {Navigating to the best policy in markov decision processes},
    volume = {34},
    booktitle = {Advances in neural information processing systems},
    author = {Al Marjani, Aymen and Garivier, Aurélien and Proutiere, Alexandre},
    year = {2021},
    pages = {25852--25864},
}

@inproceedings{audibert2010best,
    title = {Best arm identification in multi-armed bandits},
    booktitle = {{COLT}-23th {Conference} on learning theory-2010},
    author = {Audibert, Jean-Yves and Bubeck, Sébastien},
    year = {2010},
    pages = {13--p},
}

@article{thompson1933likelihood,
    title = {On the likelihood that one unknown probability exceeds another in view of the evidence of two samples},
    volume = {25},
    number = {3-4},
    journal = {Biometrika},
    publisher = {Oxford University Press},
    author = {Thompson, William R},
    year = {1933},
    pages = {285--294},
}

@book{lattimore2020bandit,
    title = {Bandit algorithms},
    publisher = {Cambridge University Press},
    author = {Lattimore, Tor and Szepesvári, Csaba},
    year = {2020},
}

@inproceedings{kocak_best_2021,
    address = {Yokohama, Yokohama, Japan},
    series = {{IJCAI}'20},
    title = {Best arm identification in spectral bandits},
    isbn = {978-0-9992411-6-5},
    booktitle = {Proceedings of the twenty-ninth international joint conference on artificial intelligence},
    author = {Kocák, Tomáš and Garivier, Aurélien},
    year = {2021},
    note = {Number of pages: 7
tex.articleno: 307},
}

@inproceedings{wang_optimal_2020,
    series = {Proceedings of machine learning research},
    title = {Optimal algorithms for multiplayer multi-armed bandits},
    volume = {108},
    booktitle = {Proceedings of the twenty third international conference on artificial intelligence and statistics},
    publisher = {PMLR},
    author = {Wang, Po-An and Proutiere, Alexandre and Ariu, Kaito and Jedra, Yassir and Russo, Alessio},
    editor = {Chiappa, Silvia and Calandra, Roberto},
    month = aug,
    year = {2020},
    pages = {4120--4129},
}

@inproceedings{jedra2020optimal,
    title = {Optimal best-arm identification in linear bandits},
    volume = {33},
    booktitle = {Advances in neural information processing systems},
    author = {Jedra, Yassir and Proutiere, Alexandre},
    year = {2020},
    pages = {10007--10017},
}

@inproceedings{jourdan_top_2022,
    title = {Top {Two} {Algorithms} {Revisited}},
    volume = {35},
    language = {en},
    urldate = {2026-02-12},
    booktitle = {Advances in {Neural} {Information} {Processing} {Systems}},
    author = {Jourdan, Marc and Degenne, Rémy and Baudry, Dorian and de Heide, Rianne and Kaufmann, Emilie},
    month = dec,
    year = {2022},
    pages = {26791--26803},
}

@inproceedings{russo_simple_2016,
    title = {Simple {Bayesian} {Algorithms} for {Best} {Arm} {Identification}},
    issn = {1938-7228},
    language = {en},
    urldate = {2026-02-12},
    booktitle = {Conference on {Learning} {Theory}},
    publisher = {PMLR},
    author = {Russo, Daniel},
    month = jun,
    year = {2016},
    pages = {1417--1418},
}

@article{wald_optimum_1948,
    title = {Optimum character of the sequential probability ratio test},
    volume = {19},
    doi = {10.1214/aoms/1177730197},
    number = {3},
    journal = {The Annals of Mathematical Statistics},
    publisher = {Institute of Mathematical Statistics},
    author = {Wald, A. and Wolfowitz, J.},
    year = {1948},
    pages = {326 -- 339},
}

@book{garnett_bayesoptbook_2023,
  author    = {Garnett, Roman},
  title     = {{Bayesian Optimization}},
  year      = {2023},
  publisher = {Cambridge University Press}
}

@article{hennig2012Entropy,
  title = {Entropy {{Search}} for {{Information-Efficient Global Optimization}}},
  author = {Hennig, Philipp and Schuler, Christian J.},
  year = 2012,
  journal = {Journal of Machine Learning Research},
  volume = {13},
  number = {57},
  pages = {1809--1837},
  issn = {1533-7928},
  urldate = {2026-02-12},
  langid = {english}
}

@inproceedings{amini_complexity_2023,
    address = {Cham},
    title = {On the {Complexity} of {All} $\epsilon$-{Best} {Arms} {Identification}},
    volume = {13716},
    isbn = {978-3-031-26411-5 978-3-031-26412-2},
    doi = {10.1007/978-3-031-26412-2_20},
    language = {en},
    urldate = {2025-08-31},
    booktitle = {Machine {Learning} and {Knowledge} {Discovery} in {Databases}},
    publisher = {Springer Nature Switzerland},
    author = {Al Marjani, Aymen and Kocak, Tomas and Garivier, Aurélien},
    editor = {Amini, Massih-Reza and Canu, Stéphane and Fischer, Asja and Guns, Tias and Kralj Novak, Petra and Tsoumakas, Grigorios},
    year = {2023},
    note = {Series Title: Lecture Notes in Computer Science},
    pages = {317--332},
}

@inproceedings{russo2023model,
    title = {Model-free active exploration in reinforcement learning},
    volume = {36},
    booktitle = {Advances in neural information processing systems},
    author = {Russo, Alessio and Proutiere, Alexandre},
    year = {2023},
    pages = {54740--54753},
}

@inproceedings{tirinzoni2022near,
    title = {Near instance-optimal pac reinforcement learning for deterministic mdps},
    volume = {35},
    booktitle = {Advances in neural information processing systems},
    author = {Tirinzoni, Andrea and Al Marjani, Aymen and Kaufmann, Emilie},
    year = {2022},
    pages = {8785--8798},
}

@article{ghosh1991brief,
    title = {A brief history of sequential analysis},
    volume = {1},
    journal = {Handbook of sequential analysis},
    publisher = {Marcel Dekker New York},
    author = {Ghosh, Bashkar K},
    year = {1991},
}

@article{lindley1956measure,
    title = {On a measure of the information provided by an experiment},
    volume = {27},
    number = {4},
    journal = {The Annals of Mathematical Statistics},
    publisher = {Institute of Mathematical Statistics},
    author = {Lindley, Dennis V},
    year = {1956},
    pages = {986--1005},
}

@inproceedings{naghshvar2012noisy,
    title = {Noisy bayesian active learning},
    booktitle = {2012 50th annual allerton conference on communication, control, and computing (allerton)},
    publisher = {IEEE},
    author = {Naghshvar, Mohammad and Javidi, Tara and Chaudhuri, Kamalika},
    year = {2012},
    pages = {1626--1633},
}

@article{golovin2011adaptive,
    title = {Adaptive submodularity: {Theory} and applications in active learning and stochastic optimization},
    volume = {42},
    journal = {Journal of Artificial Intelligence Research},
    author = {Golovin, Daniel and Krause, Andreas},
    year = {2011},
    pages = {427--486},
}

@article{cohn1996active,
    title = {Active learning with statistical models},
    volume = {4},
    journal = {Journal of artificial intelligence research},
    author = {Cohn, David A and Ghahramani, Zoubin and Jordan, Michael I},
    year = {1996},
    pages = {129--145},
}

@article{cecchi2017adaptive,
    title = {Adaptive active hypothesis testing under limited information},
    volume = {30},
    journal = {Advances in Neural Information Processing Systems},
    author = {Cecchi, Fabio and Hegde, Nidhi},
    year = {2017},
}

@article{russo2014learning,
    title = {Learning to optimize via posterior sampling},
    volume = {39},
    number = {4},
    journal = {Mathematics of Operations Research},
    publisher = {INFORMS},
    author = {Russo, Daniel and Van Roy, Benjamin},
    year = {2014},
    pages = {1221--1243},
}

@inproceedings{hernandez-lobato_predictive_2014,
    title = {Predictive {Entropy} {Search} for {Efficient} {Global} {Optimization} of {Black}-box {Functions}},
    volume = {27},
    urldate = {2026-02-12},
    booktitle = {Advances in {Neural} {Information} {Processing} {Systems}},
    publisher = {Curran Associates, Inc.},
    author = {Hernández-Lobato, José Miguel and Hoffman, Matthew W. and Ghahramani, Zoubin},
    year = {2014},
}

@inproceedings{russo2023sample,
    title = {On the sample complexity of representation learning in multi-task bandits with global and local structure},
    volume = {37},
    booktitle = {Proceedings of the {AAAI} conference on artificial intelligence},
    author = {Russo, Alessio and Proutiere, Alexandre},
    year = {2023},
    pages = {9658--9667},
}

@inproceedings{taupin2023best,
    title = {Best policy identification in discounted linear {MDPs}},
    booktitle = {Sixteenth european workshop on reinforcement learning},
    author = {Taupin, Jérôme and Jedra, Yassir and Proutiere, Alexandre},
    year = {2023},
}

@article{bergstra2011algorithms,
  title={Algorithms for hyper-parameter optimization},
  author={Bergstra, James and Bardenet, R{\'e}mi and Bengio, Yoshua and K{\'e}gl, Bal{\'a}zs},
  journal={Advances in neural information processing systems},
  volume={24},
  year={2011}
}

@article{ament2023unexpected,
  title={Unexpected improvements to expected improvement for bayesian optimization},
  author={Ament, Sebastian and Daulton, Samuel and Eriksson, David and Balandat, Maximilian and Bakshy, Eytan},
  journal={Advances in neural information processing systems},
  volume={36},
  pages={20577--20612},
  year={2023}
}

@article{hansen2016cma,
  title={The CMA evolution strategy: A tutorial},
  author={Hansen, Nikolaus},
  journal={arXiv preprint arXiv:1604.00772},
  year={2016}
}

@inproceedings{akiba2019optuna,
  title={Optuna: A next-generation hyperparameter optimization framework},
  author={Akiba, Takuya and Sano, Shotaro and Yanase, Toshihiko and Ohta, Takeru and Koyama, Masanori},
  booktitle={Proceedings of the 25th ACM SIGKDD international conference on knowledge discovery \& data mining},
  pages={2623--2631},
  year={2019}
}

@article{naser2025review,
  title={A review of benchmark and test functions for global optimization algorithms and metaheuristics},
  author={Naser, MZ and Al-Bashiti, Mohammad Khaled and Tapeh, Arash Teymori Gharah and Naser, Ahmad and Kodur, Venkatesh and Hawileh, Rami and Abdalla, Jamal and Khodadadi, Nima and Gandomi, Amir H and Eslamlou, Armin Dadras},
  journal={Wiley Interdisciplinary Reviews: Computational Statistics},
  volume={17},
  number={2},
  pages={e70028},
  year={2025},
  publisher={Wiley Online Library}
}

@book{hernandez-lerma1996DiscreteTime,
    address = {New York, NY},
    title = {Discrete-{Time} {Markov} {Control} {Processes}},
    copyright = {http://www.springer.com/tdm},
    isbn = {978-1-4612-6884-0 978-1-4612-0729-0},
    doi = {10.1007/978-1-4612-0729-0},
    urldate = {2026-04-14},
    publisher = {Springer},
    author = {Hernández-Lerma, Onésimo and Lasserre, Jean Bernard},
    year = {1996},
}

@article{feinberg2021mdps,
    title = {{MDPs} with setwise continuous transition probabilities},
    volume = {49},
    number = {5},
    journal = {Operations Research Letters},
    publisher = {Elsevier},
    author = {Feinberg, Eugene A and Kasyanov, Pavlo O},
    year = {2021},
    pages = {734--740},
}

@book{rockafellar1974conjugate,
  title={Conjugate duality and optimization},
  author={Rockafellar, R Tyrrell},
  year={1974},
  publisher = {Society for Industrial and Applied Mathematics},
}

@book{ekeland1976convex,
  title={Convex analysis and variational problems},
  author={Ekeland, Ivar and Temam, Roger},
  year={1999},
  publisher={SIAM}
}

@book{rockafellar1998variational,
  title={Variational analysis},
  author={Rockafellar, R Tyrrell and Wets, Roger JB},
  year={1998},
  publisher={Springer}
}

@book{borwein2006convex,
  title={Convex Analysis and Nonlinear Optimization: Theoryand Examples},
  author={Borwein, Jonathan and Lewis, Adrian},
  year={2006},
  publisher={Springer}
}

@book{bertsekas1996stochastic,
    title = {Stochastic optimal control: the discrete-time case},
    volume = {5},
    publisher = {Athena Scientific},
    author = {Bertsekas, Dimitri and Shreve, Steven E},
    year = {1996},
}

@inproceedings{fujimoto2018addressing,
    title = {Addressing function approximation error in actor-critic methods},
    booktitle = {International conference on machine learning},
    publisher = {PMLR},
    author = {Fujimoto, Scott and Hoof, Herke and Meger, David},
    year = {2018},
    pages = {1587--1596},
}

@inproceedings{jang2024Fixed,
    title = {Fixed {Confidence} {Best} {Arm} {Identification} in the {Bayesian} {Setting}},
    volume = {37},
    booktitle = {Advances in {Neural} {Information} {Processing} {Systems}},
    publisher = {Curran Associates, Inc.},
    author = {Jang, Kyoungseok and Komiyama, Junpei and Yamazaki, Kazutoshi},
    editor = {Globerson, A. and Mackey, L. and Belgrave, D. and Fan, A. and Paquet, U. and Tomczak, J. and Zhang, C.},
    year = {2024},
    pages = {17789--17829},
}

@article{oquigley1990Continual,
    title = {Continual reassessment method: a practical design for phase 1 clinical trials in cancer},
    volume = {46},
    issn = {0006-341X},
    shorttitle = {Continual reassessment method},
    language = {eng},
    number = {1},
    journal = {Biometrics},
    author = {O'Quigley, J. and Pepe, M. and Fisher, L.},
    month = mar,
    year = {1990},
    pages = {33--48},
}

@incollection{efron1992bootstrap,
    title = {Bootstrap methods: another look at the jackknife},
    booktitle = {Breakthroughs in statistics: {Methodology} and distribution},
    publisher = {Springer},
    author = {Efron, Bradley},
    year = {1992},
    pages = {569--593},
}

@article{balandat2020botorch,
  title={BoTorch: A framework for efficient Monte-Carlo Bayesian optimization},
  author={Balandat, Maximilian and Karrer, Brian and Jiang, Daniel and Daulton, Samuel and Letham, Ben and Wilson, Andrew G and Bakshy, Eytan},
  journal={Advances in neural information processing systems},
  volume={33},
  pages={21524--21538},
  year={2020}
}
